\documentclass[12pt,letterpaper]{article}
\usepackage[T1]{fontenc}
\usepackage{lmodern}
\usepackage{graphicx,natbib,microtype,parskip,tikz,float,caption,setspace}
\usetikzlibrary{arrows.meta,decorations.pathreplacing}
\usepackage{amsmath,amssymb,amsthm,mathrsfs,mathtools,booktabs,longtable,array,enumitem,xurl}
\usepackage{hyperref}
\setcitestyle{authoryear,round,semicolon,aysep={}}
\hypersetup{hidelinks}
\newcommand{\anon}{1}
\allowdisplaybreaks
\newtheorem{theorem}{Theorem}[section]
\newtheorem{lemma}[theorem]{Lemma}
\newtheorem{proposition}[theorem]{Proposition}
\newtheorem{corollary}[theorem]{Corollary}
\theoremstyle{definition}\newtheorem{assumption}[theorem]{Assumption}
\newtheorem{definition}[theorem]{Definition}
\theoremstyle{definition}
\newtheorem*{remark*}{Remark}
\newcommand{\R}{\mathbb R}\newcommand{\E}{\mathbb E}
\newcommand{\cF}{\mathcal F}

\newcommand{\Holder}{\mathcal H_d^s(H)}
\DeclareMathOperator{\ReLU}{ReLU}

\newcommand{\papertitle}{Statistical Gains from Looped Estimation under Parameter Budgets}

\if0\anon
  \hypersetup{pdfauthor={}}
\else
  \hypersetup{pdfauthor={Xinyu Tian and Xiaotong Shen}}
\fi

\hypersetup{pdftitle={\papertitle}}
\title{\bfseries\papertitle}
\if0\anon
  \author{}
\else
  \author{Xinyu Tian \qquad Xiaotong Shen\\[0.5ex]
  School of Statistics, University of Minnesota}
\fi
\date{}
\begin{document}
\setstretch{1}
\maketitle
\begin{abstract}
Memory constraints in artificial intelligence motivate accurate function
approximation with fewer parameters. We study looping, which repeatedly
composes one update function with shared parameters; each output
becomes the next input. A looped Transformer, for example, reuses one block,
whereas its conventional untied counterpart uses separately parameterized
blocks. We compare their parameter requirements for a given worst-case approximation
accuracy, or equivalently, their approximation accuracy under a common
budget limiting distinct trainable coefficients. We then ask whether this representational parsimony
improves statistical accuracy. For general likelihood models, we establish
an upper squared Hellinger risk bound for looped sieve maximum likelihood
and a minimax lower bound for the jointly tuned untied family. Further loop
iterations improve the approximation bound without adding parameters, while increasing computation
and the fitted-class complexity bound. For targets of known H\"older smoothness,
looped residual feedforward networks and post-layer-normalized
Transformers attain the minimax polynomial rate up to logarithmic factors
with a fixed number of bounded real parameters. At sufficiently large fixed
budgets, looped worst-case risk vanishes while optimal worst-case untied
risk remains bounded away from zero. The loop-to-untied risk ratio also tends to zero
under specified growing-budget conditions. Regression, binary response,
and energy-based generative models illustrate the theory.
\end{abstract}
\noindent\textit{Keywords:} nonparametric estimation; parameter sharing; Hellinger risk; Transformer;
sieve maximum likelihood.
\medskip
\setstretch{1}
\renewcommand{\topfraction}{0.90}
\renewcommand{\bottomfraction}{0.85}
\renewcommand{\textfraction}{0.08}
\renewcommand{\floatpagefraction}{0.85}
\setcounter{topnumber}{3}
\setcounter{bottomnumber}{2}
\setcounter{totalnumber}{4}
\def\loopedmainloaded{}
% !TEX root = main.tex
\section{Introduction}

Memory constraints in artificial intelligence motivate learning with fewer
trainable parameters. During training, parameters, optimizer states, and
activations compete for memory \citep{rajbhandari2020}; during inference,
parameter transfers and communication can limit prediction throughput
\citep{pope2023}. Looped architectures offer one response to these resource
demands by repeatedly applying a shared module, increasing computational
depth without adding distinct trainable parameters. This approach is used in
Universal Transformers \citep{dehghani2019} and recurrent-depth language
models \citep{geiping2025}, and enables latent reasoning in ByteDance's Ouro
models \citep{zhu2025ouro}. More recently, \citet{wang2026smelt} study
SMELT, a sparse mixture-of-experts Transformer,
under closely matched parameter, computation, and key--value cache budgets.
Looped computation has even been reported in OpenAI's Astra model
\citep{efrati2026astra,raschka2026astra}.
These developments motivate the question of whether repeated composition
with shared parameters can improve approximation accuracy and reduce
statistical risk under a common parameter budget.\looseness=-1

To study these designs statistically, we view them as function
representations. Each representation composes an input function, $T$
update functions, and an output function. Each successive update is one
iteration. A looped representation uses the same update function and
parameters at every iteration; its conventional untied counterpart uses
separately parameterized update functions.
In neural networks, each update function corresponds to a block, so $T$
measures effective depth in blocks. At fixed block width, increasing this
depth reuses the loop's fitted coefficients, whereas the untied network
requires additional coefficients to be estimated.\looseness=-1

To reflect the demand for parameter storage, we impose a parameter
budget, an upper bound on the number of distinct trainable scalar
coefficients. Statistically, this is a way to study representational
parsimony. Over a common target class, we can ask how many coefficients
are needed to attain a prescribed worst-case approximation accuracy or,
equivalently, how accurately each representation can approximate under a
given parameter budget. We adopt the latter view to compare looped and
untied representations. Statistical risk depends on both this approximation
accuracy and the estimation error incurred in fitting the coefficients to data.\looseness=-1

\paragraph{Main finding.}
With a sufficiently large fixed budget, a looped estimator can attain
vanishing worst-case risk. At the same budget, every estimator in the
corresponding untied family has worst-case risk bounded away from zero.
To see the approximation--estimation balance, fix either the residual
feedforward network (FFN) or post-layer-normalized (post-LN) Transformer
family, a likelihood example, and a
H\"older target class of known smoothness $s$ in dimension $d$. Write $a=2s/d$.
For this overview, fix the state or token embedding dimension and choose
the largest admissible feedforward hidden width under budget $B$.
Let $R_L(n,B,T)$ be the worst-case squared Hellinger risk of the resulting
loop sieve maximum likelihood estimator and $R_U(n,B)$ the optimal
worst-case risk among estimators valued in the budget-$B$ untied family.
Our main results give, under suitable conditions,
\[
\begin{aligned}
R_L(n,B,T)&\lesssim
\left\{\frac{\log(eBT)}{BT}\right\}^{a}
+\frac{B\{\log(en)+T\log(eB)\}}{n},\\
R_U(n,B)&\gtrsim\{B^2\log(eB)\}^{-a}.
\end{aligned}
\]
The implicit constants depend on the fixed target-class, likelihood, and
remaining architecture parameters, but not on $n,B$, or $T$.
The bounds follow from Theorem~\ref{thm:model-risk} and
Proposition~\ref{prop:budget-lower}.

For the untied networks considered here, increasing widths or adding blocks
increases the parameter count. Improving approximation through a larger
network therefore incurs the estimation cost of a larger parameterization.
The loop upper bound expresses a different balance: at fixed $B$, its
approximation (first term) and estimation (second term) both vary with $T$,
while the number of fitted coefficients stays fixed.
Additional iterations reduce the approximation bound but increase the
complexity bound for the fitted function class and hence the estimation
term. Sieve-likelihood theory \citep{shenwong1994,wongshen1995} converts
this balance into a risk bound. For any sufficiently large fixed $B$, choosing \(T_n=\lceil n^{1/(a+1)}\{\log(en)\}^{a/(a+1)}\rceil\) gives \(R_L(n,B,T_n)\lesssim_B \{\frac{\log(en)}{n}\}^{2s/(2s+d)}\). The looped estimator therefore attains the classical H\"older minimax rate
up to a logarithmic factor \citep{stone1982}. By contrast, increasing
untied depth requires additional coefficients. Under a fixed budget, even
the jointly tuned untied family retains positive worst-case approximation
error and risk. This lower bound holds uniformly over all admissible
choices of width and depth in the untied family under the same parameter
budget, including wide, shallow networks and narrow, deep networks.
Consequently, $R_L(n,B,T_n)/R_U(n,B)\to0$ for every
sufficiently large fixed budget admitting the loop, as stated precisely in
Theorem~\ref{thm:fixed-budget}.

The same bound displays the tradeoff between budget and iteration count.
With the state or embedding dimension fixed and logarithmic factors
suppressed, the approximation and estimation terms scale as $(BT)^{-a}$
and $BT/n$. A larger budget permits a wider shared block and comparable
approximation with fewer iterations. Theorem~\ref{thm:growing-budget}
extends the comparison to sieves in which both state and hidden widths grow.
Under its stated schedule, it attains the minimax polynomial rate with
$T_n$ of order $n^{d/(2s+d)}/B_n$, up to logarithmic factors.

The untied lower bound gives a contrasting parameter requirement. Attaining
the minimax polynomial rate requires its budget to grow at least as
$n^{d/(4s+2d)}$, up to logarithmic factors; this is a necessary growth order.
For $B_n\asymp n^{\gamma_B}$ with $0<\gamma_B<d/(4s+2d)$,
Theorem~\ref{thm:growing-budget} shows that the loop-to-untied risk ratio
still tends to zero. Looping thus attains comparable statistical accuracy with substantially
fewer parameters, at the cost of additional iterations.

We obtain the required approximation rates by encoding local Taylor
information in bounded real coefficients and repeatedly applying a shared
update to decode and evaluate it. Combining these constructions with
likelihood-specific risk bounds yields the comparisons for Gaussian and
Laplace regression, binary response, and energy-based generative models.

\paragraph{Relation to existing work.}
Parameter reuse is present in Universal Transformers \citep{dehghani2019},
deep equilibrium models \citep{bai2019}, looped residual language models
\citep{ngwang2024}, and recurrent-depth models \citep{geiping2025}, in contrast
to conventional separately parameterized Transformer blocks
\citep{vaswani2017}. Computational analyses study program execution
\citep{giannou2023}, iterative in-context learning
\citep{yang2024,gatmiry2024,chen2025}, time-invariant reasoning generators
\citep{joshi2025}, and programmable looped rectified linear unit (ReLU) networks \citep{liang2025}.
These studies motivate an explicit comparison of statistical risk under a
common parameter constraint.\looseness=-1

For smooth-function estimation, approximation and statistical theory typically
balance increasing width or depth against the complexity of a growing neural
sieve \citep{yarotsky2017,yarotsky2018,lu2021,schmidthieber2020}. Recurrent
networks also attain H\"older approximation and near-minimax least-squares
rates \citep{jiao2024}, while depth-separation results identify representational
gains under prescribed resource constraints
\citep{eldanshamir2016,telgarsky2016}. Work on reuse itself gives quantitative
approximation bounds for repeated composition of a fixed-size ReLU network
\citep{zhang2023}, fixed-dimensional recurrence for univariate continuous
functions \citep{abadie2026}, and looped-Transformer approximation of continuous
permutation-equivariant maps \citep{xu2025}. Existing complexity bounds quantify the effects of depth
\citep{bartlett2019} and recurrence length \citep{chen2020} alongside parameter
count. Model-selection theory accounts for the additional complexity of
choosing among candidate models \citep{barronbirgemassart1999}. The contribution of this article is to establish when the approximation
advantage of parameter reuse yields a statistical risk advantage under a
common parameter budget. We construct increasingly accurate approximations
with a fixed number of bounded coefficients and control estimation over the
entire fitted loop class. The resulting worst-case loop risk is asymptotically smaller than
the best worst-case risk over the corresponding untied family, even when
that family allows joint model selection under the same budget. To our knowledge, our fixed- and
growing-budget theorems (Theorems~\ref{thm:fixed-budget}
and~\ref{thm:growing-budget}) provide the first statistical risk separation
between looped estimation and the corresponding jointly tuned untied family
under a common parameter budget.\looseness=-1

\paragraph{Organization.}
Section~\ref{sec:parameter-budget} formulates looped estimation under a
parameter budget and develops its general likelihood-risk comparison with the
untied benchmark.
Section~\ref{sec:examples-neural} verifies its ingredients for the four
likelihood models and two neural realizations.
Section~\ref{sec:budget-consequences} derives the fixed-budget separation and
its growing-sieve extension. Section~\ref{sec:discussion} discusses the
statistical interpretation and remaining resource questions. Proofs and
constructions are in the supplementary material; Supplementary
Appendix~\ref{sec:numerical} presents the numerical studies and experimental details.

\section{Looped estimation under a parameter budget}
\label{sec:parameter-budget}
This section defines looped and untied estimators under a common parameter
budget and compares their likelihood risks. The budget counts distinct
trainable coefficients, while the iteration count records how often the
update function is composed. We first define the candidate function classes,
then state a general sieve likelihood bound and apply it to compare
looped and untied risks under the same budget.

\paragraph{Notation.}
Let $\mathcal X$ be the input space, $\mathcal V$ a class of functions on
$\mathcal X$, and $\cF\subseteq\mathcal V$ the fixed target family containing
the unknown $f_0$. An estimator is a measurable, data-dependent choice from
a candidate function class.
Write $\mathbb N=\{1,2,\ldots\}$, and let $\log$ denote the natural
logarithm. For real $x,y$, write $x\vee y=\max\{x,y\}$ and
$x\wedge y=\min\{x,y\}$. For functions, $\|\cdot\|_2$ denotes the $L^2$ norm under the specified
measure; for vectors, it denotes the Euclidean norm.

Unless explicitly fixed, $C,c>0$ denote constants that may change from line
to line. They may depend on the fixed target family and the stated likelihood
and computational-class constants, and are uniform in the sample size,
varying parameter budgets and model tuning parameters, and target function. For positive quantities $x,y$,
write $x\lesssim y$ if $x\le Cy$ for such a constant, $x\gtrsim y$ if
$y\lesssim x$, and $x\asymp y$ if both inequalities hold. Subscripts on
these comparison symbols indicate the permitted dependence of the constants.

\subsection{Looped and untied representations}
\label{sec:repeated-classes}
A candidate function is constructed in three stages: an input function forms
an intermediate representation, successive update functions transform it,
and an output function produces a scalar value. A looped representation
shares the update parameters, whereas its conventional untied counterpart
uses separate parameters at each update (Figure~\ref{fig:loop-architecture}).

The model tuning parameter $\lambda$ specifies the forms and dimensions of
the input, update, and output functions and their intermediate
representation space $\mathcal Z_\lambda$, also called the state space.
Neural networks are examples of this general framework. For a fixed
feedforward block design, $\lambda=(q,r)$ specifies the dimension $q$ of
the vector passed between updates and the hidden-layer width $r$, with
$\mathcal Z_\lambda=\mathbb R^q$.
For example, with $q=16$ and $r=64$, each block maps a 16-dimensional
input through 64 hidden neurons to a 16-dimensional output.

With these forms and dimensions fixed, let $\eta$, $\theta$, and $\omega$
denote the coefficient vectors of the
input, update, and output functions, with finite-dimensional parameter
spaces $\Theta_\eta(\lambda)$, $\Theta_\theta(\lambda)$, and
$\Theta_\omega(\lambda)$, respectively. The three functions are
$E_\eta:\mathcal X\to\mathcal Z_\lambda$,
$\Phi_\theta:\mathcal Z_\lambda\to\mathcal Z_\lambda$, and
$O_\omega:\mathcal Z_\lambda\to\mathbb R$.
The iteration count $T$ is the number of updates applied to the
intermediate representation and is specified separately from $\lambda$.
Together, $\lambda$ and $T$ determine the candidate function classes; the
coefficient vectors are fitted to data within the chosen class.

\begin{definition}[Looped and untied representations]\label{def:loop-untied}
Fix a model tuning parameter $\lambda$ and an iteration count $T\ge1$.
Both representations compose an input function, $T$ update functions,
and an output function. For $x\in\mathcal X$, define \(f_L(x;\eta,\theta,\omega) =O_\omega\bigl(\Phi_\theta^{\circ T}(E_\eta(x))\bigr)\), \(f_U(x;\eta,\theta_1,\ldots,\theta_T,\omega) =O_\omega\bigl(\Phi_{\theta_T}\circ\cdots\circ \Phi_{\theta_1}(E_\eta(x))\bigr)\). Here $\Phi_\theta^{\circ T}$ denotes $T$-fold composition of the same
update function. The looped representation shares $\theta$ across these
updates; the untied representation uses separate vectors
$\theta_1,\ldots,\theta_T$. Both use the same input and output function
families.

The classes $\mathcal F_L(\lambda,T)$ and $\mathcal F_U(\lambda,T)$
comprise these functions as their coefficients range over
$\Theta_L(\lambda)=\Theta_\eta(\lambda)\times\Theta_\theta(\lambda)
\times\Theta_\omega(\lambda)$ and
$\Theta_U(\lambda,T)=\Theta_\eta(\lambda)\times
\prod_{t=1}^T\Theta_\theta(\lambda)\times\Theta_\omega(\lambda)$,
respectively.
\end{definition}

Let $\dim\Theta_\eta(\lambda)$, $\dim\Theta_\theta(\lambda)$, and
$\dim\Theta_\omega(\lambda)$ denote the numbers of trainable scalar
parameters in the input, update, and output functions, respectively. The looped
representation fits one shared update vector, whereas the untied
representation fits $T$ separate update vectors. Separate parameter
slots count individually even when their fitted values coincide.
The total parameter counts are therefore \(B_L(\lambda) =\dim\Theta_\eta(\lambda)+\dim\Theta_\theta(\lambda)+\dim\Theta_\omega(\lambda)\), \(B_U(\lambda,T) =\dim\Theta_\eta(\lambda)+T \dim\Theta_\theta(\lambda)+\dim\Theta_\omega(\lambda)\). A parameter budget $B$ bounds the total number of trainable scalar
parameters: an admissible looped or untied representation satisfies
$B_L(\lambda)\le B$ or $B_U(\lambda,T)\le B$, respectively, and need not
exhaust the budget. For fixed $\lambda$, $B_L(\lambda)$ is independent of
$T$ because further iterations reuse the same update parameters;
each new untied update adds $\dim\Theta_\theta(\lambda)$
parameters.

\begin{definition}[Looped estimation under a parameter budget]
\label{def:budgeted-loop-estimation}
For a budget $B$, \emph{looped estimation under $B$} chooses
model tuning parameters $(\lambda,T)$ with $B_L(\lambda)\le B$ and returns a measurable,
data-dependent element of $\mathcal F_L(\lambda,T)$. A single update parameter vector is
fitted and shared across all $T$ applications. Its untied counterpart may
select among model tuning parameters $(\lambda,T)$ satisfying $B_U(\lambda,T)\le B$. Thus the
budget constrains the number of distinct trainable coefficients, while $T$
specifies the number of composed update functions.
\end{definition}

\begin{figure}[!htb]
\centering
\includegraphics[width=\linewidth]{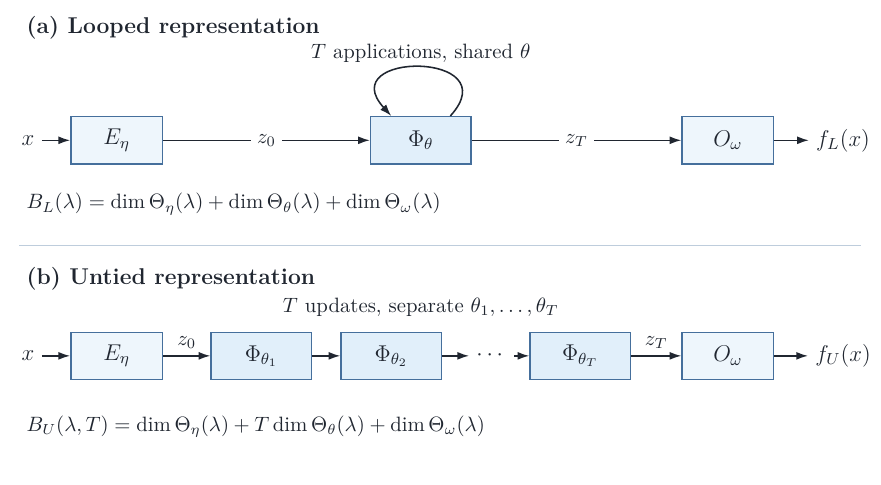}
\caption{Looped and untied representations at a fixed model tuning parameter $\lambda$.
Both use the input function $E_\eta$ and output function $O_\omega$.
Here $z_0=E_\eta(x)$ is the encoded input and $z_T$ is the intermediate representation after $T$
composed updates.
In (a), the same update $\Phi_\theta$ is applied $T$ times; in (b), the
updates $\Phi_{\theta_1},\ldots,\Phi_{\theta_T}$ have separate parameter
vectors. The counts below each panel include the input, update, and output
parameters. Thus $B_L(\lambda)$ is independent of $T$, whereas
$B_U(\lambda,T)$ grows linearly with $T$.}
\label{fig:loop-architecture}
\end{figure}

Let $\Lambda_0$ be the set of candidate values of $\lambda$. Under a
parameter budget $B$, the admissible values of $\lambda$ for looped models
and the admissible pairs $(\lambda,T)$ for untied models are \(\Lambda_L(B)=\{\lambda\in\Lambda_0:B_L(\lambda)\le B\}\), \(\mathcal I_U(B)=\{(\lambda,T)\in\Lambda_0\times\mathbb N: B_U(\lambda,T)\le B\}\). Assume that $\mathcal I_U(B)$ is finite for each $B$. Because
$B_L(\lambda)$ does not depend on $T$, membership in $\Lambda_L(B)$ places no
restriction on the iteration count. Each additional untied iteration instead
adds $\dim\Theta_\theta(\lambda)$ parameters and can carry the pair outside
$\mathcal I_U(B)$. The budgeted untied family includes every admissible
pair of model tuning parameters: \(\mathcal F_U[B]=\bigcup_{(\lambda,T)\in\mathcal I_U(B)} \mathcal F_U(\lambda,T)\).

\subsection{Likelihood risk and sieve maximum likelihood}
\label{sec:statistical-problem}
\label{sec:main-likelihood-risk}\label{sec:approximation-estimation}
Let $\{\mathbb P_v:v\in\mathcal V\}$ be a likelihood model with densities $p_v$
relative to a common measure $\nu$. We observe independent
$Z_1,\ldots,Z_n\sim\mathbb P_{f_0}$ for an unknown $f_0\in\cF$ and use squared
Hellinger distance, \(h^2(\mathbb P_f,\mathbb P_g) =\int(\sqrt{p_f}-\sqrt{p_g})^2\,d\nu\), \(0\le h^2\le2\). We use the same notation for Hellinger distance between laws and between
their densities.

For any deterministic candidate sieve $\mathcal C_n\subseteq\mathcal V$,
possibly depending on $n$, write
$\mathcal P_n=\{p_g:g\in\mathcal C_n\}$ for its induced density class.

\begin{definition}[Sieve maximum likelihood estimator]\label{def:main-sieve-mle}
Fix a deterministic sieve $\mathcal C_n$. For a deterministic optimization
tolerance $\varepsilon_n\ge0$ in the average log likelihood, the sieve maximum likelihood estimator (MLE) is any measurable
$\widehat f_n\in\mathcal C_n$ satisfying \(\frac1n\sum_{i=1}^n\log p_{\widehat f_n}(Z_i) \ge \sup_{g\in\mathcal C_n}\frac1n\sum_{i=1}^n\log p_g(Z_i) -\varepsilon_n\). Existence of a measurable approximate maximizer is assumed.
\end{definition}

We study the worst-case squared Hellinger risk of the sieve MLE over
$f_0\in\cF$. Its risk bound separates approximation by the sieve from
the complexity of fitting over it. For a deterministic sieve $\mathcal C_n$,
define the worst-case squared Hellinger approximation error by \(A(\mathcal C_n) =\sup_{f_0\in\cF}\inf_{g\in\mathcal C_n} h^2(\mathbb P_{f_0},\mathbb P_g)\). A bracket $[p^-,p^+]$ contains the densities lying between its nonnegative
lower and upper functions, with width
$\|\sqrt{p^+}-\sqrt{p^-}\|_{L^2(\nu)}$.
For a density class $\mathcal P$, the Hellinger bracketing number
$N(\epsilon,\mathcal P,h)$ is the smallest number of brackets of width at most
$\epsilon$ needed to contain the class. Its logarithm,
$H(\epsilon,\mathcal P,h)=\log N(\epsilon,\mathcal P,h)$, is the bracketing
entropy. Supplementary Section~\ref{sec:hellinger-complexity} gives the precise pointwise conventions.

Likelihood estimation also requires an approximating candidate that is close
in the one-sided likelihood quantity
\begin{equation}\label{eq:main-rho-gamma}
 \rho_\gamma(p_0,p_1)=\frac1\gamma\int p_0
 \left\{\left(\frac{p_0}{p_1}\right)^\gamma-1\right\}\,d\nu,
 \qquad 0<\gamma\le1.
\end{equation}
For the selected sieve $\mathcal C_n$, set
$\delta(f_0;\mathcal C_n)=\inf_{g\in\mathcal C_n}\rho_{1/2}(p_{f_0},p_g)$.
The next result combines complete-sieve bracketing with a comparison of
likelihood and Hellinger approximation.
\begin{corollary}[Sieve-MLE oracle inequality]\label{cor:main-sieve-oracle}
Let $C_1,C_2,C_3>0$ be the absolute constants in the
sieve-likelihood bound in Supplementary Theorem~\ref{thm:ws-hellinger}. Let $e_n>0$ be a
deterministic critical radius for $\mathcal P_n$ such that $n^{-1}\le e_n^2\le2$ and, for
every $e_n\le t\le\sqrt2$,
\begin{equation}\label{eq:main-ws-integral}
 \int_{t^2/2^8}^{\sqrt2t}
 \sqrt{H(u/C_1,\mathcal P_n,h)}\,du
 \le C_2\sqrt n\,t^2.
\end{equation}
Thus $e_n$ depends on the selected sieve through the entropy of
$\mathcal P_n$. If, for a fixed constant $C_4>0$
independent of $n$ and $f_0$,
\begin{equation}\label{eq:main-rho-bridge}
 \sup_{f_0\in\cF}\delta(f_0;\mathcal C_n)\le C_4A(\mathcal C_n),
\end{equation}
then there is a constant $C_5$, depending only on $C_3,C_4$
and independent of $n$ and $f_0$, such
that
\begin{equation}\label{eq:main-general-oracle}
 \sup_{f_0\in\cF}\E_{f_0}
 h^2(\mathbb P_{f_0},\mathbb P_{\widehat f_n})
 \le C_5\left\{
 \underbrace{A(\mathcal C_n)}_{\text{sieve approximation}}
 +\underbrace{e_n^2}_{\text{fitted-sieve estimation}}
 +\underbrace{\varepsilon_n}_{\text{optimization}}
 \right\}.
\end{equation}
\end{corollary}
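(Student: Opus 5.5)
The plan is to apply Supplementary Theorem~\ref{thm:ws-hellinger}, the sieve-likelihood bound of \citet{wongshen1995}, pointwise in $f_0\in\cF$, and then to use \eqref{eq:main-rho-bridge} to replace the likelihood-approximation term by the Hellinger approximation error $A(\mathcal C_n)$. I take that theorem to have the following form. Suppose the entropy integral condition \eqref{eq:main-ws-integral} holds with critical radius $e_n$. Then, for any $f_0$ and any $\eta_n\ge0$, an $\eta_n$-approximate sieve MLE satisfies a tail bound
\[
P^*\bigl\{h(p_{f_0},p_{\widehat f_n})\ge t\bigr\}\le 5\exp(-C_3nt^2)
\]
whenever
\[
t\ge \max\bigl[e_n,\;\{4\delta(f_0;\mathcal C_n)/C_3\}^{1/2}\bigr]
\qquad\text{and}\qquad \eta_n\lesssim t^2 .
\]
The alternative version has the threshold $\epsilon^*_n=\max\{e_n,(4\delta/C_3)^{1/2},(\varepsilon_n\text{-term})^{1/2}\}$ and the same exponential tail beyond it.

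\textbf{Step 1: fix $f_0$ and choose the threshold.} For fixed $f_0$, set
\[
t_*^2=C\bigl\{e_n^2+\delta(f_0;\mathcal C_n)+\varepsilon_n\bigr\}.
\]
The constant $C$ is chosen depending only on $C_3$ so that every $t\ge t_*$ satisfies the hypotheses of the tail bound. One point needs checking here: Wong--Shen define the approximate MLE with a tolerance stated relative to $t^2$. Since $t\ge t_*\ge(C\varepsilon_n)^{1/2}$, we have $\varepsilon_n\le t^2/C$, which meets their condition.

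\textbf{Step 2: integrate the tail.} Write
\[
\E h^2=\int_0^2 P(h^2>u)\,du\le t_*^2+\int_{t_*^2}^\infty 5e^{-C_3nu}\,du=t_*^2+\frac{5}{C_3n}.
\]
Because $e_n^2\ge n^{-1}$, the term $1/n$ is absorbed into $e_n^2$. Because $h^2\le2$, the range $t>\sqrt2$ is harmless, and if $t_*>\sqrt2$ the bound is trivial. The result is
\[
\E_{f_0}h^2\le C'\bigl\{e_n^2+\delta(f_0;\mathcal C_n)+\varepsilon_n\bigr\},
\]
with $C'$ depending only on $C_3$.

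\textbf{Step 3: take the supremum over $f_0$.} Neither $e_n$ nor $\varepsilon_n$ depends on $f_0$: $e_n$ depends only on the deterministic sieve through $\mathcal P_n$, and $\varepsilon_n$ is deterministic. Taking $\sup_{f_0\in\cF}$ and applying \eqref{eq:main-rho-bridge}, $\sup_{f_0}\delta(f_0;\mathcal C_n)\le C_4A(\mathcal C_n)$, gives \eqref{eq:main-general-oracle} with $C_5=C'\max(1,C_4)$.

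\textbf{Main obstacle.} The hard part is matching conventions with the precise form of Supplementary Theorem~\ref{thm:ws-hellinger}. Three things must line up. First, the entropy integral limits $t^2/2^8$ and $\sqrt2t$ and the scaling $u/C_1$ must be exactly what that theorem requires, and this must hold for all $t\ge e_n$, not just at $t=e_n$; I take this as the stated hypothesis. Second, the likelihood term must be the $\rho_{1/2}$ approximation. Third, the optimization slack in Definition~\ref{def:main-sieve-mle} is additive in the average log likelihood, and it must correspond to the theorem's $\eta_n$ condition. If that theorem instead requires $\eta_n\le c\,\epsilon_n^{*2}$ at a single fixed radius, I would define the radius to include $\varepsilon_n$, so the condition holds by construction. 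Finally, measurability of the event $\{h\ge t\}$ is handled with outer probability, which is harmless for the expectation bound.
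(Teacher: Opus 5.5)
Your proposal is correct and follows the paper's route: apply Supplementary Theorem~\ref{thm:ws-hellinger} with $\mathcal S_n=\mathcal C_n$ for each fixed $f_0$, take the supremum over $\cF$, and use the bridge \eqref{eq:main-rho-bridge}, which gives $C_5=C_3\max\{1,C_4\}$. The only difference is that Theorem~\ref{thm:ws-hellinger} already states the expectation bound \eqref{eq:ws-mean}, so your Steps~1--2 (integrating a tail bound and folding $\varepsilon_n$ into the radius) are not needed here; they reproduce what the paper does inside that theorem's own proof.
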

The result follows from the one-sided truncated likelihood-ratio argument of
\citet{wongshen1995}; Supplementary Theorem~\ref{thm:ws-hellinger} gives the bound and derivation. Its two
conditions play different roles. The bracketing integral controls fitting over
the complete sieve, while the likelihood bridge certifies that an approximating
member is close to the truth in the one-sided likelihood quantity as well as in
Hellinger distance. An accurate estimator therefore needs a small approximation term, a controlled
estimation term, and a suitably small optimization tolerance.

\subsection{Looped and untied risk comparison}
\label{sec:untied-obstruction}
We compare the loop sieve MLE with estimators taking values in the
untied union $\mathcal F_U[B]$, using the same likelihood model, target
family, squared Hellinger risk, and parameter budget. Every function in
the looped and untied classes is required to belong to $\mathcal V$.

\paragraph{Looped upper bound.}
For looped estimation at sample size $n$, choose a deterministic parameter
budget $B=B_n$ and model tuning parameters
$\lambda=\lambda_n\in\Lambda_L(B_n)$ and $T=T_n\ge1$ before fitting.
These choices may depend on $n$; their $n$ subscripts are
suppressed below. The selected loop sieve is $\mathcal F_L(\lambda,T)$.
Taking $\mathcal C_n=\mathcal F_L(\lambda,T)$ in
Definition~\ref{def:main-sieve-mle} defines the loop sieve MLE
$\widehat f_{L,n}$. Write $A_L(\lambda,T)=A(\mathcal F_L(\lambda,T))$
and let $e_{n,L}$ be a critical radius satisfying the hypotheses of
Corollary~\ref{cor:main-sieve-oracle} for the induced density class.
If the likelihood bridge \eqref{eq:main-rho-bridge} holds for this sieve,
then Corollary~\ref{cor:main-sieve-oracle} gives
\begin{equation}\label{eq:mainrisk}
 \sup_{f_0\in\cF}\E_{f_0}
 h^2(\mathbb P_{f_0},\mathbb P_{\widehat f_{L,n}})
 \le C_5\{A_L(\lambda,T)+e_{n,L}^2+\varepsilon_n\}.
\end{equation}
A parameter budget limits the coefficients available for approximation.
For fixed $\lambda$, increasing $T$ leaves $B_L(\lambda)$ unchanged
but changes the loop sieve and can affect both approximation and estimation.
Thus the bound allows us to ask whether repeated composition improves
approximation under a fixed budget while keeping estimation error controlled.

\paragraph{Untied lower bound.}
For a candidate class
$\mathcal C\subseteq\mathcal V$, define its class-constrained worst-case risk by \(\inf_{\widehat f:\,\widehat f\in\mathcal C} \sup_{f_0\in\cF}\E_{f_0} h^2(\mathbb P_{f_0},\mathbb P_{\widehat f})\). The infimum ranges over all measurable $\mathcal C$-valued estimators.
Taking $\mathcal C=\mathcal F_U[B]$ gives the untied benchmark. This benchmark allows data-dependent selection of $\lambda$ and $T$
and any measurable fitting rule valued in $\mathcal F_U[B]$.

We derive the lower bound by comparing the Hellinger packing number of
the target family with the Hellinger covering number of the budgeted
untied family. A packing consists of target laws separated
by a specified Hellinger distance; a cover approximates every law in the
untied family within a specified distance. If the target packing is too large
for such a cover, at least one target remains separated from every member
of the untied family. We state this comparison precisely below and combine
it with the loop risk upper bound.\looseness=-1

\begin{assumption}[Hellinger packing of the target family]
\label{ass:complexity}
There exist an exponent $a>0$ and constants
$c_{\mathrm{pack}},\delta_0>0$ such that, for every
$0<\delta\le\delta_0$, the target family contains
$f_{1,\delta},\ldots,f_{M_\delta,\delta}$ satisfying \(\min_{1\le j\ne k\le M_\delta} h(\mathbb P_{f_{j,\delta}},\mathbb P_{f_{k,\delta}})\ge8\delta\), \(\log M_\delta\ge c_{\mathrm{pack}}\delta^{-2/a}\). \end{assumption}

For a class of probability laws $\mathcal Q$, let
$\mathcal N(\epsilon,\mathcal Q,h)$ denote the smallest cardinality of an
internal Hellinger cover of radius $\epsilon$, with every center belonging to
$\mathcal Q$. Write
$\mathcal P_U[B]=\{p_g:g\in\mathcal F_U[B]\}$ for the laws induced by the
budgeted untied union.

\begin{theorem}[Untied minimax lower bound under a parameter budget]\label{thm:union}
Suppose there is a constant $C>0$ such that, for sufficiently large $B$ and
$0<\epsilon<1$,
\begin{equation}\label{eq:unioncover}
\log\mathcal N(\epsilon,\mathcal P_U[B],h)
\le C\{B^\kappa\log^b(eB)+B\log(1/\epsilon)\},
\qquad \kappa,b\ge1.
\end{equation}
Under Assumption~\ref{ass:complexity}, there is a constant $c_{\mathrm{low}}>0$,
depending only on the fixed packing and covering constants, such that every
$n\ge1$ and all sufficiently large $B$ satisfy
\begin{equation}\label{eq:unionlower}
\inf_{\widehat f:\,\widehat f\in\mathcal F_U[B]}
\sup_{f_0\in\cF}\E_{f_0}
h^2(\mathbb P_{f_0},\mathbb P_{\widehat f})
\ge c_{\mathrm{low}}\{B^\kappa\log^b(eB)\}^{-a}.
\end{equation}
\end{theorem}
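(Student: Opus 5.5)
The argument is purely approximation-theoretic: no sample enters. For any $\mathcal F_U[B]$-valued estimator $\widehat f$ and any $f_0$,
\[
h^2(\mathbb P_{f_0},\mathbb P_{\widehat f})\ge\inf_{g\in\mathcal F_U[B]}h^2(\mathbb P_{f_0},\mathbb P_g)
\]
holds pointwise on the sample space. Hence
\[
\E_{f_0}h^2(\mathbb P_{f_0},\mathbb P_{\widehat f})\ge \inf_{g\in\mathcal F_U[B]}h^2(\mathbb P_{f_0},\mathbb P_g),
\]
and the class-constrained risk is bounded below by $A(\mathcal F_U[B])$, uniformly in $n$. It therefore suffices to show that some target in $\cF$ has Hellinger distance at least $\delta$ from every law in $\mathcal P_U[B]$, with $\delta^2\gtrsim\{B^\kappa\log^b(eB)\}^{-a}$.

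Fix $\delta\le\delta_0$ and take the packing $f_{1,\delta},\dots,f_{M_\delta,\delta}$ from Assumption~\ref{ass:complexity}. Suppose, for contradiction, that every $\mathbb P_{f_{j,\delta}}$ lies within Hellinger distance $\delta$ of some element of $\mathcal P_U[B]$. Take a minimal internal $\delta$-cover of $\mathcal P_U[B]$ with centers $q_1,\dots,q_N$, where $N=\mathcal N(\delta,\mathcal P_U[B],h)$. By the triangle inequality, each target lies within $2\delta$ of some center. Two distinct targets assigned to the same center would then be within $4\delta<8\delta$ of each other, which is impossible. So the assignment is injective, and $M_\delta\le N$. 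This gives
\[
c_{\mathrm{pack}}\delta^{-2/a}\le\log M_\delta\le C\{B^\kappa\log^b(eB)+B\log(1/\delta)\}
\]
by \eqref{eq:unioncover}, which applies once $\delta<1$.

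Now choose $\delta=\delta_B=c\{B^\kappa\log^b(eB)\}^{-a/2}$ with $c$ small. Then $\delta_B^{-2/a}=c^{-2/a}B^\kappa\log^b(eB)$. Also $\log(1/\delta_B)\lesssim \log(eB)$, so
\[
B\log(1/\delta_B)\lesssim B\log(eB)\le B^\kappa\log^b(eB),
\]
using $\kappa,b\ge1$. Thus the right side of the display is at most $C'B^\kappa\log^b(eB)$, with $C'$ depending only on $C,\kappa,a,b$ and on $c$ only through a logarithmic term. The left side is $c_{\mathrm{pack}}c^{-2/a}B^\kappa\log^b(eB)$. Taking $c$ small enough makes the left side strictly larger, which contradicts $M_\delta\le N$. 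Large $B$ also guarantees $\delta_B\le\delta_0\wedge1$.

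The main obstacle is this bookkeeping of constants. The dependence of $\log(1/\delta_B)$ on $c$ is only $\log(1/c)$, which grows more slowly than $c^{-2/a}$, so a fixed small $c$ depending only on the packing and covering constants works for all large $B$. Hence some $f_{j,\delta_B}$ satisfies $\inf_g h(\mathbb P_{f_{j,\delta_B}},\mathbb P_g)>\delta_B$. Squaring gives \eqref{eq:unionlower} with $c_{\mathrm{low}}=c^2$, uniformly in $n\ge1$. If the infimum over $g$ is not attained, the argument is unaffected, because we only use the strict approximation-distance statement and the internal cover.
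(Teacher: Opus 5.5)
Your proof is correct and follows the paper's argument essentially verbatim. The paper applies its packing--cover lemma, whose injective-assignment argument you reproduce inline, with $\delta_B=c_0\{B^\kappa\log^b(eB)\}^{-a/2}$, and it fixes $c_0$ so that $c_{\mathrm{pack}}c_0^{-2/a}>C'\{1+\log(1/c_0)\}$, which is exactly your constant bookkeeping.
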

At any fixed sufficiently large $B$, the right side of
\eqref{eq:unionlower} is a positive constant valid for every sample size and
every measurable fitting rule. It also covers data-dependent selection of
$(\lambda,T)$ within $\mathcal F_U[B]$. Theorem~\ref{thm:union} follows from the
packing--cover bound in Lemma~\ref{lem:packing-cover} with squared Hellinger
risk.

\paragraph{Risk comparison.}
\begin{corollary}[Loop-versus-untied risk separation]\label{cor:main-separation}
Let $B=B_n$, $\lambda=\lambda_n\in\Lambda_L(B_n)$, and $T=T_n\ge1$
be deterministic sequences of parameter budgets and model tuning parameters
satisfying the hypotheses of the loop
risk bound \eqref{eq:mainrisk} and Theorem~\ref{thm:union}. If \(\{A_L(\lambda,T)+e_{n,L}^2+\varepsilon_n\} \{B^\kappa\log^b(eB)\}^{a}\longrightarrow0\) as \(n\to\infty\), then the ratio of the worst-case squared Hellinger risk of the loop sieve
MLE to the untied benchmark over $\mathcal F_U[B]$ tends to zero.
\end{corollary}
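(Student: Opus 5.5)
The plan is to sandwich the ratio between the two bounds already available and let the hypothesis do the rest. Write $R_L(n)=\sup_{f_0\in\cF}\E_{f_0}h^2(\mathbb P_{f_0},\mathbb P_{\widehat f_{L,n}})$ for the worst-case risk of the loop sieve MLE. Write $R_U(n,B)$ for the class-constrained worst-case risk over $\mathcal F_U[B]$.

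\textbf{Numerator.} By hypothesis the chosen sequences satisfy the conditions of \eqref{eq:mainrisk}: the critical radius condition \eqref{eq:main-ws-integral} for the induced density class and the likelihood bridge \eqref{eq:main-rho-bridge}. Corollary~\ref{cor:main-sieve-oracle} then gives
\[
R_L(n)\le C_5\{A_L(\lambda_n,T_n)+e_{n,L}^2+\varepsilon_n\}
\]
for every $n$. Here $C_5$ depends only on $C_3,C_4$, so it does not change with $n$.

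\textbf{Denominator.} The covering hypothesis \eqref{eq:unioncover} and Assumption~\ref{ass:complexity} hold, so Theorem~\ref{thm:union} applies once $B_n$ is sufficiently large. It gives
\[
R_U(n,B_n)\ge c_{\mathrm{low}}\{B_n^\kappa\log^b(eB_n)\}^{-a}>0,
\]
and this bound is uniform over every measurable $\mathcal F_U[B_n]$-valued estimator.

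\textbf{Combining.} For all large $n$,
\[
\frac{R_L(n)}{R_U(n,B_n)}\le \frac{C_5}{c_{\mathrm{low}}}\{A_L+e_{n,L}^2+\varepsilon_n\}\{B_n^\kappa\log^b(eB_n)\}^{a},
\]
and the right side tends to zero by the assumed product condition.

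The one delicate step is making sure Theorem~\ref{thm:union} really applies along the sequence, since it requires $B_n$ to be sufficiently large. I would handle this by cases.
\begin{itemize}[leftmargin=*]
\item If $B_n$ is a fixed large constant, the lower bound is a fixed positive constant, and the ratio tends to zero because the loop risk bound does.
\item If $B_n$ grows, the threshold is eventually exceeded.
\item If $B_n$ stays bounded but below the threshold, the statement should be read as presupposing the theorem's regime. The corollary's phrase ``satisfying the hypotheses of Theorem~\ref{thm:union}'' builds this in, so I would state it explicitly.
\end{itemize}
I would also note that $R_U>0$, so the ratio is well defined.
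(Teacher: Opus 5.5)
Your proposal is correct and matches the paper's proof, which simply divides the loop upper bound \eqref{eq:mainrisk} by the untied lower bound \eqref{eq:unionlower}. Your explicit note that the ``sufficiently large $B$'' regime of Theorem~\ref{thm:union} is built into the corollary's hypotheses is a reasonable clarification that the paper leaves implicit.
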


Corollary~\ref{cor:main-separation} gives a sufficient condition for the
loop sieve MLE to attain a worst-case risk of smaller order than the
untied benchmark under a common parameter budget.
Section~\ref{sec:examples-neural} verifies the approximation and complexity bounds
for concrete likelihood and neural examples; Section~\ref{sec:budget-consequences}
establishes separation under fixed and growing budgets.

\section{Likelihood models and neural realizations}
\label{sec:examples-neural}
The general comparison separates the statistical model from the
parameterized representation. A likelihood model determines how function
approximation and covering translate into probability-law distances. A neural
architecture determines what can be approximated and the complexity of its
complete fitted class. This section verifies both sets of conditions for four
likelihood models and two neural realizations.

\subsection{Likelihood models and metric calibration}
\label{sec:likelihood-examples}\label{sec:conditional-models}
We consider three conditional prediction models and one energy-based
generative model, each indexed by a bounded H\"older-smooth function.
Squared Hellinger distance is comparable to squared $L^2(\mu)$ distance
between the conditional model functions, or distance modulo additive
constants for the energy model.

\begin{definition}[H\"older class]\label{def:holder-class}
Fix $d\ge1$, $s>0$ and $H>0$. Write $k=\lfloor s\rfloor$ when $s$ is not an
integer and $k=s-1$ otherwise, and put $\alpha=s-k\in(0,1]$. Let
$\Holder$ consist of functions $f:[0,1]^d\to[-H,H]$ that are $C^k$ on
$(0,1)^d$, whose partial derivatives through order $k$ are
all bounded by $H$ there, and whose order-$k$ partial derivatives satisfy
$|\partial^\nu f(x)-\partial^\nu f(y)|\le H\|x-y\|_\infty^\alpha$
for $|\nu|=k$ and $x,y\in(0,1)^d$.
Here $\nu=(\nu_1,\ldots,\nu_d)$ is a multi-index of nonnegative integers,
$|\nu|=\sum_i\nu_i$, and $\partial^\nu$ denotes the corresponding mixed
partial derivative. Also, $\|x-y\|_\infty=\max_i|x_i-y_i|$.
Boundary values are bounded by $H$;
no boundary derivative condition is imposed.
\end{definition}

Throughout this section,
$\mathcal X=[0,1]^d$, $\mu$ is uniform probability measure on $\mathcal X$,
$\cF=\Holder$, and $a=2s/d$.
For these examples, let $\mathcal V$ be the measurable functions from
$\mathcal X$ into $[-H,H]$; this likelihood index space extends beyond the
H\"older target class $\cF$.

The four likelihood models below use the same candidate space
$\mathcal V$ and target class $\cF$ in the sieve maximum likelihood framework
of Section~\ref{sec:parameter-budget}. In each model, the unknown function
$f_0$ determines the law of the observations.

\noindent\textbf{Conditional prediction models.}
Observe independent pairs $(X_i,Y_i)$ with
$X_i\sim\mu$ and, for $f_0\in\Holder$, consider
\[
Y_i\mid X_i=x\sim
N(f_0(x),\sigma^2),\quad
\operatorname{Laplace}(f_0(x),\tau),\quad\text{or}\quad
\operatorname{Bernoulli}\bigl((1+e^{-f_0(x)})^{-1}\bigr).
\]
The Gaussian standard deviation $\sigma>0$ and Laplace scale $\tau>0$
are fixed and known. Write $m=\mathrm G,\mathrm{La},\mathrm{Be}$ for these three models and
$\mathbb P_f^m,p_f^m$ for the joint law and density indexed by $f$.

\noindent\textbf{Energy-based generative model.}
Observe independent
$X_1,\ldots,X_n\sim\mathbb P_{f_0}^{\mathrm E}$, where
\begin{equation}\label{eq:ebm-model}
Z_f=\int_{\mathcal X}e^{f(u)}\,d\mu(u),\qquad
p_f^{\mathrm E}(x)=\frac{e^{f(x)}}{Z_f},\qquad f_0\in\Holder.
\end{equation}
Here $f$ is a log-density potential, or negative energy \citep{songkingma2021}. Adding a constant to
$f$ leaves the density unchanged, so define
$d_{\mathrm c}(f,g)^2=\inf_{c\in\mathbb R}\|f-g-c\|_{L^2(\mu)}^2$.
Set $d_m(f,g)=\|f-g\|_{L^2(\mu)}$ for $m\in\{\mathrm G,\mathrm{La},\mathrm{Be}\}$
and $d_{\mathrm E}(f,g)=d_{\mathrm c}(f,g)$.
For a function class $\mathcal G$, write
$\mathcal N(\epsilon,\mathcal G,\|\cdot\|_\infty)$ for its internal uniform covering number.

The next proposition connects these models to the general likelihood theory.
Its metric and likelihood bounds transfer function approximation to likelihood
risk, while its bracket conversion and target packing provide the complexity
conditions used in the upper and lower bounds.\looseness=-1

\begin{proposition}[Metric and likelihood bounds for the examples]
\label{prop:model-geometry}
For $m\in\{\mathrm G,\mathrm{La},\mathrm{Be},\mathrm E\}$, let
$\mathcal P^m(\mathcal G)=\{p_g^m:g\in\mathcal G\}$ for a function class
$\mathcal G$, and let $d_m$ be the pseudometric defined above. There are constants
$0<c_m<C_m<\infty$, depending only on $H$ and on the fixed $\sigma$ or
$\tau$, such that the following statements hold uniformly over measurable
$f,g:\mathcal X\to[-H,H]$.

\emph{(i) Metric equivalence.} \(c_m d_m(f,g)^2\le h^2(\mathbb P_f^m,\mathbb P_g^m) \le C_m d_m(f,g)^2\), \(d_m(f,g)\le\|f-g\|_{L^2(\mu)}\).

\emph{(ii) One-sided likelihood bridge.} \(\rho_{1/2}(p_f^m,p_g^m) \le C_m h^2(\mathbb P_f^m,\mathbb P_g^m)\).

\emph{(iii) Bracket and cover conversion.} For every
$\mathcal G\subseteq\{g:\mathcal X\to[-H,H]\}$ and $0<u<1$, \(H\bigl(u,\mathcal P^m(\mathcal G),h\bigr) \le \log\mathcal N\bigl(u/C_m,\mathcal G,\|\cdot\|_\infty\bigr)\). An internal uniform $u$-cover of $\mathcal G$ also induces an internal
Hellinger $C_m u$-cover of $\mathcal P^m(\mathcal G)$.

\emph{(iv) Target packing.} The H\"older target family satisfies
Assumption~\ref{ass:complexity} in every model with exponent
$a=2s/d$.
\end{proposition}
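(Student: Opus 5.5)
The plan is to reduce every statement to pointwise comparisons between the conditional or marginal densities, using that all functions take values in $[-H,H]$. For the three conditional models the joint density is $p_f^m(x,y)=q_m(y\mid f(x))$ with respect to $\mu\otimes\nu_0$, so $h^2(\mathbb P_f^m,\mathbb P_g^m)=\int h^2_m(f(x),g(x))\,d\mu(x)$. Here $h^2_m(\theta,\theta')$ is the squared Hellinger distance within the one-parameter family. We have
\begin{itemize}
\item $h^2_{\mathrm G}=2\{1-e^{-(\theta-\theta')^2/(8\sigma^2)}\}$;
\item $h^2_{\mathrm{La}}=2\{1-(1+|\theta-\theta'|/(2\tau))e^{-|\theta-\theta'|/(2\tau)}\}$;
\item $h^2_{\mathrm{Be}}$ is the Bernoulli Hellinger distance between logistic probabilities.
\end{itemize}
Each is a smooth (or at least continuous) function of $\theta-\theta'$ on $[-2H,2H]$. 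Each vanishes quadratically at $0$ and is positive elsewhere. Hence it is bounded above and below by constant multiples of $(\theta-\theta')^2$, with constants depending only on $H$ and $\sigma$ or $\tau$. Integrating over $x$ gives (i). The inequality $d_m\le\|f-g\|_{L^2(\mu)}$ is trivial, and for $d_{\mathrm c}$ one takes $c=0$.

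For the energy model, write $\sqrt{p_f}-\sqrt{p_g}=e^{g/2}Z_g^{-1/2}\{e^{(f-g-c)/2}-1\}$ with $c=\log(Z_f/Z_g)$, and note $|c|\le2H$. Since the densities lie in $[e^{-2H},e^{2H}]$, this gives $h^2\asymp\|f-g-c\|_{L^2(\mu)}^2\ge d_{\mathrm c}^2$. The matching upper bound needs care: $h^2\le C\inf_{c'}\|f-g-c'\|^2$. I would use that $h^2$ is invariant under $g\mapsto g+c'$. The minimizing $c'$ is the mean of $f-g$, which lies in $[-2H,2H]$. Applying the bound $h^2\le C\|f-(g+c')\|_{L^\infty\text{-range}}^2$ to functions in $[-3H,3H]$ then gives (i) with enlarged constants. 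This bookkeeping, keeping all shifted functions in a fixed bounded range, is the one mildly delicate point of the proof.

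For (ii), I would use the standard fact that when the likelihood ratio is bounded, $p_0/p_1\le K$, one has $\rho_{1/2}(p_0,p_1)\le C(K)\,h^2(p_0,p_1)$. To see it, write $\rho_{1/2}=2\int p_0(r^{1/2}-1)$ with $r=p_0/p_1$. Since $\int p_0(r^{-1}-1)\le 0$ by Cauchy–Schwarz type reasoning, or more simply since $\int(p_0-p_1)=0$, this rewrites as $2\int p_1(r^{1/2}-1)^2 r^{1/2}$. That integral is at most $2K^{1/2}h^2$.

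Boundedness of the ratio holds in each model:
\begin{itemize}
\item Laplace: $r\le e^{2H/\tau}$.
\item Bernoulli: probabilities lie in $[(1+e^H)^{-1},1-(1+e^H)^{-1}]$.
\item Energy: $r\le e^{4H}$.
\end{itemize}
The Gaussian case is the exception: the ratio is unbounded in $y$. Here I would compute $\rho_{1/2}$ directly. We have $\int \varphi_{\theta}^{3/2}\varphi_{\theta'}^{-1/2}=e^{3(\theta-\theta')^2/(8\sigma^2)}$, so $\rho_{1/2}=2\int\{e^{3(f-g)^2/(8\sigma^2)}-1\}d\mu\le C\|f-g\|^2$ on the bounded range. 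Combining with (i) finishes this case.

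For (iii), I would use the pointwise Lipschitz bound $|\sqrt{p_f^m}-\sqrt{p_g^m}|\le \|f-g\|_\infty\,\Psi_m$. The envelope is $\Psi_m(x,y)=\sup_{|\theta|\le H}|\partial_\theta\sqrt{q_m(y\mid\theta)}|$; for the energy model it is the corresponding bound from the derivative with respect to a uniform perturbation. Each $\Psi_m$ has finite $L^2$ norm: it has Gaussian or exponential tails for the first two models, and is bounded for the others. Then each center $g_j$ of a uniform $u/C_m$-cover yields the bracket $[(\sqrt{p_{g_j}}-(u/C_m)\Psi_m)_+^2,(\sqrt{p_{g_j}}+(u/C_m)\Psi_m)^2]$, with width at most $2(u/C_m)\|\Psi_m\|_2\le u$ after enlarging $C_m$. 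The internal cover statement follows from the same Lipschitz bound, or from (i) together with $d_m\le\|\cdot\|_\infty$.

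For (iv), I would take the classical bump packing. Place $N=\lceil c\delta^{-1/s}\rceil^d$ disjoint cubes of side $\ell\asymp\delta^{1/s}$, and let $\psi_j(x)=L\ell^s\psi((x-x_j)/\ell)$ for a fixed $C^\infty$ bump $\psi$. With $L$ small, every signed sum $\sum_j\omega_j\psi_j$ with $\omega\in\{0,1\}^N$ lies in $\Holder$. Varshamov–Gilbert gives $2^{N/8}$ vertices at Hamming distance at least $N/8$. Their pairwise $L^2(\mu)$ distances are then $\gtrsim (N\ell^{2s+d})^{1/2}\asymp\ell^s$.

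For the energy model, mean-subtraction is harmless. I would use a bump with $\int\psi=0$, so that $d_{\mathrm c}$ equals the $L^2$ distance. The lower half of (i) then gives Hellinger separation $\ge c\,\ell^s$, which is set equal to $8\delta$ by choosing $\ell$. Finally $\log M_\delta\gtrsim N\asymp\delta^{-d/s}=\delta^{-2/a}$ with $a=2s/d$.
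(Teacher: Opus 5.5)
Your argument is correct. Its overall structure matches the paper's: pointwise comparisons on the bounded index range for (i), a root-density $L^2$ envelope for (iii), and a zero-mean disjoint-bump packing with a Hamming code for (iv). For the energy case of (iii) the paper uses multiplicative brackets $e^{\mp2u}p_{f_j}$, which is equivalent to your envelope. Two local steps go differently, and each needs a small repair.

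\emph{The bridge (ii).} The paper computes $\int p_f^{3/2}p_g^{-1/2}\,d\nu_Y$ explicitly in all three conditional models and Taylor-expands it. It uses the bounded-ratio identity only for the energy model. Your bounded-ratio lemma handles Laplace, Bernoulli and energy together and leaves only the Gaussian for direct computation, which is tidier. Your rewrite, however, drops a term. The correct identity is
\[
 \rho_{1/2}(p_0,p_1)=\int p_1(\sqrt r-1)^2(2\sqrt r+1)\,d\nu
 =2\int p_1\sqrt r\,(\sqrt r-1)^2\,d\nu+h^2(p_0,p_1).
\]
So the constant is $2K^{1/2}+1$, which is the paper's $2e^{2H}+1$ in the energy case, and your conclusion survives.

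\emph{The energy upper bound in (i).} The paper avoids shifting altogether:
\[
 h^2\le\operatorname{KL}(\mathbb P_f\Vert\mathbb P_g)=\int_0^1(1-t)\operatorname{Var}_{\mathbb P_{f_t}}(g-f)\,dt\le\tfrac12e^{2H}d_{\mathrm c}(f,g)^2 .
\]
Your shift-invariance route also works, but it is missing a step. After shifting you apply $h^2(\mathbb P_f,\mathbb P_{\tilde g})\le C\|f-\tilde g\|_{L^2(\mu)}^2$ with $\tilde g=g+c'\in[-3H,3H]$. Your computation did not give this; it gave $h^2\asymp\|f-g-\log(Z_f/Z_g)\|_{L^2(\mu)}^2$. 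You still need $|\log(Z_f/Z_{\tilde g})|\le e^{6H}\|f-\tilde g\|_{L^1(\mu)}$. This follows by setting $f_t=f+t(\tilde g-f)$ and using $\frac{d}{dt}\log Z_{f_t}=\mathbb E_{\mathbb P_{f_t}}(\tilde g-f)$ with $p_{f_t}\le e^{6H}$.

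\emph{Bernoulli.} $h^2_{\mathrm{Be}}(\theta,\theta')$ is not a function of $\theta-\theta'$ alone. The ratio $h^2_{\mathrm{Be}}(\theta,\theta')/(\theta-\theta')^2$ extends continuously and positively to the compact set $[-H,H]^2$. That fact, or the paper's mean-value bound on $\sqrt{\operatorname{sigmoid}}$, gives the same two-sided comparison.
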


The proof is in Supplementary Section~\ref{app:proofs-section3}, with the likelihood
calculations in Supplementary Appendix~\ref{sec:losses}. In particular,
$d_{\mathrm c}\le\|\cdot\|_{L^2(\mu)}$ means that the same ordinary
$L^2(\mu)$ approximation bound in Section~\ref{sec:neural-approximation}
suffices for all four models. Section~\ref{sec:architecture-unions} applies
the bracket and cover conversion to the neural classes defined next.

\subsection{Looped FFN and Transformer}
\label{sec:neural-estimators}
We specialize Definition~\ref{def:loop-untied} to residual FFNs and
post-LN Transformers, indexed by $j=\mathrm F$ and $j=\mathrm{Tr}$,
respectively. For each architecture, the model tuning parameter is
$\lambda=(q,r)$. Here $q$ is the intermediate representation dimension for
the FFN and the dimension of each token for the Transformer; $r$ is the
hidden width of the update block's feedforward subnetwork. For both architectures, fix $M_{\rm par}>0$ and restrict
every trainable scalar to $[-M_{\rm par},M_{\rm par}]$.
Write $\Pi_H(u)=\min\{H,\max\{-H,u\}\}$ for the fixed output clipping function,
which adds no trainable parameters.\looseness=-1

\noindent\textbf{Looped FFN.}
The input function $E_\eta$ is an affine function from $\mathbb R^d$ to $\mathbb R^q$.
For a state $z\in\mathbb R^q$, the shared residual block is
\[
\Phi_\theta(z)=z+W_2\ReLU(W_1z+b_1)+b_2,\quad
W_1\in\mathbb R^{r\times q},\ b_1\in\mathbb R^r,\quad
W_2\in\mathbb R^{q\times r},\ b_2\in\mathbb R^q.
\]
Here $\ReLU(u)=\max\{u,0\}$ acts coordinatewise.
The output function $O_\omega$ is an affine scalar function on $\mathbb R^q$
followed by $\Pi_H$. Applying the same block $T$ times gives the class
$\mathcal F_{L,\mathrm F}(q,r,T)$. All input-function, block, and output-function
coefficients range over the prescribed parameter box, with their shapes
fixed before the class is formed. Allowing a separate update vector at
each application defines $\mathcal F_{U,\mathrm F}(q,r,T)$.

\noindent\textbf{Looped Transformer.}
Fix a partition $I_1,\ldots,I_M$ of $\{1,\ldots,d\}$ into nonempty sets
and put $N=M+3$. The state $S\in\mathbb R^{N\times q}$ consists of $M$
data tokens, one controller row, and two reference rows. The input function
assigns data token $j\le M$ the row $S_{0,j}=A_jx_{I_j}+a_j$,
where $A_j\in\mathbb R^{q\times |I_j|}$ and $a_j\in\mathbb R^q$.
The controller and reference rows are input-independent free constant
vectors in $\mathbb R^q$.

Writing $\operatorname{Attn}$ for self-attention,
$\operatorname{FFN}$ for the row-wise feedforward subnetwork, and
$\operatorname{LN}_1,\operatorname{LN}_2$ for layer normalization, the
shared post-LN block is \(Y=\operatorname{LN}_1\{S+\operatorname{Attn}(S)\}\), \(\Phi_\theta(S)=\operatorname{LN}_2\{Y+\operatorname{FFN}(Y)\}\). Attention has one $q$-dimensional softmax head with $1/\sqrt q$ scaling,
no mask, and four affine projections from $\mathbb R^q$ to $\mathbb R^q$.
The FFN has one ReLU hidden layer of width $r$. Each layer normalization
has trainable coordinatewise gain and bias and a fixed positive stabilizer
$\varepsilon_{\rm LN}$. The output function $O_\omega$ is an affine scalar function
of $\operatorname{vec}(S)\in\mathbb R^{Nq}$ followed by $\Pi_H$,
where $\operatorname{vec}$ stacks the matrix entries into a vector.

Applying this block $T$ times with shared parameters defines
$\mathcal F_{L,\mathrm{Tr}}(q,r,T)$; using separate update vectors defines
$\mathcal F_{U,\mathrm{Tr}}(q,r,T)$. Both classes use the same token
protocol, parameter box, and endpoint shapes. Reference values and zero
coordinates used by an approximation witness remain free coordinates in
the complete parameterized classes.
Figure~\ref{fig:neural-realizations} displays the two update blocks within
the common structure of Figure~\ref{fig:loop-architecture}.

\noindent\textbf{Estimation and parameter counts.}
For $j\in\{\mathrm F,\mathrm{Tr}\}$, Table~\ref{tab:parameter-counts}
gives the total numbers of trainable scalar coefficients, including the input
and output functions. The formulas count every matrix entry, bias, layer-normalization
gain, and layer-normalization shift in the definitions above.
Supplementary Section~\ref{app:counts} gives the corresponding dense counts for the residual
FFNs used in the numerical studies.

\begin{figure}[p]
\setstretch{1}
\centering
\includegraphics[width=0.94\linewidth]{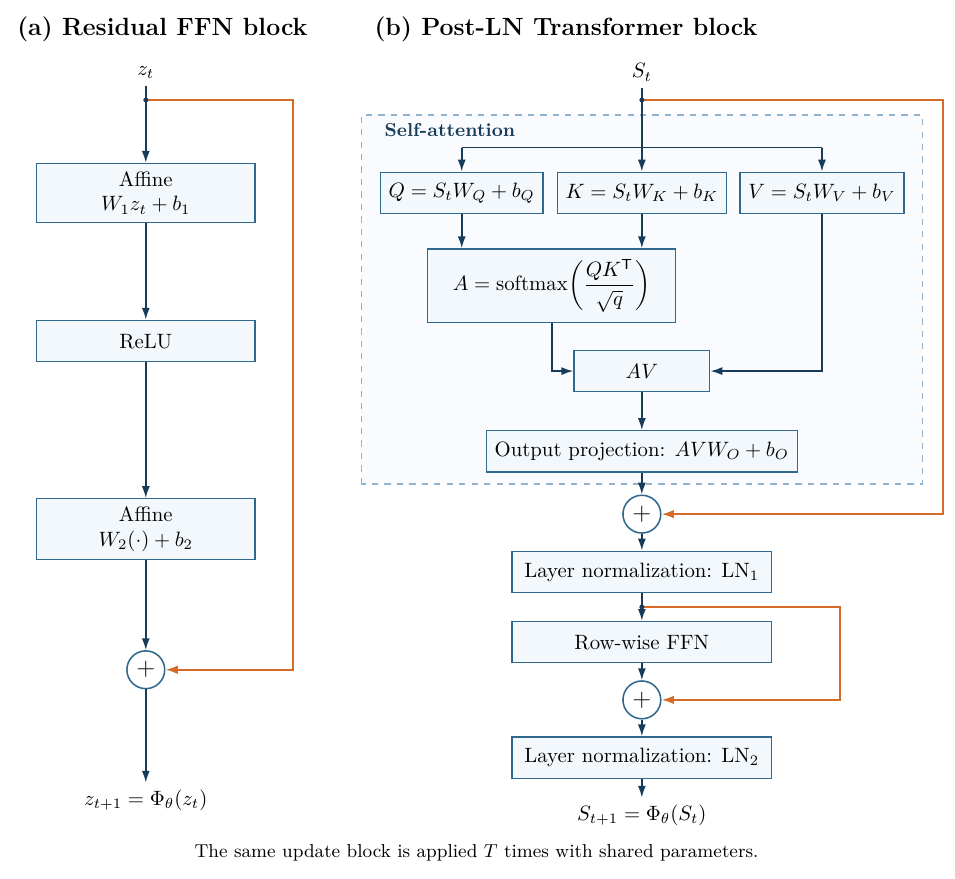}
\caption{Neural realizations of one shared update function. The residual
FFN applies an affine--ReLU--affine function with a residual connection. The
Transformer applies self-attention and a row-wise FFN, with layer normalization
after each residual update. Orange arrows show residual connections. In both
architectures, the displayed update is applied $T$ times with the same
parameters, which are counted once toward the parameter budget.}
\label{fig:neural-realizations}

\vspace{4pt}
\captionof{table}{Distinct trainable parameter counts. The looped count
$B_L^j(q,r)$ is independent of $T$; the untied count
$B_U^j(q,r,T)$ grows linearly with $T$. The Transformer uses $N=M+3$ tokens.}
\label{tab:parameter-counts}
\resizebox{\linewidth}{!}{%
\begin{tabular}{lcc}
\hline
Architecture & $B_L^j(q,r)$ & $B_U^j(q,r,T)$\\
\hline
Residual ReLU FFN
  & $q(d+3)+1+(2q+1)r$
  & $q(d+2)+1+T\{(2q+1)r+q\}$\\
Post-LN Transformer
  & $4q^2+q(d+2N+9)+1+(2q+1)r$
  & $q(d+2N)+1+T\{4q^2+(2q+1)r+9q\}$\\
\hline
\end{tabular}
}
\end{figure}

Under a parameter budget $B$, define
\[
\mathcal I_{U,j}(B)=\{(q,r,T):B_U^j(q,r,T)\le B\},\qquad
\mathcal F_{U,j}[B]=\bigcup_{(q,r,T)\in\mathcal I_{U,j}(B)}
\mathcal F_{U,j}(q,r,T).
\]
Here $j$ identifies the architecture;
the index $m$ identifies the likelihood model from
Section~\ref{sec:likelihood-examples}. The corresponding density classes are
$\mathcal P_{L,j}^m(q,r,T)=\{p_g^m:g\in\mathcal F_{L,j}(q,r,T)\}$
and $\mathcal P_{U,j}^m[B]=\{p_g^m:g\in\mathcal F_{U,j}[B]\}$.
For deterministic model tuning parameters $(q,r,T)$ satisfying
$B_L^j(q,r)\le B$, let $\widehat f_{L,n}^{m,j}$ be the sieve MLE of
Definition~\ref{def:main-sieve-mle} over $\mathcal F_{L,j}(q,r,T)$, with
average-log-likelihood tolerance $\varepsilon_n$. The untied competitor
may be any measurable estimator valued in $\mathcal F_{U,j}[B]$, allowing
data-dependent selection of $(q,r,T)$ and any fitting rule.

\clearpage
\subsection{Approximation error}
\label{sec:neural-approximation}
Approximation error measures how closely a function class can
represent the target before data are used to fit its coefficients. For
each architecture $j$ and likelihood model $m$, define the worst-case
squared Hellinger approximation error \(A_{L,j}^m(q,r,T) =\sup_{f_0\in\Holder}\inf_{g\in\mathcal F_{L,j}(q,r,T)} h^2(\mathbb P_{f_0}^m,\mathbb P_g^m)\). We first bound ordinary $L^2(\mu)$ approximation of the H\"older index and
then use Proposition~\ref{prop:model-geometry} to transfer the result to
all four likelihood models. The following theorem expresses the bound in terms of hidden width $r$ and
iteration count $T$, with $a=2s/d$. Besides the leading approximation term,
each bound includes a term arising from initialization of the approximation
construction. We call the condition
under which this term may be omitted the initialization condition.

\begin{theorem}[Looped neural approximation in Hellinger distance]
\label{thm:neural-approx}
Fix $d,s,H$ and $j\in\{\mathrm F,\mathrm{Tr}\}$. For $j=\mathrm{Tr}$, also fix the token partition and
$\varepsilon_{\rm LN}>0$. There exist constants $C,c>0$ and
architecture-specific thresholds $q_0^j,r_0^j,M_0^j$, depending only on
these fixed quantities, such that, for fixed $M_{\rm par}\ge M_0^j$,
integers $q\ge q_0^j$, $r\ge r_0^j$, and sufficiently large $T$,
every likelihood model $m$ of Section~\ref{sec:likelihood-examples} satisfies
\begin{equation}\label{eq:conditional-approx}
A_{L,j}^m(q,r,T)
\le C_m\left\{\left[\frac{\log(erT)}{rT}\right]^a
+e^{-cT^{1/h_0^j}}\right\},
\end{equation}
where $a=2s/d$, $h_0^{\mathrm F}=1$, $h_0^{\mathrm{Tr}}=2$, and
$C_m>0$ depends additionally on the fixed parameters of the likelihood model.
The exponential term may be omitted when $T\ge C\log(erT)$ for the
residual FFN and when $T\ge C\log^2(erT)$ for the Transformer.

For fixed admissible $q,r$, $B_L^j(q,r)$ is independent of $T$. Once the
initialization condition above holds, the bound simplifies to
$A_{L,j}^m(q,r,T)\lesssim_{q,r}\{\log(eT)/T\}^{a}$.
Thus any budget $B\ge B_L^j(q,r)$ admits all iteration counts,
with approximation error tending to zero as $T\to\infty$.
\end{theorem}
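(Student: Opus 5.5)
By Proposition~\ref{prop:model-geometry}(i), for every model $m$ and all $[-H,H]$-valued $f,g$ we have $h^2(\mathbb P_f^m,\mathbb P_g^m)\le C_m d_m(f,g)^2\le C_m\|f-g\|_{L^2(\mu)}^2$. Every element of $\mathcal F_{L,j}(q,r,T)$ is clipped by $\Pi_H$, so it lies in $\mathcal V$. It therefore suffices to prove a uniform $L^\infty$ (hence $L^2(\mu)$) bound of the form
\[
\sup_{f_0\in\Holder}\inf_{g\in\mathcal F_{L,j}(q,r,T)}\|f_0-g\|_{L^\infty}
\lesssim\left[\frac{\log(erT)}{rT}\right]^{s/d}+e^{-cT^{1/h_0^j}/2}.
\]
Squaring this gives \eqref{eq:conditional-approx}. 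The final clause then follows from Table~\ref{tab:parameter-counts}: $B_L^j(q,r)$ does not involve $T$.

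I would build the witness in three stages. The first is localization and Taylor encoding. I partition $[0,1]^d$ into $K^d$ cubes with $K^d\asymp rT/\log(erT)$. On each cube, $f_0$ is replaced by its degree-$k$ Taylor polynomial, which costs $O(K^{-s})$ uniformly. The Taylor coefficients are quantized to precision $K^{-s}$, so each cube needs $O(\log K)$ bits. I then concatenate all of these bits, cube by cube, into the binary expansions of $O(r)$ bounded real numbers; in total this is $O(K^d\log K)=O(rT)$ bits, or $O(T)$ bits per number. These numbers are stored in the bias entries $b_1$ (and, for the Transformer, in free constant rows), and this is what allows $M_{\rm par}$ to stay fixed.

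The second stage is the shared update acting as a decoder. The state carries a coordinate register of the input cell index, which the input map and the first few iterations compute via a sawtooth and floor-type extraction. It also carries a running register that holds the remaining bit stream of each stored number, a counter, and an accumulator. Each application of the same block does three things: it shifts the bit registers by one bit using a ReLU step-function approximation (the map $u\mapsto 2u-\mathbf 1\{u\ge1/2\}$), compares the counter with the cell index, and adds the extracted bits into the accumulator only when the counter matches. Running this for $T$ iterations reads off the local Taylor coefficients of the cell containing $x$. The remaining iterations evaluate the local polynomial by Horner-type repeated multiplications, using ReLU squaring approximations $x^2\approx$ a sawtooth series (as in Yarotsky). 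Each multiplication costs one iteration per bit of accuracy.

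The step functions are discontinuous, so the bound can only hold outside a boundary region of small measure near bit and cell thresholds. I would handle this in one of two ways. One option is an $L^2$ bound in which the bad set has measure $O(K^{-s}$-power$)$. The other is a standard trick that averages shifted copies with a partition of unity, fitted into a constant-factor widening of the block, which gives a uniform bound.

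For the Transformer, the residual FFN logic is embedded into the row-wise FFN. Attention, using the reference rows and a controller row, routes the data-token coordinates into the controller row. The main extra difficulty is that post-LN renormalizes every row at every iteration. To handle this, I would keep each row with a fixed mean and variance by padding with mirrored coordinates ($\pm$ pairs plus a large constant coordinate), so that $\operatorname{LN}$ acts as a known affine map that the gains undo. Softmax attention then only approximates hard selection, with an error of order $e^{-\beta}$ at logit scale $\beta$. Because parameters are bounded by $M_{\rm par}$, a large $\beta$ has to be built up over iterations, for example by repeatedly amplifying a controller coordinate. I expect this to produce the initialization term $e^{-cT^{1/2}}$: reaching scale $\beta$ and then sustaining it over $T$ steps forces $\beta\asymp T^{1/2}$. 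The FFN case gives $e^{-cT}$ from the analogous geometric bootstrapping of precision. When $T\ge C\log(erT)$ (respectively $C\log^2(erT)$), the exponential term is dominated by the polynomial one and can be dropped.

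The main obstacle is the error propagation through $T$ compositions of one fixed map under bounded parameters. A perturbation in the bit registers must not amplify across iterations; the shift map doubles errors. I would first try to make every decoding step snap to an exact lattice, so that ReLU thresholds re-quantize the registers each iteration. The error analysis then reduces to the measure of inputs near thresholds, and the Transformer's softmax leakage has to be absorbed into the same re-quantization margin.
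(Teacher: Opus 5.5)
Your reduction matches the paper's. Proposition~\ref{prop:model-geometry}(i) transfers a squared $L^2(\mu)$ index bound to all four models. The paper proves only an $L^2$ index bound: exact computation with $O(K^{-s})$ error off a set of measure $O(K^{-2s})$, and clipping on that set. So your $L^2$ fallback is the right route.

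\textbf{Main gap: how the width $r$ enters.} You pack the bits into $O(r)$ stored numbers and keep ``a running register that holds the remaining bit stream of each stored number'' in the state. But the state has only $q$ coordinates. The bound must hold, with constants independent of $q$ and $r$, for every $q\ge q_0^j$. In particular it must hold for fixed $q=q_0^j$ and $r\to\infty$, which is exactly the regime $r_B\asymp B$ used later. Hidden units are recomputed from the $q$-dimensional state at each iteration, so they cannot hold $O(r)$ partially shifted streams. For the Transformer, the free constant rows hold only $3q$ scalars. At fixed $q$ your scanner therefore reads $O(T)$ bits and yields only $\{\log(eT)/T\}^a$, not $\{\log(erT)/(rT)\}^a$.

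The missing idea is to spend the width on \emph{selection} rather than parallel storage:
\begin{itemize}
\item Compute the exact integer address of the coarse group containing $x$, using $O(\log K)$ scale doublings with bounded weights.
\item One layer of $O(N_{\rm grp})$ triangular hats, with $N_{\rm grp}\asymp\min\{r,K^d\}$ and the group codes as output coefficients, loads only that group's code into a fixed-size evaluator.
\item The evaluator scans $K^d/N_{\rm grp}\asymp 1+K^d/r$ records at $O(\log K)$ iterations each.
\end{itemize}

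\textbf{Decoding.} Plain binary expansions read by a bounded-slope ReLU step fail whenever a stored suffix lies near $1/2$. That failure affects every $x$, not a small-measure set. The paper uses a base-four code with digits in $\{0,2\}$, which keeps every suffix out of $(1/6,1/2)$ and makes the decoder exact.

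\textbf{Transformer part.} There are two further problems.
\begin{itemize}
\item Post-LN bounds every row by $M_{\rm par}(\sqrt q+1)$ after each block. So the attention logits are bounded by a constant depending on $q$ and $M_{\rm par}$ at every iteration. No logit scale $\beta$ can be built up over iterations, softmax never approaches hard selection, and your $e^{-\beta}$ leakage account of $e^{-cT^{1/2}}$ has no basis.
\item Mirrored pairs plus a constant coordinate do not make LN a fixed affine map. The row variance still depends on $\|z\|_2^2$, which changes as the native state evolves, and bounded gains cannot undo a state-dependent rescaling. You also do not say how an affine output reads a value off a row of unknown scale.
\end{itemize}

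The paper handles these as follows:
\begin{itemize}
\item It gathers $x$ exactly by uniform attention with zero queries.
\item Afterwards it uses attention only as a sign-correct signal with $|a_t(u)|\ge c_{\rm sig}|u|$, which the FFN amplifies through $w^+=\Pi_1(2w+a_{\rm att})$.
\item It makes every ReLU threshold scale-invariant by positive homogeneity, with all constants multiplying a homogeneous reference coordinate.
\item It ends with a terminal correction: compute $u/\sqrt{1-u^2/2}$, clear everything else, and let the scale converge so that the affine weight $H\sqrt{2\lambda_q}$ recovers $Hu$.
\end{itemize}

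\textbf{Origin of the exponential term.} It is not a leakage error but a resolution cap. Routing and correction take $O(\log^2(eK))$ iterations, and the FFN selector takes $O(\log(eK))$. So for moderate $T$ one must take $K\le\exp(cT^{1/h_0^j})$, and the error $K^{-2s}$ then produces $e^{-cT^{1/h_0^j}}$.
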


\noindent\textbf{How looping improves approximation.}
Theorem~\ref{thm:neural-approx} follows from the explicit FFN and Transformer
constructions in Theorems~\ref{thm:ffn-construction} and
\ref{thm:tr-construction}, respectively. Their central idea is to approximate
the target by local Taylor polynomials on a spatial grid and encode the
coefficients in bounded real parameters. Repeated use of the same update function
decodes and evaluates the encoded information; Supplementary Section~\ref{app:base-engine}
details this procedure, and Supplementary Appendix~\ref{app:transformer-proof} implements
it in the Transformer.
Above the initialization threshold, the proof of
Supplementary Lemma~\ref{lem:neural-index-approx} shows
that $T$ loop iterations with hidden width $r$ permit a grid with order
$rT/\log(erT)$ cells. The resulting finer local approximation gives the
squared-error bound $\{\log(erT)/(rT)\}^a$, with the number of trainable
coefficients independent of $T$.\looseness=-1

\subsection{Complexity of the looped and untied classes}\label{sec:architecture-unions}
Approximation requires a suitable candidate for each target; estimation
requires control of the complete class over which fitting takes place.
Supplementary Lemma~\ref{lem:neural-index-cover} bounds uniform covering
numbers of the neural index classes; Proposition~\ref{prop:model-geometry}
transfers these bounds to density-space complexity. For the loop sieve, Hellinger
bracketing entropy controls estimation error. For the budgeted untied
union defined in Section~\ref{sec:neural-estimators}, a Hellinger covering
bound combines with target packing to yield a lower bound that remains
valid after data-dependent architecture selection.
For any class $\mathcal C$ equipped with a pseudometric $d$,
$\mathcal N(\epsilon,\mathcal C,d)$ denotes its internal covering number at
radius $\epsilon$.

Fix a numerical constant $C_6\ge2$ and define
\begin{equation}\label{eq:commonG}
G(q,r,T)=\log(T+1)+(T+1)\log(C_6(q+1)(r+1)).
\end{equation}
Up to a constant, $G(q,r,T)$ bounds the logarithm of the output sensitivity
to coefficient changes after $T$ loop iterations. Greater sensitivity requires a finer coefficient grid, so the covering bound
depends on the iteration count even with a fixed number of coefficients.

\begin{proposition}[Hellinger complexity bounds]
\label{prop:architecture-cover}
Fix $M_{\rm par}>0$, $j\in\{\mathrm F,\mathrm{Tr}\}$, and a likelihood
model $m$ of Section~\ref{sec:likelihood-examples}. There exists a constant
$C>1$, depending only on the fixed problem and implementation parameters,
such that, for positive integers $q,r,T$ and $0<u<1$,
\begin{equation}\label{eq:conditional-entropy}
H(u,\mathcal P_{L,j}^m(q,r,T),h)
\le C B_L^j(q,r)\{\log(C/u)+G(q,r,T)\}.
\end{equation}
For all sufficiently large $B$ and $0<\epsilon<1$,
\begin{equation}\label{eq:law-union-cover}
\log\mathcal N(\epsilon,\mathcal P_{U,j}^m[B],h)
\le C\{B^2\log(eB)+B\log(C/\epsilon)\}.
\end{equation}

For fixed admissible $q,r$, $B_L^j(q,r)=O(1)$ and
$G(q,r,T)=O(T)$. Thus the looped-class entropy bound in
\eqref{eq:conditional-entropy} has the simpler form
$H(u,\mathcal P_{L,j}^m(q,r,T),h)\lesssim_{q,r} T+\log(C/u)$ for $0<u<1$.\looseness=-1
\end{proposition}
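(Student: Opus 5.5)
The plan is to reduce both bounds to uniform (sup-norm) covering numbers of the neural index classes and then transfer them to density space with Proposition~\ref{prop:model-geometry}(iii). For the looped class I would bound the Lipschitz sensitivity of the map from coefficients to functions. Fix $q,r,T$ and two coefficient vectors $(\eta,\theta,\omega)$, $(\eta',\theta',\omega')$ in the box $[-M_{\rm par},M_{\rm par}]^{B_L^j(q,r)}$ with sup-distance $\delta$. The proof proceeds in four steps.

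\emph{Step 1: boundedness of the states.} Because $x\in[0,1]^d$ and the coefficients are bounded, the encoded state has norm at most $C(q+1)$. For the residual FFN, one block multiplies the state bound by at most $C(q+1)(r+1)$, so after $t$ iterations the state norm is at most $\{C(q+1)(r+1)\}^t$. For the post-LN Transformer, layer normalization with a fixed stabilizer $\varepsilon_{\rm LN}$ keeps every row bounded by $C\sqrt q\,M_{\rm par}/\sqrt{\varepsilon_{\rm LN}}$ after each block, so the states stay bounded uniformly in $t$.

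\emph{Step 2: Lipschitz bounds for one block.} I would show that one block $\Phi_\theta$ is Lipschitz in its state argument with constant $L_1\le C(q+1)(r+1)$. I would also show it is Lipschitz in $\theta$, at the current state bound, with constant $L_2\le C(q+1)(r+1)\times(\text{state bound})$.
\begin{itemize}
\item For the FFN this is direct.
\item For the Transformer, softmax is $1$-Lipschitz in the logits, and the logits are bounded on bounded states. Layer normalization is Lipschitz with constant $C\sqrt q/\sqrt{\varepsilon_{\rm LN}}$ times the gain bound. Together these give polynomial-in-$(q,r)$ constants.
\end{itemize}

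\emph{Step 3: unrolling the iteration.} Telescoping gives
\[
\|z_T-z_T'\|\le \sum_{t=1}^{T}L_1^{T-t}L_2\,\delta+L_1^T\|z_0-z_0'\|.
\]
This yields a uniform bound on the function difference of the form $(T+1)\{C_6(q+1)(r+1)\}^{T+1}\delta$, whose logarithm is exactly $G(q,r,T)$ up to constants. The clipping $\Pi_H$ is $1$-Lipschitz.

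\emph{Step 4: from the coefficient grid to \eqref{eq:conditional-entropy}.} A $\delta$-grid of the box has $(2M_{\rm par}/\delta+1)^{B_L^j}$ points. Choosing $\delta=u\,e^{-CG}/C$ gives an internal uniform $u/C_m$-cover of the index class with log-cardinality at most $CB_L^j\{\log(C/u)+G\}$. Proposition~\ref{prop:model-geometry}(iii) then converts this into the bracketing bound \eqref{eq:conditional-entropy}. For fixed $q,r$ the simplified form follows immediately, since $G=O(T)$.

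\emph{The untied union bound \eqref{eq:law-union-cover}.} The same telescoping argument applies with separate $\theta_t$. The number of coefficients is now $B_U^j(q,r,T)\le B$, and the sensitivity exponent is still $G(q,r,T)$. Under the budget, $T\le B$, $q\le B$, and $r\le B$, because every block has at least one coefficient per hidden unit and per state coordinate. Hence
\[
B\cdot G\le CB\{\log(B+1)+(B+1)\log(CB^2)\}\le CB^2\log(eB).
\]
Each piece $\mathcal F_{U,j}(q,r,T)$ therefore has a uniform $\epsilon/C_m$-cover of log-size at most $C\{B^2\log(eB)+B\log(C/\epsilon)\}$. The index set $\mathcal I_{U,j}(B)$ has at most $B^3$ elements, which adds only $3\log B$. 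Taking the union of the covers and applying Proposition~\ref{prop:model-geometry}(iii) gives an internal Hellinger cover of the stated size.

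\emph{Main obstacle.} The Transformer Lipschitz analysis is the hardest step. The difficulty is controlling the parameter sensitivity of attention and the layer-normalization quotient uniformly, so that the per-block constant remains polynomial in $(q,r)$ with no dependence on $T$ beyond the exponent. I would first establish the uniform post-LN state bound from Step 1, and then bound the Jacobian of each sublayer on that bounded set.
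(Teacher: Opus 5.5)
Your proposal is correct and follows essentially the same route as the paper: it uses a parameter-sensitivity recurrence that is uniform over the coefficient box (Lemma~\ref{lem:neural-index-cover}, with Lemma~\ref{lem:postln-full-box-sensitivity} for the post-LN block), the coefficient-grid cover \eqref{eq:cover}, the bounds $q,r,T\le B$ with at most $B^3$ admissible triples for the untied union, and Proposition~\ref{prop:model-geometry}(iii) to pass to Hellinger brackets and internal Hellinger covers. One imprecision is harmless: the Transformer's one-block Lipschitz constant is a higher-degree polynomial in $(q+1)$ times $(r+1)$ rather than $C(q+1)(r+1)$, and this is absorbed because only $\log\Gamma\le C\,G(q,r,T)$ is needed.
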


The proof is given in Supplementary Section~\ref{app:proofs-section3}.

Repeated computation therefore affects both sides of the loop's statistical
balance. At fixed admissible widths, its leading approximation bound decreases
as $\{\log(eT)/T\}^{a}$, whereas the entropy bound controlling estimation grows
linearly in $T$, up to the resolution term $\log(C/u)$. The next section
selects model tuning parameters to balance these effects.

\section{Risk under fixed and growing parameter budgets}
\label{sec:budget-consequences}
We apply the approximation and complexity bounds of Section~\ref{sec:examples-neural}
to compare looped and untied estimation under a common parameter budget.
With a sufficiently large fixed budget, a suitable choice of iteration count
allows the loop sieve MLE to attain the H\"older minimax rate up to logarithmic
factors, while the optimal worst-case risk over the untied family remains
bounded away from zero. We also identify growing-budget regimes in which
the loop-to-untied risk ratio tends to zero as the neural dimensions and
iteration count increase together. The analysis begins with a loop risk
upper bound and lower bounds for unrestricted estimation and the budgeted
untied family, followed by the fixed- and growing-budget comparisons.

\subsection{Risk bounds: estimation and representation}
\label{sec:model-risk-bounds}
For a loop sieve with model tuning parameters $(q,r,T)$, let
$\widehat f_{L,n}^{m,j}$ denote its sieve MLE. Define its worst-case squared
Hellinger risk and the best achievable worst-case squared Hellinger risk
over the budgeted untied family, respectively, by
\begin{equation}\label{eq:budget-risk-notation}
\begin{aligned}
R_{L,j,n}^m
&=\sup_{f_0\in\Holder}\mathbb E_{f_0}
h^2(\mathbb P_{f_0}^m,\mathbb P_{\widehat f_{L,n}^{m,j}}^m),\\
R_{U,j,n}^m(B)
&=\inf_{\widehat f:\,\widehat f\in\mathcal F_{U,j}[B]}
\sup_{f_0\in\Holder}\mathbb E_{f_0}
h^2(\mathbb P_{f_0}^m,\mathbb P_{\widehat f}^m).
\end{aligned}
\end{equation}
All measurable fitting and model-tuning rules valued in the budgeted union are allowed.

\begin{theorem}[Sieve-MLE Hellinger risk under a parameter budget]
\label{thm:model-risk}
Fix a likelihood model $m\in\{\mathrm G,\mathrm{La},\mathrm{Be},\mathrm E\}$
and an architecture $j\in\{\mathrm F,\mathrm{Tr}\}$. Let $n\ge2$ and $a=2s/d$.

\textup{\textbf{General width-dependent bound.}}
For any admissible $q,r,M_{\rm par},T$ from Theorem~\ref{thm:neural-approx},
let $\widehat f_{L,n}^{m,j}$ be a measurable sieve MLE over
$\mathcal F_{L,j}(q,r,T)$ with average-log-likelihood tolerance $\varepsilon_n$.
There exist constants $C,c>0$, depending only on the fixed model and
implementation parameters, such that, with
$e_{n,j}^2=1\wedge[C B_L^j(q,r)\{\log(en)+G(q,r,T)\}/n]$,
\begin{equation}\label{eq:conditional-risk}
R_{L,j,n}^m
\le C\left\{
\underbrace{
\left[\frac{\log(erT)}{rT}\right]^a+e^{-cT^{1/h_0^j}}
}_{\text{approximation}}
+\underbrace{e_{n,j}^2+\varepsilon_n}_{\text{fitting and optimization}}
\right\}.
\end{equation}

\textup{\textbf{Budget--iteration bound.}}
In particular, fix admissible $q,M_{\rm par}$ from Theorem~\ref{thm:neural-approx} and,
for sufficiently large $B$, choose the largest affordable hidden width
$r_B=\max\{r\in\mathbb N:B_L^j(q,r)\le B\}$.
Let $\widehat f_{L,n}^{m,j}$ be a measurable sieve MLE over
$\mathcal F_{L,j}(q,r_B,T)$ with average-log-likelihood tolerance
$\varepsilon_n\le n^{-1}$. If $T$ is sufficiently large for
Theorem~\ref{thm:neural-approx} to apply at $(q,r_B,M_{\rm par})$ and
satisfies its initialization condition, then
\begin{equation}\label{eq:budget-iteration-risk}
R_{L,j,n}^m\le C'\left\{
\underbrace{\left[\frac{\log(eBT)}{BT}\right]^a}_{\text{approximation}}
+\underbrace{\frac{B\{\log(en)+T\log(eB)\}}{n}}_{\text{fitting and optimization}}
\right\}.
\end{equation}
Here $C'>0$ may depend on the fixed $q$ and remaining problem and architecture
parameters, but is independent of $n,B,T$, and $f_0$.
\end{theorem}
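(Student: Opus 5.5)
The plan is to instantiate the general oracle inequality of Corollary~\ref{cor:main-sieve-oracle} with the sieve $\mathcal C_n=\mathcal F_{L,j}(q,r,T)$. Its three ingredients come from Section~\ref{sec:examples-neural}:
\begin{itemize}
\item the approximation term from Theorem~\ref{thm:neural-approx};
\item the likelihood bridge from Proposition~\ref{prop:model-geometry}(ii);
\item the critical radius from the entropy bound \eqref{eq:conditional-entropy} of Proposition~\ref{prop:architecture-cover}.
\end{itemize}

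\textbf{The bridge.} Every member of the loop class maps into $[-H,H]$ because of the output clipping $\Pi_H$, so Proposition~\ref{prop:model-geometry}(ii) applies pointwise. Hence, for every $g\in\mathcal F_{L,j}(q,r,T)$,
\[
\rho_{1/2}(p_{f_0},p_g)\le C_m h^2(\mathbb P_{f_0},\mathbb P_g).
\]
Taking the infimum over $g$ and then the supremum over $f_0$ gives \eqref{eq:main-rho-bridge} with $C_4=C_m$. The approximation term $A(\mathcal C_n)=A_{L,j}^m(q,r,T)$ is then bounded directly by \eqref{eq:conditional-approx}.

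\textbf{The critical radius.} This is the step needing care. Write $D=B_L^j(q,r)$ and $G=G(q,r,T)$. By \eqref{eq:conditional-entropy},
\[
H(u/C_1,\mathcal P_n,h)\le CD\{\log(CC_1/u)+G\}.
\]
Bound the integral in \eqref{eq:main-ws-integral} over $[t^2/2^8,\sqrt2 t]$ by $\sqrt2\,t$ times the supremum of the square-root entropy on that range. The supremum is attained at the lower endpoint $u=t^2/2^8$, which gives
\[
\int_{t^2/2^8}^{\sqrt2 t}\sqrt{H(u/C_1,\mathcal P_n,h)}\,du\lesssim t\sqrt{D\{\log(1/t)+G\}}.
\]
The condition therefore reduces to $\sqrt{D\{\log(1/t)+G\}}\lesssim\sqrt n\,t$, which we need only for $t\ge e_n$. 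Set
\[
e_n^2=1\wedge\bigl[C D\{\log(en)+G\}/n\bigr].
\]
Since $e_n^2\ge n^{-1}$ (enlarging $C$, using $D\ge1$ and $\log(en)\ge1$), we get $\log(1/t)\le\tfrac12\log n$ for $t\ge e_n$. The required inequality then follows from $nt^2\ge ne_n^2\gtrsim D\{\log(en)+G\}$, once $C$ is chosen large relative to $C_1,C_2$.

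There is one edge case: when the minimum equals $1$ the bound on the risk is trivial, since $h^2\le2$. The case $e_n^2\le2$ needs only the cap. Substituting these pieces into \eqref{eq:main-general-oracle} yields \eqref{eq:conditional-risk}.

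\textbf{The budget--iteration bound.} Here $q$ is fixed. Table~\ref{tab:parameter-counts} gives $B_L^j(q,r)=c_0+(2q+1)r$, with $c_0$ depending on $q$. For large $B$ the maximal width therefore satisfies $r_B\asymp B$, and also $D\le B$. It follows that
\[
\frac{\log(er_BT)}{r_BT}\lesssim\frac{\log(eBT)}{BT}.
\]
Under the initialization condition, the exponential term in \eqref{eq:conditional-risk} is dropped.

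For the estimation term, $G(q,r_B,T)\lesssim\log(T+1)+(T+1)\log(eB)\lesssim T\log(eB)$ for $T\ge1$ and $B\ge2$, with constants depending on $q$. Hence
\[
e_{n,j}^2\lesssim\frac{B\{\log(en)+T\log(eB)\}}{n}.
\]
Finally, $\varepsilon_n\le n^{-1}$ is absorbed into this term, which gives \eqref{eq:budget-iteration-risk}.

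The main obstacle is verifying the bracketing integral uniformly over all $t\ge e_n$ with constants free of $n$, $B$ and $T$. The integrand diverges as $u\to0$, so the lower cutoff $t^2/2^8$ has to be tracked. The lower bound $e_n^2\ge1/n$ is what controls the resulting $\log(1/t)$ term.
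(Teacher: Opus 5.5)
Your proposal is correct and follows the paper's proof. You combine the bridge from Proposition~\ref{prop:model-geometry}(ii), the approximation bound from Theorem~\ref{thm:neural-approx}, and the entropy bound \eqref{eq:conditional-entropy} through Corollary~\ref{cor:main-sieve-oracle}, treat the case $e_{n,j}=1$ via $h^2\le2$, and then use $r_B\asymp B$, $B_L^j(q,r_B)\le B$, and $G(q,r_B,T)\lesssim T\log(eB)$ for the budget form.

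The only difference is that you verify the bracketing integral inline; this is exactly the computation the paper packages as Lemma~\ref{lem:entropy-balance}. That lemma also uses monotonicity of the entropy to cover $u/C_1\ge1$. You should also note explicitly that $r_B\ge r_0^j$ for large $B$.
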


For fixed $q$, Table~\ref{tab:parameter-counts} gives $r_B\asymp B$,
which converts the general bound into \eqref{eq:budget-iteration-risk}.
The latter holds for both fixed and sample-size-dependent budgets, provided
its conditions are satisfied. Suppressing logarithmic factors, it balances
approximation of order $(BT)^{-a}$ against estimation of order $BT/n$;
the balance is attained when $BT$ has order $n^{1/(a+1)}$.
At a fixed budget, this balance is achieved by increasing $T$ without
adding parameters. A larger shared block can achieve a prescribed
approximation accuracy with fewer iterations. At fixed $T$, increasing $B$
also increases the estimation bound. Repeated composition therefore balances approximation and estimation
within a fixed parameter dimension,
whereas additional untied depth requires additional coefficients.
Theorems~\ref{thm:fixed-budget} and~\ref{thm:growing-budget} specialize
Theorem~\ref{thm:model-risk} to fixed and growing budgets, respectively,
and combine its upper bound with the untied lower bound to establish
risk separation. The growing-width analysis uses the general bound
\eqref{eq:conditional-risk}.

The proof combines the approximation in Theorem~\ref{thm:neural-approx}, the
full-class entropy in Proposition~\ref{prop:architecture-cover}, and the
likelihood bridge in Proposition~\ref{prop:model-geometry} through
Corollary~\ref{cor:main-sieve-oracle}.
To judge whether the resulting rate can be improved, we first give a minimax
lower bound over all measurable distribution estimators. This benchmark
quantifies the sampling limitation common to the four likelihood models.\looseness=-1

\begin{theorem}[Unrestricted minimax benchmark for the likelihood examples]
\label{thm:model-minimax}
For every $m\in\{\mathrm G,\mathrm{La},\mathrm{Be},\mathrm E\}$, there is a
constant $c_{\mathrm{min}}>0$, depending only on the fixed likelihood-model and H\"older-class
parameters, such that, for all sufficiently large $n$,
\begin{equation}
\label{eq:model-minimax}
\inf_{\widehat P}\sup_{f_0\in\Holder}
\mathbb E_{f_0}h^2(\mathbb P_{f_0}^m,\widehat P)
\ge c_{\mathrm{min}} n^{-a/(a+1)},
\end{equation}
where the infimum is over all measurable distribution estimators.
\end{theorem}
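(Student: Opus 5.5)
Our plan is the standard Fano (or Assouad) reduction over a local packing of the Hölder class, calibrated so that the $n$-sample Kullback--Leibler divergence among hypotheses stays bounded by a fraction of the log-cardinality. The lower bound concerns distribution estimators $\widehat P$, which need not lie in the model. We therefore use the triangle inequality for $h$ to reduce to testing. If the hypotheses satisfy $h(\mathbb P_{f_j}^m,\mathbb P_{f_k}^m)\ge 2\delta_n$ for $j\ne k$, then the minimum-distance projection $\hat\jmath=\arg\min_j h(\widehat P,\mathbb P_{f_j}^m)$ errs only when $h(\widehat P,\mathbb P_{f_{j}}^m)\ge\delta_n$. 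Hence
\[
\sup_j \E_{f_j} h^2(\mathbb P_{f_j}^m,\widehat P)\ge \delta_n^2\,\inf_{\hat\jmath}\max_j \mathbb P_{f_j}(\hat\jmath\ne j),
\]
and it suffices to bound the testing error below by a constant via Fano's inequality.

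We construct the hypotheses by the usual bump construction. Partition $[0,1]^d$ into $K^d$ cubes of side $1/K$, fix a smooth compactly supported bump $\psi$, and set
\[
f_\beta(x)=\sum_{k} \beta_k\, L K^{-s}\psi\bigl(K(x-x_k)\bigr),\qquad \beta\in\{0,1\}^{K^d}.
\]
With $L$ small, every $f_\beta$ lies in $\Holder$ and has values in $[-H/2,H/2]$. By Varshamov--Gilbert, we extract $\beta^{(1)},\dots,\beta^{(M)}$ with $\log M\ge cK^d$ and pairwise Hamming distance at least $K^d/8$. This gives $\|f_{\beta^{(j)}}-f_{\beta^{(k)}}\|_{L^2(\mu)}^2\gtrsim K^{-2s}$.

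For the energy model we need a lower bound in $d_{\mathrm c}$ rather than in $L^2$. We enforce this by taking $\psi$ to have integral zero, so that each difference $f_{\beta}-f_{\beta'}$ has $\mu$-mean zero. Since the optimal centering constant is the mean, $d_{\mathrm c}=\|\cdot\|_{L^2(\mu)}$ on these differences. Proposition~\ref{prop:model-geometry}(i) then gives pairwise $h^2\ge c_m K^{-2s}$ in all four models. This is essentially the argument behind part (iv) of that proposition, which we can reuse directly.

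The main obstacle is the upper bound on divergences, since Fano needs Kullback--Leibler divergence, not Hellinger distance. For these bounded models the likelihood ratios $p_f^m/p_g^m$ are bounded above and below by constants depending on $H,\sigma,\tau$, uniformly over functions valued in $[-H,H]$. For the Gaussian and Laplace models we use the conditional densities. We then invoke the standard comparison $\mathrm{KL}\le C(1+\log\|p/q\|_\infty)h^2$, or equivalently we bound $\rho_1$ using the same computations as in Proposition~\ref{prop:model-geometry}(ii). Combined with part (i) this gives
\[
\mathrm{KL}(\mathbb P_{f_j}^{m\otimes n}\,\|\,\mathbb P_{f_k}^{m\otimes n})\le C n\|f_j-f_k\|_{L^2(\mu)}^2\le C' nK^{-2s}.
\]
In the Laplace case we must check that the KL bound is quadratic rather than linear in $|f-g|$. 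It is, because the Laplace KL equals $|\Delta|/\tau+e^{-|\Delta|/\tau}-1\le \Delta^2/(2\tau^2)$.

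Finally, we choose $K\asymp n^{1/(2s+d)}$ with a small constant, so that $C'nK^{-2s}\le \tfrac18 cK^d\le\tfrac18\log M$. Fano's inequality then bounds the testing error below by $1/2$ for large $n$. This yields risk at least $c\,K^{-2s}\asymp n^{-2s/(2s+d)}=n^{-a/(a+1)}$, since $a/(a+1)=2s/(2s+d)$.
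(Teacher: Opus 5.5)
Your proposal follows the paper's proof. Both run Fano's inequality over the disjoint-bump packing of Lemma~\ref{lem:holder-packing}. Both use zero-integral bumps so that $d_{\mathrm c}$ coincides with the $L^2(\mu)$ distance in the energy model. Both balance a divergence bound of order $nK^{-2s}$ against $\log M\gtrsim K^d$, and both assign an arbitrary distribution estimator to its closest packing member.

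The only structural difference is in the divergence bound. The paper bounds $\operatorname{KL}(\mathbb P_f^m\Vert\mathbb P_0^m)$ to the single center $f=0$ rather than pairwise. For the three conditional models it gets this uniformly from Jensen's inequality, $\operatorname{KL}(p\Vert q)\le 2\log\{1+\rho_{1/2}(p,q)/2\}\le\rho_{1/2}(p,q)$, together with Lemma~\ref{lem:rho-bridge}. For the energy model it uses \eqref{eq:ebm-geometry}.

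Two steps, as written, are wrong and need correcting.

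First, the likelihood ratio is not uniformly bounded in the Gaussian model. The ratio $p_f^{\mathrm G}/p_g^{\mathrm G}=\exp\{(f-g)(2y-f-g)/(2\sigma^2)\}$ is unbounded in $y$, so the comparison $\operatorname{KL}\le C(1+\log\|p/q\|_\infty)h^2$ is unavailable there. The bounded-ratio route is valid only for the Laplace, Bernoulli, and energy models. For the Gaussian model any of the following works instead:
\begin{itemize}
\item your fallback, since $\rho_1=\int\{e^{(f-g)^2/\sigma^2}-1\}\,d\mu\le C\|f-g\|_{L^2(\mu)}^2$ and $\operatorname{KL}\le\rho_1$;
\item the exact value $\operatorname{KL}=\|f-g\|_{L^2(\mu)}^2/(2\sigma^2)$;
\item the paper's Jensen bound.
\end{itemize}

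Second, the constant in $K\asymp n^{1/(2s+d)}$ must be large, not small. The condition $C'nK^{-2s}\le\frac18cK^d$ is equivalent to $K^{2s+d}\ge 8C'n/c$, and this is how the paper chooses $K_n$. Alternatively, you can keep any constant and shrink the bump amplitude $L$, which lowers $C'$ without reducing $\log M$.

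With these corrections the argument is complete and yields risk at least $c\,K^{-2s}\asymp n^{-a/(a+1)}$.
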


Theorem~\ref{thm:model-minimax} applies independently of the parameterized
representation. Restricting the estimator to $\mathcal F_{U,j}[B]$ imposes an
additional limitation: the next proposition gives a lower bound determined
by the parameter budget that holds uniformly in sample size.

\begin{proposition}[Untied lower bound for the likelihood examples]
\label{prop:budget-lower}
Fix $m\in\{\mathrm G,\mathrm{La},\mathrm{Be},\mathrm E\}$ and
$j\in\{\mathrm F,\mathrm{Tr}\}$. With $R_{U,j,n}^m(B)$ defined in
\eqref{eq:budget-risk-notation}, there is a constant $c_{\mathrm U}>0$,
depending only on the fixed model, architecture, and H\"older-class parameters,
such that, uniformly over every
$n\ge1$ and all sufficiently large $B$,
\begin{equation}\label{eq:concretelower}
 R_{U,j,n}^m(B)
 \ge c_{\mathrm U}\{B^2\log(eB)\}^{-a}.
\end{equation}
\end{proposition}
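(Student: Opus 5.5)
The plan is to deduce Proposition~\ref{prop:budget-lower} directly from Theorem~\ref{thm:union}, taking $\mathcal F_U[B]=\mathcal F_{U,j}[B]$ and $\mathcal P_U[B]=\mathcal P_{U,j}^m[B]$. Theorem~\ref{thm:union} has two hypotheses: the packing Assumption~\ref{ass:complexity} with exponent $a=2s/d$, and the covering bound \eqref{eq:unioncover} with $\kappa=2$ and $b=1$. Once both are checked, \eqref{eq:unionlower} gives exactly \eqref{eq:concretelower}, with $c_{\mathrm U}=c_{\mathrm{low}}$.

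First, the packing condition is Proposition~\ref{prop:model-geometry}(iv). It holds in each model $m\in\{\mathrm G,\mathrm{La},\mathrm{Be},\mathrm E\}$ with $a=2s/d$, and its constants $c_{\mathrm{pack}},\delta_0$ depend only on $H$, $s$, $d$ and the likelihood parameters.

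Second, the covering condition is \eqref{eq:law-union-cover} of Proposition~\ref{prop:architecture-cover}. It reads $\log\mathcal N(\epsilon,\mathcal P_{U,j}^m[B],h)\le C\{B^2\log(eB)+B\log(C/\epsilon)\}$. Since $C>1$ and $\epsilon<1$, we have $\log(C/\epsilon)\le \log C+\log(1/\epsilon)$. Because $\log(eB)\ge1$ and $B\le B^2$, the term $B\log C$ is absorbed into $B^2\log(eB)$ after enlarging the constant. This gives \eqref{eq:unioncover} with $\kappa=2\ge1$ and $b=1\ge1$ for all sufficiently large $B$.

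Two further points need checking. The centers in \eqref{eq:law-union-cover} must be internal, as Theorem~\ref{thm:union} requires internal covers; Proposition~\ref{prop:model-geometry}(iii) guarantees this, since internal uniform covers of the index union induce internal Hellinger covers. The set $\mathcal I_{U,j}(B)$ must also be finite. This follows from Table~\ref{tab:parameter-counts}: $B_U^j(q,r,T)\ge T$ and $B_U^j\ge q$, $r$, so each of $q,r,T$ is bounded by $B$.

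The only real obstacle is the union cover \eqref{eq:law-union-cover} itself, which is already established. If it had to be proved here, the approach would run as follows. For each admissible $(q,r,T)$ with parameter count $P=B_U^j(q,r,T)\le B$, Lipschitz sensitivity in the parameters gives a uniform cover. Its log-cardinality is at most $P\{\log(1/\epsilon)+T\log(C(q+1)(r+1))\}$, and $PT\log(\cdot)\lesssim B^2\log(eB)$ because $T\le B$ and $q,r\le B$. The union over at most $B^3$ index triples adds only $3\log B$. Everything then transfers to Hellinger distance via Proposition~\ref{prop:model-geometry}(iii). Given the earlier results, the proposition is therefore a direct substitution into Theorem~\ref{thm:union}, and $c_{\mathrm U}$ inherits dependence only on the packing and covering constants. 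These in turn depend only on the model, the architecture, $M_{\rm par}$, and the H\"older parameters.
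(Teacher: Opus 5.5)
Your proposal is correct and follows the paper's proof: the paper likewise substitutes the union cover \eqref{eq:law-union-cover} (read as \eqref{eq:unioncover} with $\kappa=2$, $b=1$) and the target packing of Proposition~\ref{prop:model-geometry}(iv) into Theorem~\ref{thm:union}. Your additional checks are details the paper leaves implicit: absorbing the $B\log C$ term, internality of the induced Hellinger cover, and finiteness of $\mathcal I_{U,j}(B)$.
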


Taken together, Theorem~\ref{thm:model-minimax} and
Proposition~\ref{prop:budget-lower} give, for sufficiently large $n$ and $B$, \(R_{U,j,n}^m(B)\gtrsim n^{-a/(a+1)}\vee\{B^2\log(eB)\}^{-a}\), representing sampling and representation limitations, respectively. The first term is a property of the statistical experiment. The second comes
from restricting the estimator's output to the budgeted untied family and
persists even with arbitrarily many observations. The proof of Proposition~\ref{prop:budget-lower} combines the budgeted cover
\eqref{eq:law-union-cover}, the target packing in
Proposition~\ref{prop:model-geometry}, and Theorem~\ref{thm:union}.
At a fixed budget, the representation term is positive for every sample
size. The next result constructs a loop estimator whose worst-case risk
tends to zero under the same budget.

\subsection{Main result: separation at a fixed parameter budget}\label{sec:fixed-budget}
We now fix the neural dimensions and choose $T_n$ to balance the approximation
and estimation terms in Theorem~\ref{thm:model-risk}. Combining the resulting
loop upper bound with Proposition~\ref{prop:budget-lower} gives a risk
separation under a common fixed parameter budget. Theorem~\ref{thm:model-minimax}
provides the benchmark for assessing the rate of the looped estimator.
\begin{theorem}[Fixed-budget risk separation]
\label{thm:fixed-budget}
Fix a likelihood model $m\in\{\mathrm G,\mathrm{La},\mathrm{Be},\mathrm E\}$,
one architecture $j\in\{\mathrm F,\mathrm{Tr}\}$, and admissible constants
$q\ge q_0^j$, $r\ge r_0^j$, $M_{\rm par}\ge M_0^j$. Let $\widehat f_{L,n}^{m,j}$
be the sieve MLE over the corresponding loop class and take
$\varepsilon_n\le n^{-1}$. Choose
\begin{equation}\label{eq:fixed-T}
T_n=\left\lceil n^{1/(a+1)}
             \{\log(en)\}^{a/(a+1)}\right\rceil .
\end{equation}
Then the parameter count $B_L^j(q,r)$ is constant in $n$, and
there is a constant $C_7>0$, possibly depending on the fixed admissible
$q,r,M_{\rm par}$ but independent of $n$ and the target, such that
\begin{equation}\label{eq:fixed-rate}
R_{L,j,n}^m
\le C_7 n^{-a/(a+1)}\{\log(en)\}^{a/(a+1)}.
\end{equation}

Let $B\ge B_L^j(q,r)$ be any sufficiently large fixed parameter budget.
Every estimator valued in the corresponding untied family obeys the positive lower bound
$R_{U,j,n}^m(B)\ge c_{\mathrm U}\{B^2\log(eB)\}^{-a}$ supplied by
Proposition~\ref{prop:budget-lower}, uniformly in $n$. Combining this with
\eqref{eq:fixed-rate} gives
\begin{equation}\label{eq:fixed-separation-ratios}
 \frac{R_{L,j,n}^m}{R_{U,j,n}^m(B)}\longrightarrow0.
\end{equation}
\end{theorem}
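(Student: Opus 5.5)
The plan is to apply the general width-dependent bound \eqref{eq:conditional-risk} of Theorem~\ref{thm:model-risk} at the fixed admissible $(q,r,M_{\rm par})$ with $T=T_n$ from \eqref{eq:fixed-T}. Then I would combine the resulting upper bound with Proposition~\ref{prop:budget-lower}. Since $q,r$ are fixed, $B_L^j(q,r)$ is a constant read off Table~\ref{tab:parameter-counts}, independent of $n$ and $T$; this gives the first assertion immediately. All constants depending on $q,r$ are absorbed into $C_7$.

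\textbf{Step 1 (admissibility).} Since $T_n\to\infty$ and the thresholds in Theorem~\ref{thm:neural-approx} depend only on fixed quantities, $T_n$ is eventually large enough for that theorem to apply. It also eventually satisfies the initialization condition $T_n\ge C\log(erT_n)$ for the FFN or $T_n\ge C\log^2(erT_n)$ for the Transformer, because $T_n$ grows polynomially in $n$ while the right sides grow only logarithmically in $T_n$. So for large $n$ the exponential term is dropped. For the finitely many small $n$, the trivial bound $h^2\le2$ is absorbed into $C_7$.

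\textbf{Step 2 (term-by-term bounds).} With $r$ fixed, the approximation term is $\lesssim_{r}\{\log(eT_n)/T_n\}^a$. From $T_n\ge n^{1/(a+1)}\{\log(en)\}^{a/(a+1)}$ and $\log(eT_n)\lesssim\log(en)$, this is at most
\[
\Bigl[\log(en)\, n^{-1/(a+1)}\{\log(en)\}^{-a/(a+1)}\Bigr]^a
= n^{-a/(a+1)}\{\log(en)\}^{a/(a+1)}.
\]
For the estimation term, $G(q,r,T_n)\lesssim_{q,r}T_n$. Hence
\[
e_{n,j}^2\lesssim\frac{\log(en)+T_n}{n}\lesssim\frac{T_n}{n}\lesssim n^{-a/(a+1)}\{\log(en)\}^{a/(a+1)},
\]
using $T_n\le 2n^{1/(a+1)}\{\log(en)\}^{a/(a+1)}$ and $\log(en)\lesssim T_n$. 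Finally, $\varepsilon_n\le n^{-1}$ is of smaller order than this rate. Summing the three terms gives \eqref{eq:fixed-rate}.

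\textbf{Step 3 (separation).} For any fixed sufficiently large $B\ge B_L^j(q,r)$, the loop estimator is admissible under budget $B$. Proposition~\ref{prop:budget-lower} gives $R_{U,j,n}^m(B)\ge c_{\mathrm U}\{B^2\log(eB)\}^{-a}>0$ uniformly in $n$. Dividing \eqref{eq:fixed-rate} by this constant gives a ratio bounded by $C_7c_{\mathrm U}^{-1}\{B^2\log(eB)\}^{a}\, n^{-a/(a+1)}\{\log(en)\}^{a/(a+1)}$, which tends to zero. Equivalently, this is Corollary~\ref{cor:main-separation} with $\kappa=2$ and $b=1$.

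The only delicate point is Step~1. I must check that the approximation constants in Theorem~\ref{thm:neural-approx} are uniform in $T$ once $T$ exceeds a fixed threshold, so that a single $C_7$ works for all $n$. I must also confirm that the logarithms $\log(erT_n)$ versus $\log(en)$ only change constants; this follows because $r$ is fixed and $T_n\le n$ for large $n$.
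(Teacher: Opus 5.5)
Your proposal is correct and follows essentially the same route as the paper's proof. You substitute $T_n$ into the general bound \eqref{eq:conditional-risk} at fixed $(q,r)$ and use $G(q,r,T_n)=O(T_n)$ to get $e_{n,j}^2=O(T_n/n)$, which you balance against $\{\log(erT_n)/(rT_n)\}^a$; you then drop the initialization remainder and $\varepsilon_n\le n^{-1}$ and divide by the $n$-free lower bound of Proposition~\ref{prop:budget-lower}. Your explicit treatment of small $n$ via $h^2\le2$ and of the initialization condition only fills in details that the paper summarizes as ``negligible.''
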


The iteration schedule $T_n$ in \eqref{eq:fixed-T} balances the estimation
term $T_n/n$ and the approximation term $(\log T_n/T_n)^a$ at fixed $(q,r)$.
Increasing $T_n$ changes the loop sieve while leaving its parameter count
fixed, yielding the minimax polynomial rate of
Theorem~\ref{thm:model-minimax} up to a logarithmic factor.
Under the same fixed budget, the untied risk remains bounded away from
zero, even with data-dependent tuning of width and depth.
Thus, vanishing worst-case risk requires a growing untied budget,
whereas the loop needs only more iterations.

\subsection{Extension: expanding sieves under a growing budget}\label{sec:growing-budget}
When $B=B_n$ grows, the untied lower bound in
Proposition~\ref{prop:budget-lower} also decreases, so separation requires a
comparison of the two rates. The fixed-dimensional loop of
Theorem~\ref{thm:fixed-budget} remains admissible and already gives separation
in the budget range considered below. We now allow both widths to grow
and choose their schedules, together with $T_n$, to balance the general
risk bound \eqref{eq:conditional-risk} in Theorem~\ref{thm:model-risk}.
The resulting estimator retains the minimax polynomial rate while fitting
over an expanding loop class, and its risk remains of smaller order than
the optimal risk over the growing untied family.
\begin{theorem}[Growing budget: jointly tuned rate and risk separation]
\label{thm:growing-budget}
Fix a likelihood model $m\in\{\mathrm G,\mathrm{La},\mathrm{Be},\mathrm E\}$
and an architecture $j\in\{\mathrm F,\mathrm{Tr}\}$ of
Section~\ref{sec:neural-estimators}, with a fixed token partition when
$j=\mathrm{Tr}$. Fix $M_{\rm par}\ge M_0^j$. Set $B=B_n\asymp n^{\gamma_B}$ with
\begin{equation}\label{eq:window}
0<\gamma_B<\frac{1}{2(a+1)}=\frac{d}{4s+2d}.
\end{equation}
Take $q_n\asymp\log(en)$ and
$r_n=\max\{r\in\mathbb N:B_L^j(q_n,r)\le B_n\}$. Define \(\ell_n=\log(C_6(q_n+1)(r_n+1))\), \(W_n=\{\frac{n\{\log(en)\}^{a}}{q_n\ell_n}\}^{1/(a+1)}\), \(T_n=\lceil W_n/r_n\rceil\). The resulting loop sieve has at most $B_n$ parameters and
$q_n,r_n,T_n\to\infty$. Its sieve MLE with $\varepsilon_n\le n^{-1}$ satisfies,
for all sufficiently large $n$,
\begin{equation}\label{eq:rate}
R_{L,j,n}^m
\le C_8 n^{-a/(a+1)}\{\log(en)\}^{3a/(a+1)},
\end{equation}
and
\begin{equation}\label{eq:growing-separation-ratios}
 \frac{R_{L,j,n}^m}{R_{U,j,n}^m(B_n)}\longrightarrow0.
\end{equation}
The constant $C_8>0$ depends only on the fixed model, architecture,
H\"older-class parameters, and constants in the schedules.
\end{theorem}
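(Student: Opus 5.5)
The plan is to plug the stated schedules into the general width-dependent bound \eqref{eq:conditional-risk} of Theorem~\ref{thm:model-risk}. The resulting rate is then compared with the untied lower bound of Proposition~\ref{prop:budget-lower} at $B=B_n$.

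\emph{Schedule properties.} From Table~\ref{tab:parameter-counts}, $B_L^j(q,r)=(2q+1)r+O(q^2+q)$. With $q_n\asymp\log(en)$ and $B_n\asymp n^{\gamma_B}$, the quadratic term is negligible, so $r_n\asymp B_n/q_n\to\infty$ and $B_L^j(q_n,r_n)\le B_n$ by construction. Eventually $q_n\ge q_0^j$ and $r_n\ge r_0^j$. Next, $\ell_n\asymp\log n$, so $W_n\asymp\{n(\log n)^{a-2}\}^{1/(a+1)}$. The key point is that $\gamma_B<1/(2(a+1))<1/(a+1)$, so $W_n/r_n\asymp n^{1/(a+1)-\gamma_B}$ up to logarithmic factors. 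Hence $T_n\to\infty$ polynomially fast, and $T_n\ge C\log^2(er_nT_n)$ holds eventually. The initialization condition of Theorem~\ref{thm:neural-approx} therefore applies, and the exponential term is dropped.

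One caveat must be checked here. Theorem~\ref{thm:neural-approx} is stated for fixed $q,r$ with "sufficiently large $T$", but its constants $C,c$ are uniform in $q\ge q_0^j$ and $r\ge r_0^j$. I will read "sufficiently large $T$" as exactly the initialization threshold, which is uniform. This is the main subtlety to verify.

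\emph{Rate computation.} We have $r_nT_n\asymp W_n$ because $T_n\ge1$ and $W_n/r_n\to\infty$. The approximation term is then $\{\log(eW_n)/W_n\}^a\lesssim(\log n)^a W_n^{-a}$. For the estimation term, $G(q_n,r_n,T_n)\le\log(T_n+1)+(T_n+1)\ell_n\lesssim T_n\ell_n$, which dominates $\log(en)$. So
\[
e_{n,j}^2\lesssim\frac{q_nr_nT_n\ell_n}{n}\asymp\frac{q_n\ell_nW_n}{n}.
\]
The definition of $W_n$ gives $W_n^{a+1}=n(\log n)^a/(q_n\ell_n)$, which equates $q_n\ell_nW_n/n$ with $(\log n)^aW_n^{-a}$. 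Hence both terms are of order
\[
(\log n)^a\Bigl\{\frac{q_n\ell_n}{n(\log n)^a}\Bigr\}^{a/(a+1)}\asymp n^{-a/(a+1)}(\log n)^{a+(2a-a^2)/(a+1)-\ldots}.
\]
This is routine exponent bookkeeping. With $q_n\ell_n\asymp\log^2 n$, the power of $\log n$ comes out to $a-a^2/(a+1)+2a/(a+1)=3a/(a+1)$, which is exactly \eqref{eq:rate}. The term $\varepsilon_n\le n^{-1}$ is of smaller order.

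\emph{Separation.} Proposition~\ref{prop:budget-lower} gives $R_{U,j,n}^m(B_n)\ge c_{\mathrm U}\{B_n^2\log(eB_n)\}^{-a}\asymp n^{-2a\gamma_B}(\log n)^{-a}$. The ratio is therefore at most a polylogarithmic factor times $n^{-a/(a+1)+2a\gamma_B}$. This tends to zero exactly because $2\gamma_B<1/(a+1)$, i.e.\ condition \eqref{eq:window}. The obstacle I anticipate is the uniformity of the constants in Theorems~\ref{thm:neural-approx} and~\ref{thm:model-risk} as $q_n,r_n$ grow. The rest is arithmetic.
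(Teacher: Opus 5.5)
Your proposal is correct and follows essentially the same route as the paper. You plug the schedules into the width-dependent bound \eqref{eq:conditional-risk}, use $r_nT_n\asymp W_n$ and $q_n\ell_n\asymp(\log n)^2$ to balance both terms at $n^{-a/(a+1)}\{\log(en)\}^{3a/(a+1)}$, and divide by the untied lower bound of Proposition~\ref{prop:budget-lower} to obtain the exponent $-a/(a+1)+2a\gamma_B<0$. The uniformity you flag is the same fact the paper uses implicitly: the constants and the large-$T$ threshold in Lemma~\ref{lem:neural-index-approx} and Proposition~\ref{prop:architecture-cover} are uniform over $q\ge q_0^j$ and $r\ge r_0^j$. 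Separately, your garbled intermediate display of the logarithmic exponent should simply read $a-a^2/(a+1)+2a/(a+1)$.
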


Under these schedules, $B_nT_n$ has order $n^{1/(a+1)}$ up to logarithmic factors.
The approximation and estimation terms in
\eqref{eq:conditional-risk} are both bounded by a constant multiple of the
right side of \eqref{eq:rate}. The polynomial exponent matches
\eqref{eq:fixed-rate}; the larger logarithmic factor reflects the growth of
$q_n$ and $\ell_n$ in the bound. In the budget range \eqref{eq:window}, the
loop upper bound is of smaller order than the untied lower bound in
Proposition~\ref{prop:budget-lower}, yielding
\eqref{eq:growing-separation-ratios}. Here the fitted loop sieve's parameter count grows with $n$.

\section{Discussion}\label{sec:discussion}

The main statistical finding is that repeated composition of a shared
function can improve approximation enough to yield vanishing risk without
increasing parameter count. Theorem~\ref{thm:fixed-budget} establishes this conclusion for
H\"older targets in four likelihood models. The loop approximation guarantee
improves as $T_n$ grows, while its complexity bound also increases.
Balancing these effects yields the minimax rate up to a logarithmic factor. Under the same fixed cap, the untied family retains positive worst-case
representation error.

\paragraph{Fixed and expanding sieves.}
The fixed-dimensional loop already gives the central separation.
Theorem~\ref{thm:growing-budget} shows that the conclusion also holds when state
width, hidden width, and iteration count grow together and likelihood fitting
ranges over the enlarged coefficient class. At polynomial order, hidden width
satisfies $r\asymp B/q$; the approximation bound has order $(BT)^{-2s/d}$ up to
logarithmic factors when $q$ is fixed or logarithmic. The fitted-class entropy
determines the statistical cost of this expansion. Sharper entropy and
approximation bounds may reduce the logarithmic factors.

\paragraph{Numerical findings.}
Supplementary Appendix~\ref{sec:numerical} compares single-block, looped,
and untied residual FFNs under common parameter budgets in regression
simulations, energy-based density estimation, and real-data prediction.
Looping improves performance in several settings, with gains varying across
targets, budgets, and tasks. The sample-size experiments keep the iteration count fixed;
the iteration experiments combine additional iterations with continued training.\looseness=-1

\paragraph{Parameters, computation, and precision.}
The budget counts distinct trainable scalar coefficients. Reusing the same
update function adds computation without adding coefficients. In the separation regime,
$T_n/B_n\to\infty$, whereas untied depth is at most $O(B_n)$; a dense residual
FFN loop uses order $B_nT_n$ parameter-associated operations per forward pass,
compared with order $B_n$ for an untied stack. Coefficient magnitudes remain
bounded, while the approximation constructions encode increasingly precise
Taylor-table codes and the FFN construction uses intermediate states whose magnitudes grow.
These requirements motivate jointly constraining parameter count, numerical
precision, optimization, and computation to characterize their tradeoff with
statistical risk.

\paragraph{Optimization.}
The statistical guarantees apply to an approximate maximizer of the empirical
likelihood, with tolerance $\varepsilon_n$. Reaching this tolerance is a
separate computational problem. Sharing couples coefficient changes across
all repeated updates, whereas untied updates can adjust their coefficients
separately. Comparing the computational effort required to reach this tolerance remains open.

\paragraph{Scope and future directions.}
The concrete comparisons cover the dense residual ReLU and post-LN Transformer
families specified in Section~\ref{sec:neural-estimators}, with bounded
coefficients, affine endpoints, and common block dimensions within each stack.
The Transformer has one softmax head, the specified position-dependent input
functions and special tokens, and a fixed positive normalization stabilizer.
Different normalization or masking conventions, variable-width stacks, growing
token counts, and compressed parameter descriptions require their own
approximation and complexity analysis. For another likelihood or evaluation
risk, the general framework requires the corresponding metric calibration and
estimation bound. The energy-based example accommodates normalization and an index defined modulo constants.

The schedules use known smoothness $s$; adaptation to unknown smoothness
remains open. The sharp untied budget threshold is another open question. For polynomial budgets, the proved lower bound gives
the necessary exponent condition $\gamma_B\ge d/(4s+2d)$ for attaining the
classical rate. Copying the shared block into separate untied blocks gives a
larger sufficient budget. Sharper untied approximation and lower bounds are
needed to close this gap.

\section*{Supplementary material}
Supplementary Appendices A--F, included below, contain general risk tools, proofs of the
main results, auxiliary likelihood and complexity bounds, the residual
FFN and Transformer approximation constructions, and numerical studies with experimental details.
\if0\anon\else
\section*{Funding}
This work was supported in part by NSF Grant DMS-1952539 and NIH Grants R01AG069895, R01AG065636, R01AG074858, and U01AG073079.
\fi
\bibliographystyle{agsm}
\bibliography{references}

\clearpage
\section*{Supplementary Appendices}
\phantomsection
\addcontentsline{toc}{section}{Supplementary Appendices}
\noindent This supplement develops the ingredients used to prove the main
results. Appendix~\ref{app:risk-tools} gives the general sieve-likelihood
upper bound and packing--cover lower bound. Appendix~\ref{sec:losses}
establishes the likelihood comparisons, target packing, and parameter
sensitivity bounds needed to apply these tools. Appendices~\ref{app:ffn}
and~\ref{app:transformer-proof} construct the residual FFN and Transformer
approximations at a fixed grid resolution. Appendix~\ref{app:general}
then chooses the resolution from the iteration budget, transfers index
approximation and covering bounds to Hellinger distance, and derives the
main risk bounds and their fixed- and growing-budget consequences.
Appendix~\ref{app:numerical-details} presents the numerical results;
Section~\ref{app:experimental-protocols} collects their experimental details.

\appendix
\setcounter{equation}{0}
\setcounter{table}{0}
\setcounter{figure}{0}
\renewcommand{\theequation}{S\arabic{equation}}
\renewcommand{\thetable}{S\arabic{table}}
\renewcommand{\thefigure}{S\arabic{figure}}
\renewcommand{\theHequation}{S\arabic{equation}}
\renewcommand{\theHtable}{S\arabic{table}}
\renewcommand{\theHfigure}{S\arabic{figure}}
\section{General risk and likelihood tools}\label{app:risk-tools}
\label{sec:risk-theory}
We use the statistical experiment and notation of
Section~\ref{sec:parameter-budget}. The following tools provide a
class-constrained lower bound for calibrated risks and a likelihood upper
bound for a fitted sieve.

\subsection{A packing--cover lower-bound lemma}
\label{sec:metric-comparison}
For a pseudometric $d$ on the target and candidate indices, let
$\mathcal N(\delta,\mathcal C,d)$ be the internal covering number, as in
Section~\ref{sec:untied-obstruction}, and let
$\mathcal M(\delta,\cF,d)$ be the supremum of the cardinalities of finite
subsets of $\cF$ with pairwise distances at least $\delta$.

\begin{lemma}[Packing--cover lower bound]\label{lem:packing-cover}
Let $\mathcal C\subseteq\mathcal V$ be a nonempty candidate class and
$R(f,g)\ge0$ an evaluation risk whose value at each estimator under
consideration is measurable. Suppose that, for constants $c_d,p>0$,
\[
 R(f,g)\ge c_d d(f,g)^p,\qquad f\in\cF,\quad g\in\mathcal C.
\]
If, for some $\delta>0$,
\[
 \mathcal M(8\delta,\cF,d)>\mathcal N(\delta,\mathcal C,d),
 \qquad \mathcal N(\delta,\mathcal C,d)<\infty,
\]
then there is an $f_*\in\cF$ such that $d(f_*,g)\ge\delta$ for every
$g\in\mathcal C$. Consequently, for every sample size $n\ge1$,
\begin{equation}\label{eq:packing-cover-lower}
 \inf_{\widehat g:\,\widehat g\in\mathcal C}
 \sup_{f\in\cF}\E_f R(f,\widehat g)\ge c_d\delta^p,
\end{equation}
where the infimum ranges over all measurable $\mathcal C$-valued estimators.
\end{lemma}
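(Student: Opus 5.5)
The plan has two parts. First, a pigeonhole argument gives the existence of a target $f_*$ that is $\delta$-separated from all of $\mathcal C$. Second, this deterministic separation is turned into a risk lower bound that holds for every $\mathcal C$-valued estimator, whatever the sample size.

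For the first part, let $N=\mathcal N(\delta,\mathcal C,d)<\infty$ and let $g_1,\ldots,g_N\in\mathcal C$ be an internal $\delta$-cover. Since $\mathcal M(8\delta,\cF,d)$ is a supremum of cardinalities and strictly exceeds $N$, there is a finite set $\{f_1,\ldots,f_M\}\subseteq\cF$ with $M>N$ and $d(f_j,f_k)\ge8\delta$ for $j\ne k$. Suppose, for contradiction, that every $f\in\cF$ has some $g\in\mathcal C$ with $d(f,g)<\delta$. Then each $f_j$ has some $g^{(j)}\in\mathcal C$ with $d(f_j,g^{(j)})<\delta$. That $g^{(j)}$ lies within $\delta$ of some cover center $g_{\pi(j)}$, so $d(f_j,g_{\pi(j)})<2\delta$ by the triangle inequality for the pseudometric. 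Because $M>N$, the pigeonhole principle gives $j\ne k$ with $\pi(j)=\pi(k)$. Then
\[
d(f_j,f_k)\le d(f_j,g_{\pi(j)})+d(g_{\pi(j)},f_k)<4\delta<8\delta,
\]
which is a contradiction. Hence some $f_*\in\cF$ satisfies $d(f_*,g)\ge\delta$ for all $g\in\mathcal C$. The factor $8$ is far more than needed. The only care required is that $d$ is defined on $\cF\cup\mathcal C$, so the triangle inequality applies across target and candidate indices. This is the setting assumed in the lemma.

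For the second part, fix any measurable $\mathcal C$-valued estimator $\widehat g$. For every realization of the data, $\widehat g\in\mathcal C$, so $R(f_*,\widehat g)\ge c_d\,d(f_*,\widehat g)^p\ge c_d\delta^p$ pointwise. Taking expectations under $\mathbb P_{f_*}$ is legitimate because $R$ is measurable at each estimator under consideration. This gives $\E_{f_*}R(f_*,\widehat g)\ge c_d\delta^p$. Therefore $\sup_{f\in\cF}\E_fR(f,\widehat g)\ge c_d\delta^p$, and taking the infimum over $\widehat g$ yields \eqref{eq:packing-cover-lower} for every $n\ge1$.

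The main (mild) obstacle is the first part: keeping the cover internal and handling the strict versus non-strict inequalities. An approximate supremum for $\mathcal M$ is handled by taking a finite packing of size exceeding $N$, which exists by the definition of the supremum. The existence of $f_*$ itself does not depend on $n$ or on the statistical experiment, which is why the bound is uniform in sample size.
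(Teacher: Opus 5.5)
Your proposal is correct and follows the paper's proof essentially step for step. Both arguments pigeonhole an $8\delta$-packing against an internal $\delta$-cover to obtain a target $f_*$ that is $\delta$-separated from all of $\mathcal C$. Both then apply the pointwise bound $R(f_*,\widehat g)\ge c_d\delta^p$ and finish by taking the expectation, the supremum over targets, and the infimum over estimators.
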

\begin{proof}
Choose an $8\delta$-packing $f_1,\ldots,f_M$ and an internal
$\delta$-cover $g_1,\ldots,g_N$ of $\mathcal C$ with $M>N$.
If each $f_j$ were within distance less than $\delta$ of some member of
$\mathcal C$, it would be within distance less than $2\delta$ of a cover
center. Two targets assigned to the same center would have distance less
than $4\delta$, contrary to their separation. The assignment would therefore
be injective, contradicting $M>N$. Hence some $f_*$ is at distance at least
$\delta$ from every candidate. For this target,
$R(f_*,\widehat g)\ge c_d\delta^p$ for every sample. Taking expectation,
the supremum over targets, and the infimum over estimators proves the bound.
\end{proof}
The lemma permits arbitrary data-dependent fitting and selection within
$\mathcal C$. Theorem~\ref{thm:union} uses squared Hellinger risk and $d=h$.

\subsection{A sieve-likelihood risk bound}
\label{sec:hellinger-complexity}\label{sec:likelihood-risk}
Use the density model and Hellinger distance of
Section~\ref{sec:parameter-budget}, and the one-sided likelihood quantity
$\rho_\gamma$ in
\eqref{eq:main-rho-gamma}. Fix measurable density versions on one common set
$\mathcal Z_\nu$ whose complement has $\nu$-measure zero. All pointwise
comparisons below use this set. Density ratios use $0/0=0$ and
$a/0=+\infty$ for $a>0$, and $\log0=-\infty$.

A bracket $[p^-,p^+]$ contains all densities $p$ satisfying
$p^-\le p\le p^+$ on $\mathcal Z_\nu$, where the endpoints are nonnegative
measurable functions with $\int p^+\,d\nu<\infty$. Its width is
$\|\sqrt{p^+}-\sqrt{p^-}\|_{L^2(\nu)}$; the endpoints need not be densities.
As in Section~\ref{sec:main-likelihood-risk}, $N(u,\mathcal P,h)$ counts
brackets of width at most $u$ and $H(u,\mathcal P,h)=\log N(u,\mathcal P,h)$.
In particular, a pointwise envelope
$|\sqrt p-\sqrt{p_j}|\le uL$, with nonnegative $L\in L^2(\nu)$, gives the
bracket
\[
 \left[\{(\sqrt{p_j}-uL)_+\}^2,(\sqrt{p_j}+uL)^2\right]
\]
of width at most $2u\|L\|_2$, where $t_+=\max\{t,0\}$.
Choosing a member of each nonempty bracket gives
$\mathcal N(u,\mathcal P,h)\le N(u,\mathcal P,h)$; an ordinary metric cover
alone does not supply the pointwise envelopes. Appendix~\ref{sec:losses}
constructs the required brackets for the likelihood examples.

For a deterministic sieve $\mathcal S_n\subseteq\mathcal V$, write
$\mathcal P_n=\{p_g:g\in\mathcal S_n\}$ and let $\widehat f_n$ be a measurable
approximate maximizer with average-log-likelihood tolerance $\varepsilon_n$,
as in Definition~\ref{def:main-sieve-mle} with $\mathcal C_n$ replaced by
$\mathcal S_n$. The next result combines Theorem~1 of
\citet{wongshen1995} with a comparison to one approximating sieve density.

\begin{theorem}[Sieve-MLE Hellinger bound]\label{thm:ws-hellinger}
For the sieve MLE and density class specified above, set
\[
\delta_n(f_0)=\inf_{q\in\mathcal P_n}\rho_{1/2}(p_{f_0},q).
\]
Let $C_1,C_2>0$ be the absolute constants in
Theorem~1 of \citet{wongshen1995}; they are independent of the sieve,
sample size, and target. Suppose that $e_n>0$ is deterministic,
$n^{-1}\le e_n^2\le2$, and, for every $e_n\le t\le\sqrt2$,
\begin{equation}\label{eq:ws-integral}
\int_{t^2/2^8}^{\sqrt2t}
\sqrt{H(u/C_1,\mathcal P_n,h)}\,du
\le C_2\sqrt n\,t^2.
\end{equation}
The resolution $e_n$ may depend on the selected sieve $\mathcal P_n$.
Then an absolute constant $C_3$ satisfies
\begin{equation}\label{eq:ws-mean}
\E_{f_0}h^2(\mathbb P_{f_0},\mathbb P_{\widehat f_n})
\le C_3\{\delta_n(f_0)+e_n^2+\varepsilon_n\}.
\end{equation}
The bound holds uniformly over any target set on which the right-hand side is
uniformly bounded.
\end{theorem}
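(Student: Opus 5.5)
The plan is to apply Theorem~1 of \citet{wongshen1995} to a suitably chosen comparison density and then pass from the resulting high-probability bound to an expectation bound.

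\textbf{Choice of comparison density.} Fix $f_0$. By definition of $\delta_n(f_0)$, pick $q_n\in\mathcal P_n$ with $\rho_{1/2}(p_{f_0},q_n)\le2\delta_n(f_0)$. If $\delta_n(f_0)=\infty$ the bound is trivial, since $h^2\le2$.

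\textbf{Step 1: likelihood-ratio reduction.} On the event $\{h(p_{f_0},p_{\widehat f_n})\ge t\}$, the near-maximizing property gives
\[
\sup\Bigl\{\tfrac1n\textstyle\sum_i\log(p/q_n)(Z_i):\,p\in\mathcal P_n,\ h(p_{f_0},p)\ge t\Bigr\}\ge-\varepsilon_n .
\]
We then write $\log(p/q_n)=\log(p/p_{f_0})+\log(p_{f_0}/q_n)$. The first sum is controlled by the Wong--Shen theorem. Under the entropy integral condition \eqref{eq:ws-integral} with constants $C_1,C_2$, for $t\ge e_n$ it yields
\[
\mathbb P^*\Bigl(\sup_{h(p_{f_0},p)\ge t}\textstyle\prod_i p(Z_i)/p_{f_0}(Z_i)\ge e^{-c_1nt^2}\Bigr)\le 4e^{-c_2nt^2}.
\]
The truncation in their proof only requires brackets of the sieve, which \eqref{eq:ws-integral} supplies.

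\textbf{Step 2: the approximation term.} Use $\log x\le 2(\sqrt x-1)$ together with a Markov/Chebyshev-type bound on the one-sided quantity. Specifically, $\E_{f_0}(p_{f_0}/q_n)^{1/2}=1+\tfrac12\rho_{1/2}$, so by Markov applied to $\prod_i(p_{f_0}/q_n)^{1/2}(Z_i)$,
\[
\mathbb P\Bigl(\tfrac1n\textstyle\sum_i\log(p_{f_0}/q_n)(Z_i)\ge s\Bigr)\le e^{-ns/2}(1+\delta)^n\le \exp\{-n(s/2-\delta)\}.
\]
Taking $s=c_1t^2/2$, both failure probabilities become exponentially small once $t^2\ge C(\delta_n+\varepsilon_n)$ and $t\ge e_n$. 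The combined event forces
\[
-\varepsilon_n\le -c_1t^2+c_1t^2/2,
\]
which is a contradiction when $t^2>2\varepsilon_n/c_1$.

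\textbf{Step 3: integration.} Set $t_0^2=C(\delta_n(f_0)+e_n^2+\varepsilon_n)$ and integrate the tail:
\[
\E h^2\le t_0^2+\int_{t_0^2}^{2}\mathbb P(h^2\ge u)\,du\le t_0^2+\int 5e^{-cnu}\,du\le t_0^2+C/n .
\]
Since $e_n^2\ge n^{-1}$, this gives \eqref{eq:ws-mean}. All constants are absolute, so the bound is uniform over targets on which the right side is bounded. The measurability of the event uses outer probability, as in \citet{wongshen1995}.

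\textbf{Main obstacle.} The delicate part is Step 2. The Markov bound degrades when $\delta$ is not small relative to $t^2$; this is handled because we only need $t^2\gtrsim\delta$. Using $\rho_{1/2}$ rather than Kullback--Leibler divergence is what makes the exponential-moment bound available without requiring $p_{f_0}/q_n$ to be bounded.
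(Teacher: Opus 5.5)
Your proposal follows the paper's proof essentially step for step: Wong--Shen's Theorem~1 controls the likelihood-ratio surface relative to $p_{f_0}$, Markov's inequality applied to $\prod_i(p_{f_0}/q_n)^{1/2}(Z_i)$ with $\E_{f_0}(p_{f_0}/q_n)^{1/2}=1+\tfrac12\rho_{1/2}$ handles the comparison density, and the tail integration uses $e_n^2\ge n^{-1}$. One small repair is needed in choosing $q_n$: the requirement $\rho_{1/2}(p_{f_0},q_n)\le2\delta_n(f_0)$ cannot be met when $\delta_n(f_0)=0$ is not attained, so take $\rho_{1/2}(p_{f_0},q_n)\le\delta_n(f_0)+\eta$ and let $\eta\downarrow0$ at the end, as the paper does.
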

\begin{proof}
Fix $f_0$ and put $p_0=p_{f_0}$ and
$L_n(q)=\prod_{i=1}^n q(Z_i)/p_0(Z_i)$. Theorem~1 of
\citet{wongshen1995}, under \eqref{eq:ws-integral}, gives absolute
constants $c_1,c_2>0$ such that
\begin{equation}\label{eq:ws-surface}
 \mathbb P_{f_0}^{*}\!\left\{
 \sup_{q\in\mathcal P_n:\,h(p_0,q)\ge t}L_n(q)
 \ge e^{-c_1nt^2}\right\}\le4e^{-c_2nt^2},
 \qquad e_n\le t\le\sqrt2.
\end{equation}
Outer probability permits a supremum over a general density class;
the selected estimator is measurable by hypothesis.

If $\delta_n(f_0)=\infty$, the conclusion is immediate. Otherwise,
fix $\eta>0$ and choose a deterministic $q_*\in\mathcal P_n$ such that
$\rho_{1/2}(p_0,q_*)\le\delta_n(f_0)+\eta$. Independence and the
definition of $\rho_{1/2}$ imply
\[
 \E_{f_0}L_n(q_*)^{-1/2}
 =\left\{1+\tfrac12\rho_{1/2}(p_0,q_*)\right\}^{n}
 \le\exp\{n(\delta_n(f_0)+\eta)/2\}.
\]
Markov's inequality therefore gives
\begin{equation}\label{eq:ws-comparator}
 \mathbb P_{f_0}\{L_n(q_*)\le
     e^{-c_1nt^2+n\varepsilon_n}\}
 \le\exp\!\left\{-\frac n2
 [c_1t^2-\varepsilon_n-\delta_n(f_0)-\eta]\right\}.
\end{equation}
The approximate maximum-likelihood property gives
$L_n(p_{\widehat f_n})\ge e^{-n\varepsilon_n}L_n(q_*)$.
If $h(p_0,p_{\widehat f_n})>t$, either the event in
\eqref{eq:ws-surface} or the event in \eqref{eq:ws-comparator}
must occur. For a sufficiently large absolute $K$, set
\[
 t^2=K\{\delta_n(f_0)+\eta+e_n^2+\varepsilon_n+x/n\},
 \qquad x\ge0.
\]
When $t^2<2$, the two probability bounds give a tail at most
$Ce^{-cx}$; when $t^2\ge2$, the event $\{h^2(p_0,p_{\widehat f_n})>t^2\}$ is empty.
Integrating this bound and using $n^{-1}\le e_n^2$ proves
\eqref{eq:ws-mean} with an additional $C\eta$. Letting $\eta\downarrow0$
completes the proof.
\end{proof}

\section{Proofs of the main results}\label{app:general}
This appendix gives the proofs of the results in
Sections~\ref{sec:parameter-budget}--\ref{sec:budget-consequences}, in their
order of appearance. Appendix~\ref{app:risk-tools} supplies the general
risk tools, Appendix~\ref{sec:losses} collects auxiliary likelihood and
complexity bounds, and Appendices~\ref{app:ffn} and
\ref{app:transformer-proof} give the neural constructions.

\subsection{Proofs for Section~\ref{sec:parameter-budget}}
\label{app:proofs-section2}
\begin{proof}[Proof of Corollary~\ref{cor:main-sieve-oracle}]
Apply Theorem~\ref{thm:ws-hellinger} with
$\mathcal S_n=\mathcal C_n$. Then
$\delta_n(f_0)=\delta(f_0;\mathcal C_n)$, and the bridge condition
\eqref{eq:main-rho-bridge} gives
$\sup_{f_0\in\cF}\delta_n(f_0)\le C_4A(\mathcal C_n)$.
Taking the supremum in \eqref{eq:ws-mean} proves
\eqref{eq:main-general-oracle} with $C_5=C_3\max\{1,C_4\}$.
\end{proof}

\begin{proof}[Proof of Theorem~\ref{thm:union}]
Set $W_B=B^\kappa\log^b(eB)$ and $\delta_B=c_0W_B^{-a/2}$, where
$0<c_0<1$ will be chosen below. Use the Hellinger pseudometric on indices,
$d(f,g)=h(\mathbb P_f,\mathbb P_g)$, so
\[
 \mathcal N(\delta,\mathcal F_U[B],d)
 =\mathcal N(\delta,\mathcal P_U[B],h).
\]
For all sufficiently large $B$, $\delta_B<1$, and
\eqref{eq:unioncover} with $\kappa,b\ge1$ gives
\[
 \log\mathcal N(\delta_B,\mathcal F_U[B],d)
 \le C'\{1+\log(1/c_0)\}W_B.
\]
Assumption~\ref{ass:complexity} supplies a target packing with logarithmic
cardinality at least $c_{\mathrm{pack}}c_0^{-2/a}W_B$ whenever
$\delta_B\le\delta_0$. Choose $c_0$ sufficiently small that
\[
 c_{\mathrm{pack}}c_0^{-2/a}>C'\{1+\log(1/c_0)\}.
\]
The packing is then larger than the candidate cover. Applying
Lemma~\ref{lem:packing-cover} with $R(f,g)=h^2(\mathbb P_f,\mathbb P_g)$,
$p=2$, $c_d=1$, and $\mathcal C=\mathcal F_U[B]$ gives the lower bound
$\delta_B^2=c_0^2W_B^{-a}$, proving \eqref{eq:unionlower}.
\end{proof}

\begin{proof}[Proof of Corollary~\ref{cor:main-separation}]
Divide the loop upper bound \eqref{eq:mainrisk} by the untied lower bound
\eqref{eq:unionlower}.
\end{proof}

\subsection{Proofs for Section~\ref{sec:examples-neural}}
\label{app:proofs-section3}
\begin{proof}[Proof of Proposition~\ref{prop:model-geometry}]
\emph{Conditional models.}
The calculations in Sections~\ref{app:gaussian}--\ref{app:bernoulli}
establish the two-sided
metric equivalence in part~(i). Lemma~\ref{lem:rho-bridge} establishes
part~(ii). The root-density envelope converts uniform index covers into
Hellinger brackets and internal Hellinger covers, proving part~(iii).
Finally, Lemma~\ref{lem:holder-packing} and the lower metric equivalence
give part~(iv), after a fixed rescaling of the packing radius, with
$a=2s/d$.

\emph{Energy-based generative model.}
Lemma~\ref{lem:ebm-geometry} proves parts~(i)--(iii), using
$d_{\mathrm E}=d_{\mathrm c}$ and $d_{\mathrm c}(f,g)\le
\|f-g\|_{L^2(\mu)}$. For part~(iv), use the zero-mean packing in
Lemma~\ref{lem:holder-packing}. Its elements satisfy
$d_{\mathrm c}(f_{j,\delta},f_{k,\delta})=
\|f_{j,\delta}-f_{k,\delta}\|_{L^2(\mu)}$.
The lower bound in \eqref{eq:ebm-geometry} transfers this packing to
Hellinger distance after a fixed rescaling of the radius, with exponent
$a=2s/d$.
\end{proof}

\noindent\textbf{Existence of the neural sieve MLE.}
For fixed $(q,r,T)$, the closed, bounded coefficient box is compact, and
the network output is continuous in its coefficients. The stabilized layer
normalizations preserve this continuity. In each of the four models, the
sample log likelihood is jointly measurable in the observations and
coefficients and is continuous in the coefficients. For the energy model,
continuity of the normalizing constant follows from bounded convergence,
since $|g|\le H$. The measurable maximum theorem therefore supplies a
measurable exact maximizer on the coefficient box, and hence an estimator
satisfying Definition~\ref{def:main-sieve-mle} for every
$\varepsilon_n\ge0$.

\begin{lemma}[Index approximation for looped neural classes]
\label{lem:neural-index-approx}
Fix $d,s,H$ and $j\in\{\mathrm F,\mathrm{Tr}\}$. For $j=\mathrm{Tr}$, also fix the token partition and
$\varepsilon_{\rm LN}>0$. There exist constants $C,c>0$ and
architecture-specific thresholds $q_0^j,r_0^j,M_0^j$, depending only on
these fixed quantities, such that, for fixed
$M_{\rm par}\ge M_0^j$, integers $q\ge q_0^j$, $r\ge r_0^j$, and
sufficiently large $T$, with $a=2s/d$, the following bound holds.

\begin{equation}\label{eq:neural-index-approx}
\sup_{f_0\in\Holder}\inf_{g\in\mathcal F_{L,j}(q,r,T)}
\|f_0-g\|_{L^2(\mu)}^2
\le C\biggl[\biggl(\frac{\log(erT)}{rT}\biggr)^a
+\exp\!\bigl(-cT^{1/h_0^j}\bigr)\biggr],
\end{equation}
where $h_0^{\mathrm F}=1$ and $h_0^{\mathrm{Tr}}=2$. The exponential term may
be omitted when $T\ge C\log(erT)$ for the residual FFN and when
$T\ge C\log^2(erT)$ for the Transformer.

For fixed admissible $q,r$, the initialization condition gives the simplified
bound $\sup_{f_0\in\Holder}\inf_{g\in\mathcal F_{L,j}(q,r,T)}
\|f_0-g\|_{L^2(\mu)}^2\lesssim\{\log(eT)/T\}^{a}$.
\end{lemma}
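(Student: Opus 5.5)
The plan is to reduce Lemma~\ref{lem:neural-index-approx} to the fixed-resolution constructions of Theorems~\ref{thm:ffn-construction} and~\ref{thm:tr-construction}, and then choose the grid resolution from the iteration budget. I will use these constructions in the following form. For a grid of side $1/K$, the looped class $\mathcal F_{L,j}(q,r,T)$ contains a function $g$ that agrees, up to an $L^\infty$ error of order $K^{-s}$, with the piecewise local Taylor polynomial of $f_0$ of degree $k$. This holds off a boundary layer of cells whose $\mu$-measure is $O(1/K)$ or smaller, and the output clipping $\Pi_H$ keeps $|g|\le H$ everywhere. The constructions impose two requirements. First, the iterations must suffice to decode the Taylor table for $K^d$ cells: with $r$ hidden units working in parallel and $O(\log(eK))$ iterations spent per decoded digit or bit, this needs $T\ge T_{\rm init}+C K^d\log(eK)/r$. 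Second, there is a fixed initialization cost $T_{\rm init}$. For the FFN I expect $T_{\rm init}\asymp\log(erT)$; for the Transformer I expect $T_{\rm init}\asymp\log^2(erT)$, because the softmax/layer-normalization gadgets only reach the needed sharpness after squaring-type iterations. All coefficients must stay in $[-M_0^j,M_0^j]$, which the constructions guarantee by encoding the Taylor coefficients in the binary expansion of bounded reals.

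\textbf{Regime 1: the initialization condition holds.} Given $T$, choose the largest $K$ with $K^d\log(eK)\le c\,rT$, using half of the iterations for decoding. Then $K^d\asymp rT/\log(erT)$, so $K^{-2s}\asymp\{\log(erT)/(rT)\}^{a}$ with $a=2s/d$. The pointwise error then gives the squared $L^2(\mu)$ bound. Cells where the construction is inexact (boundary strips of the grid) contribute at most $4H^2$ times their measure. I will make these strips thin, of width $K^{-2s-1}$ or use overlapping shifted grids, so that their contribution is dominated by $K^{-2s}$.

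\textbf{Regime 2: the initialization condition fails.} If $T$ is below the threshold $C\log(erT)$ for the FFN (or $C\log^2(erT)$ for the Transformer), take a constant-size grid. The available iterations then only give an error that decays like the construction's convergence of its sharpened step/selection gadgets. These gadgets converge geometrically in the number of squaring iterations for the FFN, giving $e^{-cT}$. For the Transformer they converge geometrically in $\sqrt T$, since each refinement costs a growing number of iterations, giving $e^{-cT^{1/2}}$. Adding the two regimes yields \eqref{eq:neural-index-approx} with $h_0^{\rm F}=1$ and $h_0^{\rm Tr}=2$; once $T$ exceeds the threshold, the exponential term is absorbed into the polynomial term. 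The simplified statement for fixed $q,r$ follows by treating $r$ as a constant, because $\log(erT)\lesssim_r\log(eT)$.

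The main obstacle is bookkeeping the iteration count that the shared block needs to decode a Taylor table of size $K^d$ with width $r$. This count must be $O(K^d\log K/r)$ while the parameters stay bounded and the clock/controller state is read correctly at every iteration. I would first try to use the digit-extraction procedure of Section~\ref{app:base-engine} as a black box that yields one bounded-precision coefficient per $O(1)$ iterations per hidden unit group. This reduces the issue to a precision count of $O(\log K)$ bits per coefficient.
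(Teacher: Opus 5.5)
Your reduction is the paper's route: invoke the fixed-resolution constructions of Theorems~\ref{thm:ffn-construction} and~\ref{thm:tr-construction}, then choose the dyadic resolution $K$ from the iteration budget. Your Regime~1 is essentially correct once $T\ge C\log^{h_0^j}(erT)$.

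\textbf{The gap is in Regime~2.} A constant-size grid cannot give a bound like $e^{-cT}$ or $e^{-cT^{1/2}}$. At fixed $K$ the construction enters an absorbing terminal state after its finishing time, so $g_T$ stops improving. The theorems certify only $CK^{-2s}$ for this candidate, and with $K$ constant that is a positive constant. There is no ``geometric convergence of sharpened gadgets'' left to exploit once you use the theorems as black boxes, because all their precision parameters are tied to $K$. This includes $b,J,G$, the routing length, and the terminal-correction precision. Relatedly, the fixed cost is not $T_{\rm init}\asymp\log^{h_0^j}(erT)$, as you wrote, but $C\log^{h_0^j}(eK)$. Its dependence on $K$ is exactly what produces the exponential term.

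\textbf{The missing idea is that the exponential term is a resolution cap, not a separate regime.} The initialization cost forces $\log(eK)\lesssim T^{1/h_0^j}$, so the largest affordable grid has $K^d$ at most of order $\exp(c_2T^{1/h_0^j})$. Then $K^{-2s}=(K^d)^{-a}$ is exponentially small in $T^{1/h_0^j}$. Concretely, put $L=\log(erT)$ and take
\[
J_0=\min\{c_1rT/L,\exp(c_2T^{1/h_0^j})\},\qquad K=2^{\lfloor d^{-1}\log_2J_0\rfloor}.
\]
For large $T$ this gives $K\ge2$, $K^d\asymp J_0$, and $\log(eK)\le\min\{CL,\,C+c_2T^{1/h_0^j}/d\}$. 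The initialization cost is then at most a constant plus $C'c_2^{h_0^j}T$. The table cost $(K^d/r)\log(eK)$ is at most $C''c_1T$. Choosing $c_1,c_2$ small makes the total at most $T$. The construction then gives
\[
CK^{-2s}\lesssim J_0^{-a}\le\{L/(c_1rT)\}^a+\exp(-ac_2T^{1/h_0^j})
\]
in one step, with no case split. When $T\ge CL^{h_0^j}$, the choice $J_0=c_1rT/L$ alone already fits the budget, which removes the exponential term.

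Two smaller points. First, the boundary-strip control is already inside the construction theorems: the strips have half-width $2^{-G}$ and $\mu$-measure $O(K^{-2s})$, so you need not redo it. Second, the width is used for spatial selection, not parallel decoding. The construction loads one coarse group of $K^d/N_{\rm grp}$ records, with $N_{\rm grp}\asymp\min\{r,K^d\}$, and decodes each record bit by bit in $O(\log(eK))$ iterations. Since only the theorems' time bound enters this lemma, your ``main obstacle'' paragraph is not needed for the proof.
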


\begin{proof}[Proof of Lemma~\ref{lem:neural-index-approx}]
For $j\in\{\mathrm F,\mathrm{Tr}\}$, Theorems~\ref{thm:ffn-construction}
and \ref{thm:tr-construction} give squared index error at most $CK^{-2s}$
at every dyadic resolution $K$ whenever
\[
 T\ge C\{\log^{h_0^j}(eK)+(K^d/r)\log(eK)\},
 \qquad h_0^{\mathrm F}=1,\quad h_0^{\mathrm{Tr}}=2.
\]
Write $L=\log(erT)$. For sufficiently small constants $c_1,c_2>0$, set
\[
 J_0=\min\{c_1rT/L,\exp(c_2T^{1/h_0^j})\},\qquad
 K=2^{\lfloor d^{-1}\log_2J_0\rfloor}.
\]
For all sufficiently large $T$, uniformly over the admissible widths,
$K\ge2$ and $K^d\asymp J_0$. Moreover, $\log(eK)\le CL$ and
$\log(eK)\le C+c_2T^{1/h_0^j}/d$. The two iteration costs are therefore
at most $C'c_2^{h_0^j}T$ and $C''c_1T$, after absorbing the fixed additive
constant. Choosing $c_1,c_2$ sufficiently small makes their sum at most $T$.
The construction at this resolution consequently gives
\[
 \sup_{f_0\in\Holder}\inf_{g\in\mathcal F_{L,j}(q,r,T)}
 \|f_0-g\|_{L^2(\mu)}^2
 \le C\{[L/(rT)]^a+\exp(-cT^{1/h_0^j})\},
 \qquad a=2s/d.
\]
When $T\ge C L^{h_0^j}$, taking $J_0=c_1rT/L$ alone satisfies the same
iteration bound and removes the exponential term. This proves
\eqref{eq:neural-index-approx} and both initialization conditions.

For fixed admissible $q,r$, $\log(erT)/(rT)\lesssim\log(eT)/T$,
which gives the final simplified bound.
\end{proof}

\begin{proof}[Proof of Theorem~\ref{thm:neural-approx}]
Proposition~\ref{prop:model-geometry}(i) bounds squared Hellinger distance
by a constant times squared $L^2(\mu)$ distance in every likelihood model,
using $d_{\mathrm c}(f,g)\le\|f-g\|_{L^2(\mu)}$ for the energy model.
Applying this bound to the candidates from Lemma~\ref{lem:neural-index-approx} proves
\eqref{eq:conditional-approx}, with the same initialization conditions and
fixed-width simplification. Table~\ref{tab:parameter-counts} shows that
$B_L^j(q,r)$ is independent of $T$, giving the final budget statement.
\end{proof}

\begin{lemma}[Uniform covering of the neural index classes]
\label{lem:neural-index-cover}
Fix $M_{\rm par}>0$ and $j\in\{\mathrm F,\mathrm{Tr}\}$.
There exists a constant $C>1$, depending only on the fixed problem and
implementation parameters, such that the following bounds hold.

Let $G$ be defined by \eqref{eq:commonG}. For positive integers $q,r,T$ and
$0<\epsilon<1$,
\begin{equation}\label{eq:index-cover}
\log\mathcal N\bigl(\epsilon,\mathcal F_{L,j}(q,r,T),
\|\cdot\|_\infty\bigr)
\le C B_L^j(q,r)\{\log(C/\epsilon)+G(q,r,T)\}.
\end{equation}
For all sufficiently large $B$,
\begin{equation}\label{eq:architecture-union-cover}
\log\mathcal N(\epsilon,\mathcal F_{U,j}[B],\|\cdot\|_\infty)
\le C\{B^2\log(eB)+B\log(C/\epsilon)\}.
\end{equation}

\end{lemma}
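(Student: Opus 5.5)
The plan is to prove both bounds by discretizing the bounded coefficient box and controlling, through a parameter-sensitivity (Lipschitz) estimate, how far the network output moves when the coefficients are perturbed.

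\textbf{Step 1: sensitivity bound for the looped class.} For fixed $(q,r,T)$ I would show that, for all $x\in[0,1]^d$ and coefficient vectors $\vartheta,\vartheta'$ in the box $[-M_{\rm par},M_{\rm par}]^{B_L^j(q,r)}$,
\[
\|g_\vartheta-g_{\vartheta'}\|_\infty\le e^{C_0G(q,r,T)}\|\vartheta-\vartheta'\|_\infty .
\]
This bound has a constant $C_0$ depending only on fixed quantities; it is the parameter-sensitivity bound of Appendix~\ref{sec:losses}.

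For the residual FFN the argument is by induction along the orbit $z_t=\Phi_\theta(z_{t-1})$.
\begin{itemize}
\item Each block is Lipschitz in its state with constant at most $1+\|W_2\|\|W_1\|\le C(q+1)(r+1)$.
\item Each block is Lipschitz in its coefficients with constant at most $C(q+1)(r+1)(1+\|z_{t-1}\|)$.
\item The states grow at most geometrically: $\|z_t\|\le\{C(q+1)(r+1)\}^t(1+\|z_0\|)$.
\end{itemize}
A telescoping sum over the $T$ applications then gives a total sensitivity of at most $T\{C(q+1)(r+1)\}^{T+1}$. This matches $e^{G}$ with $C_6$ absorbing the constant; the factor $T$ produces the term $\log(T+1)$. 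The clipping $\Pi_H$ is 1-Lipschitz.

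For the post-LN Transformer the analysis is easier in one respect. Layer normalization with stabilizer $\varepsilon_{\rm LN}>0$ and bounded gain and bias keeps every row bounded by $C\sqrt q$ after each block, so the states do not grow. The block is then Lipschitz in state and in coefficients with constants polynomial in $q,r$ and fixed by $\varepsilon_{\rm LN}$, $M_{\rm par}$ and $N$.
\begin{itemize}
\item Softmax attention is Lipschitz on bounded inputs with constant $Cq^{c}$, since scores are bounded by $Cq$ after the $1/\sqrt q$ scaling.
\item Layer normalization is Lipschitz with constant $C/\sqrt{\varepsilon_{\rm LN}}$ times the gain.
\end{itemize}
Composing $T$ blocks gives the same form $\exp\{C(T+1)\log(C(q+1)(r+1))+\log(T+1)\}$, after enlarging $C$, which is allowed since $C$ multiplies $B_L^j$ in the claim. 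I expect this Transformer Lipschitz bookkeeping, especially the softmax and layer-normalization constants, to be the main obstacle.

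\textbf{Step 2: covering the looped class.} Take a grid of mesh $\epsilon e^{-C_0G}$ in each coordinate of the box. The grid has at most $(2M_{\rm par}e^{C_0G}/\epsilon+1)^{B_L^j}$ points, and its image under the coefficient-to-function map is an internal $\epsilon$-cover in $\|\cdot\|_\infty$, because the grid points lie inside the box. Taking logarithms gives \eqref{eq:index-cover}.

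\textbf{Step 3: covering the untied union.} First I would establish, in the same way as Step 1, a sensitivity bound for an untied stack, with separate $\theta_t$ per block: it is bounded by $\exp\{C[\log(T+1)+(T+1)\log(C(q+1)(r+1))]\}$. Any pair $(q,r,T)\in\mathcal I_{U,j}(B)$ satisfies the following from Table~\ref{tab:parameter-counts}:
\begin{itemize}
\item $q,r\le B$ and $T\le B$, since every block contributes at least $q\ge1$ parameters;
\item $B_U^j\le B$.
\end{itemize}
Hence the log-sensitivity is at most $CB\log(eB)$, and the covering entropy of each member class is at most
\[
CB\{\log(C/\epsilon)+B\log(eB)\}.
\]
The number of admissible triples is at most $B^3$, and a union of covers is a cover. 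Therefore
\[
\log\mathcal N(\epsilon,\mathcal F_{U,j}[B],\|\cdot\|_\infty)\le 3\log B+CB^2\log(eB)+CB\log(C/\epsilon),
\]
which gives \eqref{eq:architecture-union-cover} for large $B$. Internality holds because every center lies in one of the member classes and hence in the union.
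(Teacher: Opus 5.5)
Your proposal is correct and takes essentially the same route as the paper. Both arguments use a sensitivity recurrence giving $\log\Gamma\lesssim G(q,r,T)$ on the full coefficient box (valid for shared and untied block sequences alike), then a parameter grid of mesh $\epsilon/\Gamma$ giving entropy $P\log(1+4M_{\rm par}\Gamma/\epsilon)$, and finally a union over at most $B^3$ admissible triples using $q,r,T\le B$.

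One slip is harmless. The attention scores are not $O(q)$: queries and keys have entries of order $q^{3/2}$, so the scores are of order $q^{7/2}$. However, no score bound is needed, because the softmax Jacobian satisfies $\|D\sigma(z)v\|_1\le2\|v\|_\infty$ uniformly in $z$. Only polynomial dependence on $q,r$ enters, exactly as in Lemma~\ref{lem:postln-full-box-sensitivity}.
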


\begin{proof}[Proof of Lemma~\ref{lem:neural-index-cover}]
Let $\Gamma$ denote the parameter-sensitivity bound in
Section~\ref{app:statistical}, \eqref{eq:sensitivity}: for parameter vectors
$\xi,\widetilde\xi$, it satisfies
$\|f_\xi-f_{\widetilde\xi}\|_\infty
\le\Gamma\|\xi-\widetilde\xi\|_\infty$.
We use its recurrence and the parameter-grid bound \eqref{eq:cover}. We first verify the required sensitivity
bounds on the full parameter boxes.

\emph{Residual FFN.}
We bound parameter sensitivity uniformly over the full coefficient box.
Put $L=C(q+1)^C(r+1)^C\ge2$. The residual recurrence gives $\|z_t\|_\infty+1\le L^{t+1}$. The state Lipschitz factor is at most $L$ and the parameter derivative is at most $L^{t+2}$. Thus \eqref{eq:sensitivity} gives $\Gamma\le(T+1)L^{C(T+1)}$, also for arbitrary untied sequences. Hence
\eqref{eq:commonG}
is valid, after changing constants.

The full predictor box satisfies \eqref{eq:cover}. Since
$\log\Gamma\le C G(q,r,T)$, this proves the residual-FFN instance of
\eqref{eq:index-cover}. The same sensitivity calculation applies to
arbitrary untied sequences and is used for
\eqref{eq:architecture-union-cover}.

\emph{Post-LN Transformer.}
Lemma~\ref{lem:postln-full-box-sensitivity} verifies the sensitivity
recurrence on the complete fitted box, including the affine initial state,
all four affine attention projections, both layer normalizations, and the
clipped affine output function. Its bound
\eqref{eq:postln-log-sensitivity} gives $\log\Gamma\le C G(q,r,T)$.
Equation~\eqref{eq:cover} therefore proves \eqref{eq:index-cover}.
The lemma also applies to every untied block sequence and supplies the
same sensitivity bound for \eqref{eq:architecture-union-cover}.

\emph{Budgeted union.}
For a budgeted untied class, the same calculations give
$\log\Gamma\le C(T+1)\log(eB)$ for every tuning triple in
$\mathcal I_{U,j}(B)$. Since $B_U^j(q,r,T)\le B$ and $T\le B$, the parameter
grid in \eqref{eq:cover} has logarithmic cardinality at most
\[
 C\{B^2\log(eB)+B\log(C/\epsilon)\}.
\]
There are at most $B^C$ admissible integer triples. Adding their logarithmic
cardinality proves \eqref{eq:architecture-union-cover}.
\end{proof}

\begin{proof}[Proof of Proposition~\ref{prop:architecture-cover}]
Apply Lemma~\ref{lem:neural-index-cover}. Proposition~\ref{prop:model-geometry}(iii) converts the loop
uniform cover into Hellinger brackets and converts the budgeted uniform
cover into an internal Hellinger cover. A fixed rescaling of the radius is
absorbed into $\log(C/u)$ or $\log(C/\epsilon)$, giving
\eqref{eq:conditional-entropy} and \eqref{eq:law-union-cover}.
For fixed admissible $q,r$, Table~\ref{tab:parameter-counts} and
\eqref{eq:commonG} give $B_L^j(q,r)=O(1)$ and $G(q,r,T)=O(T)$,
proving the simplified entropy bound.
\end{proof}

\subsection{Proofs for Section~\ref{sec:budget-consequences}}
\label{app:proofs-section4}
\begin{proof}[Proof of Theorem~\ref{thm:model-risk}]
Existence of a measurable exact sieve MLE was established in Section~\ref{app:proofs-section3}.
Theorem~\ref{thm:neural-approx} supplies the Hellinger approximation bound
\eqref{eq:conditional-approx}. Proposition~\ref{prop:architecture-cover}
gives \eqref{eq:conditional-entropy}, and
Lemma~\ref{lem:entropy-balance} verifies \eqref{eq:main-ws-integral} with
$e_{n,L}=e_{n,j}$ whenever $e_{n,j}<1$.
Proposition~\ref{prop:model-geometry}(ii) supplies the one-sided bridge
\eqref{eq:main-rho-bridge}. If $e_{n,j}=1$, the conclusion follows after
enlarging the constant because $h^2\le2$; otherwise
Corollary~\ref{cor:main-sieve-oracle} gives the first bound in
\eqref{eq:conditional-risk}.

For \eqref{eq:budget-iteration-risk}, fix $q,M_{\rm par}$ and take the
largest admissible width $r_B$ under budget $B$. Table~\ref{tab:parameter-counts}
gives $B_L^j(q,r)=\alpha_j(q)+(2q+1)r$, where
$\alpha_{\mathrm F}(q)=q(d+3)+1$ and
$\alpha_{\mathrm{Tr}}(q)=4q^2+q(d+2N+9)+1$.
Thus $r_B=\lfloor(B-\alpha_j(q))/(2q+1)\rfloor\asymp B$ for sufficiently
large $B$, and $r_B\ge r_0^j$. Consequently,
$\log(er_BT)/(r_BT)\lesssim\log(eBT)/(BT)$ and
$G(q,r_B,T)\lesssim T\log(eB)$ for $T\ge1$.
Using $B_L^j(q,r_B)\le B$ in the first risk bound, applying the
initialization refinement, and absorbing $\varepsilon_n\le n^{-1}$ into
$B\log(en)/n$ proves \eqref{eq:budget-iteration-risk}.
All comparison constants are uniform in $n,B,T$, and $f_0$.

For the fixed-budget specialization used in the Introduction, keep $B$
fixed in \eqref{eq:budget-iteration-risk}. Since
$\log(eBT)\lesssim_B\log(eT)$ and $\log(eB)$ is constant,
the bound becomes
$R_{L,j,n}^m\lesssim_B\{\log(eT)/T\}^{a}+(T+\log(en))/n$.
The iteration schedule \eqref{eq:fixed-T} then gives the displayed
fixed-budget rate in the Introduction.
\end{proof}

\begin{proof}[Proof of Theorem~\ref{thm:model-minimax}]
For $m\in\{\mathrm G,\mathrm{La},\mathrm{Be}\}$, use the packing in
Lemma~\ref{lem:holder-packing} at scale $\delta_n=K_n^{-s}$, where
$K_n\asymp n^{1/(2s+d)}$. Lemma~\ref{lem:rho-bridge} and Jensen's
inequality give
\[
\operatorname{KL}(\mathbb P_f^m\Vert\mathbb P_0^m)
\le \rho_{1/2}(p_f^m,p_0^m)\le C\|f\|_{L^2(\mu)}^2.
\]
The packing has logarithmic cardinality at least $cK_n^d$, and every
member satisfies $\|f\|_{L^2(\mu)}^2\le CK_n^{-2s}$. Choosing the fixed
multiplicative constant in $K_n$ sufficiently large makes the product-law
divergences a fixed small fraction of the logarithmic packing cardinality.
Fano's inequality and the lower metric equivalence in
Proposition~\ref{prop:model-geometry}(i) give the stated rate.

For $m=\mathrm E$, use the zero-mean packing of the same lemma.
Equation~\eqref{eq:ebm-geometry} gives
$\operatorname{KL}(\mathbb P_f^{\mathrm E}\Vert
\mathbb P_0^{\mathrm E})\le CK_n^{-2s}$ for every packing member.
The same Fano argument applies.
Assigning an arbitrary distribution estimator to its closest packing member
converts an incorrect assignment into Hellinger error at least half the
packing separation; estimates not dominated by the model's reference measure
are handled with a common dominating measure. In every model the resulting squared Hellinger
risk is at least
$cK_n^{-2s}\asymp n^{-2s/(2s+d)}=n^{-a/(a+1)}$, proving
\eqref{eq:model-minimax}.
\end{proof}

\begin{proof}[Proof of Proposition~\ref{prop:budget-lower}]
Equation~\eqref{eq:law-union-cover} gives \eqref{eq:unioncover} with
$\kappa=2$ and $b=1$ for the induced law class, while
Proposition~\ref{prop:model-geometry}(iv) verifies the target-packing
assumption. Theorem~\ref{thm:union} yields \eqref{eq:concretelower},
uniformly in $n$ and over all sufficiently large $B$.
\end{proof}

\begin{proof}[Proof of Theorem~\ref{thm:fixed-budget}]
Fix the family $j$ and its admissible $(q,r)$. The parameter count is
constant, and \eqref{eq:commonG} gives $G(q,r,T_n)=O(T_n)$. Hence the
fitted-sieve term $e_{n,j}^2$ is $O(T_n/n)$. With the choice
\eqref{eq:fixed-T},
\[
 \left\{\frac{\log(erT_n)}{rT_n}\right\}^{a}
 +\frac{T_n}{n}
 \le Cn^{-a/(a+1)}\{\log(en)\}^{a/(a+1)}.
\]
The architecture-specific initialization remainder is negligible and
$\varepsilon_n\le n^{-1}$. The general bound \eqref{eq:conditional-risk}
in Theorem~\ref{thm:model-risk} proves \eqref{eq:fixed-rate}.

At the fixed cap $B$, Proposition~\ref{prop:budget-lower} bounds the
untied risk below by $c_{\mathrm U}\{B^2\log(eB)\}^{-a}$. Dividing
\eqref{eq:fixed-rate} by this bound proves the limit in
\eqref{eq:fixed-separation-ratios}.
\end{proof}

\begin{proof}[Proof of Theorem~\ref{thm:growing-budget}]
By Table~\ref{tab:parameter-counts}, $B_L^j(q,r)$ is affine and strictly
increasing in $r$ with slope $2q+1$. Thus the maximizing choice of $r_n$
satisfies the parameter budget. Since $q_n\asymp\log n$ and
$B_n\asymp n^{\gamma_B}$, the counts
give $r_n\asymp n^{\gamma_B}/\log n$ for all sufficiently large $n$. The
condition $\gamma_B<1/(a+1)$ implies that $W_n/r_n$ diverges at a positive
polynomial rate. Thus rounding $T_n$ up changes its order by at most a
constant, $r_nT_n\asymp W_n$, and both initialization thresholds hold.

Because $\ell_n\asymp\log n$, the approximation term in
\eqref{eq:conditional-risk} is bounded by
\[
 C\left(\frac{\log n}{W_n}\right)^a.
\]
Moreover, $B_L^j(q_n,r_n)\asymp B_n$ and
$G(q_n,r_n,T_n)\asymp T_n\ell_n$, so the fitted-sieve term is
\[
 e_{n,j}^2\le C\frac{B_nT_n\ell_n}{n}
 \asymp C\frac{W_nq_n\ell_n}{n}.
\]
Since $q_n\ell_n\asymp(\log n)^2$, the definition of $W_n$
gives the order in \eqref{eq:rate} for both displays. This order tends to
zero, so $e_{n,j}<1$ eventually and the entropy lemma applies. The tolerance
is smaller, completing the upper bound. Proposition~\ref{prop:budget-lower}
at $B=B_n$ gives $R_{U,j,n}^m(B_n)\ge c_{\mathrm U}\{B_n^2\log(eB_n)\}^{-a}$.
Dividing the upper
bound by the untied lower bound produces the power
$-a/(a+1)+2a\gamma_B<0$, which dominates the logarithmic factors and proves
\eqref{eq:growing-separation-ratios}.
\end{proof}

\section{Auxiliary likelihood and complexity bounds}\label{sec:losses}
This appendix collects the likelihood calculations, H\"older target packing,
and parameter-sensitivity estimates used in Appendix~\ref{app:general}.
In the likelihood calculations, $f$ and $g$ take values in $[-H,H]$.

\subsection{Gaussian regression}\label{app:gaussian}
Suppose $Y=f(X)+\varepsilon$, where $X\sim\mu$ and
$\varepsilon\sim N(0,\sigma^2)$ is independent of $X$, with known
$\sigma^2>0$. Direct calculation gives
\[
h^2(\mathbb P_f,\mathbb P_g)
=2\int\left[1-\exp\left\{-\frac{(f-g)^2}{8\sigma^2}\right\}\right]d\mu.
\]
Because $|f-g|\le2H$, positive constants depending only on $H$ and
$\sigma$ bound this display above and below by $\|f-g\|_{L^2(\mu)}^2$. Maximum
likelihood over an index-function class is empirical least squares.

\subsection{Laplace regression}\label{app:laplace}
Suppose $X\sim\mu$ and $Y=f(X)+\varepsilon$, where $\varepsilon$ is
independent of $X$ and has density
$(2\tau)^{-1}\exp(-|u|/\tau)$ for fixed $\tau>0$. For
$v=|f(x)-g(x)|/\tau$, the conditional Hellinger affinity is
$(1+v/2)e^{-v/2}$, and hence
\[
h^2(\mathbb P_f,\mathbb P_g)
=2\int\{1-(1+v/2)e^{-v/2}\}\,d\mu.
\]
On the bounded range $0\le v\le2H/\tau$, this quantity is uniformly
equivalent to $\|f-g\|_{L^2(\mu)}^2$. Maximum likelihood is empirical absolute
deviation.

\subsection{Bernoulli binary-response model}\label{app:bernoulli}
Suppose $Y\mid X=x$ is Bernoulli with success probability
$\operatorname{sigmoid}\{f(x)\}$. On the bounded logit interval, the
derivatives of both square-root probabilities are bounded. The derivative
of the increasing coordinate is also bounded away from zero, so the mean
value theorem gives constants $c_{\mathrm{Be}},C_{\mathrm{Be}}>0$, depending
only on $H$, such that
\[
c_{\mathrm{Be}}\|f-g\|_{L^2(\mu)}^2\le h^2(\mathbb P_f,\mathbb P_g)
\le C_{\mathrm{Be}}\|f-g\|_{L^2(\mu)}^2.
\]
Maximum likelihood is empirical logistic-loss minimization.

For all three families, the square-root conditional densities are absolutely
continuous in the bounded index parameter and admit an $L^2$ envelope $L$
for their almost-everywhere derivative. Here the norm is under the joint
dominating measure $\mu\otimes\nu_Y$, where $\nu_Y$ is Lebesgue measure
for the regression families and counting measure for Bernoulli. Thus
\[
|\sqrt{p_t}-\sqrt{p_s}|\le L|t-s|,
\qquad \|L\|_{L^2(\mu\otimes\nu_Y)}<\infty.
\]
A uniform $\epsilon$-net of index functions therefore yields density brackets
with endpoints $(\sqrt{p_{g_j}}-\epsilon L)_+^2$ and
$(\sqrt{p_{g_j}}+\epsilon L)^2$, of Hellinger width at most
$2\epsilon\|L\|_{L^2(\mu\otimes\nu_Y)}$. Predictor covers therefore lift to Hellinger brackets
of the target density family and to Hellinger covers of the architecture
families with the same logarithmic order. The metric equivalences transfer
the loop approximation and the disjoint-bump packing from $L^2(\mu)$ to
Hellinger distance. The positive likelihood approximation condition is verified next.

\begin{lemma}[Positive likelihood approximation on the bounded index range]
\label{lem:rho-bridge}
For each of the Gaussian, Laplace, and Bernoulli conditional models,
there is a constant $C$ such that
\[
\rho_{1/2}(p_f,p_g)\le C\|f-g\|_{L^2(\mu)}^2
\le C h^2(\mathbb P_f,\mathbb P_g),\qquad
f,g:\mathcal X\to[-H,H].
\]
\end{lemma}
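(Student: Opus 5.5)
Since all three models have the product form $p_f(x,y)=\mu(x)\,p^{\rm c}_{f(x)}(y)$, the quantity $\rho_{1/2}$ splits pointwise in $x$. Write $p^{\rm c}_t$ for the conditional density at index value $t\in[-H,H]$. Then
\[
\rho_{1/2}(p_f,p_g)=\int r\bigl(f(x),g(x)\bigr)\,d\mu(x),
\qquad
r(t,u)=2\int p^{\rm c}_t\Bigl\{\Bigl(\tfrac{p^{\rm c}_t}{p^{\rm c}_u}\Bigr)^{1/2}-1\Bigr\}d\nu_Y .
\]
The first inequality therefore reduces to a one-dimensional claim: there is a constant $C$ with $r(t,u)\le C(t-u)^2$ for all $t,u\in[-H,H]$. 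The second inequality is exactly the lower metric equivalence $c_m\|f-g\|_{L^2(\mu)}^2\le h^2$, already shown in Sections~\ref{app:gaussian}--\ref{app:bernoulli}. After dividing by $c_m$, it remains only to prove the one-dimensional bound.

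For this I compute $1+r(t,u)/2=\int (p^{\rm c}_t)^{3/2}(p^{\rm c}_u)^{-1/2}\,d\nu_Y$ explicitly in each family.
\begin{itemize}
\item \emph{Gaussian.} Completing the square gives $\int \phi_t^{3/2}\phi_u^{-1/2}=\exp\{3(t-u)^2/(8\sigma^2)\}$. Hence $r=2[\exp\{3v^2/(8\sigma^2)\}-1]$ with $v=t-u$. On $|v|\le 2H$ the convexity bound $e^{x}-1\le x e^{x}$ gives $r\le Cv^2$.
\item \emph{Laplace.} The integrand is $(2\tau)^{-1}\exp\{-(3|y-t|-|y-u|)/(2\tau)\}$, which is integrable because $3|y-t|-|y-u|\ge 2|y|-C$. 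A piecewise computation yields a smooth even function $\psi(v)$ of $v=|t-u|/\tau$ with $\psi(0)=1$. Because $\rho_{1/2}\ge 0$, $\psi$ has a minimum at $v=0$, and the $O(v)$ terms cancel. Rather than tracking these terms by hand, I would argue via a Taylor expansion: for fixed $t$, $\psi$ is $C^2$ in $u$ on the compact range, its derivative at $u=t$ vanishes (score mean zero), and the second derivative is uniformly bounded.
\item \emph{Bernoulli.} The expression is the finite sum $\sum_y \pi_t(y)^{3/2}\pi_u(y)^{-1/2}$. Probabilities stay in $[\sigma(-H),\sigma(H)]$, bounded away from $0$ and $1$, so the map is smooth on the compact square $[-H,H]^2$. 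It equals $1$ on the diagonal, and its first $u$-derivative vanishes there.
\end{itemize}

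In all three cases the same Taylor mechanism finishes the argument. Let $\Psi(t,u)=\int (p^{\rm c}_t)^{3/2}(p^{\rm c}_u)^{-1/2}$. Then $\Psi(t,t)=1$ and
\[
\partial_u\Psi(t,u)\big|_{u=t}=-\tfrac12\int \partial_u p^{\rm c}_u\big|_{u=t}\,d\nu_Y=0 .
\]
Consequently $r(t,u)\le \sup|\partial_u^2\Psi|\,(t-u)^2$.

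The main obstacle is showing that this second derivative is uniformly bounded on $[-H,H]^2$, which requires differentiating under the integral.
\begin{itemize}
\item For Gaussian, the closed form settles it directly.
\item For Bernoulli, the sum is finite, so there is no issue.
\item For Laplace, $p^{\rm c}_u$ is not differentiable at $y=u$, so I would avoid differentiation and instead use the explicit piecewise closed form of $\psi(v)$. I would check that $\psi$ is $C^2$ (or at least that $\psi(v)-1=O(v^2)$) on $[0,2H/\tau]$. Any continuous function vanishing to second order at $0$ is bounded by $Cv^2$ on a compact interval.
\end{itemize}
With the one-dimensional bound established, integrating $r(f(x),g(x))\le C\{f(x)-g(x)\}^2$ over $\mu$ and applying Proposition~\ref{prop:model-geometry}(i) completes the proof.
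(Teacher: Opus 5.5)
Your route is the paper's. You reduce pointwise in $x$ to the affinity $\Psi(t,u)=\int (p^{\mathrm c}_t)^{3/2}(p^{\mathrm c}_u)^{-1/2}\,d\nu_Y$ and show $\Psi(t,u)\le 1+C(t-u)^2$ from three facts: $\Psi(t,t)=1$, a vanishing first derivative, and a bounded second derivative. You then integrate over $\mu$ and use the lower metric equivalence for the second inequality. Your Gaussian closed form and your Bernoulli finite-sum argument are fine; the paper writes the latter as $\exp\{b(\tfrac{3t-u}{2})-\tfrac32 b(t)+\tfrac12 b(u)\}$ with $b(z)=\log(1+e^z)$, which is the same function. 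Your score identity $\partial_u\Psi(t,u)|_{u=t}=-\tfrac12\int\partial_u p^{\mathrm c}_u|_{u=t}\,d\nu_Y=0$ explains uniformly why the linear term disappears, whereas the paper only verifies this case by case from explicit formulas.

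The one step that does not work as written is the Laplace sentence ``because $\rho_{1/2}\ge0$, $\psi$ has a minimum at $v=0$, and the $O(v)$ terms cancel.'' Since $v=|t-u|/\tau$ is one-sided, a minimum at $v=0$ only gives $\psi'(0^+)\ge0$. For instance, $e^{|t-u|}$ is minimized at $t=u$ and still has a linear term. So the cancellation has to be checked, and you leave that check undone.

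Split the integral into three pieces: $y$ below both locations, between them, and above both. This gives $\psi(v)=\tfrac14\bigl(3e^{v/2}+e^{-3v/2}\bigr)$, exactly the paper's formula. It has $\psi(0)=1$, $\psi'(0)=\tfrac38-\tfrac38=0$, and $\psi''\le\tfrac34 e^{H/\tau}$ on $[0,2H/\tau]$, which closes the case.

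Alternatively, you did not need to abandon the score argument for Laplace. The map $u\mapsto (p^{\mathrm c}_u(y))^{-1/2}\propto e^{|y-u|/(2\tau)}$ is Lipschitz, with a constant integrable against $(p^{\mathrm c}_t)^{3/2}$, and differentiable for $y\ne u$. Dominated convergence therefore gives $\partial_u\Psi(t,t)=-\tfrac{1}{2\tau}\int\operatorname{sign}(y-t)\,p^{\mathrm c}_t(y)\,dy=0$. The kink at $y=u$ only enters at second order, as a bounded boundary term.
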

\begin{proof}
Condition on $X=x$ and put $\Delta=f(x)-g(x)$. In the Gaussian model,
\[
 \int p_f^{3/2}p_g^{-1/2}\,d\nu_Y
 =\exp\{3\Delta^2/(8\sigma^2)\}.
\]
For Laplace regression, with $v=|\Delta|/\tau$, the corresponding integral is
\[
 \{3e^{v/2}+e^{-3v/2}\}/4.
\]
For Bernoulli regression, write $b(t)=\log(1+e^t)$. The integral is
\[
 \exp\!\left\{b\!\left(\frac{3f-g}{2}\right)
       -\frac32b(f)+\frac12b(g)\right\}.
\]
Each display equals one at $\Delta=0$, has first derivative zero there,
and has uniformly bounded second derivative on the bounded index range.
Thus it is at most $1+C\Delta^2$. Integrating in $x$ and using
$\rho_{1/2}(p_f,p_g)=2\{\int p_f^{3/2}p_g^{-1/2}\,d(\mu\otimes\nu_Y)-1\}$
proves the first inequality. The second is the metric equivalence proved in
the three preceding subsections.
\end{proof}

\subsection{Energy-based generative models}\label{app:ebm}
Throughout this subsection, $\mu$ is uniform probability measure on
$[0,1]^d$ and $f,g$ take values in $[-H,H]$. Densities are defined by
\eqref{eq:ebm-model}. Then
$e^{-2H}\le p_f\le e^{2H}$.

\begin{lemma}[Geometry and likelihood approximation of bounded energy-based generative models]
\label{lem:ebm-geometry}
There exist constants $C,c>0$, depending only on $H$, such that
the Kullback--Leibler divergence $\operatorname{KL}$ satisfies
\begin{equation}\label{eq:ebm-geometry}
 c d_{\mathrm c}(f,g)^2
 \le h^2(\mathbb P_f,\mathbb P_g)
 \le \operatorname{KL}(\mathbb P_f\Vert\mathbb P_g)
 \le C d_{\mathrm c}(f,g)^2,
\end{equation}
and
\begin{equation}\label{eq:ebm-rho}
 \rho_{1/2}(p_f,p_g)\le C h^2(\mathbb P_f,\mathbb P_g).
\end{equation}
A uniform $u$-cover of a bounded index class induces density brackets
of Hellinger width at most $2\sinh(u)$ with the same cardinality.
\end{lemma}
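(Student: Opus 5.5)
The plan is to reduce everything to the centered log-potentials. Since adding a constant to $f$ or $g$ leaves $p_f,p_g$ unchanged, I will replace $g$ by $g+c^*$, where $c^*$ is the constant for which
\[
\int(f-g-c^*)\,d\mu=0 .
\]
Then $\|f-g-c^*\|_{L^2(\mu)}=d_{\mathrm c}(f,g)$, and the shifted function still satisfies $|f-g-c^*|\le 4H$. I will also write $\phi=\log p_f-\log p_g=(f-g)-\log(Z_f/Z_g)$. This function is bounded in absolute value by $4H$, because $|\log Z_f|\le H$.

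\textbf{Lower bound.} I use $\sqrt{p_f}-\sqrt{p_g}=\sqrt{p_g}\,(e^{\phi/2}-1)$ together with $p_g\ge e^{-2H}$. The mean value theorem on $|\phi|\le 4H$ gives $|e^{\phi/2}-1|\ge c|\phi|$. Hence
\[
h^2\ge c\int\phi^2\,d\mu\ge c\inf_{c'}\|f-g-c'\|_{L^2(\mu)}^2=c\,d_{\mathrm c}(f,g)^2 ,
\]
since $\phi$ differs from $f-g$ by a constant.

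\textbf{Middle inequality.} The bound $h^2\le\operatorname{KL}$ is the standard one under the paper's convention $h^2=\int(\sqrt p-\sqrt q)^2$. It follows from $\operatorname{KL}=-2\int p\log\sqrt{q/p}\ge 2\int p\,(1-\sqrt{q/p})=h^2$, using $-\log x\ge 1-x$.

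\textbf{Upper bound on KL.} I will use $\operatorname{KL}\le\chi^2(\mathbb P_f\Vert\mathbb P_g)=\int p_g(e^{\phi}-1)^2\,d\mu$. Boundedness gives $\chi^2\le C\int\phi^2\,d\mu$, so it suffices to bound $\int\phi^2\,d\mu$ by $d_{\mathrm c}^2$. This is the main obstacle, because the normalizing-constant shift $\log(Z_f/Z_g)$ need not be the optimal centering constant. I handle it via $\int\phi^2\,d\mu=\operatorname{Var}_\mu(f-g)+(\text{mean of }\phi)^2$. Since $\phi$ differs from $f-g$ by a constant, $\operatorname{Var}_\mu(f-g)=\operatorname{Var}_\mu(\phi)$, and $\operatorname{Var}_\mu(f-g)=d_{\mathrm c}(f,g)^2$ exactly. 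For the mean, $\int e^{\phi}p_g\,d\mu=1$ forces $\int\phi\,p_g\,d\mu\le 0$ by Jensen. Symmetrically, $\int(-\phi)\,p_f\,d\mu\le0$. Writing $\psi=\phi-\bar\phi$ with $\bar\phi=\int\phi\,d\mu$, this gives
\[
\bar\phi\le-\int\psi\,p_g\,d\mu,\qquad \bar\phi\ge-\int\psi\,p_f\,d\mu .
\]
Both right sides are bounded in absolute value by $e^{2H}\|\psi\|_{L^1(\mu)}\le C\,d_{\mathrm c}$. Therefore $\bar\phi^2\le C d_{\mathrm c}^2$, and $\int\phi^2\,d\mu\le C d_{\mathrm c}^2$. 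This also yields $\operatorname{KL}\le C d_{\mathrm c}^2$.

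\textbf{The bound on $\rho_{1/2}$, i.e.\ \eqref{eq:ebm-rho}.} We have
\[
\rho_{1/2}=2\Bigl\{\int p_f\,(p_f/p_g)^{1/2}\,d\mu-1\Bigr\}=2\int p_g\,(e^{3\phi/2}-1)\,d\mu .
\]
By Taylor expansion on the bounded range, $e^{3\phi/2}-1\le\tfrac32(e^{\phi}-1)+C\phi^2$ with a constant $C$ depending only on $H$. Integrating against $p_g$ kills the linear term, because $\int p_g(e^\phi-1)\,d\mu=0$. Hence $\rho_{1/2}\le C\int\phi^2\,d\mu\le C d_{\mathrm c}^2\le C h^2$, where the last step uses the lower bound already proved.

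\textbf{Bracketing statement.} Let $\|g-g_j\|_\infty\le u$. Then $Z_g/Z_{g_j}\in[e^{-u},e^{u}]$, so $p_g/p_{g_j}\in[e^{-2u},e^{2u}]$. This gives the bracket $[e^{-2u}p_{g_j},\,e^{2u}p_{g_j}]$, whose Hellinger width is
\[
(e^{u}-e^{-u})\|\sqrt{p_{g_j}}\|_{L^2(\mu)}=2\sinh(u),
\]
since $p_{g_j}$ is a density. The number of brackets equals the cardinality of the cover.
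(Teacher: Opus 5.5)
Your proof is correct. The lower bound, the inequality $h^2\le\operatorname{KL}$, and the bracketing step agree with the paper; the paper phrases your mean-value step as the $e^H$-Lipschitz property of $\log$ on $[e^{-H},e^H]$, the range of $\sqrt{p_f}$. The preliminary shift $g\mapsto g+c^*$ is never used and can be dropped.

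The genuine difference is the upper bound on $\operatorname{KL}$. The paper writes $\operatorname{KL}(\mathbb P_f\Vert\mathbb P_g)$ as the Taylor remainder of the log-normalizer $t\mapsto\log Z_{f+t(g-f)}$. This gives $\operatorname{KL}=\int_0^1(1-t)\operatorname{Var}_{\mathbb P_{f_t}}(g-f)\,dt\le\tfrac12e^{2H}d_{\mathrm c}(f,g)^2$. Because a variance is unchanged by constant shifts, the normalizing-constant discrepancy you call the main obstacle never arises, and the constant is explicit. The cost is differentiating $\log Z$ twice under the integral. Your route, $\operatorname{KL}\le\chi^2\lesssim\int\phi^2\,d\mu$, avoids that calculus but must control $\bar\phi$ separately. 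Your two one-sided Jensen bounds do this correctly. They also give the stronger by-product $\|\log p_f-\log p_g\|_{L^2(\mu)}^2\le(1+e^{4H})d_{\mathrm c}(f,g)^2$, though with much worse constants.

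For \eqref{eq:ebm-rho}, both arguments subtract $\tfrac32\int p_g(t-1)\,d\mu=0$ with $t=p_f/p_g=e^{\phi}$. Your remainder $e^{3\phi/2}-1-\tfrac32(e^{\phi}-1)$ is exactly the paper's $t^{3/2}-1-\tfrac32(t-1)$. The paper factors it exactly as $\tfrac12(\sqrt t-1)^2(2\sqrt t+1)$, which gives $\rho_{1/2}\le(2e^{2H}+1)h^2$ directly from the bounded density ratio. Your bound by $C\phi^2$ instead has to pass back through both halves of the metric equivalence.
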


\begin{proof}
Put $\delta=g-f$ and $f_t=f+t\delta$. Differentiation of the log
normalizer, justified by boundedness, gives
\[
 \frac{d^2}{dt^2}\log Z_{f_t}
 =\operatorname{Var}_{\mathbb P_{f_t}}(\delta).
\]
Consequently,
\[
 \operatorname{KL}(\mathbb P_f\Vert\mathbb P_g)
 =\int_0^1(1-t)\operatorname{Var}_{\mathbb P_{f_t}}(\delta)\,dt
 \le \tfrac12e^{2H}\inf_{c\in\R}\|\delta-c\|_2^2.
\]
For the lower bound, the logarithm is $e^H$-Lipschitz on
$[e^{-H},e^H]$, the range of the square-root densities. Thus
\[
 \begin{split}
 d_{\mathrm c}(f,g)
 &\le\|f-g-\log Z_f+\log Z_g\|_2\\
 &=\|\log p_f-\log p_g\|_2
 \le 2e^H\|\sqrt{p_f}-\sqrt{p_g}\|_2.
 \end{split}
\]
Together with $h^2\le\operatorname{KL}$, this proves
\eqref{eq:ebm-geometry}.

To prove \eqref{eq:ebm-rho}, put $t=p_f/p_g$. Then
$e^{-4H}\le t\le e^{4H}$ and $\int p_g(t-1)\,d\mu=0$, so
\[
 \rho_{1/2}(p_f,p_g)
 =2\int p_g\{t^{3/2}-1-\tfrac32(t-1)\}\,d\mu.
\]
The identity
$2\{t^{3/2}-1-\tfrac32(t-1)\}=(\sqrt t-1)^2(2\sqrt t+1)$
therefore gives
\[
 \rho_{1/2}(p_f,p_g)
 \le (2e^{2H}+1)h^2(\mathbb P_f,\mathbb P_g),
\]
which proves \eqref{eq:ebm-rho}.

If $\|f-f_j\|_\infty\le u$, then
$|\log Z_f-\log Z_{f_j}|\le u$ and
\[
 e^{-2u}p_{f_j}\le p_f\le e^{2u}p_{f_j}.
\]
The square-root width of this bracket is
$(e^u-e^{-u})\|\sqrt{p_{f_j}}\|_2=2\sinh(u)$.
\end{proof}

\subsection{H\"older target packing}\label{app:holder-complexity}
\begin{lemma}[Local packing of the H\"older class]\label{lem:holder-packing}
Fix $d\ge1$, $s>0$ and $H>0$, and let $\mu$ be uniform probability
measure on $[0,1]^d$. There exist constants $c,C,\delta_0>0$, depending
only on $d,s,H$, such that, for every $0<\delta\le\delta_0$, the class
$\Holder$ contains $f_{1,\delta},\ldots,f_{M_\delta,\delta}$ satisfying
\[
 \min_{j\ne k}\|f_{j,\delta}-f_{k,\delta}\|_{L^2(\mu)}\ge8\delta,
 \qquad
 \max_j\|f_{j,\delta}\|_{L^2(\mu)}\le C\delta,
 \qquad
 \log M_\delta\ge c\delta^{-d/s}.
\]
The functions can additionally be chosen so that
$\int f_{j,\delta}\,d\mu=0$ for every $j$.
\end{lemma}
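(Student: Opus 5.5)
We use the standard disjoint-bump (Varshamov--Gilbert) construction. Fix a $C^\infty$ bump $\psi:\mathbb R^d\to\mathbb R$ supported in $(0,1)^d$, with $\psi\not\equiv0$ and $\int\psi=0$. The mean-zero requirement is harmless: take $\psi=\partial_1\phi$ for a smooth compactly supported $\phi$. For an integer $K\ge2$, partition $[0,1]^d$ into the $K^d$ subcubes with corners $x_\ell$, $\ell=1,\ldots,K^d$. For $\sigma\in\{0,1\}^{K^d}$ set
\[
f_\sigma(x)=c_\psi K^{-s}\sum_{\ell}\sigma_\ell\,\psi\bigl(K(x-x_\ell)\bigr),
\]
with $c_\psi>0$ small and depending only on $d,s,H,\psi$.

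The first step is membership in $\Holder$. Since the bumps have disjoint supports, $\partial^\nu f_\sigma$ is of order $c_\psi K^{|\nu|-s}\|\partial^\nu\psi\|_\infty$, which is at most $H$ for $|\nu|\le k$ because $|\nu|\le k<s$. For the Hölder seminorm of order $k$, treat two cases. If $x,y$ lie in the same cell, use the mean value theorem with $\partial^{\nu}\psi$ Lipschitz; this gives $c_\psi K^{k-s}K\|x-y\|_\infty\le c_\psi C K^{k+1-s}\|x-y\|_\infty$, which is bounded by $Cc_\psi\|x-y\|_\infty^\alpha$ whenever $\|x-y\|_\infty\le1/K$. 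If $\|x-y\|_\infty>1/K$, bound the difference by $2c_\psi K^{k-s}\|\partial^\nu\psi\|_\infty\le Cc_\psi\|x-y\|_\infty^{\alpha}$, using $K^{-\alpha}<\|x-y\|_\infty^\alpha$. Points in different cells with $\|x-y\|_\infty\le1/K$ are handled by comparing each with the boundary, where all derivatives vanish. Choosing $c_\psi$ small enough gives all bounds $\le H$. Each $f_\sigma$ integrates to zero, since every bump does.

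Next come the distances. Disjointness gives $\|f_\sigma-f_{\sigma'}\|_{L^2(\mu)}^2=c_\psi^2K^{-2s-d}\|\psi\|_2^2\,\rho_H(\sigma,\sigma')$, where $\rho_H$ is Hamming distance, and $\|f_\sigma\|_{L^2(\mu)}^2\le c_\psi^2\|\psi\|_2^2K^{-2s}$. By Varshamov--Gilbert there exist $\sigma^{(1)},\ldots,\sigma^{(M)}$ with pairwise Hamming distance at least $K^d/8$ and $\log M\ge K^d\log2/8$. The pairwise $L^2$ distances are then at least $c_0K^{-s}$, where $c_0=c_\psi\|\psi\|_2/\sqrt8$, and the norms are at most $C_0K^{-s}$.

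Finally, calibrate $K$ to $\delta$. Given $\delta\le\delta_0$, take $K=\lfloor(c_0/(8\delta))^{1/s}\rfloor$; $\delta_0$ small ensures $K\ge2$. Then $c_0K^{-s}\ge8\delta$, and $K\ge\frac12(c_0/(8\delta))^{1/s}$. This yields $\max_j\|f_j\|_{L^2(\mu)}\le C_0K^{-s}\le C\delta$ and $\log M\ge cK^d\ge c'\delta^{-d/s}$.

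The only delicate step is the Hölder-seminorm check across cells for noninteger versus integer $s$, with $k=s-1$ and $\alpha=1$ in the integer case. This is handled by the case split above, which works uniformly for $\alpha\in(0,1]$.
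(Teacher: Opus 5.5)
Your proposal is correct and follows essentially the same route as the paper: mean-zero disjoint bumps $\psi=\partial_1\phi$ scaled by $K^{-s}$, a Varshamov--Gilbert code over the $K^d$ cells, and the calibration $K=\lfloor (c_0/(8\delta))^{1/s}\rfloor$. Your case split for the order-$k$ H\"older seminorm is more explicit than the paper's one-line assertion. The only adjustment needed is to take $\delta_0$ small enough that $K^d\ge 8$, since the Varshamov--Gilbert bound you quote requires it and $K\ge2$ alone does not guarantee this when $d\le2$.
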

\begin{proof}
Choose a nonzero smooth function $\psi$ supported strictly inside
$(0,1)^d$ with integral zero. Such a function is obtained by
differentiating a suitable compactly supported smooth function in one
coordinate. For an integer $m\ge1$ and
$\omega\in\{0,1\}^{m^d}$, define
\[
 f_\omega(x)=c_*m^{-s}
 \sum_{j\in\{0,\ldots,m-1\}^d}\omega_j\psi(mx-j).
\]
For sufficiently small fixed $c_*>0$, depending only on $d,s,H$ and
$\psi$, these functions belong to $\Holder$. Derivatives of order
$v\le k$ are bounded by $Cm^{v-s}$, and the order-$k$ derivatives
have uniformly bounded $\alpha$-H\"older seminorm. Separation of the
bump supports gives the same bound across distinct cells. Every
$f_\omega$ has integral zero, and disjointness of the supports gives
\[
 \|f_\omega-f_{\omega'}\|_{L^2(\mu)}^2
 =c_*^2m^{-2s-d}\|\psi\|_{L^2(\mathbb R^d)}^2
   \sum_j|\omega_j-\omega'_j|.
\]
A binary code with Hamming separation at least a fixed positive
fraction of $m^d$ has logarithmic cardinality at least $cm^d$ for
all sufficiently large $m$. The corresponding functions therefore
have pairwise $L^2(\mu)$ distances at least $c_{\mathrm{sep}}m^{-s}$,
while every function has norm at most $Cm^{-s}$, for a fixed
$c_{\mathrm{sep}}>0$.

For sufficiently small $\delta$, take
$m=\lfloor(c_{\mathrm{sep}}/(8\delta))^{1/s}\rfloor$.
Then $m\asymp\delta^{-1/s}$ and the pairwise distances are at least
$8\delta$. The norm bound and logarithmic cardinality become
$C\delta$ and $c\delta^{-d/s}$, respectively. Relabeling these
functions proves the lemma, including the zero-mean assertion.
\end{proof}

\subsection{Parameter sensitivity and architecture covers}\label{app:statistical}

Let $\xi,\widetilde\xi$ be two parameter vectors and put
$\delta=\|\xi-\widetilde\xi\|_\infty$. Write $e_t$ for a uniform
bound on the distance between their states after $t$ iterations.
Suppose that nonnegative constants, uniform on the full parameter box,
satisfy
\[
e_0\le c_E\delta,\qquad e_{t+1}\le a_te_t+b_t\delta,\qquad
|f_\xi-f_{\widetilde\xi}|\le c_Re_T+d_R\delta.
\]
For the untied class, assume the same bounds hold for every admissible
sequence of block parameters. Iteration gives
\begin{equation}\label{eq:sensitivity}
\Gamma=\max\left\{1,d_R+c_R\left(c_E\prod_{t<T}a_t+
\sum_{j<T}b_j\prod_{t=j+1}^{T-1}a_t\right)\right\},\qquad
\|f_\xi-f_{\widetilde\xi}\|_\infty\le\Gamma\delta.
\end{equation}
Let $P$ denote the total parameter count and
$\mathcal G=\{f_\xi:\xi\in[-M_{\rm par},M_{\rm par}]^P\}$ the corresponding
predictor class. Partition the $P$-dimensional parameter box into cells of
$\ell^\infty$ diameter at most $\epsilon/\Gamma$ and choose one parameter
vector from each nonempty cell. Their predictors form an internal cover, so
\begin{equation}\label{eq:cover}
\log\mathcal N(\epsilon,\mathcal G,\|\cdot\|_\infty)
\le P\log(1+4M_{\rm par}\Gamma/\epsilon).
\end{equation}
The same recurrence applies to shared and untied parameters, while their
parameter counts differ. Applied to the loop classes and the budgeted
unions, this calculation gives the two uniform covering bounds in
Lemma~\ref{lem:neural-index-cover}.
Proposition~\ref{prop:model-geometry}(iii) transfers these predictor covers
into Hellinger brackets and internal Hellinger covers.

The following lemma supplies explicit sensitivity bounds for the complete
post-LN Transformer class.

\begin{lemma}[Uniform sensitivity of the complete post-LN class]
\label{lem:postln-full-box-sensitivity}
Consider the Transformer of Section~\ref{sec:neural-estimators}, with
$x\in[0,1]^d$, positive integers $q,r,N$, parameter bound
$M_{\rm par}>0$, and fixed $\varepsilon_{\rm LN}>0$.
Write $m=M_{\rm par}$, $\varepsilon=\varepsilon_{\rm LN}$, and
$R=\max\{1,m(d+1),m(\sqrt q+1)\}$.
There exist $L_{\rm blk},K_{\rm blk}>0$, depending only on
$d,q,r,M_{\rm par},\varepsilon_{\rm LN}$, with the following properties.
For matrices, $\|\cdot\|_\infty$ denotes the maximum absolute entry.
For any two states $S,\widetilde S\in\mathbb R^{N\times q}$ with
$\|S\|_\infty,\|\widetilde S\|_\infty\le R$, and any two block
parameter vectors $\theta,\widetilde\theta$ in the complete prescribed box,
\begin{equation}\label{eq:postln-one-block-sensitivity}
 \|\Phi_\theta(S)-\Phi_{\widetilde\theta}(\widetilde S)\|_\infty
 \le L_{\rm blk}\|S-\widetilde S\|_\infty
      +K_{\rm blk}\|\theta-\widetilde\theta\|_\infty.
\end{equation}
Every initial state and every block output has norm at most $R$, so
\eqref{eq:postln-one-block-sensitivity} applies at every iteration.
If $\delta=\|\xi-\widetilde\xi\|_\infty$ is the distance between
two complete predictor parameter vectors and
$e_t=\sup_{x\in[0,1]^d}\|S_t(x)-\widetilde S_t(x)\|_\infty$, where
$S_t$ and $\widetilde S_t$ are the states after $t$ updates under
$\xi$ and $\widetilde\xi$, respectively, then
\begin{equation}\label{eq:postln-full-sensitivity-recurrence}
 e_0\le(d+1)\delta,\qquad
 e_{t+1}\le L_{\rm blk}e_t+K_{\rm blk}\delta,
 \qquad
 \|f_\xi-f_{\widetilde\xi}\|_\infty
 \le Nqm\,e_T+(NqR+1)\delta.
\end{equation}
These inequalities hold both for shared block parameters and for every
admissible sequence of untied block parameters. Consequently, for every
fixed numerical $C_6\ge2$, the sensitivity constant in
\eqref{eq:sensitivity} satisfies
\begin{equation}\label{eq:postln-log-sensitivity}
 \log\Gamma
 \le C\{\log(T+1)+(T+1)\log(C_6(q+1)(r+1))\},\qquad T\ge1,
\end{equation}
where $C$ depends only on $d,N,M_{\rm par},\varepsilon_{\rm LN}$.
The bounds hold with all reference-token coordinates treated as free
fitted parameters.
\end{lemma}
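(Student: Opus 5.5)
We would prove the lemma in three stages: a one-block Lipschitz bound, a uniform state bound, and then the recurrence for $e_t$ followed by \eqref{eq:sensitivity}.

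\emph{Uniform state bound.} We start with the boundedness claim, since everything else depends on it. The initial data rows are $A_jx_{I_j}+a_j$ with entries at most $m(|I_j|+1)\le m(d+1)\le R$. The controller and reference rows are free constants in the box, so they are bounded by $m\le R$. After any block, the output is $\operatorname{LN}_2$ of something. Each row of a layer normalization output has the form $\gamma\odot(y-\bar y)/\sqrt{\sigma^2(y)+\varepsilon}+\beta$. The normalized vector has Euclidean norm at most $\sqrt q$, so each coordinate is at most $\sqrt q$ in absolute value. Hence every entry is bounded by $m\sqrt q+m\le R$. This bound holds regardless of the input, which is why post-LN is convenient: the state stays bounded for all $T$ and for every untied sequence.

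\emph{One-block bound \eqref{eq:postln-one-block-sensitivity}.} We decompose the block as a chain of maps on bounded sets and show each piece is jointly Lipschitz in (state, parameters) on those sets.
\begin{enumerate}
\item The affine projections are Lipschitz with constants polynomial in $q,m,R$.
\item For the attention scores, the logits are bounded by $C(q,m,R)$. Softmax is 1-Lipschitz from $\ell^\infty$ to $\ell^1$, so the attention output is Lipschitz with constant depending on $N,q,m,R$.
\item The residual sum is then bounded.
\item $\operatorname{LN}_1$ is Lipschitz on all of $\mathbb R^q$ with a constant depending on $\varepsilon$, because the denominator is at least $\sqrt\varepsilon$. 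Joint Lipschitzness in the gain and shift uses the bound $\sqrt q$ on the normalized vector.
\item The FFN maps bounded inputs to bounded outputs; ReLU is 1-Lipschitz, giving a constant of order $q r m^2$ times bounds.
\item $\operatorname{LN}_2$ is handled as in step 4.
\end{enumerate}
Composing these gives $L_{\rm blk}$ and $K_{\rm blk}$, polynomial in $q,r$ for fixed $d,N,m,\varepsilon$.

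\emph{Recurrence and conclusion.} The recurrence \eqref{eq:postln-full-sensitivity-recurrence} follows directly:
\begin{itemize}
\item $e_0\le(d+1)\delta$, since each data-row entry is affine in at most $d+1$ parameters with inputs in $[0,1]$.
\item The $e_{t+1}$ step follows from the one-block bound applied at every iteration, which is valid by the uniform state bound. For untied sequences we simply use $\theta_t,\widetilde\theta_t$ at step $t$.
\item The output is affine in $\operatorname{vec}(S)$ followed by the 1-Lipschitz map $\Pi_H$. This gives $Nqm\,e_T$ plus $(NqR+1)\delta$ from the weight and bias changes.
\end{itemize}
Inserting into \eqref{eq:sensitivity} gives $\Gamma\le C(T+1)\max\{1,L_{\rm blk}\}^{T}$ times polynomial factors. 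Taking logs yields \eqref{eq:postln-log-sensitivity}, once we check that $\log L_{\rm blk}$ and $\log K_{\rm blk}$ are $O(\log(C_6(q+1)(r+1)))$.

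\emph{Main obstacle.} The hardest step is the explicit polynomial tracking through softmax attention and LN with $1/\sqrt q$ scaling. Our first attempt is crude $\ell^\infty$ bounds with factors of $q$ and $N$ for every matrix–vector product. This should give $L_{\rm blk}\le C(d,N,m,\varepsilon)(q+1)^{C}(r+1)^{C}$, which suffices.
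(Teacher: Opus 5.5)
Your proposal is correct and follows essentially the same route as the paper. The steps match: the input-independent post-LN radius bound $m(\sqrt q+1)\le R$, then a one-block Lipschitz estimate composed through the affine projections, softmax (via an $\ell^\infty\to\ell^1$ Jacobian bound), the first layer normalization (globally Lipschitz since $s(z)\ge\sqrt\varepsilon$), the ReLU FFN and the second normalization, and finally iteration of the recurrence into \eqref{eq:sensitivity} with a polynomial majorant in $(q+1)(r+1)$.

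One small tightening: because each row of attention weights is a probability vector, your softmax bound already gives an attention constant with no factor of $N$. The lemma needs this, since it claims $L_{\rm blk},K_{\rm blk}$ depend only on $d,q,r,M_{\rm par},\varepsilon_{\rm LN}$; dependence on $N$ should enter only through the readout term $Nqm\,e_T+(NqR+1)\delta$.
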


\begin{proof}
Throughout the proof, $\delta$ bounds all coefficient differences being
compared. We first bound layer normalization, then combine attention and
FFN perturbations into a one-block bound. Including the input and output
functions gives the recurrence; iterating it yields the bound on $\log\Gamma$.

\noindent\textbf{Step 1: Layer normalization.}
Put $\ell=m\sqrt{q/\varepsilon}$ and $\nu=\sqrt q+1$. Let
$P_q=I_q-q^{-1}\mathbf1\mathbf1^\top$, and define
\[
 h(z)=\frac{P_qz}{s(z)},\qquad
 s(z)=\{q^{-1}\|P_qz\|_2^2+\varepsilon\}^{1/2}.
\]
The row-wise layer normalization is
$\operatorname{LN}_{\gamma,\beta}(z)=\gamma\odot h(z)+\beta$.
Since $\|h(z)\|_2\le\sqrt q$, its output obeys
\begin{equation}\label{eq:postln-global-radius}
 \|\operatorname{LN}_{\gamma,\beta}(z)\|_\infty
 \le m(\sqrt q+1)
\end{equation}
for every $z\in\mathbb R^q$ and every admissible gain and bias.
The positive stabilizer gives the derivative
\[
 Dh(z)=s(z)^{-1}P_q
       -\frac{(P_qz)(P_qz)^\top}{q\,s(z)^3}.
\]
On the span of $P_qz$, its eigenvalue is
$\varepsilon/s(z)^3$; on the orthogonal directions within the centered
subspace it is $1/s(z)$, and it vanishes on the constant direction.
Since $s(z)\ge\sqrt\varepsilon$, every eigenvalue is at most
$\varepsilon^{-1/2}$. When $P_qz=0$, the derivative is
$\varepsilon^{-1/2}P_q$. Thus
$\|Dh(z)\|_{2\to2}\le\varepsilon^{-1/2}$ globally, including $q=1$.
Integration along the segment between $z$ and $\widetilde z$, followed
by $\|v\|_2\le\sqrt q\|v\|_\infty$, gives
\[
 \|h(z)-h(\widetilde z)\|_\infty
 \le\sqrt{q/\varepsilon}\,\|z-\widetilde z\|_\infty.
\]
Separating the changes in the input, gain, and bias therefore gives
\begin{equation}\label{eq:postln-joint-ln-bound}
 \|\operatorname{LN}_{\gamma,\beta}(z)
       -\operatorname{LN}_{\widetilde\gamma,\widetilde\beta}
                         (\widetilde z)\|_\infty
 \le\ell\|z-\widetilde z\|_\infty+\nu\delta,
\end{equation}
whenever each gain and bias differs by at most $\delta$.
This estimate requires no bound on the arguments of layer normalization.

\noindent\textbf{Step 2: Attention.}
Put $A=m(qR+1)$ and $\zeta=1+4\sqrt q\,A^2$, and define
\[
 a_{\rm att}=(qm)^2\zeta,\qquad
 b_{\rm att}=qm\zeta(qR+1)+qA+1.
\]
Set
$u=\|S-\widetilde S\|_\infty$ and let $\delta$ bound the differences
of all block coefficients. Each of the affine query, key, and value
projections has entries bounded in magnitude by $A$, and each difference
is bounded by
\[
 D=qm\,u+(qR+1)\delta.
\]
Indeed, separate the change in the state from the change in the
projection matrix and its bias. The scaled query--key logits
$Z=QK^\top/\sqrt q$ consequently satisfy
\[
 \|Z-\widetilde Z\|_\infty\le2\sqrt q\,A D.
\]
For softmax $\sigma$ on $\mathbb R^N$, its Jacobian obeys
\[
 \|D\sigma(z)v\|_1
 =\sum_i\sigma_i(z)
       \left|v_i-\sum_j\sigma_j(z)v_j\right|
 \le2\|v\|_\infty.
\]
Hence each row of the attention-weight matrix changes by at most
$4\sqrt q\,A D$ in $\ell^1$ norm. Writing $V_{\rm av}=\sigma(Z)V$, with
softmax applied separately to each row, the probability weights imply
\[
 \|V_{\rm av}\|_\infty\le A,\qquad
 \|V_{\rm av}-\widetilde V_{\rm av}\|_\infty
 \le A(4\sqrt q\,A D)+D=\zeta D.
\]
The fourth affine projection, applied to $V_{\rm av}$, therefore gives
\begin{align*}
 \|\operatorname{Attn}_\theta(S)
        -\operatorname{Attn}_{\widetilde\theta}(\widetilde S)\|_\infty
 &\le qm\zeta D+(qA+1)\delta\\
 &=a_{\rm att}u+b_{\rm att}\delta.
\end{align*}
All four projection matrices and all four projection biases have been
included. The probability-row bound is why no factor $N$ is needed in
this attention estimate.

\noindent\textbf{Step 3: FFN and the complete block.}
Put $a_{\rm ff}=qrm^2$ and $b_{\rm ff}=2rA+1$.
Adding the first residual and using \eqref{eq:postln-joint-ln-bound}
gives, for the first normalized states $Y,\widetilde Y$,
\[
 v:=\|Y-\widetilde Y\|_\infty
 \le\ell(1+a_{\rm att})u+(\ell b_{\rm att}+\nu)\delta.
\]
By \eqref{eq:postln-global-radius}, both these states have norm at most
$R$. For the row-wise FFN, the hidden affine values and their ReLU
outputs have norm at most $A$. ReLU is $1$-Lipschitz, so the change of a
hidden output is at most $qm v+(qR+1)\delta$. The second FFN affine
map then yields
\begin{align*}
 \|\operatorname{FFN}_\theta(Y)
       -\operatorname{FFN}_{\widetilde\theta}(\widetilde Y)\|_\infty
 &\le rm\{qm v+(qR+1)\delta\}+(rA+1)\delta\\
 &=a_{\rm ff}v+b_{\rm ff}\delta.
\end{align*}
Adding the second residual and applying the second layer-normalization
bound gives
\[
 \|\Phi_\theta(S)-\Phi_{\widetilde\theta}(\widetilde S)\|_\infty
 \le\ell(1+a_{\rm ff})v+(\ell b_{\rm ff}+\nu)\delta,
\]
which proves \eqref{eq:postln-one-block-sensitivity} with
\begin{align*}
 L_{\rm blk}&=\ell^2(1+a_{\rm ff})(1+a_{\rm att}),\\
 K_{\rm blk}&=\ell(1+a_{\rm ff})(\ell b_{\rm att}+\nu)
               +\ell b_{\rm ff}+\nu.
\end{align*}

\noindent\textbf{Step 4: Input, output, and iteration.}
We include the endpoints and the first application of the block. Every coordinate of data token $j$ at input has magnitude at most
$m(|I_j|+1)\le m(d+1)$, because $x\in[0,1]^d$. Each controller and
reference coordinate has magnitude at most $m$. Thus the initial state
has norm at most $R$, even though it has not been layer-normalized.
The same calculation with coefficient differences gives
$e_0\le(d+1)\delta$, including the free controller and reference rows.
Every subsequent block output satisfies
\eqref{eq:postln-global-radius}. The one-block bound therefore applies
at every time, even when the block parameter vector changes between applications within
the prescribed box. Since every individual block's parameter difference
is at most the max-norm difference of the full untied vector, the same
recurrence holds for shared and untied parameterizations.

Before clipping, the readout is an affine function of $Nq$ coordinates.
Separating its state and coefficient changes bounds its difference by
$Nqm\,e_T+(NqR+1)\delta$. The fixed clipping map $\Pi_H$ is
$1$-Lipschitz, so this also bounds the predictor difference. This proves
\eqref{eq:postln-full-sensitivity-recurrence}.

\noindent\textbf{Step 5: The logarithmic sensitivity bound.}
To prove \eqref{eq:postln-log-sensitivity}, we bound all constants by a
single polynomial majorant. Put
\[
 c=\max\{2,d+1,N,m,\varepsilon^{-1/2}\},\qquad
 P=32c^{16}(q+1)^8(r+1).
\]
Writing $\bar q=q+1$ and $\bar r=r+1$ only in the next bounds, the definitions give
\[
 R\le c^2\bar q^{1/2},\quad A\le c^3\bar q^{3/2},\quad
 \zeta\le5c^6\bar q^{7/2},\quad \ell\le c^2\bar q^{1/2},\quad
 \nu\le2\bar q^{1/2},
\]
\[
 1+a_{\rm att}\le6c^8\bar q^{11/2},\quad
 b_{\rm att}\le7c^9\bar q^6,\quad
 1+a_{\rm ff}\le2c^2\bar q\bar r,\quad
 b_{\rm ff}\le3c^3\bar q^{3/2}\bar r.
\]
In particular,
$L_{\rm blk}\le12c^{14}\bar q^{15/2}\bar r$ and
$K_{\rm blk}\le23c^{15}\bar q^8\bar r$, so
\[
 \max\{2,L_{\rm blk},K_{\rm blk},d+1,Nqm,NqR+1\}\le P.
\]
Iterating the recurrence gives
\[
 e_T\le\delta\left\{(d+1)L_{\rm blk}^T
                 +K_{\rm blk}\sum_{j=0}^{T-1}L_{\rm blk}^{j}\right\}
 \le\delta(T+1)P^{T+1}.
\]
Thus the constant from \eqref{eq:sensitivity} obeys
\[
 \Gamma\le(T+2)P^{T+2}
          \le(T+1)P^{2(T+1)},\qquad T\ge1.
\]
Finally, for any fixed numerical $C_6\ge2$,
\[
 \log P\le
 \left\{8+\frac{\log(32c^{16})}{\log2}\right\}
 \log(C_6(q+1)(r+1)).
\]
Taking logarithms proves \eqref{eq:postln-log-sensitivity}; in particular,
the fixed numerical $C_6$ in \eqref{eq:commonG} need not be changed.
\end{proof}

\begin{lemma}[Bracketing resolution for a complete parameter class]
\label{lem:entropy-balance}
Suppose $P\ge1$, $G\ge0$, and, for some constant $C>1$,
$H(u,\mathcal P,h)\le C P\{\log(C/u)+G\}$ for $0<u<1$.
For a sufficiently large constant $K$, put
$e_n^2=K P\{\log(en)+G\}/n$. If $e_n<1$, then
\eqref{eq:ws-integral} holds for $e_n\le t\le\sqrt2$.
\end{lemma}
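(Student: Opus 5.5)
The plan is to bound the square root of the entropy pointwise and integrate over the interval $[t^2/2^8,\sqrt2 t]$, whose length is at most $\sqrt2 t$.

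\textbf{Pointwise entropy bound.} For $u$ in that range with $u/C_1<1$, the hypothesis gives
\[
H(u/C_1,\mathcal P,h)\le CP\{\log(CC_1/u)+G\}.
\]
The worst case is the smallest $u$, namely $u\ge t^2/2^8$. There $\log(CC_1/u)\le\log(2^8CC_1)+2\log(1/t)$.

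If instead $u/C_1\ge1$, a single bracket of width at most $\sqrt2$ (e.g.\ $[0,\sup]$-type, or monotonicity in $u$) lets us use the bound at $u/C_1$ just below $1$. So the same majorant holds after enlarging constants.

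\textbf{Removing the dependence on $t$.} Since $t\ge e_n$ and $e_n^2\ge K P\{\log(en)+G\}/n\ge K/n$ (using $P\ge1$ and $G\ge0$), we get $\log(1/t)\le\tfrac12\log n$ once $K\ge1$. Hence
\[
H(u/C_1,\mathcal P,h)\le C'P\{\log(en)+G\},
\]
uniformly over the integration range, with $C'$ depending only on $C$ and $C_1$.

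\textbf{Integrating.} It follows that
\[
\int_{t^2/2^8}^{\sqrt2t}\sqrt{H(u/C_1,\mathcal P,h)}\,du\le\sqrt2\,t\sqrt{C'P\{\log(en)+G\}}.
\]
Writing $P\{\log(en)+G\}=ne_n^2/K$, the right side equals $\sqrt2\,t\sqrt{C'/K}\,\sqrt n\,e_n$. Since $e_n\le t$, this is at most $\sqrt{2C'/K}\,\sqrt n\,t^2$.

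Choosing $K\ge 2C'/C_2^2$ gives \eqref{eq:ws-integral}. The assumption $e_n<1$ ensures $n^{-1}\le e_n^2\le2$ is consistent, with $K\ge1$ giving the lower bound.

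\textbf{Main obstacle.} The only delicate point is handling the range $u\ge C_1$ and the small-$u$ logarithm uniformly. Both reduce to constant adjustments as described. The key trick is using $t\ge e_n\ge n^{-1/2}$ to absorb $\log(1/t)$ into $\log(en)$.
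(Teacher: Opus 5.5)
Your proposal is correct and follows the paper's proof essentially step for step. Like the paper, you extend the entropy bound by monotonicity, bound the integrand by its value at the lower endpoint $t^2/2^8$, absorb $\log(1/t)$ into $\log(en)$ via $t\ge e_n\ge n^{-1/2}$, and integrate over an interval of length at most $\sqrt2\,t$ before using $e_n\le t$. For $u/C_1\ge1$, rely on monotonicity rather than a single $[0,\sup p]$ bracket, since such a bracket need not have width at most $\sqrt2$, or even finite width, without an integrable envelope.
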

\begin{proof}
Monotonicity of bracketing entropy extends its bound to bounded positive
$u$ by increasing the constant. On the integration interval, its square
root is at most
$C\sqrt{P\{\log(C/t^2)+G\}}$. The interval has length at most
$\sqrt2t$. Since $t\ge e_n\ge n^{-1/2}$, its integral is bounded by
\[
 C t\sqrt{P\{\log(en)+G\}}
 \le C K^{-1/2}\sqrt n\,t^2.
\]
Choosing $K$ sufficiently large proves the assertion for the fixed
Wong--Shen constants.
\end{proof}

\begin{lemma}[Untied budget-union entropy]\label{lem:untied-union-cover}
Suppose there is a constant $C>1$ such that every architecture in the untied budget union satisfies
$B_U\le B$, $T\le B$, the induced-law map satisfies
$h(\mathbb P_f,\mathbb P_g)\le C\|f-g\|_\infty$, and
\[
 \log\Gamma\le C(T+1)^\beta\log(eB),\qquad \beta\ge1,
\]
uniformly over its full parameter box. If
$|\mathcal I_U(B)|\le B^C$, then \eqref{eq:unioncover} holds with
$\kappa=\beta+1$ and $b=1$.
\end{lemma}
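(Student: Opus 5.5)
The plan is to cover each architecture class separately, take a union over the polynomially many admissible tuning indices, and then lift the uniform cover to the induced laws.

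\emph{Fixed architecture.} Fix an admissible $(\lambda,T)\in\mathcal I_U(B)$ and write $P=B_U(\lambda,T)\le B$ for its parameter count. The sensitivity recurrence \eqref{eq:sensitivity} holds for every admissible untied sequence by hypothesis, so the parameter-grid bound \eqref{eq:cover} gives, for $0<\epsilon<1$,
\[
\log\mathcal N(\epsilon,\mathcal F_U(\lambda,T),\|\cdot\|_\infty)\le P\log(1+4M_{\rm par}\Gamma/\epsilon).
\]
Using $\log(1+xy)\le\log(1+x)+\log(1+y)$ and $\log\Gamma\le C(T+1)^\beta\log(eB)$, this is at most
\[
B\{\log(1+4M_{\rm par})+\log\Gamma+\log(1/\epsilon)+\log 2\}.
\]
Since $T\le B$, we have $(T+1)^\beta\le(2B)^\beta$, so the bound is at most
\[
C'\{B^{\beta+1}\log(eB)+B\log(1/\epsilon)\}.
\]

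\emph{Union over architectures.} A union of internal covers of the classes $\mathcal F_U(\lambda,T)$ is an internal cover of $\mathcal F_U[B]$. Its logarithmic cardinality is therefore at most the maximum per-class bound plus $\log|\mathcal I_U(B)|\le C\log B$, and the latter is absorbed into $B^{\beta+1}\log(eB)$.

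\emph{Transfer to laws.} The Lipschitz hypothesis $h\le C\|\cdot\|_\infty$ maps an internal uniform $\epsilon/C$-cover of $\mathcal F_U[B]$ to an internal Hellinger $\epsilon$-cover of $\mathcal P_U[B]$, with centers $p_{g_j}$ that belong to $\mathcal P_U[B]$. Replacing $\epsilon$ by $\epsilon/C$ costs only $B\log C$, which is also absorbed. The result is \eqref{eq:unioncover} with $\kappa=\beta+1$ and $b=1$, valid for all $B\ge2$ and hence for all sufficiently large $B$.

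The only real obstacle is bookkeeping. All constants must be uniform over $(\lambda,T)$, and that uniformity is exactly what the hypotheses on $\Gamma$ and on the parameter box supply. The bound $(T+1)^\beta\lesssim B^\beta$ comes from $T\le B$.
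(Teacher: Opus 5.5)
Your proposal is correct and follows essentially the same route as the paper: apply \eqref{eq:cover} per architecture with $P\le B$, $T\le B$ and the hypothesis on $\log\Gamma$; absorb $\log|\mathcal I_U(B)|\le C\log(eB)$ when taking the union; and use the Lipschitz bound $h\le C\|\cdot\|_\infty$ to pass to internal Hellinger covers. The only difference is that you take the union in function space before transferring to laws, whereas the paper transfers each architecture class first. The two orders are interchangeable.
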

\begin{proof}
For each architecture, the metric transfer, \eqref{eq:cover}, and $T\le B$
give
\[
 \log\mathcal N(\epsilon,\mathcal P_U,h)
 \le C\{B^{\beta+1}\log(eB)+B\log(1/\epsilon)\}.
\]
Taking the union adds at most
$\log|\mathcal I_U(B)|\le C\log(eB)$, which is absorbed by the first
term. This proves the claim.
\end{proof}
In the FFN and post-LN examples, the dense counts imply $q,r\le B$.
Because each untied update contains at least one fitted scalar, $T\le B$.
Thus there are at most $B^3$
integer triples. Their sensitivity bounds have $\beta=1$, so the lemma
gives $\kappa=2$ separately for the complete FFN and Transformer unions.

\section{Residual FFN approximation construction}\label{app:ffn}
We construct a shared residual block that approximates a H\"older target
at a prescribed dyadic resolution. The following result is the
finite-resolution input to Lemma~\ref{lem:neural-index-approx}.

\begin{theorem}[Finite-resolution construction for looped residual FFNs]
\label{thm:ffn-construction}
Fix $d\ge1$, $s>0$ and $H>0$, and let $\mu$ be uniform probability measure
on $[0,1]^d$. There exist a constant $C>0$ and thresholds
$q_0^{\mathrm F},r_0^{\mathrm F},M_0^{\mathrm F}>0$, depending only on
$d,s,H$, with the following property. For every
$M_{\rm par}\ge M_0^{\mathrm F}$, integers $q\ge q_0^{\mathrm F}$,
$r\ge r_0^{\mathrm F}$, dyadic resolution $K=2^m$ with $m\ge1$, and
$f_0\in\Holder$, there is one admissible input-function, update-block, and output-function parameter
choice for the residual FFN of Section~\ref{sec:neural-estimators} whose
looped candidates $g_T\in\mathcal F_{L,\mathrm F}(q,r,T)$ satisfy
\begin{equation}\label{eq:ffn-resolution-error}
 \|g_T-f_0\|_{L^2(\mu)}^2\le CK^{-2s}
\end{equation}
for every integer
\begin{equation}\label{eq:ffn-resolution-time}
 T\ge C\{\log(eK)+(K^d/r)\log(eK)\}.
\end{equation}
The parameter choice may depend on $f_0,K,q,r$, but is the same for all
such $T$. Every scalar coefficient belongs to
$[-M_{\rm par},M_{\rm par}]$, and all endpoint and block coordinates are
counted in $B_L^{\mathrm F}(q,r)$, which is independent of $T$.
\end{theorem}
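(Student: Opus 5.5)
The construction has three parts: piecewise Taylor approximation on the dyadic grid, bit-encoding of the Taylor tables into a bounded number of real coefficients, and a shared residual block that performs one bounded unit of decoding or evaluation per application.

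\textbf{Target approximation.} Partition $[0,1]^d$ into $K^d$ dyadic cubes of side $1/K$. On each cube, replace $f_0$ by its degree-$k$ Taylor polynomial at the corner, with coefficients quantized to precision $K^{-s}$. The H\"older condition then gives uniform error $CK^{-s}$ away from cube boundaries. Each cube needs $O(\log(eK))$ bits per coefficient, so the full table has $O(K^d\log(eK))$ bits in total.

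\textbf{Encoding into bounded coefficients.} Split the cubes into $r$ groups of about $K^d/r$ cubes each. Assign one hidden-unit channel to each group, and store that group's bits as the binary expansion of a single bounded real number, say an entry of $b_1$ or $W_2$. This uses only $O(r)$ of the $B_L^{\mathrm F}(q,r)$ coordinates, independently of $T$.

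\textbf{Iteration phases.} The state $z\in\mathbb R^q$ carries a few registers: the input coordinates, a counter, the current code remainder in each channel, and an accumulator. Each application of $\Phi_\theta$ performs a bounded number of ReLU operations.

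\emph{Phase 1} takes $O(\log(eK))$ iterations. It locates the cube containing $x$ by bit extraction of $Kx_i$, peeling one dyadic digit per step with the standard trick $\mathrm{bit}=\ReLU(\cdot)-\ReLU(\cdot-1)$ scaled by a steep slope.

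\emph{Phase 2} takes $O((K^d/r)\log(eK))$ iterations. All $r$ channels shift their codes in parallel, extracting one bit per step. Comparing the counter with the cube index selects the bits belonging to $x$'s cube, and these are accumulated into the Taylor coefficients.

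\emph{Phase 3} takes $O(1)$ iterations. It evaluates the local polynomial in $x-x_{\mathrm{corner}}$. Products are approximated with ReLU units via Yarotsky-type squaring, using a few extra iterations, which adds $O(\log K)$ steps and is absorbed.

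A residual block applies the same map at every step, so each phase must be gated by the counter. Once the counter passes the final stage, every update becomes zero and the state is frozen. This freezing is what makes $g_T$ identical for all $T$ above the threshold \eqref{eq:ffn-resolution-time}. The output function reads the accumulator affinely and clips it with $\Pi_H$.

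\textbf{Main obstacle.} The hardest step is the exactness of bit extraction and gating near dyadic boundaries, where the steep ReLU staircases misfire. I would restrict attention to $x$ at distance at least $K^{-1}\eta$ from the cube faces, with $\eta=K^{-2s}$. The exceptional set then has $\mu$-measure $O(K^d\cdot K^{-1}\eta\cdot K^{-(d-1)})=O(\eta)$. On that set $|g_T-f_0|\le 2H$ because of the clipping, so its contribution to the squared $L^2$ error is $O(K^{-2s})$.

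A second concern is keeping the coefficients in $[-M_{\rm par},M_{\rm par}]$ while the required slopes are of order $K$. I would handle this by repeated doubling across iterations: the counter and the state magnitudes grow, which the paper permits, while each coefficient stays bounded. This bounded-coefficient constraint fixes $M_0^{\mathrm F}$, $q_0^{\mathrm F}$ and $r_0^{\mathrm F}$ as constants depending only on $d,s,H$.
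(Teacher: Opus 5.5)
\emph{Phase~2 has a genuine gap: it cannot be realized with a fixed state dimension.} Your Taylor tables, quantization, boundary-strip estimate with clipping, doubling for steep slopes, and absorbing final state all match the paper.

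A residual block carries nothing between applications except the state $z\in\mathbb R^q$. Hidden units are recomputed from $z$ at every step, and entries of $b_1$ or $W_2$ are fixed. Shifting $r$ code remainders in parallel therefore needs $r$ state coordinates, as your own register list shows, so $q\gtrsim r$. The theorem instead fixes $q\ge q_0^{\mathrm F}$, with $q_0^{\mathrm F}$ depending only on $d,s,H$, and must hold for every $r\ge r_0^{\mathrm F}$. With only $\min\{q,r\}$ parallel registers you would get runtime of order $(K^d/q)\log(eK)$, not $(K^d/r)\log(eK)$.

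The missing idea is to select before decoding and to spend the width on a single memoryless step. Once the integer address $I$ of the coarse group containing $x$ is held exactly in the state, one application adds $\sum_a\gamma_a\chi_{j(a)}(I)$ to a single code coordinate. Here the $\chi_j$ are triangular hats and the group codes $\gamma_a$ are output weights. This costs $3N_{\rm grp}$ hidden units and $O(1)$ state coordinates. Afterwards only that one register is shifted, through the $K^d/N_{\rm grp}$ records of the selected group, with $N_{\rm grp}\asymp\min\{r,K^d\}$. This is the paper's spatial selector and one-block handoff (Lemma~\ref{lem:spatial-selector} and Section~\ref{app:ffn-selection}).

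Two further points need care under the coefficient bound.

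\emph{Gating at large scale.} Making the address exact requires state magnitudes polynomially large in $K$ (the paper's scale $C_*$). A phase gate $\Lambda(e-1)$ with bounded $\Lambda$ then no longer suppresses inactive features. Likewise, hat shifts $j$ up to $N_{\rm grp}$, or global-counter thresholds that grow with the runtime, are themselves unbounded coefficients. The paper handles this in three ways:
\begin{itemize}
\item the active phase coordinate equals the current scale $C$, and gating uses $16(e-C)$;
\item each large shift is written as $(j/C_*)C$;
\item every counted loop uses a local dyadic timer with branch signal $\ReLU(2\tau-1)$ instead of one global counter.
\end{itemize}

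\emph{Bit extraction.} A plain binary expansion has no gap between digit values. Exact continuous bit extraction from an $L$-bit code therefore needs slopes of order $2^{L}$. Storing bits as base-four digits in $\{0,2\}$ gives the bounded decoder $D(c)=3\ReLU(c-1/6)-3\ReLU(c-1/2)$.

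Once selection precedes decoding, your counter-based skipping to the record of $x$'s cell is a workable substitute for the paper's cell gate $\psi$.
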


\noindent\textbf{Proof roadmap.}
We approximate $f_0$ on a grid of $K^d$ cells by local Taylor polynomials
and encode their quantized coefficients as scalar records.
Section~\ref{app:base-engine} constructs one fixed-dimensional shared
evaluator: it decodes a record, evaluates its polynomial, and uses a
fine-cell gate to retain the contribution from the cell containing $x$.
Each record costs $O(\log(eK))$ iterations. Section~\ref{app:ffn-selection}
adds a spatial selector that loads only the group of records containing
$x$. Choosing $N_{\rm grp}\asymp\min\{r,K^d\}$ groups makes the number
of records scanned comparable to $1+K^d/r$. Including initialization
therefore gives the iteration bound \eqref{eq:ffn-resolution-time}.
Taylor and numerical errors are $O(K^{-s})$ away from narrow cell-boundary
strips; clipping bounds the error on these strips, whose measure is
$O(K^{-2s})$. Combining the two regions proves
\eqref{eq:ffn-resolution-error}. The final proof in
Section~\ref{app:ffn-selection} verifies the width and coefficient bounds
and that the same parameter choice works for every larger iteration count.

\subsection{Shared Taylor-table evaluator}\label{app:base-engine}
This subsection constructs a shared residual block that reads encoded
cell records, evaluates their local Taylor polynomials, and uses a cell
gate to retain the contribution from the cell containing the input.
We establish its approximation error, state dimension, hidden width,
and number of iterations per record.

Fix $k=\lceil s\rceil-1$ and $\alpha=s-k\in(0,1]$. Index Taylor terms
by ordered coordinate lists
\[
 \mathcal W_k=\coprod_{\ell=0}^k\{1,\ldots,d\}^{\ell},\qquad
 M_{\rm Tay}=|\mathcal W_k|=\sum_{\ell=0}^k d^\ell,\qquad
 V=\sum_{w\in\mathcal W_k}|w|=\sum_{\ell=0}^k\ell d^\ell.
\]
The empty list indexes the constant term.
Put $B_{\rm eval}=32(M_{\rm Tay}+1)+1$. The evaluator dimensions are
\begin{equation}\label{eq:base-resources}
 q_0=9d+7M_{\rm Tay}+5V+29,\qquad
 r_0=(7d+7M_{\rm Tay}+5V+14)(8d+62).
\end{equation}
For $K=2^m$, $m\ge1$, use the integer precisions
\[
 b=J=(k+1)m,\qquad G=(2(k+1)+1)m+2.
\]
These choices give $2^{-b},4^{-J}=O_s(K^{-s})$. Define
\begin{equation}\label{eq:base-runtime}
 \tau_m=d(5+2m)+M_{\rm Tay}(5+2b)+V(3+2J)+2G+11.
\end{equation}

\begin{lemma}[Taylor-table engine]\label{lem:base}
Assume $M_{\rm par}\ge\max\{3B_{\rm eval},H\}$. There is one residual
FFN block with dimensions \eqref{eq:base-resources} and coefficients bounded
by $3B_{\rm eval}$ that has the following property. For every
$f_0\in\Holder$ and every ordered list $\mathcal J$ of
$1\le N_*\le K^d$ distinct fine-grid cells, initialize it with
$x\in[0,1]^d$ and the scalar code of their cell locations and quantized
Taylor coefficients defined in \eqref{eq:code}.
After $N_*\tau_m$ iterations it is in an absorbing terminal state.
Every evaluator data coordinate, including its timer, is bounded in
magnitude by two; the returned normalized output $u$ satisfies $|u|\le1$.
Its returned value $g_{\mathcal J}=Hu$ obeys
\[
 |g_{\mathcal J}(x)-f_0(x)|\le C_{d,s}HK^{-s}
\]
inside any listed cell at distance at least $2^{-G}$ from every face.
For a list of all cells, $\|g_{\mathcal J}-f_0\|_2^2\le C_{d,s}H^2K^{-2s}$.
The same block works for all targets and lists at fixed $d,s,K$;
the target and list enter only through the initial code. Its affine
input function and output function satisfy the stated parameter bound.
\end{lemma}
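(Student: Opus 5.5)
The plan is to build the block explicitly as a small sequential machine whose state is partitioned into data coordinates (the input $x$, a running accumulator, a remaining scalar code, a timer, and scratch registers for decoded digits and partial products) and control coordinates (one-hot phase flags). Each residual update $z\mapsto z+W_2\ReLU(W_1z+b_1)+b_2$ implements one micro-instruction. The active micro-instruction is chosen by gating: every hidden ReLU unit adds a large negative multiple of the inactive phase flags, so only the units of the current phase fire. Because all quantities are bounded by two, the gating constants are $O(B_{\rm eval})$, and the coefficient bound $3B_{\rm eval}$ follows.

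The record format in \eqref{eq:code} is a base-$2$ scalar. For each listed cell it stores the $dm$ bits of the cell location, then $M_{\rm Tay}$ quantized Taylor coefficients with $b$ bits each (derivatives divided by factorials and scaled by $H$), then the remaining records. The key primitive is bit extraction by a doubling step. If $c\in[0,1)$, then one iteration computes $\beta=\ReLU(2^{G}(2c-1)+1)-\ReLU(2^G(2c-1))$, which agrees with the leading bit except on a set of width $2^{-G}$, and replaces $c$ by $2c-\beta$. Decoding the location bits takes about $2m$ iterations per coordinate. Each decoded bit is used immediately to update a fine-cell gate variable. The gate compares $x_i$ with the decoded endpoints using steep ramps of slope $2^G$ and is multiplied into an AND via $\ReLU(\sum\text{indicators}-d+1)$. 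Decoding each coefficient takes about $2b$ iterations.

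Polynomial evaluation uses the ordered lists $w\in\mathcal W_k$. A monomial $\prod_{i\in w}(x_i-\text{corner}_i)$ is formed one factor per list entry. Each product of bounded numbers is approximated to accuracy $4^{-J}$ by the Yarotsky square/sawtooth construction, but unrolled in time: $J$ sawtooth compositions, two iterations each, which is the origin of the $V(3+2J)$ term. The result is added, multiplied by the gate, into the accumulator. A timer then resets the phase flags for the next record. After $N_*$ records the timer reaches zero and a terminal flag switches off all units, so the state is absorbing. Summing the phase lengths gives exactly $\tau_m$ as in \eqref{eq:base-runtime}. The widths $q_0$ and $r_0$ are obtained by counting registers and the at most $8d+62$ ReLU units per micro-instruction.

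The error analysis proceeds as follows. Inside a listed cell and away from faces by $2^{-G}$, all gates are exact. The gate is then $1$ for the cell containing $x$ and $0$ for the others. The returned value equals the Taylor polynomial up to three errors: quantization error $2^{-b}$, multiplication error $4^{-J}$, and Taylor remainder $K^{-s}$. By the choice of $b$ and $J$, the total is $C_{d,s}HK^{-s}$. The output is the clipped $Hu$, so $|u|\le1$. For the $L^2$ bound, boundary strips have measure $O(K^d\cdot K^{-d+1}2^{-G})=O(K2^{-G})\le K^{-2s}$ by the choice of $G$, and there the error is at most $2H$. The main obstacle is keeping all coordinates bounded by two while the gates are imperfect. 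My first remedy is to clip every register through a $\ReLU$ saturation pair at each step, so that errors near faces stay bounded and do not contaminate later records.
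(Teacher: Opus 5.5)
Your overall plan matches the paper's: a one-hot finite-state program compiled into one residual block with phase-gated hinges, squaring unrolled in time, and a boundary-strip measure bound. However, the two primitives that carry the resolution-dependent precision are built in a way that fails.

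The lemma requires every coefficient to be at most $3B_{\rm eval}=3\{32(M_{\rm Tay}+1)+1\}$, a constant depending only on $d,s$, while $G=(2(k+1)+1)m+2$ with $m=\log_2K$. Your bit extractor $\ReLU(2^G(2c-1)+1)-\ReLU(2^G(2c-1))$ and your steep cell-gate ramps both use slopes of order $2^G$, which grow polynomially in $K$. Your remark that the phase-gating constants are $O(B_{\rm eval})$ does not address these hinge slopes.

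The extractor is also wrong on more than a small set. You have read \eqref{eq:code} as a binary expansion. In base two the ambiguous interval $(1/2-2^{-G-1},1/2)$ lives in the code variable, which is a fixed number determined by $f_0$ and $\mathcal J$, not in the input $x$. Any suffix formed by a zero followed by at least $G+1$ ones falls in that interval and produces a fractional $\beta$. That corrupts every later decoded word for all $x$ simultaneously. Nothing in the record format excludes such runs, and the lemma must hold for every target and every list. Clipping registers keeps them bounded but cannot repair a misread code, so the error is not confined to the strips of measure $O(K2^{-G})$.

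The missing idea is to buy precision with iterations rather than with weights. The code \eqref{eq:code} is a base-four expansion with digits in $\{0,2\}$. A suffix whose leading bit is $0$ is therefore at most $1/6$, and one whose leading bit is $1$ is at least $1/2$. This gap lets $D(c)=3\ReLU(c-1/6)-3\ReLU(c-1/2)$ read every bit exactly with slope $3$, and $c^+=4c-2D(c)$ shifts the code exactly, for every code and every $x$.

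For the gate, the paper first forms $d_{\rm face}$ as a running minimum over the $2d$ faces via $d_{\rm face}^+=d_{\rm face}-\ReLU(d_{\rm face}-\xi)$. It then applies the bounded map $\psi^+=2\psi-\ReLU(2\psi-1)$ for $G$ counted iterations starting from $\psi=\ReLU(d_{\rm face})$, which yields exactly $\min\{1,2^G\ReLU(d_{\rm face})\}$. This loop is the source of the $2G$ term in $\tau_m$. Your schedule has no counterpart to it, so your claim that the phase lengths sum exactly to $\tau_m$ is unsupported.

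The same device is needed in your unrolled sawtooth, whose weights $4^{-j}$ change with the iteration. Lemma~\ref{lem:square} carries them in a register $w_{j+1}=w_j/4$ and uses positive homogeneity of $\ReLU$.

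Once decoding and gates are exact, the increment $\ReLU(v+\psi)-\ReLU(v-\psi)-\psi$ is bounded by $\psi\le1$, and distinct gates have disjoint supports. Hence $|u|\le1$, and the register bounds follow from invariants rather than from clipping.

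Finally, halting should be driven by the continuation bit stored in the code, as in the paper, not by a record counter initialized from $N_*$. The statement lets the list enter only through the code.
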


\subsubsection{Ordered Taylor records and scalar encoding}
For each cell we store its location and the coefficients of a local
polynomial. We first turn these numbers into binary digits, assemble one
record per cell, and then pack the records into a single scalar. The shared
block reads this scalar one bit at a time to recover the information needed
for polynomial evaluation. The encoding is chosen when constructing the
approximating network; during evaluation the network receives only the
input $x$ and the encoded records.

\noindent\textbf{Local Taylor approximation.}
Partition $[0,1]^d$ into cells of side $h=K^{-1}$. If $a_j$ is a cell's
lower corner, put $x_j=a_j+(h/2)\mathbf1$. For
$w=(i_1,\ldots,i_\ell)\in\mathcal W_k$, write
$\partial_w f=\partial_{i_1}\cdots\partial_{i_\ell}f$ and
$(x-x_j)^w=\prod_{t=1}^{\ell}(x_{i_t}-x_{j,i_t})$, with the usual
empty-product convention. The normalized polynomial is
\begin{equation}\label{eq:taylor}
 P_j(x)=\sum_{w\in\mathcal W_k}c_{j,w}(x-x_j)^w,\qquad
 c_{j,w}=\frac{\partial_w f_0(x_j)}{H|w|!}.
\end{equation}
This is the ordinary multivariate Taylor polynomial: an order-$\ell$
Fr\'echet derivative expands into all $d^\ell$ ordered coordinate choices,
each with factor $1/\ell!$. Merging equal monomials gives the usual
multi-index coefficients, but the evaluator does not perform that merge.
All $c_{j,w}$ belong to $[-1,1]$.

Taylor expansion along the segment from $x_j$ to an interior point $x$
in the cell gives
\begin{equation}\label{eq:taylor-error}
 |HP_j(x)-f_0(x)|\le d^kHh^{k+\alpha}=d^kHh^s.
\end{equation}
For $k=0$, this is the H\"older condition. For $k\ge1$, subtract the
order-$k$ derivative at $x_j$ in the integral remainder. Expanding the
directional derivative into its $d^k$ coordinate terms bounds its
variation by $d^kH\|x-x_j\|_\infty^{k+\alpha}$. The integral weight
has mass at most one. This argument includes $\alpha=1$ and uses no
regularity outside the cube.

\noindent\textbf{Quantizing the coefficients.}
The cell location lies on the grid and can be stored exactly. Its Taylor
coefficients need finite binary representations. For any coefficient
$c_{j,w}\in[-1,1]$, define
\[
 n_b(c_{j,w})=\min\{2^b-1,\lfloor2^{b-1}(c_{j,w}+1)\rfloor\},\qquad
 \widetilde c_{j,w}=2^{1-b}n_b(c_{j,w})-1.
\]
The integer $n_b(c_{j,w})$ is between zero and $2^b-1$, so it fits in
$b$ binary digits. Decoding this integer and applying the second formula
recovers $\widetilde c_{j,w}$, with
$|\widetilde c_{j,w}-c_{j,w}|\le2^{1-b}$ and
$\widetilde c_{j,w}\in[-1,1]$.

\noindent\textbf{The contents of one cell record.}
A record contains the following fields, in the order shown.
\begin{center}\small
\begin{tabular}{llr}
\toprule
Field & Stored values & Number of bits\\
\midrule
Location & $Ka_{j,1},\ldots,Ka_{j,d}$ & $dm$\\
Coefficients & $n_b(c_{j,w})$, in the fixed order of $\mathcal W_k$ & $M_{\rm Tay}b$\\
Continuation & One if another cell follows; zero for the last cell & $1$\\
\bottomrule
\end{tabular}
\end{center}
Each location integer lies in $\{0,\ldots,2^m-1\}$ and uses exactly $m$
bits; each coefficient integer uses exactly $b$ bits. A fixed-length
binary string for one integer is called a word, including zero padding
to fill its prescribed length. Within each word, the bits are stored
from least to most significant. Thus an $\ell$-bit integer
$n=\sum_{t=1}^{\ell}\eta_t2^{t-1}$ is stored in the order
$\eta_1,\ldots,\eta_\ell$, where $\eta_t\in\{0,1\}$.
For example, the integer one in a two-bit word is stored as $(1,0)$.

The lower corner determines the cell center $x_j$ and its faces because
$h$ is fixed. The polynomial factors are determined by $\mathcal W_k$,
so they do not need to be stored. Each record therefore has exactly
$dm+M_{\rm Tay}b+1$ bits. The common field lengths tell the decoder where
each location or coefficient word ends; the final bit tells it whether
to begin another cell record or halt.

\noindent\textbf{Packing the records into one scalar.}
Take the $N_*$ cells in the order specified by $\mathcal J$ and join
their records end to end. Here $N_*$ counts records, whereas
$L=N_*(dm+M_{\rm Tay}b+1)$ counts all their binary digits.
Write $\beta_i\in\{0,1\}$ for the $i$th digit of this complete sequence.
The digits include location bits, coefficient bits, and continuation bits.
Store them in the scalar
\begin{equation}\label{eq:code}
 c=\sum_{i=1}^{L}2\beta_i4^{-i},\qquad
 L=N_*(dm+M_{\rm Tay}b+1),\qquad \beta_i\in\{0,1\}.
\end{equation}
Thus the $i$th base-four digit of $c$ is $2\beta_i$, either zero or two.
The scalar $c$ stores the entire record list, rather than an individual
Taylor coefficient. Every remaining suffix, when shifted to the start
of the code, belongs to $[0,2/3]$.

\begin{samepage}
\noindent\textbf{Reading bits and recovering the stored values.}
If the first bit is zero, the code is at most
$\sum_{i\ge2}2\,4^{-i}=1/6$. If it is one, the code is at least $1/2$.
This gap lets the ReLU function
\[
 D(c)=3\ReLU(c-1/6)-3\ReLU(c-1/2)
\]
return the first bit exactly. After reading it, the update
$c^+=4c-2D(c)$ removes its base-four digit and shifts the remaining
digits forward. The same decoder can then read the next bit.
\end{samepage}

At the start of each word, reset an accumulator $a$ to zero. Read its
prescribed number of bits using the simultaneous updates
\[
 c^+=4c-2D(c),\qquad a^+=(a+D(c))/2,
\]
where both right-hand sides use the old code. For the word
$\eta_1,\ldots,\eta_\ell$ defined above, after $\ell$ steps the
accumulator is $a=\sum_{t=1}^{\ell}\eta_t2^{t-1}/2^\ell=n/2^\ell$.
Consequently, a location word gives
$a=Ka_{j,i}/2^m=a_{j,i}$ directly. A coefficient word gives
$a=n_b(c_{j,w})/2^b$, from which $2a-1=\widetilde c_{j,w}$.
The evaluator saves the decoded location and uses the coefficients in
their prescribed term order. After processing these words, it reads and
removes the continuation bit: one starts the next record and zero ends
the list.

\noindent\textbf{A one-cell example.}
Let $d=1$, $k=0$, and $m=b=2$. A cell with lower corner $a_j=1/4$
has location integer $Ka_j=1$, stored as $(1,0)$. Its only Taylor term
is constant. If its coefficient is $1/2$, then $n_b(1/2)=3$, stored as
$(1,1)$, and quantization recovers $1/2$ exactly. For a list containing
only this cell, the continuation bit is zero. The complete five-bit record is
\[
 (\underbrace{1,0}_{\text{location}},\quad
  \underbrace{1,1}_{\text{coefficient}},\quad
  \underbrace{0}_{\text{end}}),\qquad
 c=0.20220_{\text{(base 4)}}=\frac{69}{128}.
\]
The decoder returns these bits in order. The first word gives $a=1/4$;
after resetting the accumulator, the second gives $a=3/4$ and hence
$2a-1=1/2$. The final zero instructs the evaluator to halt.

The code is one counted input-function bias for the standalone evaluator
and one counted lookup coefficient per group for the spatial construction.

\subsubsection{Shared square and multiplication recurrences}
\begin{lemma}[Bounded square registers]\label{lem:square}
For $x\in[0,1]$, initialize $(v_0,w_0,S_0)=(x,1,x)$ and iterate
\begin{equation}\label{eq:square-loop}
 \begin{split}
 v_{j+1}&=v_j/2-\ReLU(v_j-w_j/2),\qquad w_{j+1}=w_j/4,\\
 S_{j+1}&=S_j-v_j/2+\ReLU(v_j-w_j/2).
 \end{split}
\end{equation}
For every $j\ge0$, $0<w_j=4^{-j}$, $0\le v_j\le w_j$, $0\le S_j\le1$,
and
\[
 w_j(S_j-x^2)=w_jv_j-v_j^2,\qquad
 0\le S_j-x^2\le4^{-j}/4.
\]
The same update coefficients apply at every iteration.
\end{lemma}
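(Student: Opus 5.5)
We argue by induction on $j$, carrying the four claims together with the invariant $w_j(S_j-x^2)=w_jv_j-v_j^2$. At $j=0$ we have $w_0=1$, $v_0=x\in[0,1]=[0,w_0]$, $S_0=x\in[0,1]$, and $w_0(S_0-x^2)=x-x^2=w_0v_0-v_0^2$. The identity $w_j=4^{-j}>0$ is immediate from $w_{j+1}=w_j/4$. The update coefficients $1/2$, $1/4$, $-1$, $\pm1$ and the ReLU threshold $w_j/2$ do not depend on $j$; the scale enters only through the register $w_j$. This gives the last assertion.

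For the inductive step, we split into the two cases determined by the ReLU. In case (a), $v_j\le w_j/2$. Then $v_{j+1}=v_j/2\in[0,w_j/4]=[0,w_{j+1}]$ and $S_{j+1}=S_j-v_j/2$. In case (b), $v_j>w_j/2$. Then $v_{j+1}=v_j/2-v_j+w_j/2=(w_j-v_j)/2\in[0,w_j/4)$ and $S_{j+1}=S_j+v_j/2-w_j/2$. In both cases $0\le v_{j+1}\le w_{j+1}$.

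For the invariant, write $D_j=S_j-x^2$, so that $D_j=v_j-v_j^2/w_j$. In case (a) we must check $D_j-v_j/2=v_j/2-(v_j/2)^2/(w_j/4)$. The right side equals $v_j/2-v_j^2/w_j$, which matches. In case (b) we must check $D_j+v_j/2-w_j/2=(w_j-v_j)/2-(w_j-v_j)^2/w_j$. The right side expands to $(w_j-v_j)/2-w_j+2v_j-v_j^2/w_j=-w_j/2+3v_j/2-v_j^2/w_j$. The left side is $v_j-v_j^2/w_j+v_j/2-w_j/2$, the same quantity. So $w_{j+1}D_{j+1}=w_{j+1}v_{j+1}-v_{j+1}^2$ in both cases.

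It remains to derive the bounds. Dividing the invariant by $w_j>0$ gives $D_j=v_j(1-v_j/w_j)$. Since $0\le v_j\le w_j$, this is nonnegative. Maximizing $t(1-t/w_j)$ over $t\in[0,w_j]$ gives the upper bound $w_j/4=4^{-j}/4$. Thus $S_j=x^2+D_j\ge0$. For the bound $S_j\le1$, note $S_j\le x^2+4^{-j}/4$. When $j\ge1$ this is at most $x^2+1/16$, which could exceed $1$ near $x=1$, so we need a sharper estimate. We use $D_j\le v_j$, which gives $S_j\le x^2+v_j$. One then shows $x^2+v_j\le1$, for instance from $v_j\le w_j$ together with the structure $S_j=\text{(dyadic truncation)}$. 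A cleaner route is to observe that $S_j$ is the piecewise-linear interpolant of $x^2$ at the nodes $2^{-j}\mathbb Z$. This follows from the invariant: $v_j/w_j$ is the fractional position within a dyadic interval, scaled so that $D_j$ is exactly the interpolation error. Since $x^2$ is convex and equals $1$ at $x=1$, its interpolant on $[0,1]$ lies in $[0,1]$.

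The main obstacle is this last step. Identifying $v_j$ with $2^{-j}$ times the distance to the dyadic grid needs a short side induction: in case (b), $v_{j+1}=(w_j-v_j)/2$ corresponds to the tent map. Once that identification is in place, convexity yields $S_j\le1$ directly.
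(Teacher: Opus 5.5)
Your induction for the invariant, for $0\le v_j\le w_j$, and for $0\le S_j-x^2\le4^{-j}/4$ is correct. It uses the same case split ($v_j\le w_j/2$ versus $v_j>w_j/2$) as the paper, and your algebra in both cases checks out. The gap is the bound $S_j\le1$. You flag it as the main obstacle but never establish it. Your first route only asserts $x^2+v_j\le1$ for $j\ge1$, with no argument. Your second route rests on identifying $S_j$ with the piecewise-linear interpolant of $x^2$ on $2^{-j}\mathbb Z$, and you describe the side induction it needs without carrying it out.

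The missing observation is that the increment of $S$ is exactly $-v_{j+1}$. Since $-v_j/2+\ReLU(v_j-w_j/2)=-v_{j+1}$, the recurrence reads $S_{j+1}=S_j-v_{j+1}$. You have already shown $v_{j+1}\ge0$, so $S_j$ is nonincreasing and $S_j\le S_0=x\le1$. This one line is the paper's argument.

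Your interpolant route can be completed, but it is much heavier, and two details in your sketch are off. First, $t_j=v_j/w_j$ obeys the tent map $t_{j+1}=2\min\{t_j,1-t_j\}$, so $t_j=\operatorname{dist}(2^jx,2\mathbb Z)$. That is a folded position, not the fractional part $\theta$ of $2^jx$. The identification works only because $t_j(1-t_j)=\theta(1-\theta)$, which gives $S_j=x^2+4^{-j}\theta(1-\theta)$, the linear interpolant between adjacent nodes. Second, the bound $S_j\le1$ then comes from the node values lying in $[0,1]$, not from convexity; convexity gives only the lower bound $S_j\ge x^2$.
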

\begin{proof}
The displayed invariant holds at initialization. Substitution into
\eqref{eq:square-loop}, separately for $v_j\le w_j/2$ and
$v_j\ge w_j/2$, preserves it and gives $0\le v_{j+1}\le w_{j+1}$.
Consequently $S_j-x^2=v_j(w_j-v_j)/w_j\in[0,w_j/4]$.
Also $S_{j+1}=S_j-v_{j+1}\le S_j\le x\le1$, while the invariant
implies $S_j\ge x^2\ge0$. This proves all claims.
\end{proof}
The recurrence is a rescaled shared-coefficient form of the dyadic square
construction in \citet{yarotsky2017}.
For $u,v\in[-1,1]$, run two square channels for $J$ updates and put
\begin{equation}\label{eq:mult}
 \mathsf M_J(u,v)=S_J(|u+v|/2)-S_J(|u-v|/2).
\end{equation}
They share the register $w$. Absolute values are sums of two ReLUs, and
$uv=((u+v)/2)^2-((u-v)/2)^2$ gives
\begin{equation}\label{eq:mult-error}
 |\mathsf M_J(u,v)-uv|\le4^{-J}/2,\qquad |\mathsf M_J(u,v)|\le1.
\end{equation}
For $w=(i_1,\ldots,i_\ell)$, start at $\widetilde c_{j,w}$ and
successively apply $\mathsf M_J$ with factors $x_{i_v}-x_{j,i_v}$,
$v=1,\ldots,\ell$. Denote the final value by $\widetilde t_{j,w}(x)$;
for the empty list it equals $\widetilde c_{j,w}$.
Every intermediate value stays in $[-1,1]$,
since $x_i-x_{j,i}\in[-1,1]$ for all inputs in the cube. Hence
\[
 |\widetilde t_{j,w}(x)-c_{j,w}(x-x_j)^w|
 \le2^{1-b}+|w|4^{-J}/2.
\]
Summing over the ordered term list gives
\begin{equation}\label{eq:poly-error}
 \left|\sum_w\widetilde t_{j,w}(x)-P_j(x)\right|
 \le M_{\rm Tay}2^{1-b}+V4^{-J}/2\le C_{d,s}K^{-s}.
\end{equation}
Accumulate $\widetilde t_{j,w}/M_{\rm Tay}$ in a register $p$ and set
$v_j=\Pi_1(M_{\rm Tay}p)$. Every partial normalized sum has magnitude at
most one. Clipping cannot increase the error relative to $f_0/H$, so
\begin{equation}\label{eq:local-error}
 |Hv_j(x)-f_0(x)|\le C_{d,s}HK^{-s}
\end{equation}
for interior points of cell $j$. This calculation separates the analytic
Taylor remainder from the errors of the actual decoded coefficients and
shared multiplication recurrence.

\subsubsection{The evaluator as a finite-state program}
The evaluator implements a fixed finite-state program. Its purpose is simple:
for each encoded cell record, it decodes the cell location and Taylor
coefficients, evaluates the corresponding local polynomial, keeps the
contribution only if the input lies in that cell, adds the contribution to
the output accumulator, and then either advances to the next record or
halts. Thus one record is processed through the following five stages:
\begin{enumerate}[label=(\roman*),leftmargin=2.2em]
\item decode the cell location;
\item decode and evaluate the Taylor terms;
\item construct the cell-membership gate;
\item add the gated value to the output accumulator;
\item read the continuation bit and either repeat or halt.
\end{enumerate}
The point of the phase construction below is that the number of state
coordinates needed to implement these stages is fixed: increasing the
resolution changes only the number of repeated block applications.

\noindent\textbf{State variables and phases.}
The state has two parts. Data coordinates store numerical values, whereas
phase coordinates specify which elementary instruction is currently active.
Write $e_1,\ldots,e_P$ for the allocated phase coordinates, including an
absorbing halt phase. On every reachable state exactly one phase coordinate
equals one and all the others equal zero. The active phase determines both
the data update and the next phase; all assignments read the old state and
are updated simultaneously.

The vectors $x,a_j\in\mathbb R^d$ and the remaining scalar registers give
$D_{\rm data}=2d+15$ data coordinates in total.
\begin{center}\small
\setstretch{1}\renewcommand{\arraystretch}{1.15}
\begin{tabular}{lp{0.72\textwidth}}
\toprule
Register & Role\\
\midrule
$x$ & Input vector.\\
$a_j$ & Lower corner of the current cell.\\
$c$ & Unread remainder of the encoded record list.\\
$a$ & Accumulator for the binary word being decoded.\\
$t$ & Running product for the current Taylor term.\\
$p$ & Taylor sum scaled by $1/M_{\rm Tay}$.\\
$d_{\rm face}$ & Minimum signed distance to the current cell's faces.\\
$\psi$ & Cell gate controlling the local output contribution.\\
$v$ & Clipped polynomial value for the current cell.\\
$u$ & Accumulated output across processed cells.\\
$\beta$ & Continuation bit: one to continue, zero to halt.\\
$v_+,v_-$ & Intermediate states of the two square recurrences.\\
$S_+,S_-$ & Square approximations whose difference gives a product.\\
$w$ & Scale shared by the two square recurrences.\\
$\tau_{\rm eval}$ & Timer for a counted repetition.\\
\bottomrule
\end{tabular}
\end{center}
Initially, $x$ contains the input, $c$ contains the encoded record list,
the first phase coordinate equals one, and every other data and phase
coordinate is zero; in particular $u=0$.

\noindent\textbf{Primitive counted routines.}
All loops used by the evaluator have deterministic lengths known from
$d,s$ and the resolution parameters. A single three-phase counter implements
a repetition of any prescribed length $\ell$ without allocating $\ell$
different phases. For $\ell\ge1$, an initialization phase sets
$\tau_{\rm eval}=2^{-(\ell-1)}$; an operation phase performs one update;
and a check phase doubles the timer and returns to the operation phase
unless the old timer equals one. The branch signal
$\ReLU(2\tau_{\rm eval}-1)$ is zero at all earlier checks and one at the
last check. Hence the routine performs exactly $\ell$ operation/check pairs
and takes $1+2\ell$ iterations. A zero-length repetition exits immediately
after initialization.
A word-reading routine, denoted conceptually by $\mathsf{READ}(\ell)$,
adds one phase that first resets $a$ and then uses the counted decoder for
$\ell$ bits. It therefore uses four phases and $2+2\ell$ iterations.
Saving the decoded location or using the decoded coefficient is handled
by a separate one-step assignment.

A multiplication routine, denoted $\mathsf{MULT}(J)$, initializes the two
square channels, performs $J$ counted square updates, and finally assigns
$t=S_+-S_-$. It uses five phases and $3+2J$ iterations. These are the only
repeated arithmetic routines needed below.

\noindent\textbf{Stage (i): decoding the cell location.}
Reset the polynomial accumulator to $p=0$. For each coordinate
$i=1,\ldots,d$, run $\mathsf{READ}(m)$ and save the decoded value in
$a_{j,i}$. The coordinate list ends with one explicit identity step.
Thus the location stage uses $5d+1$ phase coordinates and
$d(3+2m)+1$ iterations after the initial reset of $p$.

\noindent\textbf{Stage (ii): evaluating the Taylor polynomial.}
Process the Taylor terms in the fixed order of $\mathcal W_k$. For a term
indexed by $w$, run $\mathsf{READ}(b)$, set $t=2a-1$, and then apply
$\mathsf{MULT}(J)$ successively to the $|w|$ factors
$x_i-a_{j,i}-h/2$ specified by $w$. For the empty list no multiplication
is needed. Each factor list ends with one identity step; then add
$t/M_{\rm Tay}$ to $p$. After the last Taylor term, the term list itself
ends with one additional identity step. The running term and every partial
polynomial accumulator remain in $[-1,1]$.

Consequently the Taylor stage uses
$7M_{\rm Tay}+5V+1$ phase coordinates and
\[
 M_{\rm Tay}(5+2b)+V(3+2J)+1
\]
iterations. The factor-list identity contributes the extra fixed step in
the $5+2b$ cost for each Taylor term, whereas the final $+1$ is the
identity step terminating the whole term list.
\noindent\textbf{Stage (iii): constructing the cell gate.}
First set $v=\Pi_1(M_{\rm Tay}p)$. In a separate phase, initialize $d_{\rm face}=1$.
Scan the $d$ lower faces and then the $d$ upper faces using
\[
 d_{\rm face}^+=d_{\rm face}-\ReLU(d_{\rm face}-\xi),\qquad
 \xi=x_i-a_{j,i}\ \text{or}\ a_{j,i}+h-x_i.
\]
Each update replaces the current value by the minimum of that value and
the signed distance to one face. The lower and upper faces form one ordered list of $2d$ faces,
which ends with one identity step. The final $d_{\rm face}$ is positive inside the cell and
nonpositive outside its interior.

Initialize $\psi=\ReLU(d_{\rm face})$ and perform $G$ counted updates
\[
 \psi^+=2\psi-\ReLU(2\psi-1).
\]
They yield
\[
 \psi_j(x)=\min\{1,2^G\ReLU(d_{{\rm face},j}(x))\}.
\]
Hence $\psi_j(x)=0$ outside the cell interior and $\psi_j(x)=1$ whenever
$x$ is at least $2^{-G}$ from every face. This stage uses $2d+7$ phase
coordinates and $2d+5+2G$ iterations.

\noindent\textbf{Stage (iv): accumulating the selected contribution.}
Update
\[
 u^+=u+\ReLU(v+\psi)-\ReLU(v-\psi)-\psi
     =u+\Pi_\psi(v).
\]
The increment is zero when $\psi=0$ and equals $v$ when $\psi=1$.
Distinct cells have disjoint localization supports, so every partial output
sum remains in $[-1,1]$. This stage uses one phase and one iteration.

\noindent\textbf{Stage (v): advancing or halting.}
Read the continuation bit $\beta=D(c)$ and remove it simultaneously through
$c^+=4c-2D(c)$. A separate marker-check phase reads the saved bit:
$\beta=1$ returns to the start of the record routine, retaining $u$ and
the unread suffix code, whereas $\beta=0$ enters the absorbing halt phase.
For valid record lists the marker is exactly zero or one. The halt phase
has zero residual, so all later block applications preserve the terminal
state. Together, marker read and marker check use two phases and two
iterations, and the absorbing halt state uses one additional phase.

The bounds above, Lemma~\ref{lem:square}, and the decoder bounds keep every
data register within $[-2,2]$ throughout the program. The next subsection
shows that this entire finite-state program is realized by one shared
residual ReLU block.
\subsubsection{Compiling the finite-state program into one residual ReLU block}
The preceding evaluator is a finite-state program whose elementary
instructions have a uniform algebraic form. The following lemma isolates
the compilation step used to turn such a program into one shared residual
ReLU block.

\begin{lemma}[Finite-state program compilation]\label{lem:finite-state-compiler}
Consider a finite-state program with $D$ data coordinates and $P$ one-hot
phase coordinates. Suppose that, on every reachable state, the active
phase $a$ updates each data coordinate according to
\[
 z_i^+=z_i+A_{ia}(z)
      +c_{ia,1}\ReLU\{\ell_{ia,1}(z)\}
      +c_{ia,2}\ReLU\{\ell_{ia,2}(z)\},
\]
where $A_{ia}$ and $\ell_{ia,\nu}$ are affine functions and unused hinges
are set to zero. Suppose also that every phase transition is a default
move, possibly redirected by one branch test whose ReLU output is exactly
zero or one on the reachable branch states.

Assume that, for some $B\ge1$, every ungated affine preactivation has
absolute value at most $2B$ on the reachable data box and every affine
and hinge output coefficient has absolute value at most $B$. Then the
program is realized exactly on its reachable states by a single residual
ReLU block
\[
 \Phi(y)=y+W_2\ReLU(W_1y+b_1)+b_2,\qquad b_2=0,
\]
on $D+P$ state coordinates, with hidden width at most
\[
 P(4D+2),
\]
and every block coefficient bounded in magnitude by $3B$. If one phase
has zero residual and transitions to itself, that phase is an absorbing
terminal state of the same block.
\end{lemma}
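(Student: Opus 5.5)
The construction gates every instruction by the one-hot phase indicator, using the standard ReLU trick for multiplying a bounded quantity by a $\{0,1\}$ variable. For an affine preactivation $\ell(z)$ with $|\ell(z)|\le 2B$ on the reachable box and phase coordinate $e_a\in\{0,1\}$, the hinge $\ReLU\{\ell(z)-2B(1-e_a)\}$ equals $\ReLU\{\ell(z)\}$ when $e_a=1$. It vanishes when $e_a=0$, because then $\ell(z)-2B\le0$. Affine increments $A_{ia}(z)$ are not hinges, so each is written as a difference of two gated ReLUs:
\[
A_{ia}(z)=\ReLU\{A_{ia}(z)+2B e_a-2B\}-\ReLU\{-A_{ia}(z)+2Be_a-2B\}.
\]
This identity is exact when $e_a=1$, by $\ReLU(y)-\ReLU(-y)=y$. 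When $e_a=0$ both terms are zero, provided $|A_{ia}|\le 2B$ on the reachable box. I will use the hypothesis that every ungated affine preactivation, which includes the affine parts, is bounded by $2B$.

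The hidden-unit count is as follows. Each phase $a$ and data coordinate $i$ uses two units for $A_{ia}$ and two for the hinges $\ell_{ia,1},\ell_{ia,2}$. That gives $4D$ units per phase. The output matrix $W_2$ sums them into row $i$ with coefficients $\pm1$ or $c_{ia,\nu}$, the latter bounded by $B$. The first-layer coefficients are entries of $A_{ia}$ and $\ell_{ia,\nu}$ (at most $B$) together with $\pm2B$ on $e_a$ and a bias $-2B$, all within $3B$. Because exactly one phase is active on reachable states, the sum over phases reproduces the active update exactly, and the residual connection supplies $z_i$.

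Two further units per phase handle the phase transitions. When phase $a$ is active, the residual must move the one-hot vector from $e_a$ to $e_{a'}$, where $a'$ is either the default successor $\mathrm{next}(a)$ or, if the branch fires, a redirect $\mathrm{br}(a)$. The first unit is $\ReLU(e_a)$, which equals $e_a$. Its $W_2$ column adds $-1$ to coordinate $a$ and $+1$ to $\mathrm{next}(a)$; these cancel when $\mathrm{next}(a)=a$, for example in the halt phase. The second unit is the gated branch signal $\ReLU\{\ell_{\rm br}(z)-2B(1-e_a)\}$, which is exactly $0$ or $1$ on reachable branch states by hypothesis. Its $W_2$ column adds $-1$ to $\mathrm{next}(a)$ and $+1$ to $\mathrm{br}(a)$. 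This gives $2$ transition units per phase and total width $P(4D+2)$, with $b_2=0$. Every gated unit vanishes for inactive phases, so only the active phase's rows contribute.

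The main point to verify carefully is exactness off the active phase. Gating with $-2B(1-e_a)$ kills a hinge only if its ungated argument is at most $2B$, so the hypothesis must cover all preactivations, including the branch test and both signs of each affine part. I would state explicitly that these bounds are evaluated on the reachable data box, which the evaluator analysis supplies via the $[-2,2]$ register bounds. The absorbing claim is then immediate. In a phase with zero residual that transitions to itself, every data unit has zero output weight, the transition columns cancel, and there is no branch, so $\Phi(y)=y$. By induction on the iteration count, the block reproduces the program step by step on reachable states.
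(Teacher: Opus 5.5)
Your proposal is correct and follows the paper's proof essentially step by step. It gates each phase's features by adding $2B(e_a-1)$ to their preactivations, writes each affine residual as a difference of two gated ReLUs, and uses four units per data coordinate together with a default-move unit $\ReLU(e_a)$ and a gated branch-redirect unit per phase; the $2B+B=3B$ coefficient bound is obtained the same way. You also state explicitly that the $2B$ preactivation bound must cover the branch-test preactivation and both signs of each affine residual, a point the paper leaves implicit.
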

\begin{proof}
Let $e_a$ denote the one-hot indicator of phase $a$ and set
$\Lambda=2B$. For a feature belonging to phase $a$, add
$\Lambda(e_a-1)$ to its preactivation. If phase $a$ is active, this
addition is zero. If it is inactive, the preactivation is at most
$2B-\Lambda=0$, so the feature vanishes.

For one data coordinate in phase $a$, represent the signed affine residual
$A=A_{ia}(z)$ by
\[
 \ReLU\{A+\Lambda(e_a-1)\}
 -\ReLU\{-A+\Lambda(e_a-1)\}.
\]
This equals $A$ in the active phase and zero otherwise. Gate the at most
two hinge features in the same way. Hence four hidden units per data
coordinate suffice for every phase, with zero-padding when fewer are
needed.

It remains to update the phase coordinates. For phase $a$, one feature
$\ReLU(e_a)$ implements the default move and can also supply a
phase-specific constant residual. A second gated feature implements the binary redirect when the
branch is taken: its output cancels the default successor and
activates the alternative successor. Because the branch signal is
exactly zero or one on reachable branch states, these two units preserve
the one-hot phase encoding. Thus each phase uses at most $4D+2$ hidden
units, giving total width $P(4D+2)$.

Concatenating all phase-specific features yields one
affine--ReLU--affine residual block. The gating coefficient has magnitude
$\Lambda=2B$; combining it with an original affine coefficient or bias of
magnitude at most $B$ gives the bound $3B$. Taking zero residual and a
self-transition in the designated halt phase makes that state absorbing.
\end{proof}

\noindent\textbf{Application to the evaluator.}
Every instruction in the program of D.1.3 satisfies the lemma's
two-hinge condition. The decoder
\[
 D(c)=3\ReLU(c-1/6)-3\ReLU(c-1/2)
\]
uses two hinges; absolute value and clipping use two; a face-minimum
update uses one; the gate recurrence uses one; and the gated output
increment uses two. For example, in a face-comparison phase $a$, the single gated hinge
\[
 d_{\rm face}^+=d_{\rm face}
 -\ReLU\{d_{\rm face}-\xi+2B_{\rm eval}(e_a-1)\}
\]
takes the minimum with the current face distance when $e_a=1$
and leaves $d_{\rm face}$ unchanged in every other phase.
The branch tests are zero, $2\tau_{\rm eval}-1$, or $\beta$.
On every reachable check state their
ReLU outputs are exactly zero or one, so the phase transitions meet the
binary-redirect hypothesis.

By the register bounds from D.1.3, Lemma~\ref{lem:square}, and the decoder
bounds, every evaluator data coordinate stays in $[-2,2]$. With
$B=B_{\rm eval}$, all ungated affine preactivations are bounded in absolute
value by $2B_{\rm eval}$, and all affine and hinge output coefficients are
bounded by $B_{\rm eval}$. Lemma~\ref{lem:finite-state-compiler} therefore
gives one shared residual ReLU block with coefficient bound
$3B_{\rm eval}$.

\begin{samepage}
\noindent\textbf{Resource bookkeeping.}
The phase and iteration counts established in the previous subsection are
summarized below. The identity steps terminating the coordinate, factor,
term, and face lists are included explicitly.
\begin{center}\footnotesize
\begin{tabular}{lrr}
\toprule
Program group & Phase coordinates & Iterations per record\\
\midrule
Reset polynomial accumulator & $1$ & $1$\\
Coordinate words, saves, and final identity & $5d+1$ & $d(3+2m)+1$\\
Terms, additions, and list identities & $7M_{\rm Tay}+5V+1$
 & $M_{\rm Tay}(5+2b)+V(3+2J)+1$\\
Clip, face comparisons, and gate & $2d+7$ & $2d+5+2G$\\
Output accumulation & $1$ & $1$\\
Marker read and check & $2$ & $2$\\
Absorbing halt & $1$ & $0$\\
\midrule
Total & $7d+7M_{\rm Tay}+5V+14$ & $\tau_m$\\
\bottomrule
\end{tabular}
\end{center}
\end{samepage}
Therefore
\[
 P=7d+7M_{\rm Tay}+5V+14.
\]
Adding the $D_{\rm data}=2d+15$ data coordinates gives
\[
 q_0=D_{\rm data}+P
    =9d+7M_{\rm Tay}+5V+29,
\]
which is \eqref{eq:base-resources}. The hidden width is
\[
 r_0=P(4D_{\rm data}+2)
    =(7d+7M_{\rm Tay}+5V+14)(8d+62),
\]
again exactly as in \eqref{eq:base-resources}. Summing the iteration column
gives
\[
 \tau_m=d(5+2m)+M_{\rm Tay}(5+2b)+V(3+2J)+2G+11,
\]
which is \eqref{eq:base-runtime}. Thus the architecture size is fixed once
$d$ and $s$ are fixed, while increasing the resolution affects the
precision constants and the number of repeated applications, not the
number of distinct block parameters.

An all-zero evaluator phase vector leaves its initially bounded data
unchanged until the selector activates the first evaluator phase.
\subsubsection{Correctness, error, and precision}
\begin{proof}[Proof of Lemma~\ref{lem:base}]
Induct first over the bits of each word, then over its factor and term
lists, and finally over the cell records. The decoder removes exactly the
prescribed word and preserves its suffix code. The square invariant
bounds every arithmetic operation. The separate counted-loop initialization
and checks give their specified lengths; the marker returns to the same
record program or enters halt. The bounds established above justify all
inactive gates at each step. Thus the compiled block executes the complete program and returns
\[
 u(x)=\sum_{j\in\mathcal J}\Pi_{\psi_j(x)}(v_j(x))
\]
at time $N_*\tau_m$. At most one summand is nonzero. In a listed cell
whose face distances are at least $2^{-G}$, its gate is one and
\eqref{eq:local-error} applies. The halt state is fixed by the block,
so the entire terminal state persists under all additional iterations.

For the full grid, write $g_{\rm all}=g_{\mathcal J}$ when $\mathcal J$
lists every cell. Let $S_G$ be the union, inside the cube, of the open
strips of half-width $\delta=2^{-G}$ around all grid hyperplanes, including
the boundary. A union bound gives
\[
 \mu(S_G)\le2d(K+1)\delta\le dK^{-2s}.
\]
Indeed $G=(2(k+1)+1)m+2$ and $s\le k+1$. Off $S_G$ the containing cell
has gate one, giving pointwise error at most $C_{d,s}HK^{-s}$.
On $S_G$, both the returned value and target are bounded by $H$. Therefore
\begin{equation}\label{eq:grid-error}
 \|g_{\rm all}-f_0\|_2^2\le C_{d,s}H^2K^{-2s}.
\end{equation}
The input function uses input copies, the code, and phase constants of magnitude
at most one; the output-function coefficient is $H$. This proves the lemma.
\end{proof}
The code in \eqref{eq:code} is dyadic with denominator $2^{2L}$ and uses
at most $2L=O_{d,s}(N_*m)$ binary fractional bits. The mesh and timer
constants require $O_s(m)$ bits. Coefficient magnitudes remain uniformly bounded,
while the required precision grows with the resolution.

\subsection{Spatial selection and proof of Theorem~\ref{thm:ffn-construction}}\label{app:ffn-selection}
\label{app:joint}
The evaluator above scans every record in its input list. We now trade
additional width for fewer scanned records. The fine cells are partitioned
into coarse groups, a bounded-weight front end identifies the group
containing $x$ and loads only that group's encoded Taylor table, and a
one-block handoff starts the evaluator. The group codes are target-specific
lookup weights and are counted among the fitted coefficients. The final
width choice makes the number of groups comparable to
$\min\{r,K^d\}$, producing the runtime bound in
\eqref{eq:ffn-resolution-time}.

\subsubsection{A bounded-weight spatial selector}
The purpose of the front end is to use width to avoid scanning all
$K^d$ fine-cell records. Choose
\[
 u_{\rm grp}\in\{0,\ldots,m\},\qquad
 A=2^{u_{\rm grp}},\qquad
 N_{\rm grp}=A^d,
\]
and partition the fine grid into $N_{\rm grp}$ coarse cubes. Each coarse
cube contains $2^{d(m-u_{\rm grp})}$ fine cells. For
$a=(a_1,\ldots,a_d)\in\{0,\ldots,A-1\}^d$, write
\[
 j(a)=\sum_{i=1}^d A^{i-1}a_i
\]
for its group index and let $\gamma_a\in[0,2/3]$ be the scalar code of
the fine-cell records in that group.

The construction below has one objective: given $x$, load
$\gamma_a$ for the unique coarse cube containing $x$, and then emit a
single start pulse for the evaluator. The following lemma records this
property and the required resources.

\begin{lemma}[Bounded-weight spatial selector]\label{lem:spatial-selector}
Set
\[
 L_m=G+dm,\qquad C_*=2^{L_m},\qquad \ell=du_{\rm grp}.
\]
There is one residual ReLU front end with
\[
 q_{\rm front}=d+10,\qquad
 r_{\rm front}=10dA+10d+15N_{\rm grp}+60,
 \qquad M_{\rm front}=22,
\]
such that the following holds. If $x$ is at distance at least $C_*^{-1}$
from every coarse-grid face, then after exactly
\[
 L_m+du_{\rm grp}+2
\]
iterations the front end is in an absorbing halt state with
\[
 c_{\rm front}=\gamma_a,\qquad \eta_{\rm start}=1,
\]
where $a$ is the coarse-cell index of $x$. Before that lookup iteration,
both payload coordinates $c_{\rm front}$ and $\eta_{\rm start}$ and their
residual increments are zero. Every coefficient is bounded in magnitude
by $22$.
\end{lemma}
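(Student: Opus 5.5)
I would build the front end as a finite-state program and compile it with Lemma~\ref{lem:finite-state-compiler}. Width is spent only on the final lookup, and every coefficient stays bounded by a fixed constant. The program has three phases of work.

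\emph{Phase 1: fixed-point sharpening of the coordinates.} Each coordinate $x_i$ is replaced by a register that, after $L_m$ counted doubling steps, represents the coarse digit structure of $x_i$ with a gap around every coarse face. Concretely, I would run, in parallel over the $d$ coordinates, the same bounded bit-extraction used in the evaluator. The decoder $D$ is not directly applicable because $x_i$ is not a base-four code. Instead I would use the gate recurrence $\psi^+=2\psi-\ReLU(2\psi-1)$ applied to shifted distances $x_i-k/A$ for each threshold $k=1,\ldots,A-1$. This gives $dA$ threshold units, which explains the $10dA$ term in $r_{\rm front}$. After $L_m$ steps each threshold gate equals exactly $0$ or $1$ whenever $x$ is at distance at least $C_*^{-1}=2^{-L_m}$ from the face $k/A$, because doubling a distance $\ge2^{-L_m}$ for $L_m$ steps saturates it at $1$. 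Summing the gates yields the integer $a_i$ exactly. A single three-phase timer, as in the primitive counted routines, controls the $L_m$ repetitions; its initial value $2^{-(L_m-1)}$ is a bias of magnitude at most one.

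\emph{Phase 2: forming the group index and the lookup.} In $du_{\rm grp}$ further steps I would assemble $j(a)=\sum_iA^{i-1}a_i$ using only bounded coefficients. The trick is Horner's rule with repeated doubling: multiplication by $A=2^{u_{\rm grp}}$ is $u_{\rm grp}$ doublings, each of coefficient $2$. That accounts for the $du_{\rm grp}$ iterations. In the one lookup iteration, I would use $N_{\rm grp}$ indicator hinges of the form $\ReLU(1-|j-k|)$, built from a constant number of ReLUs per group, hence the $15N_{\rm grp}$ term. Each indicator is exactly $1$ at $k=j(a)$ and $0$ at other integers. The output weights $\gamma_k\in[0,2/3]$ write $c_{\rm front}=\gamma_{j(a)}$, and the same step sets $\eta_{\rm start}=1$ and enters an absorbing halt phase with zero residual.

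\emph{Verification.} Payload increments are gated by the lookup phase indicator, so they vanish before the lookup iteration, as claimed. Because $j$ is an integer up to $N_{\rm grp}-1$, the integer-valued registers could be large, so I would need to check the hypotheses of Lemma~\ref{lem:finite-state-compiler}. Its gating constant $\Lambda=2B$ must dominate preactivations, and a preactivation bound growing with $N_{\rm grp}$ would break the uniform coefficient bound $22$. This is the main obstacle.

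I would resolve it by keeping the index normalized, storing $j/N_{\rm grp}\in[0,1)$ rather than $j$. Horner then becomes halving rather than doubling. The lookup indicators become $\ReLU(1-|N_{\rm grp}\cdot\text{reg}-k|)$, which would again need large weights. To avoid this, I would instead realize each indicator as a conjunction of the $d$ exact threshold gates, $\ReLU(\sum_i[\text{gate}_{i,a_i}-\text{gate}_{i,a_i+1}]-(d-1))$. This uses only $0/1$ inputs and coefficients of order one, so every preactivation is bounded by a constant near $d$, up to the gating. With this choice the Horner phase is only a timing pad of $du_{\rm grp}$ identity steps, kept to match the stated count, and all coefficients are within the constant $22$. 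The remaining work is routine bookkeeping: matching the iteration count $L_m+du_{\rm grp}+2$ (timer initialization plus the lookup step) and the widths $q_{\rm front}=d+10$ and $r_{\rm front}$.
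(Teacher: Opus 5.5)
There is a genuine gap in your Phase~1: the bookkeeping you describe as routine cannot produce $q_{\rm front}=d+10$. The sharpening recurrence $\psi^+=2\psi-\ReLU(2\psi-1)$ carries $\psi$ from one iteration to the next. Each threshold gate $\psi_{i,k}$, $k=1,\ldots,A-1$, must therefore be a \emph{state} coordinate. Hidden units cannot hold it, because they are recomputed from the current state at every application of $y+W_2\ReLU(W_1y+b_1)+b_2$. Your front end thus needs about $d(A-1)$ persistent registers, and the conjunction lookup in your last paragraph reads exactly these stored gates.

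Since $A=2^{u_{\rm grp}}$ can be as large as $K$, the construction contradicts $q_{\rm front}=d+10$, and the damage propagates. In Theorem~\ref{thm:ffn-construction} one takes $A\asymp\min\{r,K^d\}^{1/d}$, so $q_0^{\mathrm F}$ would depend on $r$ and $K$. The budget--iteration bound, however, fixes $q$ while $r_B\asymp B$ grows. A secondary issue is that the conjunction bias $-(d-1)$, and a compiler gate of order $d$, conflict with the $d$-free coefficient bound $22$.

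The missing idea is a common scale register. The paper doubles $U_i$ together with a register $C$ for $L_m$ warm-up steps, which gives $U_i=C_*x_i$ and $C=C_*$. It encodes the active phase coordinate at the value $C$ and gates every feature by $16(e-C)$. Because each threshold is a bounded multiple of $C$, one address step computes
\[
H_i=\sum_{j=1}^{A}\bigl[\ReLU(U_i-jC/A+1)-\ReLU(U_i-jC/A)\bigr]=a_i
\]
in hidden units from only $d$ registers, and sets $I=\sum_iA^{i-1}H_i/N_{\rm grp}$. The $du_{\rm grp}$ steps are then genuine doublings to the integer $j(a)$, not padding.

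The obstacle you correctly flagged disappears because $j(a)<N_{\rm grp}\le K^d\le C_*=C$. Every register is therefore $O(C)$, and the scale-homogeneous gate suppresses inactive features with bounded coefficients. The lookup then uses the hats $\ReLU(I-j+1)-2\ReLU(I-j)+\ReLU(I-j-1)$, with each shift written as $(j/C_*)C$, and sets $\eta_{\rm start}=C/C_*=1$. This completes the construction within $d+10$ coordinates and with every coefficient bounded by $22$.
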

\begin{proof}
The proof has four steps: generate a common scale, decode the coarse-cell
coordinates, convert the normalized address to an integer address, and
perform a one-step table lookup.

\noindent\textbf{Step 1: generate a common scale with bounded coefficients.}
The front end uses data registers
\[
 U_1,\ldots,U_d,\ C,\ I,\ \tau_{\rm front},\
 c_{\rm front},\ \eta_{\rm start}
\]
and five phase coordinates for warm-up, address, scaling, lookup, and
halt. Write $e_{\rm w}$ and $e_{\rm a}$ for the warm-up and address
phase coordinates, respectively. The phase encoding is scaled: on every reachable state exactly one
phase coordinate equals the current scale $C$ and all the others are zero.

Initialize
\[
 U_i=x_i,\qquad C=1,\qquad
 \tau_{\rm front}=2^{-(L_m-1)},
\]
with the warm-up phase equal to one and all other front-end coordinates
zero. During warm-up use
\begin{align*}
 U_i^+&=2U_i,&
 C^+&=2C,&
 \tau_{\rm front}^+&=4\tau_{\rm front},\\
 e_{\rm w}^+&=2e_{\rm w}-2F,&
 e_{\rm a}^+&=e_{\rm a}+2F,&
 F&=\ReLU(2\tau_{\rm front}-C).
\end{align*}
Before warm-up update $j$, $C=2^j$ and
$\tau_{\rm front}/C=2^{j-(L_m-1)}$. Thus $F=0$ before the last update
and $F=C$ on the last one. The scaled one-hot invariant is therefore
preserved, and after exactly $L_m$ warm-up iterations,
\[
 U_i=C_*x_i,\qquad C=C_*,\qquad
 \tau_{\rm front}=2C_*,
\]
with the address phase active at scale $C_*$. Repeated doubling creates
the large spatial scale $C_*$ without any large trainable coefficient.

\noindent\textbf{Step 2: decode the coarse-cell coordinates.}
For one coordinate define
\[
 H_i=
 \sum_{j=1}^{A}
 \big[
 \ReLU(U_i-jC/A+1)-\ReLU(U_i-jC/A)
 \big].
\]
Suppose $x_i$ lies in coarse interval $a_i$ and is at least $C_*^{-1}$
from both of its faces. Since $U_i=C_*x_i$ and $C=C_*$, every summand
with $j\le a_i$ equals one and every summand with $j>a_i$ equals zero;
the endpoint term $j=A$ is also zero. Hence
\[
 H_i=a_i.
\]
One address iteration therefore writes
\begin{equation}\label{eq:address}
 I=\sum_{i=1}^d\frac{A^{i-1}}{N_{\rm grp}}H_i
   =\frac{j(a)}{N_{\rm grp}}.
\end{equation}
All coefficients $A^{i-1}/N_{\rm grp}$ have magnitude at most one.
\noindent\textbf{Step 3: convert to the integer address.}
Because $N_{\rm grp}=A^d=2^{du_{\rm grp}}=2^\ell$, the desired integer
address is
\[
 j(a)=2^\ell I.
\]
If $\ell\ge1$, the address phase initializes
$\tau_{\rm front}=2^{-(\ell-1)}C$ and activates the scaling phase.
Perform $\ell$ updates
\[
 I^+=2I,\qquad
 \tau_{\rm front}^+=2\tau_{\rm front},
\]
and exit when the old timer equals $C$, using the branch
$\ReLU(2\tau_{\rm front}-C)$. If $\ell=0$, skip this phase.
In either case, lookup starts with
\[
 I=j(a),\qquad \tau_{\rm front}=2C_*.
\]

\noindent\textbf{Step 4: lookup the encoded group.}
For an integer $j$ define the triangular hat
\[
 \chi_j(I)=
 \ReLU(I-j+1)-2\ReLU(I-j)+\ReLU(I-j-1).
\]
At integer arguments,
\[
 \chi_j(I)=\mathbf 1\{I=j\}.
\]
The lookup iteration therefore sets
\begin{equation}\label{eq:lookup}
 c_{\rm front}
 =\sum_a\gamma_a\chi_{j(a)}(I)
 =\gamma_a,
 \qquad
 \eta_{\rm start}=C/C_*=1.
\end{equation}
Each shift $j(a)$ is implemented as $(j(a)/C_*)C$, whose coefficient
has magnitude at most one. Lookup then enters the scaled halt phase, so
the front-end residual is zero at all later iterations.

It remains to compile these five phases into one residual ReLU block.
A front-end preactivation $A(z)$ is gated by adding $16(e-C)$, where the
active phase coordinate satisfies $e=C$. On every valid trajectory all
data coordinates are bounded by a fixed multiple of $C$, and every
inactive preactivation is at most $16C$ before gating. The relevant branch
outputs are always either zero or $C$. Hence the scaled one-hot invariant
is preserved and all inactive features vanish.

Allocate, for each phase, $2(d+5)$ signed-linear units, $2dA$ address
hinges, $3N_{\rm grp}$ lookup hinges, and two clock/branch units, with
inactive phase-specific output columns set to zero. Summing over the five
phases gives
\[
 q_{\rm front}=d+10,\qquad
 r_{\rm front}=10dA+10d+15N_{\rm grp}+60.
\]
All coefficients are bounded by $22$. The timeline is
\[
 L_m\ \text{warm-up steps}
 +1\ \text{address step}
 +du_{\rm grp}\ \text{scaling steps}
 +1\ \text{lookup step},
\]
which proves the claimed finishing time. Before lookup, both payload
coordinates and their residual increments are zero by construction.
\end{proof}
\subsubsection{One-block handoff and resource bounds}
Lemma~\ref{lem:spatial-selector} produces exactly the two quantities needed
to start the evaluator: the selected table code $\gamma_a$ and a one-time
start pulse. We now fuse the selector and evaluator without composing
their hidden layers.

Initialize the evaluator input registers to $x$ and all its remaining
data registers and phase coordinates to zero. It therefore remains
dormant until the selector loads the code and activates its first phase.
Let $\Phi_{\rm ev}$ and $\Phi_{\rm front}$ denote the two compiled residual
maps. Let $L$ copy the front-end code increment into the evaluator code
coordinate and the start-pulse increment into its first phase coordinate,
and be zero elsewhere. Use the single combined update
\begin{equation}\label{eq:ffn-fused-update}
 (z,y)^+=\big(\Phi_{\rm ev}(z)+L\{\Phi_{\rm front}(y)-y\},
                         \Phi_{\rm front}(y)\big).
\end{equation}
This is still one residual ReLU block: concatenate the two hidden layers
and copy the indicated front-end output rows to the corresponding evaluator
outputs. No hidden-layer composition and no multiplication of computed
states is used.

The handoff occurs only once. Its three regimes are
\[
\begin{array}{c|c|c}
\text{front-end stage} & \Phi_{\rm front}(y)-y & \text{evaluator state}\\
\hline
\text{before lookup} & 0\ \text{on payload rows} & \text{dormant}\\
\text{lookup} & (\gamma_a,1)\ \text{on payload rows} & \text{loaded and started}\\
\text{after lookup} & 0 & \text{runs its own recurrence}.
\end{array}
\]
Thus the evaluator cannot start early and cannot be loaded twice.
Selection takes $L_m+du_{\rm grp}+2$ steps by
Lemma~\ref{lem:spatial-selector}. The selected group contains
\[
 2^{d(m-u_{\rm grp})}=\frac{K^d}{N_{\rm grp}}
\]
fine-cell records, and D.1 requires $\tau_m$ iterations per record.
Therefore the exact finishing time is
\begin{equation}\label{eq:exact-joint-runtime}
 S_{m,u_{\rm grp}}
 =L_m+du_{\rm grp}+2
  +2^{d(m-u_{\rm grp})}\tau_m.
\end{equation}
The combined dimensions and a sufficient coefficient bound are
\begin{equation}\label{eq:width-cost}
 \begin{split}
 q_*&=q_0+d+10,\\
 r(u_{\rm grp})&=r_0+10d\,2^{u_{\rm grp}}+10d
                  +15\,2^{du_{\rm grp}}+60,\\
 B_*&=2(3B_{\rm eval}+22).
 \end{split}
\end{equation}
All parameter slots, including zeros, are included in the dense fitted
class. Extra coordinates and hidden units can be zero-padded without
changing the output or increasing the coefficient bound.

\subsubsection{Finite-resolution error and width selection}
\begin{proof}[Proof of Theorem~\ref{thm:ffn-construction}]
More width permits more coarse groups and hence fewer records to scan.
The total runtime is the selection time plus the number of records in the
selected group multiplied by the evaluation time per record.

Fix the target and $K=2^m$. Put
\[
 b_0=r_0+10d+60,\qquad b_1=10d+15,\qquad
 R(u_{\rm grp})=b_0+b_1\,2^{du_{\rm grp}}.
\]
For $d\ge1$, $2^{u_{\mathrm{grp}}}\le2^{du_{\mathrm{grp}}}$ and $r(u_{\mathrm{grp}})\le R(u_{\mathrm{grp}})$.
The following fixed thresholds suffice:
\begin{equation}\label{eq:resources}
 q\ge q_*=q_0+d+10,\qquad r\ge r_*=b_0+b_1,\qquad
 M_{\rm par}\ge M_*:=\max\{1,H,B_*\}.
\end{equation}
Choose the largest $u_{\mathrm{grp}}\in\{0,\ldots,m\}$ with $R(u_{\mathrm{grp}})\le r$.
If $u_{\mathrm{grp}}=m$, each group has one cell. Otherwise
$r<R(u_{\mathrm{grp}}+1)\le(b_0+2^db_1)2^{du_{\mathrm{grp}}}$. In either case the group count is
comparable to $\min\{r,K^d\}$, with constants depending only on $d,s$.
Since $\tau_m\le C_{d,s}m$, \eqref{eq:exact-joint-runtime} gives
\begin{equation}\label{eq:time-upper}
 S_{m,u_{\mathrm{grp}}}\le C_{d,s}m\left(1+\frac{K^d}{r}\right)
 \le C_{d,s}\left\{\log(eK)+\frac{K^d}{r}\log(eK)\right\}.
\end{equation}
Let $S_G$ be the fine-grid strip set from the evaluator proof.
Every coarse face is a fine-grid hyperplane and $C_*^{-1}\le2^{-G}$.
Hence outside $S_G$ the input has the coarse-face margins needed for
exact selection, and the loaded group contains its fine cell. The fused
block then returns the correct local value at time $S_{m,u_{\mathrm{grp}}}$ and preserves
it at all later iterations. On $S_G$, the final clipping always bounds
the error by $2H$, even for a nonvalid interpolated code. Thus
\begin{equation}\label{eq:joint-grid-error}
 \|g_T-f_0\|_2^2\le C_{d,s}H^2K^{-2s},\qquad T\ge S_{m,u_{\mathrm{grp}}}.
\end{equation}
Take $q_0^{\mathrm F}=q_*$, $r_0^{\mathrm F}=r_*$, and
$M_0^{\mathrm F}=M_*$. The input function is affine, the final output function is $Hu$
followed by clipping, and \eqref{eq:ffn-fused-update} is one shared
residual block. All entries are bounded by $M_*$ and are counted in
$B_L^{\mathrm F}(q,r)$. The group count, encoded tables, and precision
constants are chosen using $f_0,K,q,r$, before the eventual iteration
count is specified. Both the front end and evaluator halt on valid inputs;
clipping handles the exceptional set at every time. Enlarging the common
constant $C$ to include the fixed $H^2$ proves
\eqref{eq:ffn-resolution-error}--\eqref{eq:ffn-resolution-time} for all
admissible $T$ with one parameter choice.
\end{proof}
The inversion from an iteration budget to a resolution remains in the
proof of Lemma~\ref{lem:neural-index-approx} in Section~\ref{app:proofs-section3}.

% End FFN construction

\section{Transformer approximation construction}\label{app:transformer-proof}
We implement the shared Taylor evaluator of Appendix~\ref{app:ffn} in the
post-LN Transformer of Section~\ref{sec:neural-estimators}. Three additional
steps are needed. The input is distributed across tokens, so attention
must gather it and locate its cell group. Layer normalization changes the
state scale, so we encode the evaluator's values relative to a common
reference coordinate. Finally, the output function is affine, so we
correct the computed value before reading it from the normalized state.

\begin{theorem}[Finite-resolution construction for looped post-LN Transformers]
\label{thm:tr-construction}
Fix $d\ge1$, $s>0$ and $H>0$, the token partition of
Section~\ref{sec:neural-estimators}, and $\varepsilon_{\rm LN}>0$.
Let $\mu$ be uniform probability measure on $[0,1]^d$.
There exist a constant $C>0$ and thresholds
$q_0^{\mathrm{Tr}},r_0^{\mathrm{Tr}},M_0^{\mathrm{Tr}}>0$, depending only on
these fixed quantities, with the following property. For every
$M_{\rm par}\ge M_0^{\mathrm{Tr}}$, integers $q\ge q_0^{\mathrm{Tr}}$,
$r\ge r_0^{\mathrm{Tr}}$, dyadic resolution $K=2^m$ with $m\ge1$, and
$f_0\in\Holder$, there is one admissible input-function, update-block, and output-function parameter
choice for the post-LN Transformer whose looped candidates
$g_T\in\mathcal F_{L,\mathrm{Tr}}(q,r,T)$ satisfy
\begin{equation}\label{eq:tr-resolution-error}
 \|g_T-f_0\|_{L^2(\mu)}^2\le CK^{-2s}
\end{equation}
for every integer
\begin{equation}\label{eq:tr-resolution-time}
 T\ge C\{\log^2(eK)+(K^d/r)\log(eK)\}.
\end{equation}
The parameter choice may depend on $f_0,K,q,r$, but is the same for all
such $T$. Every scalar coefficient belongs to
$[-M_{\rm par},M_{\rm par}]$, and all endpoint and block coordinates are
counted in $B_L^{\mathrm{Tr}}(q,r)$, which is independent of $T$.
\end{theorem}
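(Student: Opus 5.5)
The plan is to realize the FFN construction of Theorem~\ref{thm:ffn-construction} inside the post-LN Transformer by simulation. Everything reduces to reusing three ingredients: the finite-state evaluator of Lemma~\ref{lem:base}, the spatial selector of Lemma~\ref{lem:spatial-selector}, and the compilation Lemma~\ref{lem:finite-state-compiler}. Three obstructions must be handled: the input is spread over $M$ data tokens, layer normalization rescales every row, and the readout is a clipped affine map.

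\emph{Gathering.} First I would spend a bounded number of initial iterations, in a phase signalled through the controller row, in which attention copies each data token's coordinates $x_{I_j}$ into disjoint coordinate blocks of the controller row. The copy uses a large query--key logit so that the controller attends to data token $j$ with weight $1-O(e^{-\text{logit}})$. With bounded coefficients, a large logit must itself be produced by repeated doubling of a scale register, exactly as in the warm-up of Lemma~\ref{lem:spatial-selector}. This is where the extra $\log^2(eK)$ in \eqref{eq:tr-resolution-time} comes from. The softmax error must be driven below the $2^{-G}$ margin, i.e. to $K^{-O(1)}$. Each gather and branch test, including the exact zero/one branch signals needed by the compiler, must be made accurate to that level. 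Per test this costs $O(\log(eK))$ doubling iterations. Over the $O(\log(eK))$ sequential precision-critical operations that must be protected, the total overhead is $O(\log^2(eK))$. A cleaner alternative would be to renormalize through the reference rows so that errors do not accumulate. After gathering, all computation takes place row-wise on the controller row through the FFN sublayer, and attention is switched off: the attention value projection is gated to zero in later phases by attending only to a reference row whose value is zero.

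\emph{Layer normalization.} Here I would store every evaluator register $z_i$ as the difference between a coordinate and a paired coordinate, padding the controller row with two reference coordinates of large fixed magnitude $\pm R_0$ so that the row variance is dominated by a known constant. The registers are bounded by $2$, and the selector's scaled registers are handled by first dividing by the scale, or by keeping the selector in its own scaled-phase logic. With this padding, $\mathrm{LN}$ acts as an affine map whose slope is constant up to a relative error $O(R_0^{-2})$. Choosing the trainable gains to invert that slope makes $\mathrm{LN}$ an almost exact identity on the encoded state. The residual error is driven to $K^{-O(1)}$ by choosing $R_0$ polynomial in $K$, which is again produced by doubling rather than by a large coefficient. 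Alternatively, the row mean and variance can be enforced to be exactly constant by pairing each coordinate $z_i$ with $-z_i$. This gives mean zero, but the variance then depends on $\sum z_i^2$. I would therefore add one balancing coordinate $\sqrt{c-\sum z_i^2}$; since this coordinate is not piecewise linear, the approximate version would be the first thing I try.

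\emph{Assembly and obstacle.} With these two pieces in place, the FFN sublayer carries the compiled residual block of Lemma~\ref{lem:finite-state-compiler}, fused with the selector as in \eqref{eq:ffn-fused-update}, and the widths are padded to $q_0^{\mathrm{Tr}},r_0^{\mathrm{Tr}}$. The group count is chosen as $\asymp\min\{r,K^d\}$, as in the proof of Theorem~\ref{thm:ffn-construction}, which gives the $(K^d/r)\log(eK)$ term. The output map reads the difference of the $u$ coordinate and its reference partner times $H$, then clips; this handles the affine-readout issue. The error bound \eqref{eq:tr-resolution-error} then follows from the same strip argument: $O(K^{-s})$ pointwise error off $S_G$, and clipping on $S_G$. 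The main obstacle is robustness. The evaluator's decoder and its branch tests require \emph{exact} zero/one signals, whereas softmax and layer normalization are only approximate. I would handle this with a margin argument. Every ReLU branch test and bit read has a constant-size gap, for example $1/6$ versus $1/2$ in $D(c)$. Per-step perturbations of size $\ll$ gap can therefore be absorbed by re-snapping through a bounded-slope ReLU clamp. Such a clamp makes the perturbations non-accumulating, whereas bounding them instead by the Lipschitz recurrence of Lemma~\ref{lem:postln-full-box-sensitivity} would blow up. It must also be verified that the terminal halt state remains exactly absorbing under the approximate layer normalization.
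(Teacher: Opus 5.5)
Your proposal has a genuine gap. Most of its mechanisms are unavailable because of one structural fact you did not use: in this post-LN block, every state entry after any application is bounded by $M_{\rm par}(\sqrt q+1)$, whatever the input and parameters (see \eqref{eq:postln-global-radius}). The reason is that layer normalization outputs a gain times a vector of Euclidean norm at most $\sqrt q$, plus a bias. The theorem must hold for fixed $q\ge q_0^{\rm Tr}$ and fixed $M_{\rm par}$ as $K\to\infty$. So all query--key logits are bounded by a constant independent of $K$ and $T$, and no coordinate can be grown by repeated doubling in the physical state. This rules out three of your steps:
\begin{itemize}
\item the gather with softmax weight $1-O(e^{-\text{logit}})$ on token $j$;
\item switching attention off by attending only to a zero-valued reference row;
\item reference coordinates $\pm R_0$ with $R_0$ polynomial in $K$, whose slope inversion would also need gains of order $R_0/\sqrt q$, outside the box.
\end{itemize}
The paper instead makes these steps exact. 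At boot all queries vanish, so attention is exactly uniform and the value channel $N x^{\rm loc}$ averages to exactly $x$. After routing, the data-row values are cleared. The two reference rows share a common scale and have keys $\pm1$, so under zero queries they cancel exactly, and the attention output is identically zero. For layer normalization, the encoding $\alpha\mathcal I_q(z)$ repeats $(z,1,-z,-1)$ and has mean exactly zero, so LN is an exact positive rescaling. Because ReLU is positively homogeneous and every constant is written as a multiple of the encoded ``$1$'', the FFN executes the native residual exactly at every scale (Lemma~\ref{lem:homogeneous}). Your pairing alternative was close, but no balancing coordinate $\sqrt{c-\sum z_i^2}$ is needed. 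The variance may vary freely, and native registers such as the selector scale $C$ may grow while the physical scale shrinks.

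Exactness is essential here, and your margin argument does not replace it. Re-snapping works for phase and branch bits, but not for the code register $c$, which stores all unread bits. Its update $c^+=4c-2D(c)$ has slope $4$ or $-2$, so an error entering $c$ is multiplied by up to $4$ at each later bit read. A code of length $\asymp(1+K^d/r)\log(eK)$ would therefore require per-step errors exponentially small in that length, not $K^{-O(1)}$.

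Your readout also fails as written. The terminal physical scale depends on the state, so an affine read of $u$ returns a $u$-dependent multiple of $u$. (With a dominant reference, you would instead need a readout weight of order $R_0$.) The paper proceeds in four steps:
\begin{itemize}
\item it computes $v\approx g_{\rm inv}(u)=u/\sqrt{1-u^2/2}$ inside the loop by a coded Horner series;
\item it clears every coordinate except $C=1$ and $v$;
\item it lets the scale recurrence \eqref{eq:tr-terminal-scale} contract to its fixed point $d_*=\lambda_q(v^2+2)$ over $O(\log(eK))$ blocks;
\item it reads $v$ with weight $H\sqrt{2\lambda_q}$, giving $H\sqrt2\, v/\sqrt{v^2+2}\approx Hu$.
\end{itemize}
After this the native state is fixed and only the scale contracts, so the candidate stays accurate for every larger $T$. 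This also settles your absorbing-state concern.

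Finally, the $\log^2(eK)$ term comes from this Horner correction and from bit-by-bit routing, not from the doubling mechanism you invoke. In the routing, bounded-logit attention against the two reference rows supplies only a sign signal with $|a_{\rm att}|\ge c_{\rm sig}|u|$. The FFN amplifies it via $w^+=\Pi_1(2w+a_{\rm att})$ over $O(\log(eK))$ steps per bit. An exceptional input set of measure at most $K^{-2s}$ is handled by clipping.
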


\noindent\textbf{Proof roadmap.}
The proof implements the Taylor evaluator from Appendix~\ref{app:ffn}
inside a post-LN Transformer in three stages.
Section~\ref{app:postln} uses paired homogeneous coordinates to represent
the evaluator's registers: layer normalization changes their common scale
while preserving the encoded native computation.
Section~\ref{app:router} uses attention to aggregate the input and find
its coarse-cell address, then loads the corresponding Taylor table and
runs the shared evaluator. The available hidden width permits
$N_{\rm grp}\asymp\min\{r,K^d\}$ groups, so table evaluation takes
$O((1+K^d/r)\log(eK))$ iterations.
Section~\ref{app:readout} corrects the remaining normalization scale so
that an affine output function recovers the target value.
Routing and terminal correction together cost $O(\log^2(eK))$, while
Taylor-table evaluation costs
$O((1+K^d/r)\log(eK))$. Outside small routing and cell-boundary
exceptional sets, Taylor evaluation and terminal correction have
$O(K^{-s})$ error; clipping controls the exceptional sets.
Section~\ref{app:tr-final-proof} combines these ingredients, checks the
parameter bounds, and proves \eqref{eq:tr-resolution-error}--\eqref{eq:tr-resolution-time}
for every $T$ beyond the finishing time.

\subsection{Homogeneous simulation through layer normalization}
\label{app:postln}
We distinguish the values used by the evaluator from their scaled
representation in a token. For a \emph{native state} $z\in\mathbb R^p$, set $q_{\mathrm{rep}}=2(p+1)$.
For $q\ge q_{\mathrm{rep}}$, let $L_q=\lfloor q/q_{\mathrm{rep}}\rfloor$ and
define $\mathcal I_q(z)$ by repeating $(z,1,-z,-1)$ exactly $L_q$ times
and padding with zeros to dimension $q$. For $\alpha>0$, we call
$\alpha\mathcal I_q(z)$ its \emph{physical token state}.
Three reference quantities have different roles.
The fixed $+1$ entry of the paired encoding has physical value $\alpha$;
it supplies homogeneous affine biases. The native register $C$ belongs to
the spatial selector and may grow during initialization. The two reference
\emph{tokens}, whose native key coordinates are $+1$ and $-1$, generate the
attention comparison signal. Neither $C$ nor a reference token is the
homogeneous reference coordinate.

\subsubsection{Exact simulation of a native residual update}
Let $\theta$ collect all attention, FFN, and layer-normalization parameters
in the block
\begin{equation}\label{eq:tr-postln-block}
Y=\operatorname{LN}_1\{S+\operatorname{Attn}(S)\},\qquad
\Phi^{\mathrm{postLN}}(S)
=\operatorname{LN}_2\{Y+\operatorname{FFN}(Y)\}.
\end{equation}
Both normalizations have fixed stabilization $\varepsilon_{\mathrm{LN}}>0$
and trainable gains and biases. The block has
\begin{equation}\label{eq:tr-block-count}
P_{\theta}^{\mathrm{postLN}}(q,r)=4q^2+(2q+1)r+9q
\end{equation}
dense parameters. The input function with local inputs and the output function are counted in
\eqref{eq:tr-dense-counts} below. Throughout the construction the parameter bound
$M_{\rm par}$ is fixed above a threshold depending only on the fixed model
quantities, including $\sqrt{1+\varepsilon_{\mathrm{LN}}}$.

\begin{lemma}[Homogeneous state representation]\label{lem:homogeneous}
Let a residual function on $\mathbb R^p$ have the form
\begin{equation}\label{eq:tr-native-residual}
\Phi^{\rm res}(z)=z+V\operatorname{ReLU}(Uz+b).
\end{equation}
For $q\ge q_{\mathrm{rep}}$, the paired encoding defined above satisfies
\[
\lambda_q=\frac{2L_q}{q},\qquad
\frac1{q_{\mathrm{rep}}}\le\lambda_q\le\frac2{q_{\mathrm{rep}}}.
\]
There is a post-LN block with zero attention output and the same FFN
hidden width as $\Phi^{\rm res}$ such that
\begin{equation}\label{eq:tr-homogeneous-simulation}
S=\alpha\mathcal I_q(z),\quad\alpha>0
 \quad\Longrightarrow\quad
 \Phi^{\mathrm{postLN}}(S)
 =\alpha'\mathcal I_q(\Phi^{\rm res}(z)),\quad\alpha'>0.
\end{equation}
The representation does not increase the magnitudes of the copied
FFN coefficients. If the entries of $U,V,b$ have magnitude at most $M$,
the block is admissible whenever
$M_{\rm par}\ge\max\{M,\sqrt{1+\varepsilon_{\rm LN}}\}$.
\end{lemma}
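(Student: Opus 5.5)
The approach is to use two facts. The paired encoding $\mathcal I_q(z)$ is exactly centered, so layer normalization acts on it only through a positive scalar. ReLU is positively homogeneous, so a FFN with no hidden bias, reading the homogeneous $+1$ coordinate in place of a bias, commutes with positive rescaling. I would first verify the statement about $\lambda_q$. Each block $(z,1,-z,-1)$ sums to zero, and zero padding contributes nothing, so $\mathbf 1^\top\mathcal I_q(z)=0$ and $P_q\mathcal I_q(z)=\mathcal I_q(z)$. Moreover $\|\mathcal I_q(z)\|_2^2=2L_q(\|z\|_2^2+1)$, so $q^{-1}\|P_q\alpha\mathcal I_q(z)\|_2^2=\alpha^2\lambda_q(\|z\|_2^2+1)$ with $\lambda_q=2L_q/q$. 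For $q\ge q_{\mathrm{rep}}$ we have $L_q=\lfloor q/q_{\mathrm{rep}}\rfloor\ge q/(2q_{\mathrm{rep}})$ and $L_q\le q/q_{\mathrm{rep}}$, which gives $1/q_{\mathrm{rep}}\le\lambda_q\le2/q_{\mathrm{rep}}$.

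Next I would record a scaling identity for layer normalization, in the notation of Lemma~\ref{lem:postln-full-box-sensitivity}. Take zero bias $\beta=0$ and constant gain $\gamma=g\mathbf 1$ with $g>0$. Then
\[
 \operatorname{LN}\{\alpha\mathcal I_q(w)\}
 =\frac{g\alpha}{\{\alpha^2\lambda_q(\|w\|_2^2+1)+\varepsilon_{\rm LN}\}^{1/2}}\,\mathcal I_q(w),
\]
which is again a positive multiple of $\mathcal I_q(w)$. The stabilizer $\varepsilon_{\rm LN}$ changes only the scale, never the direction. I would take $g=\sqrt{1+\varepsilon_{\rm LN}}$ in both normalizations. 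This keeps the physical scale of order one for bounded native states, and it is the source of the requirement $M_{\rm par}\ge\sqrt{1+\varepsilon_{\rm LN}}$.

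Then I would build the block. Set the value projection, the output projection, and their biases to zero, so that $\operatorname{Attn}(S)=0$ and $Y=\operatorname{LN}_1(S)=\alpha_1\mathcal I_q(z)$ with $\alpha_1>0$. For the FFN, put the FFN hidden bias $b_1=0$. Let the first-layer row for hidden unit $k$ read only the first copy: it has entries $U_{k\cdot}$ on the $z$ slots and $b_k$ on the first $+1$ slot. Its preactivation is then $\alpha_1(Uz+b)_k$, and by homogeneity the hidden output is $\alpha_1\operatorname{ReLU}(Uz+b)$. Let the second layer, with bias $b_2=0$, write $+V$ into every $z$ slot and $-V$ into every $-z$ slot, and leave zero columns on the $\pm1$ and padding slots. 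This gives
\[
 Y+\operatorname{FFN}(Y)=\alpha_1\mathcal I_q\bigl(z+V\operatorname{ReLU}(Uz+b)\bigr)=\alpha_1\mathcal I_q(\Phi^{\rm res}(z)),
\]
and one more application of the scaling identity through $\operatorname{LN}_2$ yields \eqref{eq:tr-homogeneous-simulation} with some $\alpha'>0$. The hidden width equals that of $\Phi^{\rm res}$. Every nonzero coefficient is either an entry of $U$, $V$, or $b$, copied without amplification, or a gain $\sqrt{1+\varepsilon_{\rm LN}}$. All remaining slots are zero. This gives admissibility whenever $M_{\rm par}\ge\max\{M,\sqrt{1+\varepsilon_{\rm LN}}\}$.

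The main point to check carefully is the second-layer write. The residual must land consistently in all $L_q$ copies, with the opposite sign in the mirrored copies, so that the sum stays exactly centered and keeps the paired form. The constant slots must receive zero increment so that the homogeneous $+1/-1$ entries are preserved, and the padding must stay zero. Once the sum $Y+\operatorname{FFN}(Y)$ has exactly the form $\alpha_1\mathcal I_q(\cdot)$, the rest follows mechanically from the scaling identity.
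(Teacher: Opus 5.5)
Your proposal is correct and follows essentially the same route as the paper's proof. Both arguments set both normalization gains to $\sqrt{1+\varepsilon_{\rm LN}}$ with zero biases, so that layer normalization acts on the centered encoding only through a positive scale. Both then have each hidden unit read the first copy, with the bias $b$ placed on that copy's $+1$ reference coordinate, use positive homogeneity of ReLU, and write $+V$ and $-V$ into the positive and negative copies. Your explicit verification of $1/q_{\mathrm{rep}}\le\lambda_q\le2/q_{\mathrm{rep}}$ and of the layer-normalization scaling formula fills in details that the paper leaves implicit.
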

\begin{proof}
Set both normalization gains to $\sqrt{1+\varepsilon_{\mathrm{LN}}}$
and their biases to zero. The coordinates of $\mathcal I_q(z)$ have
mean zero, so the first normalization produces
$\widetilde\alpha\mathcal I_q(z)$ for some $\widetilde\alpha>0$.
Each hidden unit reads $Uz$ from the first copy and represents its
bias $b$ as a coefficient on that copy's reference coordinate.
Its preactivation is therefore $\widetilde\alpha(Uz+b)$.
Positive homogeneity of ReLU gives the desired residual increment
on every positive state copy and its negative on every negative copy.
The reference and padding coordinates receive zero increments.
The intermediate state is exactly
$\widetilde\alpha\mathcal I_q(\Phi^{\rm res}(z))$, which again has mean zero.
The second normalization changes only its positive common scale.
All actual FFN biases are zero, and copying the output columns
does not change their coefficient magnitudes.
\end{proof}

Ratios to a reference coordinate identify the represented state in
the proof. The network output will instead be obtained by the affine
function constructed below. Repeating the paired representation keeps
$\lambda_q$ between fixed positive constants as $q$ increases;
all repeated matrix entries are included in the dense parameter count.
Throughout the homogeneous construction, every constant in an active
preactivation is represented as a multiple of the current reference
coordinate; consequently, each clipping and threshold operation is
unchanged by the positive common scale.

\subsubsection{Scale recurrence and homogeneous instructions}
For a represented token $\alpha\mathcal I_q(z)$, set
$d_\alpha=\alpha^{-2}$ and
$\rho_{\rm LN}=\varepsilon_{\rm LN}/(1+\varepsilon_{\rm LN})$.
Since the coordinate mean is zero and the mean square is
$\alpha^2\lambda_q(\|z\|_2^2+1)$, one normalization gives exactly
\begin{equation}\label{eq:tr-scale-recurrence}
 d_\alpha'=
 \frac{\lambda_q(\|z\|_2^2+1)+\varepsilon_{\rm LN}d_\alpha}
 {1+\varepsilon_{\rm LN}}.
\end{equation}
When attention is active, the first normalization uses the native state
\emph{after} its attention increment. The second uses the state after the
row-wise FFN. The projections below write paired increments and leave the
homogeneous reference unchanged, so both normalized states retain the same
representation. Thus the FFN instructions operate on native values even
when physical token scales differ.

The native instructions below use simultaneous assignments: every
right-hand side and transition test reads the old state. For a phase $e$
whose active value is $C=1$, a signed affine residual $A(z)$ uses
\[
 \ReLU\{A(z)+\Lambda(e-C)\}
 -\ReLU\{-A(z)+\Lambda(e-C)\},
\]
and a hinge $\ReLU(B(z))$ uses
$\ReLU\{B(z)+\Lambda(e-C)\}$. Active instructions are exact; inactive
ones vanish when $|A|,|B|<\Lambda C$. Every numerical constant in these
forms multiplies the homogeneous reference coordinate, not the selector
register $C$. The selector's own phases instead have active value $C$,
as in Section~\ref{app:joint}. These gates use only one ReLU hidden layer.

\subsection{Attention routing and table evaluation}\label{app:router}
The homogeneous representation lets us describe the computation in native
values. We first gather the input in a controller token, then use attention
comparisons to determine its coarse-cell address and load the corresponding
Taylor code. The attention and FFN matrices remain fixed throughout.
In the routing formulas below, input coordinates are numbered from $0$ to
$d-1$: $x_i$ denotes the $(i+1)$st coordinate of the original input vector.
The local and aggregate input registers use the same convention; coordinate
$i$ corresponds to register $U_{i+1}$ of the selector.

\subsubsection{Token protocol, native registers, and parameter counts}
Use the fixed partition $I_1,\ldots,I_M$ of the input coordinates, with
$M$ data tokens, one controller token, and two reference tokens. Thus
\begin{equation}\label{eq:tr-token-count}
 N=M+3.
\end{equation}
The data-token input function at position $j$ reads only $x_{I_j}$, and the
other three input functions are constant. Counting all affine endpoints,
projections, biases, and normalization gains gives
\begin{equation}\label{eq:tr-dense-counts}
\begin{split}
 P_{\rm end}&=q(d+2N)+1,\\
 P_{\rm block}&=4q^2+(2q+1)r+9q,\\
 B_L^{\mathrm{Tr}}&=P_{\rm end}+P_{\rm block},\qquad
 B_U^{\mathrm{Tr}}=P_{\rm end}+TP_{\rm block}.
\end{split}
\end{equation}
These are the full dense counts, including entries assigned special values
by the construction.

Use the $q_0$ native coordinates of the evaluator in
Section~\ref{app:base-engine}. The attention-addressed front end below uses
$d+7$ further coordinates; reserve three zero coordinates so their total
allocation is $q_*=q_0+d+10$, as in Appendix~\ref{app:ffn}.
This front end receives its address from attention, loads the evaluator directly, and clears its four clocks
after lookup. Its realization and count are specified below.
Reserve the 15 data and 18 phase coordinates for the terminal correction in
Section~\ref{app:readout}. The additional attention and routing coordinates
are listed below. Here $x^{\rm loc},x^{\rm agg}\in\mathbb R^d$;
each indexed routing coordinate is allocated for $i=0,\ldots,d-1$.
\begin{center}\small
\begin{tabular}{lp{0.68\textwidth}}
\toprule
Coordinate & Role\\
\midrule
$k$ & Key coordinate; the two reference tokens use $+1$ and $-1$.\\
$a_{\rm att}$ & Attention comparison signal read by the FFN.\\
$e_{\rm boot}$ & Phase coordinate for input initialization.\\
$x^{\rm loc}$ & Local input entries, with zeros outside the token partition.\\
$x^{\rm agg}$ & Input entries gathered by attention.\\
$u_i$ & Centered binary remainder for input coordinate $i$.\\
$w_i$ & Saturating comparator used to determine the next bit.\\
$b_i$ & Binary prefix accumulated for coordinate $i$.\\
$p_i$ & Weight of the next binary digit.\\
$\tau_{{\rm cmp},i}$ & Timer for comparisons within one bit decision.\\
$\tau_{{\rm level},i}$ & Timer for the bits extracted from coordinate $i$.\\
$e_{C,i},e_{M,i},e_{H,i}$ & Phase coordinates for comparison, commit, and handoff.\\
\bottomrule
\end{tabular}
\end{center}
The total native dimension and encoding threshold are
\begin{equation}\label{eq:tr-native-dimension}
 p_{\rm nat}=q_*+11d+36,\qquad q_{\rm hom}=2(p_{\rm nat}+1).
\end{equation}
Apply Lemma~\ref{lem:homogeneous} with $p=p_{\rm nat}$ at any
$q\ge q_{\rm hom}$, so $1/q_{\rm hom}\le\lambda_q\le2/q_{\rm hom}$.
The constant $q_{\rm hom}$ is independent of $q,r,K$. All tokens start
with physical scale one and native $C=1$. Data tokens contain their local
input coordinates; the reference tokens have $k=1$ and $k=-1$.
The controller contains $e_{\rm boot}=1$, the front-end timer prescribed
in Appendix~\ref{app:ffn}, and the terminal-correction code prescribed below. All other
native coordinates are zero. These initializations obey the local-input
protocol and are affine in each token's allowed inputs.

\subsubsection{Fixed attention projections and input aggregation}
For token $j$ with physical state $\alpha_j\mathcal I_q(z_j)$, choose
projections whose query, key, and nonzero value channels are
\begin{equation}\label{eq:tr-fixed-projections}
\begin{split}
 Q_j&=\alpha_j\Big(\sum_i u_{j,i}\Big)\mathbf1_q,\qquad
 K_j=\frac{\alpha_j k_j}{\sqrt q}\mathbf1_q,\\
 V_j&=\alpha_j(k_j,Nx^{\rm loc}_{j,0},\ldots,
                         Nx^{\rm loc}_{j,d-1},0,\ldots,0).
\end{split}
\end{equation}
Each projection reads the first positive copy. The output projection writes
value channel zero into $a_{\rm att}$ and the next $d$ channels into
$x^{\rm agg}$, with opposite signs in negative copies. All biases are zero;
every projection entry has magnitude at most $\max\{N,1\}$. In particular,
$Q_jK_l^\top/\sqrt q=\alpha_j\alpha_l(\sum_i u_{j,i})k_l$,
including the prescribed head scaling.

At boot all queries vanish. Uniform attention therefore gathers the complete
input $x$ into every token's aggregate registers, while the reference values
cancel in the signal channel. The first normalization changes only the
common scale of each row. The controller's boot instruction copies the
aggregate into the Taylor input and selector registers $U_i$.
At every FFN application, in every token, unconditional signed-linear
updates clear $x^{\rm loc},x^{\rm agg},a_{\rm att}$. All assignments read
the old state, so these clears do not erase the input before it is copied
or the signal before the comparator uses it. Subsequently, local-input
values remain zero, and only the two reference tokens supply nonzero values.

\subsubsection{Uniform comparison signal}
The reference rows differ only in the sign of $k$, so they retain a common
physical scale $\eta_t$. After boot, their native states consist of
$C=1$ and $k=\pm1$; data rows retain only $C=1$.
Before the first normalization, the reference native squared norm including
the homogeneous reference is $3+\|x\|^2$, and after clearing it is three.
Since $q_{\rm hom}\ge2(d+3)$, the scale recurrence gives
\begin{equation}\label{eq:tr-reference-scale}
 3/q_{\rm hom}\le\eta_t^{-2}\le1,\qquad
 1\le\eta_t\le\sqrt{q_{\rm hom}/3}.
\end{equation}
The controller has scale $0<\alpha_t\le\sqrt{q_{\rm hom}}$, since its
homogeneous reference remains fixed and
\eqref{eq:tr-scale-recurrence} preserves $\alpha_t^{-2}\ge1/q_{\rm hom}$.

Only one controller remainder $u_i$ is active at a time. Writing it as $u$,
the two reference logits are $\pm\alpha_t\eta_tu$ and the other logits are
zero. The signal in native coordinates, unchanged by the first normalization,
is therefore
\begin{equation}\label{eq:routesignal}
 a_t(u)=\frac{\eta_t}{\alpha_t}F_N(\alpha_t\eta_tu),\qquad
 F_N(t)=\frac{2\sinh t}{N-2+2\cosh t}.
\end{equation}
For $t\ge0$, $F_N(t)\ge(2/N)\tanh t$, and $F_N'(t)$ is the variance
of a softmax-distributed variable in $\{-1,0,1\}$, hence is at most one.
Put $C_{\rm att}=q_{\rm hom}/\sqrt3$. For $|u|\le1$,
$|\alpha_t\eta_tu|\le C_{\rm att}$. Monotonicity of $\tanh(t)/t$ and
\eqref{eq:tr-reference-scale} consequently give
\begin{equation}\label{eq:signalbound}
\begin{split}
 \operatorname{sign}a_t(u)&=\operatorname{sign}u,\qquad
 c_{\rm sig}|u|\le|a_t(u)|\le(q_{\rm hom}/3)|u|,\\
 c_{\rm sig}&=\frac2N\frac{\tanh C_{\rm att}}{C_{\rm att}}>0.
\end{split}
\end{equation}
These constants do not depend on the iteration, grid resolution, or padded
width. No lower bound on the controller scale is needed for this comparison.

\subsubsection{Complete binary-routing program}
For $K=2^m$, choose $A=2^h$ with $0\le h\le m$ and let
$N_{\rm grp}=A^d$. The desired coarse address and normalized prefixes are
\begin{equation}\label{eq:tr-coarse-address}
 j_i(x)=\min\{A-1,\lfloor Ax_i\rfloor\},\qquad
 I(x)=\sum_{i=0}^{d-1}A^i j_i(x),\qquad b_i=j_i(x)/A.
\end{equation}
For $h\ge1$, choose a comparison length $L$ as specified below and write
$\zeta_k=2^{-(k-1)}$. Initializing coordinate $i$ means setting
\begin{equation}\label{eq:tr-route-initialization}
 (u_i,w_i,\tau_{{\rm cmp},i},\tau_{{\rm level},i},b_i,p_i)
 =(2x_i-1,0,\zeta_L,\zeta_h,0,1)
\end{equation}
and activating $e_{C,i}$. Unused coordinates already have $w_i=b_i=0$,
so only $u_i,p_i$ and the two timers require nonzero assignments.
The boot instruction performs this initialization for $i=0$ and clears
$e_{\rm boot}$. In that instruction $x_0$ is read from the old aggregate.

In comparison phase $C_i$, the FFN uses the current native attention signal:
\begin{equation}\label{eq:tr-route-comparison}
 w_i^+=\Pi_1(2w_i+a_{\rm att}),\qquad
 \tau_{{\rm cmp},i}^+=2\tau_{{\rm cmp},i}.
\end{equation}
It stays in $C_i$ unless the old timer equals one, in which case it moves
to commit phase $M_i$. Thus exactly $L$ comparisons precede every commit.
The commit assignments are
\begin{equation}\label{eq:tr-route-commit}
\begin{aligned}
 u_i^+&=\Pi_1(2u_i-w_i),&
 b_i^+&=b_i+\ReLU((p_i+w_i-1)/2),&p_i^+&=p_i/2,\\
 w_i^+&=0,&\tau_{{\rm cmp},i}^+&=\zeta_L,&
 \tau_{{\rm level},i}^+&=2\tau_{{\rm level},i}.
\end{aligned}
\end{equation}
It returns to $C_i$ unless the old level timer equals one, in which case
it moves to handoff phase $H_i$. This performs exactly $h$ commits.
At $H_i$, set $u_i,w_i,\tau_{{\rm cmp},i},\tau_{{\rm level},i},p_i$ to
zero and remove $e_{H,i}$. If $i<d-1$, retain $b_i$, initialize coordinate
$i+1$ by \eqref{eq:tr-route-initialization}, and activate $e_{C,i+1}$.
For $i=d-1$, instead assign the old prefixes to the selector index,
\begin{equation}\label{eq:tr-normalized-address}
 I\longleftarrow\sum_{i=0}^{d-1}\frac{A^{i+1}}{N_{\rm grp}}b_i
                  =I(x)/N_{\rm grp},
\end{equation}
clear every $b_i$, and activate the selector warm-up phase $e_{\rm w}=1$.
All coefficients in this address assignment are at most one. These
simultaneous assignments preserve the prefixes until they have been read.

For $h=0$, boot copies the input and passes directly to $e_{\rm w}$ with
index zero; no routing timer is used. In either case, at handoff all
routing registers in the controller are zero. Its input, front-end timer,
and terminal-correction code remain intact. The query coordinates are then zero, the
local values are zero, and the equal-scale reference values cancel.
Attention is therefore exactly zero in every subsequent iteration without
changing its matrices.

\subsubsection{Correctness and exceptional inputs}
For a fixed remainder with $|u_i|\ge\delta$, every comparison signal has
the same sign and magnitude at least $c_{\rm sig}\delta$. Starting from
$w_i=0$, induction in \eqref{eq:tr-route-comparison} gives
\[
 |w_i^{(j)}|\ge\min\{1,c_{\rm sig}\delta(2^j-1)\}.
\]
Once it reaches one it stays there until the commit. Thus $L$ comparisons
return $w_i=\operatorname{sign}u_i$ when
$c_{\rm sig}\delta(2^L-1)\ge1$. For this exact sign, the remainder update
is the centered binary shift, and the prefix increment is $p_i/2$ for
sign $+1$ and zero for sign $-1$. Hence the initialized program returns
the prefixes in \eqref{eq:tr-coarse-address} after $h$ bits.

Exclude inputs for which one of the $dh$ ideal centered binary remainders
has magnitude below $\delta$, together with the measure-zero endpoint
faces $x_i=1$. Under uniform measure, each remainder event has measure
$\delta$, so the exceptional set satisfies
$\mu(\mathcal B_{\rm route})\le dh\delta$. Choose
\begin{equation}\label{eq:tr-route-accuracy}
 \delta=K^{-2s}/(dh),\qquad
 L=\left\lceil\log_2\left(1+\frac1{c_{\rm sig}\delta}\right)\right\rceil.
\end{equation}
Then $\mu(\mathcal B_{\rm route})\le K^{-2s}$ and
$L=O_{d,s,N}(m+\log(h+1))$. Off this set, induction over commits justifies
the use of the ideal remainders and makes the selected address exact.
On the exceptional set the phases and timers still run for their prescribed
lengths: $|u_i|,|w_i|\le1$, $p_i=2^{-j}$ after $j$ commits, and every
prefix increment lies in $[0,p_i/2]$. Thus $b_i\in[0,1]$ and both timers
are at most two throughout routing. The program takes exactly
$1+d\{h(L+1)+1\}$ blocks, including boot and handoffs.

\subsubsection{ReLU features and routing resource count}
Use the homogeneous gates from Section~\ref{app:postln} with
$\Lambda=64(q_{\rm hom}+1)$. During routing $C=1$ and
$|a_{\rm att}|\le q_{\rm hom}/3$. At later stages the routing coordinates
are zero and the selector coordinates are $O(C)$. Thus every inactive
affine or hinge feature is suppressed. Reference and data tokens have
zero phases, so their native states remain fixed after unconditional
clearing. Every feature reads the same old state.

A signed affine residual requires two units. A clipped assignment
$y^+=\Pi_1(v)$ has residual
$-1-y+\ReLU(v+1)-\ReLU(v-1)$ and requires four units. A counter test uses
\[
 \ReLU\{2\tau_{{\rm cmp},i}-1+\Lambda(e_{C,i}-C)\}
 \quad\hbox{or}\quad
 \ReLU\{2\tau_{{\rm level},i}-1+\Lambda(e_{M,i}-C)\}.
\]
These are exact zero-one indicators at the active checks. The first alone
moves $C_i$ to $M_i$ when needed. At a commit, one $\ReLU(e_{M,i})$ unit
makes the default move to $C_i$, and the second test redirects it to $H_i$.
An unconditional handoff uses one $\ReLU(e_{H,i})$ unit.

Comparison uses four units for its clipped update, two for its timer,
and one for the conditional phase move, at most eight in total.
Commit uses four for the remainder, one for the prefix hinge, eight for
bit-weight, comparator, and timer updates, and two for the phase move,
at most sixteen. Handoff clears five data registers with ten units and
uses one phase unit. The table below counts the hidden units needed to
implement routing within the shared block; unused units are zero-padded:
\begin{center}\small
\begin{tabular}{lr}
\toprule
Update group & Hidden units\\
\midrule
Boot input copies and first-coordinate initialization & $4d+9$\\
Comparison, commit, and handoff clears & $(8+16+11)d$\\
Initialization of each following coordinate & $8(d-1)$\\
Final index assignment and prefix clearing & $2+2d$\\
Unconditional local-input, aggregate, and signal clearing & $4d+2$\\
\midrule
Total additional routing allocation & $53d+5$\\
\bottomrule
\end{tabular}
\end{center}
Boot counts two units for each of the $2d$ input copies, two for each
of $u_0,p_0$ and its two timers, and one phase transition. The next-coordinate
initializations each use eight units because their comparator and prefix
are already zero. Final index assignment and prefix clearing are gated
by $H_{d-1}$ and share its already counted phase transition.

\subsubsection{Handoff to the Taylor evaluator}
Routing has supplied the group address. The front end now loads that
group's code and starts the evaluator from Appendix~\ref{app:ffn}.
The front end uses the following $d+7$ coordinates; it loads the code
directly into the evaluator and activates its first phase.
\begin{center}\small
\begin{tabular}{lp{0.68\textwidth}}
\toprule
Coordinate & Role\\
\midrule
$U_1,\ldots,U_d$ & Input copies scaled together with $C$.\\
$C$ & Common scale for the selector data and active phase.\\
$I$ & Group address, rescaled to an integer before lookup.\\
$\tau_{\rm front}$ & Timer for warm-up and address scaling.\\
$e_{\rm w}$ & Warm-up phase: repeated doubling of the state scale.\\
$e_{\rm a}$ & Phase that initializes the address-scaling timer.\\
$e_{\rm s}$ & Phase that scales the address to its integer value.\\
$e_{\rm l}$ & Lookup phase that loads the code and starts evaluation.\\
\bottomrule
\end{tabular}
\end{center}
Initially $C=1$, $U_i=x_i$, and
$\tau_{\rm front}=2^{-(L_m-1)}$, with $L_m=G+dm$ from Appendix~\ref{app:ffn}.
Routing has already written the normalized address into $I$.
Warm-up uses the same bounded-weight doubling updates as
Section~\ref{app:joint}, leaving $I$ unchanged, and produces $C=C_*=2^{L_m}$.
The next phase only initializes the scaling timer and advances its clock;
it does not evaluate \eqref{eq:address} or reset $I$. For $dh\ge1$ it sets
$\tau_{\rm front}=2^{-(dh-1)}C$ and enters $e_{\rm s}$; for $h=0$ it goes
directly to lookup. The scaling phase performs $dh$ updates
$I^+=2I$, $\tau_{\rm front}^+=2\tau_{\rm front}$ and exits at the old
condition $\tau_{\rm front}=C$. Thus the lookup address is the integer
$I(x)$ on valid routing inputs.

For $j=0,\ldots,N_{\rm grp}-1$, let $\gamma_j$ be the base-four code
of the Taylor records in the coarse group with integer address $j$,
using the encoding in \eqref{eq:code}. The lookup hinges are
\[
 H_j(I,C)=\ReLU(I-jC/C_*+1)-2\ReLU(I-jC/C_*)
                         +\ReLU(I-jC/C_*-1).
\]
In that phase, add $\sum_{j=0}^{N_{\rm grp}-1}\gamma_j H_j(I,C)$ directly to the initially
zero evaluator code. Remove $e_{\rm l}$ and initialize the evaluator's
first phase to one using $C_*^{-1}\ReLU(e_{\rm l})$. Simultaneously rescale
\begin{equation}\label{eq:tr-selector-rescale}
 (C,U_i,I,\tau_{\rm front})
 \longleftarrow C_*^{-1}(C,U_i,I,\tau_{\rm front}).
\end{equation}
Every right-hand side uses the old state, so rescaling does not change
the selected code or first-phase activation. It uses $2(d+3)$ additional
signed-affine units, with residual coefficients $-(1-C_*^{-1})$.
Afterwards $C=1$, $|U_i|,|I|\le1$, $|\tau_{\rm front}|\le2$, and all
four front-end clocks are zero. The three reserved coordinates stay zero.
The evaluator's first instruction resets its polynomial accumulator;
its separate timer $\tau_{\rm eval}$ is initialized only when the first
counted word routine is reached. This is the program in
Section~\ref{app:base-engine}, including its explicit identity steps.
Outside $\mathcal B_{\rm route}$, it evaluates $K^d/N_{\rm grp}$ valid
records in $K^d\tau_m/N_{\rm grp}$ iterations. At $h=0$ the one full-grid
code is loaded in the same way.

This direct front end uses $2d+6$ warm-up units, three timer/transition
units, five address-scaling units, and $3N_{\rm grp}+1$ lookup/handoff
units. Their sum is $3N_{\rm grp}+2d+15$. The rescaling units are counted
with the terminal-correction allocation $C_{\rm add}=2q_*+2d+184$ below. Together with
the evaluator width $r_0$ from \eqref{eq:base-resources} and the $53d+5$
routing units, the sufficient hidden width is
\begin{equation}\label{eq:tr-implementation-width}
 r_{\rm impl}=r_0+C_{\rm add}+3N_{\rm grp}+55d+20.
\end{equation}
This count corresponds to the direct attention-addressed construction above.
The same allocation covers $h=0$ with inactive units zero-padded.
Front-end gates use $\Lambda(e-C)$ and evaluator gates use their own
$\Lambda_{\rm eval}(e-1)$ before homogenization. After handoff, the front
end's auxiliary registers do not enter evaluator features. These
bounds therefore suppress inactive instructions in all subsequent phases.

\subsection{Terminal correction and affine output function}
\label{app:readout}
Taylor evaluation has produced $u\in[-1,1]$, whose rescaled value $Hu$
approximates the target. It remains to read this value from the physical
token: an affine output function cannot divide by the token's changing
reference coordinate. We therefore transform $u$ within the shared block,
clear the auxiliary coordinates, and let normalization converge to a scale
at which an affine readout recovers $Hu$ to the required accuracy.

\subsubsection{Approximation of the inverse normalization map}
\label{sec:analytic-readout}
The required native transformation is
\begin{equation}\label{eq:tr-inverse-normalization}
 g_{\rm inv}(u)=\frac{u}{\sqrt{1-u^2/2}},\qquad
 \frac{\sqrt2g_{\rm inv}(u)}{\sqrt{g_{\rm inv}(u)^2+2}}=u.
\end{equation}
For $|u|\le1$, $|g_{\rm inv}(u)|\le\sqrt2$ and
\[
 g_{\rm inv}(u)=u\sum_{j=0}^{\infty}c_j(u^2/2)^j,
 \qquad c_j=\binom{2j}{j}4^{-j}\in[0,1].
\]
The integers $J,b,\ell$ below control the precision of the terminal correction
and are distinct from the
Taylor-word parameters of Appendix~\ref{app:ffn}. Compute
$t=S_\ell(|u|)/2\in[0,1/2]$ using Lemma~\ref{lem:square}.
Round $c_j/4$ down to a $b$-digit dyadic value $a_j$, and encode
$a_J,\ldots,a_0$ with continuation digits in one base-four scalar.
This code is a counted input-function bias of magnitude at most $2/3$ and uses
$O((J+1)b)$ binary digits. Starting at $p_{J+1}=0$, evaluate
\begin{equation}\label{eq:tr-readout-horner}
 p_j=a_j+\mathsf M_\ell(t,p_{j+1}),\qquad
 v=4\mathsf M_\ell(u,p_0).
\end{equation}
Since $a_j\le1/4$, $t\le1/2$, and the multiplication error is at most
$4^{-\ell}/2\le1/8$, induction gives
$|p_j|\le1/4+|p_{j+1}|/2+1/8\le7/8<1$ whenever
$|p_{j+1}|\le1$. All multiplication inputs therefore remain admissible.

Put $t_0=u^2/2$ and let $P_j$ be the exact truncated Horner values with
coefficients $c_j/4$ and argument $t_0$. Then $0\le P_j\le1/2$ and
$|t-t_0|\le4^{-\ell}/8$. Thus
\[
 |p_j-P_j|\le\tfrac12|p_{j+1}-P_{j+1}|
                  +2^{-b}+\tfrac9{16}4^{-\ell}.
\]
Summing this contracting recursion, allowing for the final multiplication,
and bounding the series tail gives
\begin{equation}\label{eq:tr-readout-accuracy}
 |v-g_{\rm inv}(u)|\le8\,2^{-b}+7\,4^{-\ell}+2^{-J}.
\end{equation}
For $K=2^m$, choose
\[
 b=\lceil sm\rceil+6,\qquad
 \ell=\lceil sm/2\rceil+4,\qquad J=\lceil sm\rceil+6.
\]
The error is $O_s(K^{-s})$ and is less than $1/8$, so $|v|<2$.
The phase implementation below uses 15 data and 18 phase registers and,
including selector rescaling and the Taylor-to-output handoff, at most
\begin{equation}\label{eq:tr-readout-width}
 C_{\rm add}=2q_*+2d+184
\end{equation}
additional hidden units. Its runtime is at most
\begin{equation}\label{eq:tr-readout-time}
 (J+1)(2b+2\ell+8)+4\ell+12=O_s(\log^2(eK)).
\end{equation}
All assignments and phase tests below are simultaneous and read the old
state; unlisted coordinates are unchanged. The timer $\tau_{\rm read}$ is
separate from $\tau_{\rm eval},\tau_{\rm front}$ and both routing timers.

\subsubsection{Coordinates and initialization}
The table below describes the 15 additional data coordinates and the
18 phase coordinates used for terminal correction. Symbols reused from
Appendix~\ref{app:ffn}, including $c,a,t,p,v$, denote separate coordinates
local to this subsection. The Taylor output $u$ remains in its original
coordinate and is unchanged until the terminal transition.
\begin{center}\small
\begin{tabular}{lp{0.68\textwidth}}
\toprule
Coordinate & Role\\
\midrule
$c$ & Unread remainder of the encoded correction coefficients.\\
$a$ & Accumulator for the coefficient being decoded.\\
$t$ & Approximation to $u^2/2$, used in the Horner recursion.\\
$p$ & Current value of the Horner recursion.\\
$v_s,S_s,w_s$ & Square-recurrence state, approximation, and scale for $|u|$.\\
$v_+,v_-$ & Intermediate states of the two multiplication channels.\\
$S_+,S_-$ & Square approximations whose difference gives a product.\\
$w$ & Scale shared by the multiplication channels.\\
$\tau_{\rm read}$ & Timer for coefficient decoding and square recurrences.\\
$\beta$ & Continuation bit for the correction coefficient list.\\
$v$ & Corrected value approximating $g_{\rm inv}(u)$.\\
$e_1,\ldots,e_{18}$ & Phase coordinates for the correction program below.\\
\bottomrule
\end{tabular}
\end{center}

Initially, $c$ contains the coefficient code and the other 14 data
coordinates are zero. Add 18 phase coordinates $e_1,\ldots,e_{18}$,
all initially zero. One hidden unit transfers the Taylor terminal
phase into $e_1$. Let
\[
 D(c)=3\ReLU(c-1/6)-3\ReLU(c-1/2),\qquad
 \zeta_k=2^{-(k-1)}.
\]
The code contains $a_J,\ldots,a_0$, each with $b$ digits read from
least to most significant. A continuation digit follows every word;
all are one except the last, which is zero.

\subsubsection{Square evaluation and coefficient decoding}
The table below specifies how the shared block computes $t$ and decodes
one coefficient, listing each phase's updates and transition.
The notation $h_s=\ReLU(v_s-w_s/2)$ denotes an expression in the
current state, not an additional coordinate.
\begin{center}\small
\begin{tabular}{cll}
\toprule
Phase & Assignments & Next phase\\
\midrule
1 & $v_s=S_s=|u|,\ w_s=1,\ \tau_{\rm read}=\zeta_\ell$ & 2\\
2 & $v_s^+=v_s/2-h_s,\ S_s^+=S_s-v_s/2+h_s$ & 3\\
  & $w_s^+=w_s/4$ & \\
3 & $\tau_{\rm read}^+=2\tau_{\rm read}$ & 4 if $\tau_{\rm read}=1$; otherwise 2\\
4 & $t=S_s/2,\ p=a=0,\ \tau_{\rm read}=\zeta_b$ & 5\\
5 & $c^+=4c-2D(c),\ a^+=(a+D(c))/2$ & 6\\
6 & $\tau_{\rm read}^+=2\tau_{\rm read}$ & 7 if $\tau_{\rm read}=1$; otherwise 5\\
\bottomrule
\end{tabular}\end{center}
Absolute values are represented by $|x|=\ReLU(x)+\ReLU(-x)$.
At the checks of a length-$k$ iteration, the old counter values are
$\zeta_k,\ldots,1/2,1$. Thus precisely $k$ iteration/check pairs
are performed. The final check writes two into the counter, which
is reset by the following initialization.

\subsubsection{Multiplication and Horner updates}
Each multiplication uses four phases. For inputs $a_*,b_*$ that
remain fixed until the final assignment, put
$w_+=(a_*+b_*)/2$, $w_-=(a_*-b_*)/2$ and
$h_\pm=\ReLU(v_\pm-w/2)$. The table below gives the four-phase
multiplication routine used in both the Horner update and final output.
\begin{center}\small
\begin{tabular}{cll}
\toprule
Phase & Assignments & Next phase\\
\midrule
$i$ & $v_\pm=S_\pm=|w_\pm|,\ w=1,\ \tau_{\rm read}=\zeta_\ell$ & $i+1$\\
$i+1$ & $v_\pm^+=v_\pm/2-h_\pm,\ S_\pm^+=S_\pm-v_\pm/2+h_\pm$ & $i+2$\\
      & $w^+=w/4$ & \\
$i+2$ & $\tau_{\rm read}^+=2\tau_{\rm read}$ & $i+3$ if $\tau_{\rm read}=1$; otherwise $i+1$\\
$i+3$ & $y=\lambda(S_+-S_-)+a_{\rm add}$ & specified below\\
\bottomrule
\end{tabular}\end{center}
The two uses of this routine have the following inputs, outputs, and
transitions:
\begin{center}\small
\begin{tabular}{cccccc}
\toprule
$i$ & $(a_*,b_*)$ & $y$ & $\lambda$ & $a_{\rm add}$ & Next phase\\
\midrule
7 & $(t,p)$ & $p$ & 1 & $a$ & 11\\
14 & $(u,p)$ & $v$ & 4 & 0 & 18\\
\bottomrule
\end{tabular}\end{center}
The two sets of phases use the same data coordinates. They implement
$p\leftarrow a_j+\mathsf M_\ell(t,p)$ and
$v\leftarrow4\mathsf M_\ell(u,p)$, respectively.
The coefficient four occurs only in the final assignment and does
not change the multiplication input domain.

\subsubsection{Continuation and terminal transition}
The remaining phases either start the next coefficient or finish the
output calculation and clear the auxiliary coordinates:
\begin{center}\small
\begin{tabular}{cll}
\toprule
Phase & Assignments & Next phase\\
\midrule
11 & $c^+=4c-2D(c),\ \beta=D(c)$ & 12\\
12 & none & 13 if $\beta=1$; otherwise 14\\
13 & $a=0,\ \tau_{\rm read}=\zeta_b$ & 5\\
18 & all old coordinates except $C$ set to zero; & none\\
   & all new data coordinates except $v$ set to zero & \\
\bottomrule
\end{tabular}\end{center}
At phase 18, $C=1$ and the other 17 new phase coordinates vanish.
All extra routing coordinates of the controller have already been cleared;
the terminal instructions need only clear the $q_*$ original coordinates
and the terminal-correction registers listed in the table.
Removing $e_{18}$ leaves only $C=1$ and $v$. All residual updates
subsequently vanish, and the normalization convergence established
below applies.

\subsubsection{ReLU realization and resource bounds}
For a phase coordinate $e$ and an assignment
\[
 z_i\longleftarrow F(z)+\sum_\nu a_\nu\ReLU(G_\nu(z)),
\]
where $F,G_\nu$ are affine, use the preactivations
\[
 \pm(F(z)-z_i)+\Lambda(e-C),\qquad
 G_\nu(z)+\Lambda(e-C).
\]
The first pair represents the signed affine residual; each additional
hinge uses one unit. At the active phase, $e=C=1$ and the update is
exact. At every other phase, the $O(C)$ bounds on the relevant
coordinates make these units inactive for the same sufficiently large
$\Lambda=64(q_{\rm hom}+1)$.
Every feature reads the same old state, so all phases are represented
in one hidden layer.

An unconditional transition uses one $\ReLU(e)$ unit with output
coefficients $-1$ on the old phase and $1$ on the next phase.
A conditional transition adds one feature to this default move:
\[
 \ReLU(2\tau_{\rm read}-1+\Lambda(e-C))
 \quad\hbox{or}\quad
 \ReLU(\beta+\Lambda(e-C)).
\]
On the admissible counter and continuation values, these are exact
zero-one indicators. The table below counts the hidden units for these
phases to establish \eqref{eq:tr-readout-width}, using two units per signed
affine residual and counting repeated hinges separately.
\begin{center}\footnotesize
\begin{tabular}{lrr}
\toprule
Phases & Units per phase & Total\\
\midrule
Square (1--4) & $13,9,4,9$ & 35\\
Coefficient decoding (5--6) & $9,4$ & 13\\
Horner multiplication (7--10) & $21,15,4,3$ & 43\\
Continuation (11--13) & $9,2,5$ & 16\\
Final multiplication (14--17) & $21,15,4,3$ & 43\\
Terminal transition (18) & $2(q_*-1)+2\cdot14+1$ & $2q_*+27$\\
\bottomrule
\end{tabular}\end{center}
For example, phase 1 uses four units for each absolute-value
assignment, two for each constant assignment and one for the phase
transition, totaling $4+4+2+2+1=13$.
The first five rows sum to 150. Adding the $2(d+3)$ units for
\eqref{eq:tr-selector-rescale} and one terminal-to-output transition gives
$2q_*+2d+184$ units in total and proves \eqref{eq:tr-readout-width}.
The 15 data and 18 phase coordinates are precisely those reserved in
\eqref{eq:tr-native-dimension}; no further native coordinates are needed.

Each coefficient requires at most $2b+2\ell+5$ iterations.
Initialization, final multiplication and the two terminal transitions
require at most another $4\ell+6$. This proves the slightly looser
bound \eqref{eq:tr-readout-time}. Neither the coordinate count nor the number
of phases depends on $b,\ell,J$.

\subsubsection{Normalization convergence and affine output function}
The correction has produced $v\approx g_{\rm inv}(u)$. We now show that
the remaining normalization steps make it readable by an affine output.
After the terminal transition, the controller's only nonzero native
coordinates are $C=1$ and $v$; its homogeneous reference remains one.
All native updates and attention increments vanish. In
\eqref{eq:tr-scale-recurrence}, $\|z\|_2^2+1=v^2+2$, so, writing
$d_{\alpha,0}$ for the inverse-square scale after this transition,
\begin{equation}\label{eq:tr-terminal-scale}
 d_*=\lambda_q(v^2+2),\qquad
 d_{\alpha,j}=d_*+\rho_{\rm LN}^{2j}(d_{\alpha,0}-d_*).
\end{equation}
On valid inputs, routing data are uniformly bounded, selector data grow
at most as $CK^{2(k+1)+d+1}$, and evaluator and terminal-correction data are bounded.
The contracting scale recurrence therefore gives
$1/q_{\rm hom}\le d_{\alpha,0}\le CK^{C_{d,s}}$, uniformly over the padded
widths. Since $|v|<2$ and $\lambda_q\ge1/q_{\rm hom}$, at most
$C_{d,s,\varepsilon_{\rm LN}}\log(eK)$ additional blocks ensure
$|d_{\alpha,j}/d_*-1|\le CK^{-s}$ and also at most $1/2$.

Choose the affine output function to read the first positive copy of the controller's
$v$ coordinate with weight
\begin{equation}\label{eq:tr-affine-readout}
 w_q=H\sqrt{2\lambda_q}.
\end{equation}
At the limiting scale its value is
$H\sqrt2v/\sqrt{v^2+2}$. The map
$v\mapsto\sqrt2v/\sqrt{v^2+2}$ is 1-Lipschitz, so
\eqref{eq:tr-inverse-normalization} and \eqref{eq:tr-readout-accuracy}
bound its error relative to $Hu$ by $CHK^{-s}$. The finite-scale error has
the same order by \eqref{eq:tr-terminal-scale}. Further iterations contract
the scale discrepancy, so this bound persists. The output-function weight is
uniformly bounded and included in the dense endpoint count.

\subsection{Proof of Theorem~\ref{thm:tr-construction}}
\label{app:tr-final-proof}
\begin{proof}
Fix $f_0\in\Holder$ and $K=2^m$. Take
$q_0^{\mathrm{Tr}}=q_{\rm hom}$ and
$r_0^{\mathrm{Tr}}=r_0+C_{\rm add}+55d+23$.
For each $r\ge r_0^{\mathrm{Tr}}$, choose the largest $0\le h\le m$
whose width in \eqref{eq:tr-implementation-width} is at most $r$.
Since this width is a fixed additive constant plus $3\cdot2^{dh}$,
bounded consecutive ratios give
$N_{\rm grp}=2^{dh}\asymp_{d,s}\min\{r,K^d\}$.
The constructions above fit every $q\ge q_{\rm hom}$ by repeated paired
encoding and every larger hidden width by zero padding.

All native instructions are affine residuals and ReLU hinges, homogenized
as in Lemma~\ref{lem:homogeneous}. They coexist in one FFN hidden layer.
The initialization, routing, evaluation, and terminal-correction phases are disjoint;
their inactive features vanish under the bounds verified above.
It suffices to take
\[
 M_0^{\mathrm{Tr}}=\max\{3B_{\rm eval},\,64(q_{\rm hom}+1)+2,
                        \,N,\,\sqrt{1+\varepsilon_{\rm LN}},\,2H\}.
\]
This bounds every coefficient, including the encoded tables, homogeneous
bias columns, normalization gains, and affine output function. The same fixed block
performs every phase, and all parameter slots are counted in
$B_L^{\mathrm{Tr}}(q,r)$ from \eqref{eq:tr-dense-counts}.

For $h\ge1$, routing takes $1+d\{h(L+1)+1\}=O_{d,s,N}(m^2)$ blocks;
for $h=0$ it takes only boot. Selector initialization and address scaling
cost $O_{d,s}(m)$. The $K^d/N_{\rm grp}$ Taylor records cost
$O_{d,s}((K^d/N_{\rm grp})m)$ blocks by Lemma~\ref{lem:base}.
The terminal correction costs $O_s(m^2)$ and normalization settling costs
$O_{d,s,\varepsilon_{\rm LN}}(m)$. Consequently, a common finishing time
satisfies
\begin{equation}\label{eq:trruntime}
 T_{\rm req}\le C\left\{\log^2(eK)+\frac{K^d}{r}\log(eK)\right\}.
\end{equation}
The first term also covers the saturation case of one record per group.

Let $\mathcal B$ be the union of the routing exceptional set and the
fine-cell localization strips of Appendix~\ref{app:ffn}. Then
$\mu(\mathcal B)\le CK^{-2s}$. On its complement, routing selects the
correct integer address, the selector loads a valid table, and the Taylor
output satisfies $|Hu-f_0(x)|\le CHK^{-s}$. The terminal correction adds
at most $CHK^{-s}$ for every $T\ge T_{\rm req}$, and clipping cannot increase
this error. On $\mathcal B$, the final clipped error is always at most $2H$.
For the resulting candidate $g_T$,
\[
 \|g_T-f_0\|_{L^2(\mu)}^2
 \le CH^2K^{-2s}+4H^2\mu(\mathcal B)\le CH^2K^{-2s}.
\]
The table codes, phase timers, block coefficients, and affine endpoints
were fixed using only $f_0,K,q,r$. After evaluation and terminal correction, further
iterations only contract the remaining scale discrepancy on valid inputs;
clipping remains available elsewhere. Enlarging $C$ to absorb the fixed
$H^2$ proves \eqref{eq:tr-resolution-error}--\eqref{eq:tr-resolution-time}
for every admissible $T$ with this same parameter choice.
The selection of $K$ from a prescribed iteration budget is made only in
Section~\ref{app:proofs-section3}, in the proof of
Lemma~\ref{lem:neural-index-approx}.
\end{proof}

\section{Numerical illustrations and experimental details}
\label{app:numerical-details}\label{sec:numerical}

The numerical studies illustrate the parameter-budget comparison with trained
residual FFNs. We call the fitted model with one residual update block applied
once ($T=1$) \emph{Single}; it includes the input and output functions and uses
the same block family as the other methods. \emph{Loop} repeatedly applies
one fitted block, while \emph{Untied} composes separately parameterized blocks.
Single is the common $T=1$ special case of Loop and Untied. Comparing Loop
with Single illustrates the effect of looping with shared parameters;
comparing Untied with Single illustrates the effect of greater depth with
separately parameterized blocks. Each comparison uses a common cap $B$
on all trainable input-function, block, output-function, and bias coefficients; admissible
architectures can use fewer than $B$ coefficients. Computation may differ across the three methods under the same cap.
The Transformer results are theoretical.

Sections~\ref{sec:regression-results}--\ref{sec:real-results} present
known-function regression, energy-based generative models, and real-data
prediction. Section~\ref{app:experimental-protocols} collects the
experimental protocols.

Regression is evaluated by mean squared error (MSE), classification by
predictive log loss, and density estimation by squared Hellinger distance.
For bounded Gaussian regression, Proposition~\ref{prop:model-geometry}
relates integrated squared error to Hellinger risk. Expected predictive log
loss equals a method-independent conditional entropy plus the
covariate-averaged Kullback--Leibler divergence between the true and fitted
conditional probabilities. This divergence bounds the corresponding squared
Hellinger distance from above. The multiclass tasks extend the empirical
comparison beyond the theoretical binary-response model.
These experiments describe finite-sample behavior under their fitting
procedures; the theorems concern worst-case risk of full-class sieve MLEs.\looseness=-1

\subsection{Regression model simulation}\label{sec:regression-results}
We use localized smooth bumps, a smooth recurrence, and scaled Friedman1
in dimensions two, three, and five, respectively. Inputs are uniform on
$[0,1]^d$, and $Y=f_0(X)+\varepsilon$ with independent, mean-zero Gaussian
noise of standard deviation $0.02$ for bumps and recurrence and $0.05$ for
Friedman1. Training uses squared-error loss and Adam with validation-based
tuning; test MSE is measured against the known $f_0$.
At the prespecified cap $B=128$, Loop has the smallest mean MSE for all
three families (Table~\ref{tab:numerical-simulations}). The largest relative
reductions occur for the recurrence: 46.5\% against Single and 70.4\%
against Untied. Table~\ref{tab:paired-simulation-contrasts}
reports the paired differences; uncertainty is greater for the recurrence
comparison with Untied and both Friedman1 comparisons than for smooth bumps.

\begin{table}[!htbp]
\centering
\small\setlength{\tabcolsep}{5pt}
\begin{tabular}{lrrr}
\toprule
Simulation & Single & Loop & Untied \\
\midrule
Smooth bumps & $2.9193\;(0.063)\times 10^{-3}$ & {\boldmath $2.6412\;(0.11)\times 10^{-3}$} & $3.0873\;(0.087)\times 10^{-3}$ \\
Smooth recurrence & $2.6431\;(0.18)\times 10^{-4}$ & {\boldmath $1.4133\;(0.26)\times 10^{-4}$} & $4.7724\;(2.6)\times 10^{-4}$ \\
Scaled Friedman1 & $1.9482\;(0.72)\times 10^{-3}$ & {\boldmath $1.6124\;(0.73)\times 10^{-3}$} & $3.2363\;(0.67)\times 10^{-3}$ \\
\bottomrule
\end{tabular}
\caption{MSE against known $f_0$ at $B=128$: means over eight paired replications, with standard errors (SEs) in parentheses. Boldface marks the smallest mean in each row.}
\label{tab:numerical-simulations}
\end{table}

\begin{figure}[!t]
\centering
\includegraphics[width=\textwidth]{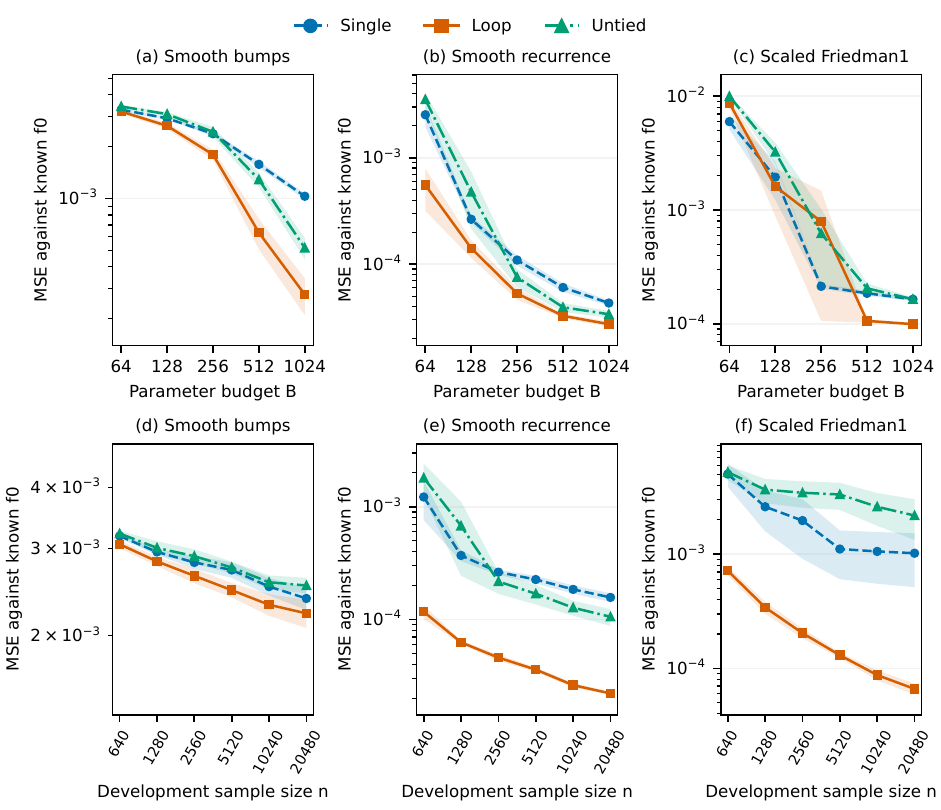}
\caption{Effects of parameter cap $B$ (top) and development sample size $n$ (bottom). Columns: smooth bumps, smooth recurrence, and scaled Friedman1. Points show mean $f_0$ MSE over eight paired replications; shading indicates one SE. Both axes are logarithmic.}
\label{fig:numerical-resources}
\end{figure}

\noindent\textbf{Effects of the parameter budget and sample size.}
Figure~\ref{fig:numerical-resources} varies the parameter cap and sample size.
The upper row uses 4096 training and 1024 validation observations, with
$B\in\{64,128,256,512,1024\}$ and $T=16$ for Loop. Loop has the smallest mean MSE
at every displayed budget for bumps and recurrence; for Friedman1,
Single performs better at $B=64,256$, and Untied also performs better at
$B=256$. The lower row fixes $B=128$ and the parameterized modules while
varying the development sample size from 640 to 20480, split 80/20 between
training and validation. Mean MSE decreases for all three methods, with
Loop attaining the smallest mean at every displayed sample size.
Each row uses its own frozen configuration and continuation protocol;
larger budgets or samples can entail more training updates.
The sample-size curves keep the iteration count fixed, whereas
Theorem~\ref{thm:fixed-budget} increases $T_n$ with $n$.
Sections~\ref{app:counts}--\ref{app:continuation} give the configurations
and update limits.

\noindent\textbf{Loop iterations with continued training.}
Figure~\ref{fig:numerical-iterations} increases Loop's iteration count from
one to 64 at $B=128$, with fixed state and hidden widths and continued
training. Mean MSE falls by 36.9\%, 80.6\%, and 41.6\% for bumps,
recurrence, and Friedman1, respectively. The recurrence gains mainly from
the first increase and then fluctuates around a lower plateau; the other
two curves decrease throughout. These curves reflect joint increases in
iteration count and cumulative training, with a fixed parameter count.

\begin{figure}[!htbp]
\centering
\includegraphics[width=\textwidth]{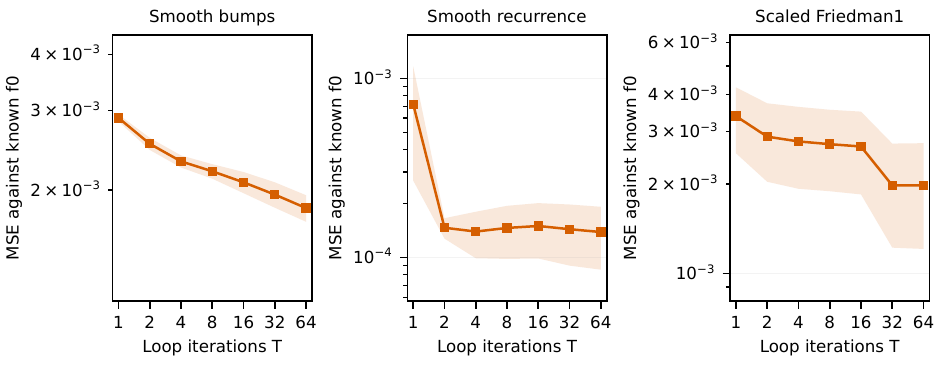}
\caption{Loop iterations with continued training at $B=128$. Curves show mean $f_0$ MSE with one-SE shading over eight replications, using the same seeds at every $T$. Both axes are logarithmic.}
\label{fig:numerical-iterations}
\end{figure}

\subsection{Energy-based generative-model simulation}\label{sec:ebm-results}
We next evaluate density estimation with a fixed, smooth four-mode target
distribution in the energy-based family, constructed independently of the
fitted architectures. Using three
development seeds, we screen parameter caps
$B\in\{96,128,160,192\}$ and training sample sizes
$n_{\rm tr}\in\{1024,2048,4096\}$.  Within each setting and method,
validation negative log likelihood (NLL) selects the architecture, optimizer
setting, and checkpoint.
A setting enters confirmation only if Loop has the smallest mean development
squared Hellinger error and wins at least two of the three paired seeds.  Before any
confirmation sample is generated, this rule freezes the single strongest
eligible setting, $B=160$ and $n_{\rm tr}=2048$. Confirmation evaluates the selected setting on fresh samples.\looseness=-1

\begin{table}[!htbp]
\setstretch{1}
\centering
\small
\caption{Energy-based simulation at the development-selected setting $B=160$, $n_{\rm tr}=2048$: mean (SE) over eight paired confirmation replications. Squared Hellinger errors use midpoint quadrature. Boldface marks the smallest mean in each row.}
\label{tab:ebm-budget-confirmation}
\begin{tabular}{lrrr}
\toprule
Metric & Single & Loop & Untied \\
\midrule
Hellinger$^2$ & $0.01511\;(0.0020)$ & $\mathbf{0.008919\;(0.0014)}$ & $0.01339\;(0.0017)$ \\
Test NLL & $-0.03623\;(0.0040)$ & $\mathbf{-0.04884\;(0.0027)}$ & $-0.03996\;(0.0033)$ \\
\bottomrule
\end{tabular}
\end{table}

Loop's mean squared Hellinger error is 0.00892, compared with 0.01511 for
Single and 0.01339 for Untied, a reduction of 33.4\% relative to the
second-lowest mean at the selected operating point. The test NLL means have
the same ordering. In
Figure~\ref{fig:ebm-budget-confirmation}, the top row averages the eight
fitted densities for each method.  The bottom row pools 4096 exact
rejection draws from each fitted model, so every method panel contains 32768
generated observations.  Analytic-density panels share one color scale, and
sample panels use one fixed histogram, smoothing bandwidth, and color scale.

\begin{figure}[t]
\centering
\includegraphics[width=\textwidth]{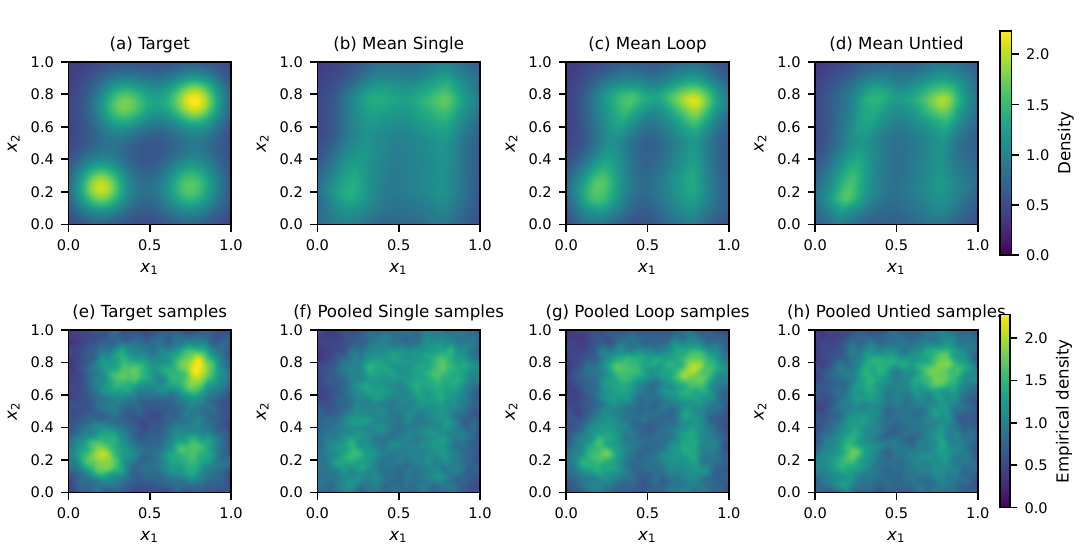}
\caption{Four-mode density estimation at $B=160$, $n_{\rm tr}=2048$. Top: target and mean fitted densities. Bottom: smoothed empirical densities from 32768 target draws or equally pooled exact rejection samples from the eight fitted models per method.}
\label{fig:ebm-budget-confirmation}
\end{figure}

The target specification, integration grids, and fitting settings are given
in Section~\ref{app:ebm-protocols}.

\subsection{Real-data experiments}\label{sec:real-results}
We next examine how parameter sharing affects prediction on real data.
We compare Single, Loop, and Untied residual FFNs on nine regression and
four multiclass classification data sets, using the same parameter cap for
the three methods within each task. We evaluate regression by test MSE on the original response scale
and classification by predictive log loss.

Table~\ref{tab:real-comparisons} summarizes the results. Loop has the smallest displayed mean on seven of the nine regression tasks and
three of the four classification tasks. The clearest mean margins occur for
SGEMM and RT-IoT. Several other orderings are small relative to the reported
method-specific SEs: for example, the Wall Robot means for Loop and Single are
0.5673 and 0.5680, with SEs 0.029 and 0.036.

\begin{table}[!htbp]
\setstretch{1}
\centering
\small
\setlength{\tabcolsep}{3.7pt}
\caption{Real-data test performance: mean (SE) over eight paired replications. Boldface marks the smallest mean in each row.}
\label{tab:real-comparisons}
\resizebox{\linewidth}{!}{%
\begin{tabular}{llrrrr}
\toprule
Source & Metric & $B$ & Single & Loop & Untied \\
\midrule
\multicolumn{6}{l}{\textit{Regression}} \\
Airfoil & MSE & 64 & $9.5361\;(0.17)$ & $\mathbf{8.9303\;(0.23)}$ & $9.7895\;(0.27)$ \\
Power plant & MSE & 32 & $17.937\;(0.10)$ & $18.146\;(0.36)$ & $\mathbf{17.916\;(0.095)}$ \\
Fish toxicity & MSE & 32 & $0.94739\;(0.024)$ & $\mathbf{0.94596\;(0.015)}$ & $0.95003\;(0.026)$ \\
White wine & MSE & 64 & $\mathbf{0.59428\;(0.0066)}$ & $0.59577\;(0.0063)$ & $0.59469\;(0.0066)$ \\
Concrete & MSE & 128 & $36.730\;(0.81)$ & $\mathbf{35.633\;(0.85)}$ & $36.944\;(0.42)$ \\
California housing & MSE & 64 & $0.36494\;(0.0017)$ & $\mathbf{0.36079\;(0.0024)}$ & $0.36365\;(0.0012)$ \\
Abalone & MSE & 64 & $4.4706\;(0.016)$ & $\mathbf{4.4297\;(0.019)}$ & $4.4652\;(0.0098)$ \\
SGEMM & MSE & 64 & $9.2061\;(0.83)\times10^{3}$ & $\mathbf{6.2200\;(0.15)\times10^{3}}$ & $8.4769\;(0.72)\times10^{3}$ \\
SARCOS & MSE & 512 & $14.765\;(0.22)$ & $\mathbf{14.005\;(0.19)}$ & $14.224\;(0.21)$ \\
\multicolumn{6}{l}{\textit{Multiclass classification}} \\
Wall Robot & log loss & 1024 & $0.5680\;(0.036)$ & $\mathbf{0.5673\;(0.029)}$ & $0.6283\;(0.044)$ \\
Maternal Risk & log loss & 512 & $0.7294\;(0.015)$ & $\mathbf{0.7201\;(0.013)}$ & $0.7280\;(0.013)$ \\
RT-IoT & log loss & 4096 & $0.04230\;(0.0016)$ & $\mathbf{0.03782\;(0.0012)}$ & $0.04095\;(0.0011)$ \\
Room Occupancy & log loss & 1024 & $\mathbf{0.04824\;(0.023)}$ & $0.04968\;(0.027)$ & $0.05580\;(0.027)$ \\
\bottomrule
\end{tabular}
}
\end{table}

Table~\ref{tab:paired-real-contrasts} summarizes regression comparisons
using the paired replications. The SGEMM differences are large relative to their paired SEs;
several other contrasts at the selected budgets have substantial uncertainty.

Section~\ref{app:real-data-protocols} gives the data splits, preprocessing,
and fitting protocols.

\subsection{Experimental protocols}\label{app:experimental-protocols}
This section collects the simulation functions, architectures, fitting
procedures, and data sources for the results in Sections~\ref{sec:regression-results}--\ref{sec:real-results}.

\subsubsection{Simulation functions, data streams, and error metric}\label{app:simulation-protocols}
For each family, inputs are independent and uniform on $[0,1]^d$.
Training and validation responses are $Y=f_0(X)+\varepsilon$, with
independent Gaussian noise. The three functions are as follows.

\paragraph{Smooth bumps}
Let $k=3$, $a_0=0.75$, and let $c_j\in\{-1,1\}$ be independent
equiprobable signs for the nine cells $j\in\{0,1,2\}^2$.
For $x\in[0,1]^2$, put $j_\ell=\min\{\lfloor kx_\ell\rfloor,k-1\}$
and $t_\ell=4(kx_\ell-j_\ell-1/2)$. Define
\[
 \phi(t)=
 \begin{cases}
 \exp\!\left(\displaystyle\sum_{\ell=1}^2
                 \left[1-\frac{1}{1-t_\ell^2}\right]\right),
           & |t_1|<1,\ |t_2|<1,\\
 0,        & \text{otherwise},
 \end{cases}
 \qquad
 f_0(x)=\frac{a_0}{k}\,c_j\phi(t).
\]
The compact bumps are smooth, with supremum bound $0.25$ and
Lipschitz bound $0.75(32/e)\sqrt{2}$. The noise standard deviation (SD) is $0.02$.
The cell count $k$ is unrelated to the model's dense parameter count.

\paragraph{Smooth recurrence}
For $x\in[0,1]^3$, set $u_0=x_3-1/2$ and iterate
\[
 u_{j+1}=0.25u_j+0.1\tanh\{2u_j+3(x_1-1/2)\}
                +0.08\sin\{2\pi(x_2+u_j)\},
 \qquad j=0,\ldots,11.
\]
The target is $f_0(x)=u_{12}$ and the noise SD is $0.02$.
The state remains in $[-1/2,1/2]$. A conservative Lipschitz bound is
\[
 1+\frac{\sqrt{0.3^2+(0.16\pi)^2}}
              {1-(0.45+0.16\pi)}.
\]

\paragraph{Scaled Friedman1}
For $x\in[0,1]^5$, the target is
\[
 f_0(x)=0.5\sin(\pi x_1x_2)+(x_3-1/2)^2+0.5x_4+0.25x_5.
\]
This is \href{https://scikit-learn.org/stable/modules/generated/sklearn.datasets.make_friedman1.html}{Friedman1}
divided by 20. Its supremum is bounded by $1.5$, its Lipschitz constant
by $\sqrt{2(\pi/2)^2+1+0.25+0.0625}$, and the noise SD is $0.05$.
Each family's displayed supremum-plus-Lipschitz bound is below 32,
consistent with the $s=1$ smoothness setting.

For $B$ and $T$, each seed spawns separate random streams for function
coefficients, training inputs, validation inputs, test inputs, and
the corresponding noises. A family and seed share nested input
prefixes across resource values. The $n$ study retains its own frozen
random-stream implementation, likewise using nested fitting prefixes
and a fixed independent test sample within each seed. Each model is
evaluated on $N_{\rm test}=32768$ independent uniform test inputs using
\[
 \widehat{\operatorname{MSE}}_{f_0}
  =\frac{1}{N_{\rm test}}\sum_{i=1}^{N_{\rm test}}
     \{\widehat f(X_i^{\rm test})-f_0(X_i^{\rm test})\}^2.
\]
Thus synthetic test scores exclude response noise. All means are
arithmetic across eight paired replications, one per seed. The reported SE is
the sample SD, computed with denominator seven, divided by $\sqrt{8}$.
Plot shading shows the mean plus or minus one SE, and table entries use
mean (SE).
Target centering and scaling use only the current training responses;
test predictions are transformed back to the original target scale.
Synthetic inputs remain on their original uniform scale.

\subsubsection{Architectures and dense coefficient accounting}\label{app:counts}
The input function is affine from $\mathbb R^d$ to $\mathbb R^q$. Regression and
density-estimation models use an affine scalar output function. A $C$-class model uses
an affine output function from $\mathbb R^q$ to $\mathbb R^C$, followed by the softmax
map to class probabilities. Every residual block has the form
\[
 \Phi_\theta(z)=z+W_2\ReLU(W_1z+b_1)+b_2,
 \qquad W_1\in\mathbb R^{r\times q},\quad
 W_2\in\mathbb R^{q\times r}.
\]
Single executes one block once. Loop executes one learned block $T$
times. Untied executes $T$ independently parameterized blocks of
common dimensions. The networks contain no attention or normalization
layers. For a scalar output, the actual dense coefficient counts are
\begin{align*}
 P_{\rm Single}=P_{\rm Loop}
   &=q(d+2)+1+(2q+1)r+q,\\
 P_{\rm Untied}
   &=q(d+2)+1+T\{(2q+1)r+q\}.
\end{align*}
For a $C$-class output, the corresponding counts are
\begin{align*}
 P^{(C)}_{\rm Single}=P^{(C)}_{\rm Loop}
   &=q(d+1)+C(q+1)+(2q+1)r+q,\\
 P^{(C)}_{\rm Untied}
   &=q(d+1)+C(q+1)+T\{(2q+1)r+q\}.
\end{align*}
Counts include every trainable dense matrix entry and bias, even when
initialized to zero. A cap $B$ is an upper bound on these counts, while
iteration count and total optimization cost are tracked separately. Table~\ref{tab:numerical-configurations}
lists the frozen simulation configurations. The smooth-bump modules
differ between the $B$, $T$, and $n$ studies; the two rows of
Figure~\ref{fig:numerical-resources} therefore retain distinct protocols.

\begin{table}[!ht]
\centering
\small\setlength{\tabcolsep}{3pt}
\textbf{(a) Simulation protocols}\par\medskip
\resizebox{\linewidth}{!}{%
\begin{tabular}{>{\raggedright\arraybackslash}p{10mm}>{\raggedright\arraybackslash}p{29mm}>{\raggedright\arraybackslash}p{36mm}>{\raggedright\arraybackslash}p{32mm}>{\raggedright\arraybackslash}p{36mm}}
\toprule
Study & Parameter cap & Training / validation observations & Loop iterations & Maximum cumulative updates \\
\midrule
$B$ & $64,128,256,$ $512,1024$ & $4096/1024$ & $16$ & $36000$ \\
$T$ & $128$ & $4096/1024$ & $1,2,4,8,$ $16,32,64$ & $60000$ \\
$n$ & $128$ & $n_{\rm tr}=2^9,\ldots,2^{14}$; $n_{\rm val}=n_{\rm tr}/4$ & $16$ & $48000$ \\
\bottomrule
\end{tabular}
}
\par\bigskip
\textbf{(b) Module dimensions and actual dense parameter counts}\par\medskip
\resizebox{\linewidth}{!}{%
\begin{tabular}{>{\raggedright\arraybackslash}p{10mm}>{\raggedright\arraybackslash}p{31mm}>{\raggedright\arraybackslash}p{34mm}>{\raggedright\arraybackslash}p{34mm}>{\raggedright\arraybackslash}p{34mm}}
\toprule
Study & Simulation & Single $(q,r,T,P)$ & Loop $(q,r,T,P)$ & Untied $(q,r,T,P)$ \\
\midrule
$B$ & Smooth bumps & $(4,11,1,120)$ & $(4,11,16,120)$ & $(6,2,3,121)$ \\
$B$ & Smooth recurrence & $(3,15,1,124)$ & $(2,23,16,128)$ & $(2,5,4,119)$ \\
$B$ & Scaled Friedman1 & $(4,10,1,123)$ & $(5,7,16,118)$ & $(4,3,3,122)$ \\
\midrule
$T$ & Smooth bumps & \textemdash & $(3,16,T,128)$ & \textemdash \\
$T$ & Smooth recurrence & \textemdash & $(2,23,T,128)$ & \textemdash \\
$T$ & Scaled Friedman1 & \textemdash & $(5,7,T,118)$ & \textemdash \\
\midrule
$n$ & Smooth bumps & $(3,16,1,128)$ & $(2,23,16,126)$ & $(4,3,3,110)$ \\
$n$ & Smooth recurrence & $(3,15,1,124)$ & $(2,23,16,128)$ & $(2,5,4,119)$ \\
$n$ & Scaled Friedman1 & $(4,10,1,123)$ & $(5,7,16,118)$ & $(4,3,3,122)$ \\
\bottomrule
\end{tabular}
}

\caption{Frozen simulation configurations. Here $q$ is the state dimension,
$r$ the residual hidden width, $T$ the iteration count, and $P$ the actual
dense coefficient count. Panel (b) gives the $B$ study at its cap-128 anchor
and the fixed modules for the $T$ and $n$ studies; in the $T$-study rows,
$T\in\mathcal T=\{1,2,4,8,16,32,64\}$. Maximum cumulative updates refer to
one selected checkpoint ancestry, including initialization stages. Per-fit
caps and continuation procedures are specified below.}
\label{tab:numerical-configurations}
\end{table}

\subsubsection{Validation selection and common optimization settings}
The $B$ and $T$ studies use diagnostic seeds 7700--7701 and confirmation
seeds 7800--7807. For smooth bumps, each learner and curve receives
three candidate state dimensions, obtained by adding $0,1,2$ to its
development dimension. One dimension is chosen by validation loss
averaged over both diagnostic seeds and all values of that curve.
The recurrence and Friedman1 retain their development dimensions.
These choices are frozen before confirmation, and test evaluation begins at
the confirmation stage. The cap-$128$ table rule is also recorded
before confirmation test scoring; its models are exactly those used
at $B=128$ in the upper row of Figure~\ref{fig:numerical-resources}.
The $n$ study retains its separate diagnostic at seeds 7501--7502
and its confirmation seeds 7600--7607.

At the first stage, each learner evaluates two initializations and
four hyperparameter combinations: learning rate $0.001$ or $0.003$
and weight decay $0$ or $0.001$. At every subsequent stage, the four
settings start from the preceding validation-selected checkpoint.
Selection uses the current validation MSE and includes the step-zero
checkpoint. Each fit receives its full update allowance; the selected
checkpoint can occur earlier. Adam is reset for every fit, with
cosine learning-rate decay to 5\% of its starting value, batch size
128, and validation evaluation every 200 updates. Gradient norm is
capped at five; coefficients are clamped to $[-128,128]$ after each
update. Predictions are clipped to $[-8,8]$ on the training-target
standardized scale. Computation uses deterministic 32-bit floating-point
operations on a central processing unit (CPU).

For $B$ and $T$, the outgoing residual matrix is initialized with multiplier
$0.05/T$ and its bias is zero. The outgoing matrix and bias use Adam
learning rate equal to the base rate divided by the architecture's
actual iteration count; other layers use the base rate. This rule
applies to all three learners. The independently frozen $n$ protocol
uses multiplier $0.05$ and the base rate uniformly across parameter groups.

\subsubsection{Continuation paths and cumulative update budgets}\label{app:continuation}
\paragraph{Parameter-cap path}
The displayed caps are $64,128,256,512,1024$, with 4096 training and
1024 validation observations and 16 Loop iterations. Each path
first trains at cap 64 using 512 training and 128 validation observations,
then uses the full fitting sample at cap 64 before proceeding to
larger caps. State dimensions are fixed. Hidden width is the largest
positive integer satisfying the cap; baseline depths follow the
development allocation, reduced only when needed for feasibility at
the initial cap. Increasing width pads the old weights and sets new
outgoing coefficients to zero; additional untied blocks are initialized
as identities. This embedding preserves the previous function.
When training-target normalization changes, the output function is remapped
to preserve the unclipped predictions on the original response scale. The
clipped predictions are also preserved wherever clipping is inactive under
both normalizations. Each fit has at most
6000 updates. The longest selected checkpoint ancestry has at most
18000 updates at the cap-$128$ table point and 36000 at cap 1024.

\begin{table}[!htbp]
\setstretch{1}
\centering
\small
\begin{tabular}{lrr}
\toprule
Scenario & Loop minus Single & Loop minus Untied \\
\midrule
Smooth bumps & $-2.78\;(0.98)\times10^{-4}$ & $-4.46\;(0.61)\times10^{-4}$ \\
Smooth recurrence & $-1.23\;(0.15)\times10^{-4}$ & $-3.36\;(2.8)\times10^{-4}$ \\
Scaled Friedman1 & $-3.36\;(5.1)\times10^{-4}$ & $-1.62\;(1.1)\times10^{-3}$ \\
\bottomrule
\end{tabular}
\caption{Paired MSE differences at the prespecified parameter cap 128 for the simulations in Table~\ref{tab:numerical-simulations}. Entries are mean (SE) of the eight within-seed differences, with seeds paired across methods. Negative values favor Loop.}
\label{tab:paired-simulation-contrasts}
\end{table}

\paragraph{Iteration path}
The iteration grid is $1,2,4,8,16,32,64$ at cap 128. Loop's fixed
$(q,r)$ values are $(3,16)$, $(2,23)$, and $(5,7)$ for the three
families. Each path begins at $T=1$ with 512 training and 128 validation
observations, continues at $T=1$ with 4096 training and 1024 validation
observations, then increases iterations.
On changing $T$, the outgoing residual coefficients are multiplied
by $T_{\rm old}/T_{\rm new}$. For nonlinear blocks this provides an
approximate continuation of the preceding function. Fits at $T\le16$ receive 6000 updates each; those at 32 and
64 receive 12000 each. The longest selected path at $T=64$ is bounded
by 60000 cumulative updates. The corresponding Single and Untied fits use
their fixed architecture depths but receive the same stage counts,
hyperparameter counts, and per-stage update caps. Figure~\ref{fig:numerical-iterations}
reports Loop's performance as iteration count and cumulative training
budget increase together.

\paragraph{Sample-size path}
At fixed cap 128 and the modules in
Table~\ref{tab:numerical-configurations}, the training sizes are
$512,\allowbreak 1024,\allowbreak 2048,\allowbreak 4096,\allowbreak 8192,\allowbreak 16384$. Validation size is one quarter of
training size, so the plotted total sample sizes are
$640,\allowbreak 1280,\allowbreak 2560,\allowbreak 5120,\allowbreak 10240,\allowbreak 20480$. All three learners use the same
nested fitting samples within each seed. At larger samples, the
output function is remapped for the new training-target normalization,
preserving unclipped original-scale predictions before further fitting.
Per-fit caps are 6000 updates for training sizes at most 4096 and
12000 for 8192 and 16384. The longest selected path therefore has
at most 48000 updates. This experiment uses its independently frozen modules
and optimization rule.

Maximum cumulative updates sum the upper bounds along one selected
checkpoint ancestry, including initialization stages. They exclude
rejected hyperparameter and initialization candidates. Actual selected
paths can be shorter because validation may select an earlier
checkpoint. Each individual fit is capped at 12000 updates. Total computation
over all candidate fits is a separate resource recorded in the logs.

\subsubsection{Energy-based generative-model simulation protocol}\label{app:ebm-protocols}
The experiment has support $[0,1]^2$. The target log-density index, equal to
the negative energy under the convention of \eqref{eq:ebm-model}, is a sum of
four broad Gaussian radial basis terms
centered at $(0.20,0.23)$, $(0.74,0.24)$, $(0.34,0.73)$, and $(0.78,0.76)$,
with scales between 0.16 and 0.18, plus a shallow central depression.  Its
index lies in $[0.0634,1.814]$ on a $512^2$ reference grid, inside the
bound $[-3,3]$. Training updates compute the log normalizer on a fixed
$32\times32$ midpoint grid; validation NLL uses a $128\times128$ midpoint
grid. The development screen crosses
$B\in\{96,128,160,192\}$ with
$n_{\rm tr}\in\{1024,2048,4096\}$ using seeds 9000--9002, while holding the
target fixed.  Candidate state dimensions are $q\in\{4,8,12\}$, Loop
iterations are $T\in\{2,4,8\}$, and Untied depths are
$T\in\{2,4\}$.  Learning rates are $\{0.001,0.003\}$ and weight decays are
$\{0,0.001\}$ for every method.  Development uses 300 warm and 2500
refinement updates.

The development screen retains settings where Loop has the smallest mean
development squared Hellinger error and at least two of three paired wins, then ranks
eligible settings by relative advantage. The top-ranked setting,
$B=160,n_{\rm tr}=2048$, is frozen before confirmation.
Confirmation seeds 9100--9107 use 2048 training draws, 2048 validation draws,
65536 test draws, 800 warm updates, and 8000 refinement updates.  Development
keeps three structures per method.  Within confirmation, validation NLL
selects structure, learning rate, weight decay, and checkpoint before test
samples or Hellinger scores are computed. Test squared Hellinger error is
approximated on a $256\times256$ midpoint grid. The scenario and
model-selection rules are fixed before confirmation. Sample-panel pooling and smoothing
affect only visualization.

\subsubsection{Real-data protocols}\label{app:real-data-protocols}
Table~\ref{tab:real-data-dimensions} records the dimensions and parameter
caps for exactly the real-data results displayed in
Table~\ref{tab:real-comparisons}. The dense coefficient formulas in
Section~\ref{app:counts} determine which architectures satisfy each cap $B$, using the scalar output function
for regression and the $C$-logit output function for classification.

\begingroup
\footnotesize
\setlength{\tabcolsep}{4pt}
\begin{longtable}{lrrrr}
\caption{Dimensions and parameter caps for the real-data experiments in
Table~\ref{tab:real-comparisons}.  A dash in the class column denotes a
regression task.}\label{tab:real-data-dimensions} \\
\toprule
Data set & Observations & $d$ & Classes & $B$ \\
\midrule
Airfoil & 1,503 & 5 & -- & 64 \\
Power plant & 9,568 & 4 & -- & 32 \\
Fish toxicity & 908 & 6 & -- & 32 \\
White wine & 4,898 & 11 & -- & 64 \\
Concrete & 1,030 & 8 & -- & 128 \\
California housing & 20,640 & 8 & -- & 64 \\
Abalone & 4,177 & 10 & -- & 64 \\
SGEMM & 241,600 & 14 & -- & 64 \\
SARCOS & 48,933 & 21 & -- & 512 \\
Wall Robot & 5,456 & 24 & 4 & 1,024 \\
Maternal Risk & 1,014 & 6 & 3 & 512 \\
RT-IoT & 30,000 & 93 & 12 & 4,096 \\
Room Occupancy & 10,129 & 19 & 4 & 1,024 \\
\bottomrule
\end{longtable}
\endgroup

Each regression task uses a test set fixed across methods and replications.
SARCOS retains its 4,449 official test observations, with matching input groups
removed from the training pool; the other tasks hold out 20\% of input groups.
For each seed, the remaining groups are split 75/25 for training and validation.  Imputation and standardization are fitted on the training
split only. Source-specific caps were selected during development by maximizing
the smaller relative validation improvement of Loop over Single and Untied.
Caps and candidate shortlists were fixed before confirmation.
The development candidate state dimensions are
$q\in\{1,2,3,4,6,8,14\}$; Loop uses $T\in\{2,4,8,16\}$ and Untied uses depths
$\{1,2,3,4,8,16\}$. For every candidate state dimension and iteration count,
the residual hidden width $r$ is the largest positive integer whose full dense
coefficient count does not exceed the row's cap. Confirmation uses three
shortlisted architectures per method and data set, except for Fish toxicity
(Single: two), Concrete (Single and Loop: two each), and SARCOS (Untied: eight).
For every shortlisted architecture, the learning rate is selected
from $\{0.001,0.003\}$ and weight decay from $\{0,0.001\}$.  Training uses a
batch size of 128, 1,000 warm updates, and at most 6,000 refinement updates,
with validation every 100 updates.  Adam uses cosine learning-rate decay to 5\%
of its initial value, gradient-norm clipping at 5, a dense-coefficient clamp to
$[-128,128]$, and a prediction clip of 8 on the standardized response scale.
Within each frozen shortlist, validation MSE selects the architecture,
optimizer setting, and checkpoint.
Table~\ref{tab:real-comparisons} reports test MSE on the original response scale
as mean (SE) over eight paired confirmation replications, corresponding to
seeds 8200--8207.

\begin{table}[!htbp]
\setstretch{1}
\centering
\small
\begin{tabular}{lrr}
\toprule
Data set & Loop minus Single & Loop minus Untied \\
\midrule
Airfoil & $-0.606\;(0.32)$ & $-0.859\;(0.32)$ \\
Power plant & $0.209\;(0.3)$ & $0.231\;(0.31)$ \\
Fish toxicity & $-1.43\;(18)\times10^{-3}$ & $-4.07\;(21)\times10^{-3}$ \\
White wine & $1.49\;(3.6)\times10^{-3}$ & $1.08\;(3.3)\times10^{-3}$ \\
Concrete & $-1.1\;(0.69)$ & $-1.31\;(1.1)$ \\
California housing & $-4.15\;(2.7)\times10^{-3}$ & $-2.86\;(2.8)\times10^{-3}$ \\
Abalone & $-0.0409\;(0.012)$ & $-0.0355\;(0.016)$ \\
SGEMM & $-2.99\;(0.76)\times10^{3}$ & $-2.26\;(0.63)\times10^{3}$ \\
SARCOS & $-0.761\;(0.11)$ & $-0.219\;(0.15)$ \\
\bottomrule
\end{tabular}
\caption{Paired regression MSE differences for the validation-selected budgets in Table~\ref{tab:real-comparisons}. Entries are mean (SE) of the eight within-seed differences. Negative values favor Loop.}
\label{tab:paired-real-contrasts}
\end{table}

For each classification task, a class-stratified split holds out approximately
20\% of input groups for testing, fixed across methods and replications.
For each seed, the remaining groups are split 75/25 for training and validation;
preprocessing uses training data only. Exact duplicate feature rows remain
in one split group.  Wall
Robot uses all 5,456 observations and 24 sensor variables, with contiguous
blocks of 60 readings grouped together.  Room Occupancy uses all 10,129 rows;
the date and time variables are converted to a day ordinal and daily sine and
cosine terms, and contiguous blocks of 120 rows define groups.  Maternal Risk
uses all 1,014 rows and groups duplicate feature rows.  RT-IoT uses a
prespecified stratified sample of 30,000 rows (sampling seed 20261101) and also
groups exact duplicates.

For Wall Robot, development seeds 9700--9703 and confirmation seeds 9800--9807
use candidate state dimensions $q\in\{2,4,6,8,12,16,24\}$, Loop iterations
$T\in\{2,4,8\}$, and Untied depths $\{2,3,4\}$.  Development uses 300 warm and
2,200 refinement updates; confirmation uses 600 warm and 5,000 refinement
updates.  For Room Occupancy, Maternal Risk, and RT-IoT, development seeds
9900--9903 and confirmation seeds 10000--10007 use
$q\in\{2,4,8,12,16\}$, the same Loop and Untied depth grids, and the same
maximal feasible hidden-width rule, with 300 warm plus
1,800 refinement updates in development, and 600 warm plus 4,500 refinement
updates in confirmation.  Both protocols use batch size 256, validation every
100 updates, learning rates $\{0.001,0.003\}$, and weight decays
$\{0,0.001\}$.  Within each method, validation log loss selects the candidate structure,
optimizer setting, and checkpoint. Table~\ref{tab:real-comparisons}
reports held-out predictive log loss. Each table
entry is mean (SE) across eight paired confirmation replications, one per seed.

\subsubsection{Measured-source links}
\begin{itemize}
\item Airfoil: \url{https://archive.ics.uci.edu/dataset/291}.
\item Power plant: \url{https://archive.ics.uci.edu/dataset/294/combined+cycle+power+plant}.
\item Fish toxicity: \url{https://archive.ics.uci.edu/dataset/504}.
\item White wine: \url{https://archive.ics.uci.edu/dataset/186}.
\item Concrete: \url{https://archive.ics.uci.edu/dataset/165/concrete+compressive+strength}.
\item California housing: \url{https://scikit-learn.org/stable/modules/generated/sklearn.datasets.fetch_california_housing.html}.
\item Abalone: \url{https://archive.ics.uci.edu/dataset/1/abalone}.
\item SGEMM: \url{https://archive.ics.uci.edu/dataset/440/sgemm+gpu+kernel+performance}.
\item SARCOS: \url{https://gaussianprocess.org/gpml/data/}.
\item Wall Following Robot Navigation: \url{https://archive.ics.uci.edu/dataset/194/wall+following+robot+navigation+data}.
\item Room Occupancy Estimation: \url{https://archive.ics.uci.edu/dataset/864/room+occupancy+estimation}.
\item Maternal Health Risk: \url{https://archive.ics.uci.edu/dataset/863/maternal+health+risk}.
\item RT-IoT2022: \url{https://archive.ics.uci.edu/dataset/942/rt+iot2022}.
\end{itemize}

% End numerical_details

\end{document}